\documentclass[12pt]{article}

\usepackage{arxiv}
\usepackage{xargs}
\usepackage{xcolor}
\usepackage[utf8]{inputenc}
\usepackage[T1]{fontenc}    
\usepackage{hyperref}       
\usepackage{url}            
\usepackage{booktabs}       
\usepackage{amsfonts}       
\usepackage{nicefrac}       
\usepackage{microtype}      
\usepackage{xcolor}         
\usepackage{times}
\usepackage{lipsum}
\usepackage{graphicx}
\usepackage{mathrsfs}
\usepackage{xargs}
\usepackage{enumitem}
\usepackage{aliascnt}
\usepackage{appendix}
\usepackage[english]{babel}
\usepackage{amsmath}
\usepackage{amsthm}
\usepackage{amssymb}
\usepackage{caption}
\usepackage{cancel}
\usepackage{array}
\usepackage{framed}
\usepackage{multirow}
\usepackage{bbm}
\usepackage[ruled,vlined]{algorithm2e}
\usepackage{dsfont}

\usepackage[english]{babel}
\usepackage[sorting=ynt,style=authoryear-comp,natbib=true,maxbibnames=99,maxcitenames=2,uniquelist=false,giveninits]{biblatex}

\addbibresource{references.bib}

\usepackage{hyperref}
\hypersetup{colorlinks = true}
\hypersetup{linkcolor=blue}
\usepackage{cleveref}

\crefname{subsubsubappendix}{appendix}{appendices}
\Crefname{subsubsubappendix}{Appendix}{Appendices}

\newtheoremstyle{styledef}
  {6pt}
  {6pt}
  {\itshape}
  {0em}
  {\bfseries}
  {}
  {1.5em}
  {}

\theoremstyle{styledef}

\newenvironment{enumerateList}{ \begin{enumerate}[label=(\roman*),wide=0pt, labelindent=\parindent]}{\end{enumerate}}

\newenvironment{hypZeroREP}[1]{
\begin{enumerate}[align=left,label=\textbf{\sf(#1$_{\mathrm{df}}^{\mathrm{REP}}$)}]\begin{sf}}
  {\end{sf}\end{enumerate}}
\newenvironment{hypZeroDREP}[1]{
\begin{enumerate}[align=left,label=\textbf{\sf(#1$_{\mathrm{df}}^{\mathrm{DREP}}$)}]\begin{sf}}
{\end{sf}\end{enumerate}}

\usepackage[textwidth=70pt]{todonotes}

\newtheorem{thm}{Theorem}
\crefname{thm}{theorem}{theorems}
\Crefname{thm}{Theorem}{Theorems}
\newtheorem{ex}{Example}
\crefname{ex}{example}{examples}
\Crefname{ex}{Example}{Examples}
\newtheorem{lem}{Lemma}
\crefname{lem}{Lemma}{lemmas}
\Crefname{lem}{Lemma}{Lemmas}
\newtheorem{prop}{Proposition}
\crefname{prop}{Proposition}{Propositions}
\Crefname{Prop}{Proposition}{Propositions}

\crefname{coro}{corollary}{corollaries}
\Crefname{Coro}{Corollary}{Corollaries}
\newtheorem{defi}{Definition}
\crefname{defi}{definition}{definitions}
\Crefname{Def}{Definition}{Definitions}
\newtheorem{rem}{Remark}
\crefname{rem}{remark}{remarks}
\Crefname{rem}{Remark}{Remarks}

\newenvironment{hyp}[1]{
  \begin{enumerate}[label=\textbf{\sf(#1\arabic*)},resume=hyp#1]\begin{sf}}
  {\end{sf}\end{enumerate}}
  \crefname{hyp}{}{ass}
  \Crefname{hyp}{}{Ass}

\renewenvironment{leftbar}[2][\hsize]
{
    
    \MakeFramed{\hsize#1\advance\hsize-\width\FrameRestore}
}
{\endMakeFramed}

\newcommandx{\KL}[3][1=\theta, 2=\phi, 3=x]{\mathrm{KL}\lr{#1,#2;#3}}

\newcommand{\tBREP}{\widetilde{B}^{(\mathrm{REP})}(\theta, \phi;x)}

\newcommandx{\barw}[2][1=\theta, 2=\phi]{\overline{w}_{#1, #2}}
\newcommandx{\barell}[2][1=\theta, 2=\phi]{\overline{\ell}_{#1, #2}}
\newcommandx{\repell}[3][1=\theta, 2=\phi, 3=\phi]{\Lambda_{#1,#2}}
\newcommandx{\repelltwo}[3][1=\theta, 2=\phi, 3=\phi]{\Lambda_{#1,#2}}
\newcommandx{\repellstandard}[3][1=\theta, 2=\phi,3=\phi]{\widetilde{\Lambda}_{#1,#2}}
\newcommandx{\repbarell}[2][1=\theta, 2=\phi]{\overline{\Lambda}_{#1,#2}}
\newcommandx{\Wrepell}[1][1=\alpha]{\overline{\mathcal{W}}^{(#1)}}
\newcommandx{\barwL}[2][1=\theta, 2=\phi]{\overline{w}_{#1, \boldsymbol{#2}}}
\newcommandx{\barwa}[2][1=\theta, 2=\phi]{\overline{w}^{(\alpha)}_{#1, #2}}
\newcommandx{\barwba}[3][1=\theta, 2=\Phi, 3=\alpha]{\overline{w}^{(#3)}_{#1, \boldsymbol{#2}}}

\newcommandx{\REPq}{\tilde{q}}

\newcommandx{\gammaA}[1][1=\alpha]{\gamma^{(#1)}}
\newcommandx{\gammaREP}[2][1=\alpha, 2=\psi]{#2\text{-}\mathrm{G}_2^{({#1}, \mathrm{REP})}(\theta, \phi; x)}
\newcommandx{\gammaDREP}[2][1=\alpha, 2=\psi]{#2\text{-}\mathrm{G}_2^{({#1}, \mathrm{DREP})}(\theta, \phi; x)} 
\newcommandx{\VRREP}[2][1=\alpha, 2=\psi]{#2\text{-}\mathrm{G}_1^{({#1}, \mathrm{REP})}(\theta, \phi; x)}
\newcommandx{\VRDREP}[2][1=\alpha, 2=\psi]{#2\text{-}\mathrm{G}_1^{({#1}, \mathrm{DREP})}(\theta, \phi; x)} 

\newcommandx{\corrREP}[2][1=\alpha, 2=\psi]{\mathrm{C}_{#2}(\theta, \phi; x)}
\newcommandx{\corrDREP}[2][1=\alpha, 2=\psi]{\mathrm{C}_{#2}^{({#1}, \mathrm{DREP})}(\theta, \phi; x)} 
\newcommandx{\USNRREP}[3][1=\alpha, 2=\psi, 3=N]{\mathrm{sSNR}_{#2,{#3}}^{({#1}, \mathrm{REP})}(\theta, \phi; x)}
\newcommandx{\USNRDREP}[3][1=\alpha,2=\psi, 3=N]{\mathrm{sSNR}_{#2,#3}^{({#1}, \mathrm{DREP})}(\theta, \phi; x)}

\newcommandx{\grad}[1][1={M,N}]{g_{#1}^{(\alpha)}}
\newcommandx{\gradREP}[3][1={M,N},2=\psi,3=\alpha]{#2\text{-}g_{#1}^{(#3, \mathrm{REP})}(\theta, \phi;x)}
\newcommandx{\gradDREP}[3][1={M,N},2=\psi,3=\alpha]{#2\text{-}g_{#1}^{(#3,\mathrm{DREP})}(\theta, \phi;x)}
\newcommandx{\gradGEN}[3][1={M,N},2=\psi,3=\alpha]{#2\text{-}g_{#1}^{(#3, \mathrm{(D)REP})}(\theta, \phi;x)}
\newcommandx{\vREP}[2][1=\alpha, 2=\psi]{#2\text{-}V^{(#1, \mathrm{REP})}}
\newcommandx{\genvDREP}[2][1=\alpha, 2=\psi]{#2\text{-}V^{(#1, \mathrm{DREP})}}
\newcommandx{\vDREPun}[2][1=\alpha, 2=\psi]{\genvDREP[#1][#2]_1}
\newcommandx{\vDREPdeux}[2][1=\alpha, 2=\psi]{\genvDREP[#1][#2]_2}

\newcommandx{\lbd}[2][1=\theta,2=\phi]{\lambda_{#1, #2}^{(\alpha)}}
\newcommand{\lrav}[1]{\left|#1 \right|}
\newcommand{\lrn}[1]{\left\|#1 \right\|}
\newcommand{\lr}[1]{\left(#1 \right)}
\newcommand{\lrb}[1]{\left[#1 \right]}

\newcommand{\lrcb}[1]{\left\{#1 \right\}}
\newcommand{\lrder}[2]{\left.\lrb{#1}\right|_{#2}}

\newcommandx{\argcond}[2]{\lrb{\left.#1 \,\right|\, #2}}

\newcommandx{\liwae}[1][1=N]{\ell^{(\mathrm{IWAE})}_{#1}}
\newcommandx{\liren}[2][1=\alpha, 2=N]{\ell^{(#1)}_{#2}}
\newcommandx{\lirenL}[3][1=\alpha, 2=N, 3=L]{\ell^{(#1)}_{#2, #3}}
\newcommandx{\lirenBB}[2][1=\alpha, 2=N]{\tilde{\ell}^{(#1)}_{#2}}
\newcommand{\notationb}{\tilde{b}}
\newcommand{\PE}{\mathbb E}
\newcommand{\PP}{\mathbb P}

\newcommandx{\renorm}[2][1=\ell, 2=\alpha]{\varphi_{N, #1}^{(#2)}}
\newcommand{\rmd}{\mathrm d}
\newcommand{\rme}{\mathrm e}
\newcommand{\rset}{\mathbb{R}}

\newcommandx{\RalphaN}[1][1=\alpha]{R_{#1,N}}
\newcommandx{\RalphaNb}[2][1=\alpha,2=N]{R_{#1,#2}}
\newcommandx{\Rtalpha}[2][1=\alpha, 2=\phi]{\overline{w}_{\theta, #2}^{(#1)}}
\newcommand{\set}[2]{\lrcb{#1\;: \; #2}}

\newcommandx{\sumW}[2][1=j,2=\beta]{\overline{W}_{#1,N}(\beta)}

\newcommand{\tq}{\tilde{q}}
\newcommand{\tS}{S}
\newcommandx{\tw}[4][1=\phi, 2=\varepsilon,3=\phi',4=\theta]{\tilde{w}_{#4, #1, #3}(#2; x)}

\newcommandx{\tvDREP}[1][1=\alpha]{\tilde{v}^{(#1, \mathrm{DREP})}_{\theta, \phi}}
\newcommandx{\w}[2][1=\theta, 2=\phi]{w_{#1, #2}}
\newcommandx{\wb}[2][1=\theta, 2=\Phi]{w_{#1, \boldsymbol{#2}}}

\newcommandx{\Zalpha}[1][1=\phi']{Z_{\alpha}(#1)}
\newcommandx{\ZalphaN}[1][1=\phi']{\hat{Z}_{N,\alpha}(#1)}
\newcommand{\Y}{Y}

\begin{document}
\title{Learning with Importance Weighted
Variational Inference: Asymptotics for Gradient Estimators of the VR-IWAE Bound}

\author{Kamélia Daudel \\ ESSEC Business School \\ daudel@essec.edu \And François Roueff \\ Télécom Paris, Institut Polytechnique de Paris \\ francois.roueff@telecom-paris.fr}
\maketitle
{\small{}{}{}}{\small\par}

\begin{abstract}
Several variational bounds involving importance weighting ideas generalize the Evidence Lower BOund (ELBO) for marginal likelihood optimization, such as the Importance-weighted Auto-Encoder (IWAE), Variational Rényi (VR) and VR-IWAE bounds. Yet, it remains unclear how the joint choice of bound and gradient estimator impacts the behavior of the resulting variational inference (VI) algorithms. This paper provides a unified theoretical comparison of reparameterized (REP) and doubly-reparameterized (DREP) gradient estimators tied to the IWAE, VR and VR-IWAE bounds. Through asymptotic analyses of the Signal-to-Noise Ratio as the number of Monter Carlo samples $N$ goes to infinity, we identify a bias-variance tradeoff in these gradient estimators and we formally justify the superiority of DREP over REP in importance-weighted VI. An additional asymptotic analysis for challenging regimes, where both $N$ and the Kullback-Leibler divergence between the variational and posterior densities go to infinity, indicates that importance-weighted VI gradient estimators point in a well-founded direction even when the variational approximation deteriorates. Together, these complementary results characterize the optimization trajectory in importance-weighted VI from poor initialization to final convergence. Importantly, our proof techniques establish general theoretical tools for the study of sample means ratios whose scope extend beyond VI and constitute an independent contribution to the field of Monte Carlo methods.
\end{abstract}

\keywords{Variational Inference \and Alpha-Divergence \and Importance Weighted Auto-encoder \and Reparametrization trick \and Signal-To-Noise Ratio, Importance Sampling, Monte Carlo methods}

\setcounter{tocdepth}{2}

\section{Introduction}

Variational inference (VI) methods seek to find the best approximation to an unknown target posterior density within a more tractable family of probability densities $\mathcal{Q}$ \citep{Neal1998,jordan1999,blei2017variational}. In the standard Bayesian setting, the model is typically assumed fixed and the aim is to identify the member of $\mathcal{Q}$ that minimizes a certain measure of dissimilarity to the posterior. More generally, VI is applicable to models depending on a parameter $\theta$, in which case the primary goal is to optimize the marginal log-likelihood. In both scenarios, the intractability of the marginal likelihood necessitates the maximization of variational bounds, which involve the variational family $\mathcal{Q}$ and are constructed as surrogate objective functions to the marginal log-likelihood that are amenable to optimization. \looseness=-1

While the most traditional variational bound is the Evidence Lower BOund (ELBO), an active line of research in VI is interested in deriving alternative bounds that can improve on the outcome of VI. In particular, popular generalizations of the ELBO relying on importance weighting ideas have been proposed, such as the importance-weighted auto-encoder (IWAE) bound in \cite{burda2015importance} that is parametrized by the number of Monte Carlo samples $N$ and the Variational Rényi (VR) bound in \cite{li2016renyi} that is parametrized by the Rényi alpha-divergence parameter $\alpha$. \cite{daudel2022Alpha} also introduced the VR-IWAE bound, a variational bound depending on $(N,\alpha) \in \mathbb{N} \times [0,1)$ that unifies the ELBO, IWAE and VR bounds. A shared feature of these variational bounds is that they can readily be optimized using reparametrized gradient estimators \citep{kingma2014autoencoding,pmlr-v32-rezende14,burda2015importance,li2016renyi,Tucker2019DoublyRG,daudel2022Alpha}. \looseness=-1

Yet, the theoretical understanding of how the joint choice of importance-weighted variational bound and gradient estimator governs the optimization trajectory remains limited. The properties of the variational bounds themselves are relatively well-established: \cite{maddison2017,nowozin2018debiasing,domke2018} study the asymptotic behavior of the IWAE bound as $N \to \infty$, with \cite{daudel2022Alpha} exploring the more general VR-IWAE case.
Furthermore, recent works investigate the desirable characteristics of the variational approximation at optimality for the VR bound \citep{margossian2024variationalinferenceuncertaintyquantification, margossian2025generalized} and the IWAE bound \citep{cherief2025asymptotics}. Despite these insights into importance-weighted variational bounds and their minima, a comprehensive analysis of their gradient estimators is currently missing from the literature. \looseness=-1

More precisely, current analyses of reparameterized (REP) and doubly-reparameterized (DREP) gradient estimators, which are the primary gradient estimators in importance-weighted VI, are fragmented and incomplete. Although \cite{daudel2022Alpha} capture the asymptotics of the REP gradient estimator of the VR-IWAE bound as $N \to \infty$, the role of $\alpha$ is completely overlooked outside of the distinction between the pathological case $\alpha = 0$ (IWAE) identified in \cite{rainforth2018tighter} and the case $\alpha \in (0,1)$. As for the DREP gradient estimator of the VR-IWAE bound, existing results are restricted to an informal argument specific to the IWAE setting \citep{Tucker2019DoublyRG}.
Beyond these theoretical gaps, there is also a general lack of understanding regarding how REP and DREP gradient estimators perform as the quality of the variational approximation deteriorates. \looseness=-1

As a result, comparisons between REP and DREP gradient estimators have been confined to empirical studies suggesting that tuning $\alpha$ improves performance and that the DREP gradient estimator outperforms the REP one \citep{li2016renyi,daudel2022Alpha,GeffnerDomke2020Biased,dhaka2021challenges}. Our work addresses these limitations through two main contributions to importance-weighted VI, both of which are grounded in broader theoretical results for the Monte Carlo estimation of sample mean ratios we establish. Namely, our contributions are as follows: \looseness=-1
\begin{enumerateList}
  \item \textbf{First unified comparison between REP and DREP gradient estimators in importance-weighted VI.}
  We establish an asymptotic characterization of the expectation and variance for Monte Carlo estimators taking the form of a ratio of sample means. By applying this theory to importance-weighted VI, we obtain asymptotic analyses as $N \to \infty$ that hold under mild conditions and that allow the systematic comparison of REP and DREP gradient estimators tied to the IWAE, VR and VR-IWAE bounds. Our analyses are based on the Signal-to-Noise Ratio (SNR), a metric commonly used in VI that quantifies the quality of an estimator via the ratio of its expected value to its standard deviation. We show that the hyperparameter $\alpha$ is responsible for a bias-variance tradeoff in the REP and DREP gradient estimators of the VR-IWAE bound. Moreover, we provide the first theoretical justification for the empirical superiority of DREP over REP in importance-weighted VI. \looseness=-1
  
  \item \textbf{First study of importance-weighted gradient estimators in the challenging regime where the variational approximation deteriorates.} We analyze the case where the quality of the variational density deteriorates in the gradient estimators. This approach is motivated by observing that the asymptotic regimes in (i) may only kick in for a large $N$ in practice if the variational density chosen within $\mathcal{Q}$ does not match the posterior density well enough. Specifically, we derive a second analysis at the SNR level when both $N$ and the Kullback-Leibler divergence between the variational and posterior densities go to infinity. Our insights indicate that importance-weighted VI gradient estimators point in a well-founded direction even in such settings. As with our first contribution, the technical tools used in (ii) are valid outside of importance-weighted VI, which signals that their utility might extend beyond the context considered here. 
\end{enumerateList}
Crucially, our theoretical analyses in (i) and (ii) are complementary analyses that reflect different stages in importance-weighted VI algorithms. While (ii) captures the behavior in challenging settings such as poor parameter initialization, (i) corresponds to settings where the variational density is an accurate approximation of the posterior density, such as during training. In addition, provided that the variational family is sufficiently expressive, we obtain that optimization under the challenging regime of (ii) drives the trajectory toward the well-behaved regime of (i). We validate our theory over three experiments of increasing complexity, before outlining directions for future research. Deferred results, proofs and numerical experiments can be found in the appendix.

\section{Background}
\label{sec:background}

Consider a model with joint distribution $p_\theta(x, z)$ parameterized by $\theta \in \Theta \subseteq \rset^a$, where $x$ denotes an observation and $z$ is a latent variable valued in $\rset^d$. In Bayesian inference, one seeks to sample from the posterior density $p_\theta(z|x)$. However, in many important cases sampling directly from the posterior density is impossible. This then hinders usual inference tasks such as marginal likelihood optimization, where the goal is to find the optimal $\theta \in \Theta$ which maximizes the marginal log likelihood 
\begin{align}\label{eq:loglik}
\ell(\theta; x) = \log \lr{\int p_\theta(x, z) \rmd z}.
\end{align}  
To tackle this challenge, VI methods introduce a probability density $q_\phi(z|x)$ parameterized by $\phi \in \Phi$, whose distribution is easy to sample from and where typically $ \Phi \subseteq \rset^b$. Specifically, in the context of marginal likelihood maximization, VI methods solve an optimization problem involving a variational bound, that is a lower bound on the marginal log likelihood $\ell(\theta; x)$. While the most common example of variational bound is the ELBO \looseness=-1
\begin{align} \label{eq:ELBO-def}
  &\mathrm{ELBO} (\theta, \phi; x) = \int q_{\phi}(z|x) \log \w(z;x) ~ \rmd z, \quad \w(z;x) = \frac{p_\theta(x,z)}{q_{\phi}(z|x)} ~,~ z \in \rset^d,
\end{align}
several alternatives to the ELBO based on importance-weighting ideas have been introduced. Two notable instances of importance-weighted variational bounds are the IWAE bound \citep{burda2015importance} and 
the VR-IWAE bound \citep{daudel2022Alpha}, where the latter is defined for all $N \in \mathbb{N}^\star$ and all $\alpha \in [0,1)$ by
\begin{align} \label{eq:lalpha-def} 
  \liren(\theta, \phi; x) = \frac{1}{1-\alpha} \int \prod_{i = 1}^N q_{\phi}(z_i|x) \log \lr{ \frac{1}{N} \sum_{j = 1}^N \w(z_j;x)^{1-\alpha} } \rmd z_{1:N}.
  \end{align} 
The VR-IWAE bound serves as a unifying objective that bridges several existing VI frameworks, inheriting appealing theoretical and practical properties from these connections. It recovers the ELBO when $N=1$ (or as $\alpha \to 1$) and the IWAE bound \citep{burda2015importance} when $\alpha = 0$. \cite{daudel2022Alpha} also show that it is a principled approximation of the VR bound ($N \to \infty$), a flexible variational bound which seeks to improve on the ELBO by employing Rényi's alpha-divergences  \citep{li2016renyi}. We expand below on the desirable properties of the VR-IWAE bound originating from its ties to the ELBO, IWAE and VR bounds. \looseness=-1

\begin{enumerateList}
  \item \textit{Tightness.} Increasing the sample size $N$ and/or decreasing $\alpha$ in the VR-IWAE bound yields a tighter lower bound on the marginal log likelihood provided that the support of $p_\theta(\cdot|x)$ is contained within the support of $q_\phi(\cdot|x)$: for all $\alpha, \alpha' \in [0,1)$ with $\alpha \leq \alpha'$ and all $N \in \mathbb{N}^*$, \looseness=-1
\begin{align}
 & \mathrm{ELBO} (\theta, \phi; x) \leq  \liren(\theta, \phi; x) \leq  \liren[\alpha][N+1](\theta, \phi; x) \leq \mathrm{VR}^{(\alpha)}(\theta, \phi; x) \leq \ell(\theta;x) \\
 & \mathrm{ELBO} (\theta, \phi; x) \leq \liren[\alpha'](\theta, \phi; x) \leq \liren(\theta, \phi; x) \leq \liren[0](\theta, \phi; x)  \leq \ell(\theta; x), \label{eq:mono:alpha}
\end{align} 
where $\mathrm{VR}^{(\alpha)}(\theta, \phi; x)$ denotes the VR bound from \citep{li2016renyi} and is given by
\begin{align}
  & \mathrm{VR}^{(\alpha)}(\theta, \phi; x) = \frac{1}{1-\alpha} \log \lr{\int q_{\phi}(z|x)~\w(z; x)^{1-\alpha}~\rmd z}, \quad \alpha \in [0,1)\cup  (1, \infty). \label{eq:defVRbound} 
\end{align} 
More precisely, \cite[Theorem 3]{daudel2022Alpha} establishes that: as $N \to \infty$
  \begin{align} \label{eq:OneOverNGenDomke}
    & \liren  (\theta,\phi;x) = \mathrm{VR}^{(\alpha)}(\theta, \phi; x) - \frac{1}{2 N} \gammaA(\theta, \phi;x)^2+o\left(\frac{1}{N}\right) \\
    & \mbox{where} \quad \gammaA(\theta, \phi;x)^2 = \frac{1}{1-\alpha}\mathbb{V}_{Z \sim q_{\phi}(\cdot|x)}\lr{\frac{\w(Z; x)^{1-\alpha}}{\PE_{Z' \sim q_{\phi}(\cdot|x)}(\w(Z'; x)^{1-\alpha})}} \label{eq:defGammaAlphaFirst},
\end{align}
meaning that the VR-IWAE bound is a finite sample approximation of the VR bound which gets tighter and converges to the VR bound at a fast $1/N$ rate. 

\item \textit{Ease of optimization.} The VR-IWAE bound can be straightforwardly optimized via stochastic gradient ascent \citep[SGA;][]{Robbins1951ASA,bottou2018optimization} paired up with the reparameterization trick \citep{kingma2014autoencoding,pmlr-v32-rezende14,burda2015importance,daudel2022Alpha}. The reparameterization trick is a common tool used in VI which has led to empirical advantages when estimating gradients with respect to $\phi$ and which relies on the assumption that $Z = f(\varepsilon,\phi;x) \sim q_\phi (\cdot|x)$ with
$\varepsilon \sim \REPq$ \citep{kucukelbir2017automatic,xu2019variance,mohamed2020monte}. Under the assumption that $Z$ can be reparameterized, the gradient of the VR-IWAE
bound $\nabla_{\theta, \phi} \liren(\theta, \phi; x)$ reads  \looseness=-1
\begin{align}\label{eq:gradient-vriwae}
   \int \prod_{i = 1}^N \REPq(\varepsilon_{i}) \lr{\sum_{j=1}^N \frac{\w(f(\varepsilon_j, \phi; x);x)^{1-\alpha}}{\sum_{\ell = 1}^N ~\w(f(\varepsilon_{\ell}, \phi; x);x)^{1-\alpha}} ~ \nabla_{\theta, \phi} \log \w(f(\varepsilon_{j}, \phi; x);x)} \rmd \varepsilon_{1:N} 
\end{align}
and since this gradient can be estimated by the unbiased Monte Carlo estimator
\begin{align} \label{eq:estimVRIWAEVRgradREP}
\frac{1}{M} \sum_{m = 1}^M \sum_{j=1}^N  \frac{\w(f(\varepsilon_{m,j}, \phi; x);x)^{1-\alpha}}{\sum_{\ell = 1}^N \w(f(\varepsilon_{m, \ell}, \phi; x); x)^{1-\alpha}} ~ \nabla_{\theta, \phi} \log \w(f(\varepsilon_{m,j}, \phi; x);x)
\end{align}
with $\varepsilon_{m,1}, \ldots, \varepsilon_{m,N}$ being i.i.d. samples generated from $\REPq$ for all $m = 1\ldots M$, the resulting procedure enjoys the typical convergence guarantees of unbiased SGA \citep{hoffman2013stochastic}. In particular, \eqref{eq:estimVRIWAEVRgradREP} recovers the unbiased gradient estimator used for ELBO optimization in \cite{kingma2014autoencoding} when $N=1$ (or $\alpha \to 1$) and the one used for IWAE bound optimization in \cite{burda2015importance} when $\alpha = 0$. Moreover, \eqref{eq:estimVRIWAEVRgradREP} is used in \cite{li2016renyi} as an approximation of $\nabla_{\theta, \phi} \mathrm{VR}^{(\alpha)}(\theta, \phi; x)$ to derive a practical SGA scheme based on the VR bound as
\begin{align} \label{eq:gradVRintegralform}
  & \nabla_{\theta, \phi} \mathrm{VR}^{(\alpha)}(\theta, \phi; x) = \frac{
    \int \REPq(\varepsilon)\, w_{\theta, \phi}(f(\varepsilon, \phi; x); x)^{1 - \alpha}
    \nabla_{\theta, \phi} \log w_{\theta, \phi}(f(\varepsilon, \phi; x); x)\, d\varepsilon}{    \int \REPq(\varepsilon)\, w_{\theta, \phi}(f(\varepsilon, \phi; x); x)^{1 - \alpha} d\varepsilon}.
\end{align} 
Consequently, on top of encompassing the typical ELBO and IWAE optimization frameworks, the VR-IWAE bound provides theoretical grounding for SGA schemes employing \eqref{eq:estimVRIWAEVRgradREP} with $\alpha \in (0,1)$: while these schemes are biased when viewed through the angle of the VR bound, they are unbiased when the objective function is the VR-IWAE bound \citep{daudel2022Alpha}. \looseness=-1
\end{enumerateList}
By unifying the ELBO, IWAE and VR bounds, the VR-IWAE bound allows one to treat the choice of the variational bound as a problem of hyperparameter tuning within a shared framework. This motivates a closer examination of how $(N, \alpha)$ influence the optimization trajectory, which we will achieve in this paper via a thorough characterization of key importance-weighted gradient estimators associated to the VR-IWAE bound. 

\section{Analyzing Importance-weighted VI Gradient Estimators}
\label{sub:GradN}

We begin by defining the notation and the gradient estimators that form the basis of our study. \looseness=-1

\subsection{Notation and Gradient Estimators Definitions}
\label{subsec:notaAndDefin}

In the following, we let $x$ be a fixed observation, $\alpha \in [0,1)$, $\Theta\subseteq\rset^a$ and $\Phi\subseteq\rset^b$ be two open subsets, and we assume that for all
$\theta\in\Theta,\phi\in\Phi$, the following holds:
\begin{hyp}{A} 
\item \label{hyp:VRIWAEwell-defined}   $\int p_\theta(x, z) \;\nu(\rmd z) <
  \infty$, $\int q_{\phi}(z|x)\;\nu(\rmd z)=1$ and
  $q_{\phi}(z|x),p_\theta(z|x)>0$ for $\nu$-a.e. $z$.
\end{hyp}
Here $\nu$ is a $\sigma$-finite measure on $\mathbb{R}^d$, typically
the Lebesgue measure on $\rset^d$ as used
in~(\ref{eq:loglik})--(\ref{eq:gradVRintegralform}), or the Lebesgue measure
on a subset of $\rset^d$, or any other convenient dominating measure
on the Borel sets of $\rset^d$. To avoid specifying the dominating
measure, we will use the notation $\PE$ throughout the paper. If the
probability distribution of the involved random variables has not
been specified beforehand, we use a subscript to the symbol $\PE$ to
indicate their densities with respect to $\nu$. Using this convention,
(\ref{eq:lalpha-def}) and~(\ref{eq:defVRbound}) are then respectively
written as 
\begin{align}\label{eq:lalpha-def-E}
&\liren(\theta, \phi; x)= \frac{1}{1-\alpha}
\PE_{Z_j\overset{\text{\tiny i.i.d.}}{\sim}q_{\phi}(\cdot|x)}\lr{\log \lr{
    \frac{1}{N} \sum_{j = 1}^N \w(Z_j;x)^{1-\alpha} }}  \\
  \label{eq:defVRbound-E}
  &\mathrm{VR}^{(\alpha)}(\theta, \phi; x) = \frac{1}{1-\alpha} \log
\PE_{Z\sim q_{\phi}(z|x)}  \lr{\w(Z; x)^{1-\alpha}},
\end{align}
with \ref{hyp:VRIWAEwell-defined} ensuring
$\w(\cdot; x)$ is well-defined and positive a.s. under the expectation
$\PE$. 

On top of \ref{hyp:VRIWAEwell-defined} and in order to compute gradient estimators, we assume that the mappings
$\theta\mapsto p_{\theta}(x)$, $(z,\theta)\mapsto p_{\theta}(z|x)$,
and $(z,\phi)\mapsto q_{\phi}(z|x)$ are differentiable on $\Theta$,
$\rset^d\times\Theta$ and $\rset^d\times\Phi$, respectively.  We also assume that integration with respect to $q_{\phi} (\cdot|x)$ can always be reparameterized as an expectation
independent of $\phi$, and that the involved functions of $\theta$ and
$\phi$ inside this expectation are differentiable, meaning \ref{hyp:reparameterized} below holds:
\begin{hyp}{A}  
\item \label{hyp:reparameterized} There exist a function $f$ and a
  density $\REPq$ such that $f(\varepsilon,\phi;x) \sim q_{\phi} (\cdot|x)$
  with $\varepsilon \sim \REPq$. In addition, for   $\varepsilon \sim \REPq$, we
  have a.s. that the mapping
  $\phi\mapsto f(\varepsilon,\phi;x)$ is differentiable on $\Phi$. \looseness=-1
\end{hyp} 
For a differentiable function $g$ on $\rset^D$ with
$D \in \mathbb{N}^\star$, $\partial_{y_k} g(y)$ denotes the partial
derivative of $g(y_1, \ldots, y_D)$ with respect to $y_k$ evaluated at
$y = (y_1, \ldots, y_D) \in \rset^D$ as in \eqref{estimREP} and
$\nabla_y g(y)$ denotes the gradient vector
$\lr{\partial_{y_k} g(y)}_{k=1,\dots,D}$ as
in~(\ref{eq:gradient-vriwae}).  When differentiating, it can be
convenient to evaluate the resulting function at a different variable:
we write $\lrder{h(y,z)}{y= y'}$ to indicate that the multivariate
function $(y,z)\mapsto h(y,z)$ is evaluated at $(y',z)$. In the
following, we let $M, N \in \mathbb{N}^\star$ and
$\varepsilon_{m,1}, \ldots, \varepsilon_{m,N}$ be i.i.d. samples
generated from $\REPq$ for all $m = 1 \ldots M$, we set
$Z_{m,j} = f(\varepsilon_{m,j}, \phi;x)$ for all $j = 1 \ldots N$ and
all $m =1 \ldots M$ and we let $\psi$ denote a component of the
$\mathbb{R}^{a+b}$-valued variable
$(\theta, \phi)=(\theta_1, \ldots, \theta_a,\phi_1,\dots,\phi_b)$. We next define below the two gradient estimators of the VR-IWAE bound that are central to our study. \looseness=-1

\begin{defi}[REP and DREP gradient estimators of the VR-IWAE bound] The reparameterized (REP) gradient estimator of the VR-IWAE bound w.r.t. $\psi$ is given by
  \begin{align} \label{estimREP}
    \gradREP = \frac{1}{M} \sum_{m = 1}^M \sum_{j=1}^N  \frac{\w(Z_{m,j};x)^{1-\alpha}}{\sum_{\ell = 1}^N ~\w(Z_{m,\ell}; x)^{1-\alpha}} ~ \partial_{\psi} \log \w(f(\varepsilon_{m,j}, \phi; x);x). 
  \end{align}
Futher assuming $\psi$ is a component of the
    $\mathbb{R}^{b}$-valued variable
    $\phi=(\phi_1,\dots,\phi_b)$, the doubly-reparameterized (DREP) gradient estimator of the VR-IWAE bound w.r.t. $\psi$ is given by \looseness=-1
  \begin{align}
      \gradDREP & = \frac{1}{M} \sum_{m = 1}^M \sum_{j=1}^N \lr{h_{m,j}^{(\alpha)}(\theta, \phi; x) \lrder{\partial_{\psi'} \log \w(f(\varepsilon_{m,j}, \phi'; x); x)}{\phi' = \phi}}, \label{estimDREP}
  \end{align}
where, for all $m =1 \ldots M$ and all $j = 1 \ldots N$,
      \begin{align}
      h_{m,j}^{(\alpha)}(\theta, \phi; x) & = \alpha ~ \frac{\w(Z_{m,j};x)^{1-\alpha} }{\sum_{\ell = 1}^N \w(Z_{m,\ell}; x)^{1-\alpha}} + (1-\alpha)~\lr{\frac{\w(Z_{m,j};x)^{1-\alpha} }{\sum_{\ell = 1}^N \w(Z_{m,\ell};x)^{1-\alpha}}}^2\;. \label{eq:defHmj}
      \end{align} 
  \end{defi}
The REP and DREP gradient estimators are both unbiased estimators of the gradient of the VR-IWAE bound
\citep{daudel2022Alpha}. Establishing such properties rigorously requires assumptions to interchange derivatives and expectations, which are typically taken for granted and omitted in VI \cite[see, for instance,][]{daudel2022Alpha}. For the sake of completeness, we provide precise sufficient conditions ensuring that the interchange is valid in \Cref{app:differentiabilityCondition}: \ref{hyp:hypZeroREP} for the REP estimator and \ref{hyp:hypZeroDREP} for the DREP estimator. We now briefly comment on the specificities of the REP and DREP gradient estimators of the VR-IWAE bound. \looseness=-1

As previously mentioned, the REP gradient estimator of the VR-IWAE bound acts as a bridge between the REP gradient estimator of the ELBO ($N=1$ or $\alpha \to 1$), the REP gradient estimator of the IWAE bound ($\alpha = 0$) and the (biased) REP gradient estimator of the VR bound ($\alpha \in (0,1)$). The DREP gradient estimator of the VR-IWAE bound \eqref{estimDREP} was introduced in \cite{daudel2022Alpha} as a generalization of the DREP gradient estimator of the ELBO \citep{roeder2017} and of the IWAE bound  \citep{Tucker2019DoublyRG}. The main motivation behind DREP gradient estimators comes from the observation made in \cite{roeder2017} that:\looseness=-1
    \begin{align*}
    \partial_{\psi} \log \w(f(\varepsilon, \phi; x);x) = \lrder{\partial_{\psi'} \log \w(f(\varepsilon, \phi'; x); x) - \partial_{\psi'} \log q_{\phi'}(f(\varepsilon, \phi;x)|x)}{\phi' = \phi}.
    \end{align*}
This identity reveals that the REP gradient estimator of the ELBO ($N = 1$ in \eqref{estimREP}) has a nonzero variance at optimality, since the left-hand side above has nonzero variance when $q_{\phi}(\cdot|x)$ matches the exact posterior density $p_\theta(\cdot|x)$ everywhere due to the score function term $\lrder{\partial_{\psi'} \log q_{\phi'}(f(\varepsilon, \phi;x)|x)}{\phi' = \phi}$. By contrast, the DREP gradient estimator of the ELBO ($N = 1$ in \eqref{estimDREP}) enjoys a variance which goes to zero as the variational density approaches the posterior one, and is shown in \cite{roeder2017} to lead to improved empirical performance. \looseness=-1

In the spirit of \cite{roeder2017}, the DREP gradient estimator of the IWAE bound \citep{Tucker2019DoublyRG} and of the VR-IWAE bound \citep{daudel2022Alpha} maintain the zero-variance property at optimality. However, the behavior of these gradient estimators during the optimization procedure compared to their REP counterparts is much less straightforward. To clarify these dynamics and avance the field of importance-weighted VI, we seek to provide the first informative asymptotic analyses for the REP and DREP gradient estimators of the VR-IWAE bound enabling their comparison. To that end, we will start by proving a new general result capturing the asymptotic behavior in expectation and variance of sample mean ratios as the number of Monte Carlo samples increases. \looseness=-1

\subsection{Key Technical Result on the Asymptotic Moments of Sample Mean Ratios}
\label{sec:key-result-ratios}

We present below a key technical result which captures the asymptotic
behavior in expectation and variance for ratios of sample means as the
number of Monte Carlo samples increases. \looseness=-1
\begin{thm}
  \label{thm:ratios-limits-optimal-cond}
Let $(X_1,W_1),(X_2,W_2),\dots$ be
i.i.d. with the same distribution as a generic pair $(X,W)$ valued in
$\rset\times\rset_+$. Denote, for all $N\geq1$, $\overline{X}_N = N^{-1} \sum_{i=1}^N X_i$ and $\overline{W}_N = N^{-1} \sum_{i=1}^N W_i$. Suppose that there exists $\eta>0$ such that
\begin{align}  \label{eq:simple-cond-ratios}
    \sup_{u>0}\lr{u^{-\eta}\;\PP\lr{W\leq u} }<\infty.
\end{align}
Then the following assertions hold.
\begin{enumerateList}
\item\label{item:ratios-limits-optimal-cond1} If $\PE(|X|)<\infty$, then
\begin{align}
  \lim_{N\to\infty}\PE\lr{\frac{\overline{X}_N}{\lr{\overline{W}_N}^\mu}}=\frac{\PE(X)}{\lr{\PE(W)}^\mu} \;. \label{eq:limExp}
\end{align}
\item\label{item:ratios-limits-optimal-cond2bis} If $\PE(|X|)<\infty$,
  $\PE(W^2)<\infty$ and $\PE(|X|\,W)<\infty$, then
  \begin{equation}
    \label{eq:ratios-limits-optimal-cond2bis}
\lim_{N\to\infty}N\,\lr{\PE\lr{\frac{\overline{X}_{N}}{\lr{\overline{W}_{N}}^{\mu}}}
  -\frac{\PE(X)}{\lr{\PE(W)}^\mu}}=
\frac{(\mu+1)\mu\,\PE(X)\,\mathbb{V}\lr{W}}{2\,\lr{\PE(W)}^{\mu+2}}-\frac{\mu\,\mathbb{C}\mathrm{ov}\lr{X,W}}{\lr{\PE(W)}^{\mu+1}}\;.
  \end{equation}
  Moreover, if instead 
$\PE(|X|)<\infty$, $\PE(W)<\infty$ and $\PE(\lrav{\PE(W)X-\PE(X)W}W)<\infty$,
then the convergence in~(\ref{eq:ratios-limits-optimal-cond2bis})
still holds when $\mu=1$ with a limit expressed by \looseness=-1
\begin{equation}
  \label{eq:ratios-limits-optimal-cond2bis:alter}
\lim_{N\to\infty}N\,\lr{\PE\lr{\frac{\overline{X}_{N}}{\overline{W}_{N}}}
  -\frac{\PE(X)}{\PE(W)}}=  \frac{\PE\lr{W\lrb{\PE(X)W-\PE(W)X}}}{\lr{\PE(W)}^{3}}.
\end{equation}
\item\label{item:ratios-limits-optimal-cond3} If  $\PE(|X|^2)<\infty$, then
\begin{align} \label{eq:limVar}
 \lim_{N\to\infty}
 \mathbb{V}\lr{\frac{\overline{X}_N}{\lr{\overline{W}_{N}}^{\mu}}} =0 \;.
\end{align}
\item\label{item:ratios-limits-optimal-cond4} If $\PE(|X|^2)<\infty$ and $\PE(W^2)<\infty$, then
\begin{align} \label{eq:limVarFirstOrder}
    \lim_{N\to\infty}N\,
    \mathbb{V}\lr{\frac{\overline{X}_N}{\lr{\overline{W}_{N}}^{\mu}}} =
    \frac{\mathbb{V}\lr{\PE(W) X - \mu\PE(X) W}}{\lr{\PE(W)}^{2 \mu+2}}\;.
\end{align}
Moreover, if instead $\PE(|X|)<\infty$,  $\PE(W)<\infty$ and
$\PE(\lrav{\PE(W)X-\PE(X)W}^2)<\infty$, then the convergence in
\eqref{eq:limVarFirstOrder} still holds when $\mu = 1$ thus giving
\begin{align} \label{eq:limVarFirstOrderMuOne}
      \lim_{N\to\infty}N\,
    \mathbb{V}\lr{\frac{\overline{X}_N}{\overline{W}_{N}}} =
    \frac{\mathbb{V}\lr{\PE(W) X - \PE(X) W}}{\lr{\PE(W)}^{4}}\;.
\end{align}
\end{enumerateList}
\end{thm}
The proof of \Cref{thm:ratios-limits-optimal-cond} is deferred to
\Cref{sec:proof-s-crefsec:key}. Observe that \ref{item:ratios-limits-optimal-cond1} and \ref{item:ratios-limits-optimal-cond3} establish the intuitive limiting behavior for the expectation and variance of $\overline{X}_N/(\overline{W}_N)^\mu$, while \ref{item:ratios-limits-optimal-cond2bis} and \ref{item:ratios-limits-optimal-cond4} capture the behavior of the corresponding first-order terms. Together, \ref{item:ratios-limits-optimal-cond1}-\ref{item:ratios-limits-optimal-cond4} will provide a general framework to derive the asymptotics for the REP \eqref{estimREP} and DREP \eqref{estimDREP} gradient estimators of the VR-IWAE bound via selected choices of $\overline{X}_N, \overline{W}_N$ and $\mu$. Before proceeding to do so in \Cref{sec:behavior-gradient-vr-Nasymp}, we comment on the assumptions we make in \Cref{thm:ratios-limits-optimal-cond} and we discuss the implications of this result for VI and more broadly Monte Carlo methods. \looseness=-1

\paragraph*{Assumptions.} Essentially, the assumptions made in
\Cref{thm:ratios-limits-optimal-cond} ensure that the quantities of
interest appearing in this theorem are well-defined. This is clear
from the specific conditions associated with assertions
\ref{item:ratios-limits-optimal-cond1}-\ref{item:ratios-limits-optimal-cond4}
(e.g., the integrability condition $\PE(|X|)<\infty$ in
\ref{item:ratios-limits-optimal-cond1} that guarantees the existence
of $\PE(X)$) and we now discuss the condition
\eqref{eq:simple-cond-ratios}, which is common to all the assertions
\ref{item:ratios-limits-optimal-cond1}-\ref{item:ratios-limits-optimal-cond4}. \looseness=-1

Assuming that there exists $\eta > 0$ such that
\eqref{eq:simple-cond-ratios} holds ensures that the distribution of
$W$ does not place excessive mass near $0$, since  $\PP\lr{W \leq
  u}=O(u^{\eta})$ as $u\downarrow0$. Hence this
condition slightly strengthens the right-continuity at zero of the
distribution function $u\mapsto\PP\lr{W \leq u}$ by imposing a
power decay of arbitrary low positive exponent. Note that without
right-continuity, the
quantities studied in \Cref{thm:ratios-limits-optimal-cond} would be
ill-defined due to the nonzero probability of a vanishing denominator
in $\overline{X}_N/(\overline{W}_N)^\mu$.
As shown in the lemma below, \eqref{eq:simple-cond-ratios} admits
equivalent formulations frequently encountered in the related
literature.
\begin{lem} 
  \label{lem:ass-neg-moment-for-all-mu}
  Set $W_1, \ldots, W_N$ be  i.i.d. with
  the same distribution as a generic random variable $W$ valued in
$\rset_+$. Then the following three assertions are
  equivalent.
  \begin{enumerateList}
  \item \label{item:condition-equiv-ours} There exists $\eta>0$ such that~(\ref{eq:simple-cond-ratios}) holds.
  \item \label{item:condition-equiv-N} There exists $\mu>0$ and $N\geq1$ such that
    $\PE((\overline{W}_N)^{-\mu})<\infty$. 
  \item \label{item:jacob-condition} There exists $\mu>0$ such that
    $\PE(W^{-\mu})<\infty$. 
  \end{enumerateList}
\end{lem}
\Cref{lem:ass-neg-moment-for-all-mu} is a direct consequence of the
more precise statement \Cref{lem:equivlimsupconditionGen}, which can
be found in \Cref{app:usefulEq} alongside its proof. Conditions akin to \ref{item:condition-equiv-N}, although usually stated for a specific value of $\mu$, are typically used when investigating importance-weighted VI bounds or their gradients \cite[see, e.g.,][]{maddison2017,domke2018,daudel2022Alpha}. As for \ref{item:jacob-condition}, it appears in \cite{deligiannidis25}, which studies the asymptotic bias of self-normalized importance sampling. The assertion \ref{item:condition-equiv-N} can in fact be equivalently stated with an arbitrary $\mu>0$ or an arbitrary $N\geq1$ (see \Cref{lem:equivlimsupconditionGen}), which yields \ref{item:jacob-condition} as the special case where $N$ is arbitrarily fixed to $N=1$. \looseness=-1

\paragraph*{Related work and broader impact.} \Cref{thm:ratios-limits-optimal-cond} subsumes several results studying importance-weighted estimators. Specifically, \Cref{thm:ratios-limits-optimal-cond} contributes to the Monte Carlo literature by broadening the applicability of existing importance-weigthing results while operating under strictly less restrictive assumptions. 

Indeed, the most closely related existing studies have focused on deriving \eqref{eq:ratios-limits-optimal-cond2bis:alter} and \eqref{eq:limVarFirstOrderMuOne}, or specific instantiations of these, under stronger assumptions and different proof techniques compared to ours \citep{rainforth2018tighter,daudel2022Alpha,deligiannidis25}; see \Cref{tab} for a quick overview and more details are also available in \Cref{app:relWThm1}. Furthermore, to the best of our knowledge, existing studies are entirely restricted to setting where $\mu = 1$ while \Cref{thm:ratios-limits-optimal-cond} accommodates the general case $\mu > 0$. As we shall see, this extension is particularly relevant in the context of VI and will emerge as a theoretical prerequisite for analyzing the DREP gradient estimator of the VR-IWAE bound. This will permit us to bridge the gap left by \cite{Tucker2019DoublyRG} and \cite{daudel2022Alpha}, who proposed the DREP gradients for the IWAE and VR-IWAE bounds, respectively, but did not provide asymptotic guarantees. \looseness=-1

Ultimately, our framework provides a theoretical foundation for analyzing importance-weighted estimators, both within and well beyond VI. Applications outside of VI include any setting where an importance-weighted estimator is employed to approximate a quantity of interest, e.g., the fine-tuning of Large Language Models \citep{liao2026iris}. \looseness=-1

\begin{table}[b!]
    \centering
    \caption{Closely-related results subsumed by \Cref{thm:ratios-limits-optimal-cond} under less stringent conditions}
    \label{tab}
    \renewcommand{\arraystretch}{1.2} 
    \begin{tabular}{@{} lll @{}} 
        \toprule
        \cite{rainforth2018tighter} & \eqref{eq:ratios-limits-optimal-cond2bis:alter} and \eqref{eq:limVarFirstOrderMuOne} with stronger moment conditions on $(X,W)$   \\
        \& \cite{daudel2022Alpha} & and problem-specific choices of $(X,W)$ \\
        \midrule
        \cite{deligiannidis25} & \eqref{eq:ratios-limits-optimal-cond2bis:alter} with stronger moment condition on $| \mathbb{E}(W)X - \mathbb{E}(X)W | W$   \\
        \bottomrule
    \end{tabular}
\end{table}

\subsection{Asymptotics of gradient estimators of the VR-IWAE bound as $N \to \infty$}
\label{sec:behavior-gradient-vr-Nasymp}

Let us now build a unified framework that enables the comparison between the REP \eqref{estimREP} and DREP \eqref{estimDREP} gradient estimators of the VR-IWAE bound. The structure of the REP and DREP gradient estimators suggests that the asymptotic behavior in expectation and variance of these estimators can be derived by applying \Cref{thm:ratios-limits-optimal-cond} for selected choices of $\overline{X}_N, \overline{W}_N$ and $\mu$. 
Letting $\alpha \in [0,1)$, $\varepsilon \sim \REPq$,
$\varepsilon_1,\varepsilon_2, \ldots$ be i.i.d. copies of
$\varepsilon$ and denoting \looseness=-1
\begin{align} \label{eq:tw}
\tw = \w[\theta][\phi](f(\varepsilon, \phi'; x); x), \quad \theta\in\Theta\,,\;\phi,\phi' \in\Phi,
\end{align}
we notably see that setting
$X = (1-\alpha)^{-1} \partial_\psi
\lrb{\tw[\phi][\varepsilon][\phi]^{1-\alpha}}$,
$W = \tw[\phi][\varepsilon][\phi]^{1-\alpha}$ and $\mu = 1$ in
\Cref{thm:ratios-limits-optimal-cond} permits to capture the
asymptotics of the REP gradient estimator in expectation and variance. Starting for instance with \eqref{eq:limExp}, we directly get that, under the assumptions of \ref{item:ratios-limits-optimal-cond1} in
\Cref{thm:ratios-limits-optimal-cond}, \looseness=-1
\begin{align} \label{eq:limExpREP}
\lim_{N\to\infty}\PE\lr{\gradREP}= \frac{1}{1-\alpha} \frac{\PE( \partial_\psi \lrb{\tw[\phi][\varepsilon][\phi]^{1-\alpha}})}{\PE( \tw[\phi][\varepsilon][\phi]^{1-\alpha})}.
\end{align}
While \eqref{eq:limExpREP} provides an asymptotic limit, its practical implications are not immediately transparent. The same conclusion holds for results obtained by substituting the above choice for $(X, W, \mu)$ into \Cref{thm:ratios-limits-optimal-cond}. The remainder of this subsection is then devoted to deriving meaningful insights on the behavior of the REP and DREP gradient estimators of the VR-IWAE bound by leveraging \Cref{thm:ratios-limits-optimal-cond} as a theoretical foundation. 
To that end, we will rely on the following assumption, which corresponds to a rewriting of the condition \eqref{eq:simple-cond-ratios} from \Cref{thm:ratios-limits-optimal-cond} for $W = \tw[\phi][\varepsilon][\phi]^{1-\alpha}$ that does not depend on $\alpha$. 
\begin{hyp}{A} 
\item \label{hyp:inverseGrad} 
There exists $\delta>0$ such that $\sup_{t>0} \lr{t^{-\delta} \PP(\tw[\phi][\varepsilon][\phi]\leq
    t)} < \infty$.
\end{hyp}
The theorem below then analyses the behavior in expectation of the REP gradient estimator \eqref{estimREP} as $N \to \infty$. \looseness=-1
\begin{thm} \label{thm:GradNStudyAllalpha}
Assume~\ref{hyp:inverseGrad} and \ref{hyp:hypZeroREP}. Then, as $N\to\infty$,
\begin{align}
\PE(\gradREP) = \partial_{\psi} \mathrm{VR}^{(\alpha)}(\theta, \phi; x) + o \lr{1}. \label{eq:asymptoticalGradientThetaOneFormer}
\end{align}
Further assuming that $\mathbb{V}\lr{\tw[\phi][\varepsilon][\phi]^{1-\alpha}} < \infty$ yields: as $N \to \infty$,
\begin{align}
\PE(\gradREP) = \partial_{\psi} \mathrm{VR}^{(\alpha)}(\theta, \phi; x) - \frac{1}{2 N} \partial_{\psi} [\gammaA(\theta,\phi; x)^2] + o \lr{\frac{1}{N}}.
\label{eq:asymptoticalGradientThetaOne}
\end{align}
\end{thm}
The proof of \Cref{thm:GradNStudyAllalpha} is deferred to
\Cref{app:proofGradNAllalpha}. By \Cref{thm:GradNStudyAllalpha}, $\PE(\gradREP)$ thus converges to $\partial_\psi \mathrm{VR}^{(\alpha)} (\theta, \phi;x)$ at a fast $1/N$ rate, where $\mathrm{VR}^{(\alpha)}(\theta, \phi; x)$ is the VR bound \eqref{eq:defVRbound}. We now discuss the methodological implications of \Cref{thm:GradNStudyAllalpha}. \looseness=-1

\Cref{thm:GradNStudyAllalpha} predicts that the REP gradient estimator of the VR-IWAE bound points on average in the direction which maximizes the VR bound when $\partial_\psi \mathrm{VR}^{(\alpha)} (\theta, \phi;x) \neq 0$ and which minimizes $\gammaA(\phi;x)^2$ when $\partial_\psi \mathrm{VR}^{(\alpha)} (\theta, \phi;x) = 0$. In both cases, the REP gradient estimator of the VR-IWAE bound targets a well-founded direction that is expected to improve on ELBO maximization. 

Indeed, the VR bound with $\alpha \in (0,1)$ is known to penalize under-dispersion, driving $q_\phi(\cdot|x)$ to capture regions of posterior probability mass that the standard ELBO typically ignores \citep{li2016renyi,margossian2024variationalinferenceuncertaintyquantification,minka2005divergence,Bui2016BlackboxF,margossian2025generalized}. A similar conclusion applies when $\alpha = 0$, with the particularity that $\mathrm{VR}^{(0)} (\theta, \phi;x) = \ell(\theta; x)$ so that the REP gradient estimator of the VR-IWAE bound points on average in the direction which maximizes the marginal log-likelihood for $\psi \in \{ \theta_1, \ldots, \theta_b \}$ and minimizes $\gammaA[0](\theta, \phi;x)^2 = \mathbb{V} \lr{ {p_\theta(z|x)}/{q_\phi(z|x)}}$ for $\psi \in \{ \phi_1, \ldots, \phi_b \}$ \citep{rainforth2018tighter}. \looseness=-1

Since $\PE(\gradREP) = \partial_{\psi} \liren(\theta, \phi;x)$ under \ref{hyp:hypZeroREP}, \Cref{thm:GradNStudyAllalpha} provides
an asymptotic result for the gradient of the VR-IWAE bound which is coherent with the one established in \cite[Theorem 3]{daudel2022Alpha} and reviewed in \eqref{eq:OneOverNGenDomke}.  
Note also that, assumption-wise, \ref{hyp:inverseGrad} is inherited from \Cref{thm:ratios-limits-optimal-cond},  \ref{hyp:hypZeroREP} permits to leverage the reparameterization trick assumption and $\mathbb{V}\lr{\tw[\phi][\varepsilon][\phi]^{1-\alpha}} < \infty$ ensures that $\gammaA[0](\theta, \phi;x)^2$ is well-defined. \looseness=-1

Our next step to further comprehend the behavior of the REP gradient
estimator \eqref{estimREP}  is to capture the behavior of its Signal-to-Noise ratio
(SNR) as $N \to \infty$. Recalling that for a random variable $X$ the
SNR is given by
$\mathrm{SNR}[X] = {|\PE(X)|}/{\sqrt{\mathbb{V}(X)}}$, an
unbiased gradient estimator is expected to be accurate if the SNR is
high (so that the target dominates the additive stochastic error) and
noisy otherwise. Since the summands of the $\sum_m$ summations
in~(\ref{estimREP}) are i.i.d., we have
$\mathrm{SNR}[\gradREP]=\sqrt{M}\,\mathrm{SNR}[\gradREP[1,N]]$ and we
can interpret the $\sum_m$ summations as a way to improve the SNR by
the factor $\sqrt{M}$ through averaging, while multiplying the
computational cost by $M$. As such, it suffices to consider the
setting $M=1$ in the following and to focus on the role of
$(N,\alpha)$ in $\mathrm{SNR}[\gradREP[1,N]]$. We then have the
theorem below. \looseness=-1 

\begin{thm} \label{thm:SNR-REP} Assume \ref{hyp:inverseGrad} and \ref{hyp:hypZeroREP}. Further assume that $\mathbb{V}\lr{\tw[\phi][\varepsilon][\phi]^{1-\alpha}} < \infty$ and that $\mathbb{V}\lr{{\partial_{\psi} \lr{\tw[\phi][\varepsilon][\phi]^{1-\alpha}}}} < \infty$. Then, as $N\to\infty$, \looseness=-1
\begin{align} \label{eq:SNR_REP_N_Two}
& \mathrm{SNR}[\gradREP[1,N]] =
\sqrt{N} \, \dfrac{\lrav{\partial_{\psi}
\mathrm{VR}^{(\alpha)}(\theta, \phi; x) - \frac{1}{2N}
\partial_{\psi} [\gammaA(\theta,\phi;
x)^2]+o\lr{\frac1N}}}{\sqrt{\vREP(\theta, \phi;x)}+o\lr{1}}\;, \\
& \label{eq:SNR_REP_N_TwovREP-def} \text{where} \quad \vREP(\theta, \phi;x) =
\frac1{(1-\alpha)^{2}} \mathbb{V}\lr{\partial_{\psi}
\lr{\frac{\tw[\phi][\varepsilon][\phi]^{1-\alpha}}{\PE(\tw[\phi][\varepsilon][\phi]^{1-\alpha})}}}\;.
\end{align}
\end{thm}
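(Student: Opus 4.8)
The plan is to treat the numerator and the denominator of $\mathrm{SNR}[\gradREP]=|\PE(\gradREP)|/\sqrt{\mathbb{V}(\gradREP)}$ separately. The numerator is immediate: since $h,h'>2$ the hypotheses of \Cref{thm:GradNStudyAllalpha} hold (the side condition there being vacuous since $h'>2$), and under \ref{hyp:hypZeroREP} the REP estimator is unbiased, so $\PE(\gradREP)=\partial_{\psi}\liren(\theta,\phi;x)$; substituting \eqref{eq:asymptoticalGradientThetaOne} shows that $|\PE(\gradREP)|$ is exactly the numerator appearing in \eqref{eq:SNR_REP_N_Two}, with no further work.

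For the denominator, since $\gradREP$ is an average over the $M$ i.i.d. sample blocks, $\mathbb{V}(\gradREP)=M^{-1}\mathbb{V}(\gradREP[1,N][\psi])$, so it suffices to show $N\,\mathbb{V}(\gradREP[1,N][\psi])\to\vREP(\theta,\phi;x)$. I would write $\gradREP[1,N][\psi]=\bar A_N/\bar B_N$ with $\bar B_N=N^{-1}\sum_{j=1}^N\tw[\phi][\varepsilon_j][\phi]^{1-\alpha}$ and $\bar A_N=\{(1-\alpha)N\}^{-1}\sum_{j=1}^N\partial_{\psi}(\tw[\phi][\varepsilon_j][\phi]^{1-\alpha})$, and set $b=\PE(\bar B_N)=\PE(\tw[\phi][\varepsilon][\phi]^{1-\alpha})$ and $a=\PE(\bar A_N)$, so that (under \ref{hyp:hypZeroREP}) $a/b=\partial_\psi\mathrm{VR}^{(\alpha)}(\theta,\phi;x)$. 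Then I would establish two facts. The first is the central limit theorem
\[
\sqrt N\bigl(\gradREP[1,N][\psi]-a/b\bigr)\ \weaklimit\ \mathcal N\bigl(0,\vREP(\theta,\phi;x)\bigr),
\]
obtained by applying the classical CLT to the normalized sum $\sqrt N\{b(\bar A_N-a)-a(\bar B_N-b)\}$ (whose i.i.d. summands have finite variance by \ref{hyp:momentGrad} and \ref{hyp:momentGradREP} with $h,h'>2$), then Slutsky's lemma since $\bar B_N b\to b^2$ a.s., and finally the elementary identity $\tfrac{1}{1-\alpha}\partial_\psi(\tw[\phi][\varepsilon][\phi]^{1-\alpha})/b-a\,\tw[\phi][\varepsilon][\phi]^{1-\alpha}/b^2=\tfrac{1}{1-\alpha}\partial_\psi(\tw[\phi][\varepsilon][\phi]^{1-\alpha}/b)$, whose variance is by definition $\vREP(\theta,\phi;x)$. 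The second is the uniform integrability of the family $\{N(\gradREP[1,N][\psi]-a/b)^2\}_N$. Together these two facts give $N\,\PE((\gradREP[1,N][\psi]-a/b)^2)\to\vREP(\theta,\phi;x)$, and since $\PE(\gradREP[1,N][\psi])=\partial_\psi\liren(\theta,\phi;x)=a/b+O(1/N)$ by unbiasedness and \Cref{thm:GradNStudyAllalpha}, we deduce $N\,\mathbb{V}(\gradREP[1,N][\psi])\to\vREP(\theta,\phi;x)$. Taking square roots and reinstating the factor $M^{-1}$ produces the denominator $\sqrt{\vREP(\theta,\phi;x)}+o(1)$ together with the prefactor $\sqrt{MN}$ in \eqref{eq:SNR_REP_N_Two}.

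I expect the uniform integrability step to be the main obstacle. The approach is to bound, for a small $\delta>0$, $\PE(|\sqrt N(\gradREP[1,N][\psi]-a/b)|^{2+\delta})$ uniformly over large $N$: starting from $\gradREP[1,N][\psi]-a/b=\{b(\bar A_N-a)-a(\bar B_N-b)\}/(\bar B_N b)$, one peels off the random factor $\bar B_N^{-(2+\delta)}$ by Hölder's inequality and controls $\sqrt N(\bar A_N-a)$ and $\sqrt N(\bar B_N-b)$ in $L^{2+\delta'}$, for some $\delta'>0$, by a Marcinkiewicz--Zygmund (Rosenthal) moment inequality. This is precisely where the strengthened exponents are used: $\tw[\phi][\varepsilon][\phi]^{1-\alpha}$ and $\partial_\psi(\tw[\phi][\varepsilon][\phi]^{1-\alpha})$ must possess a moment of order strictly above $2$, which is \ref{hyp:momentGrad} with $h>2$ and \ref{hyp:momentGradREP} with $h'>2$ (whereas \Cref{thm:GradNStudyAllalpha} only needs $h>2$, $h'>1$); and one needs $\sup_{N\ge N_1}\PE(\bar B_N^{-r})<\infty$ for every fixed $r$ and a suitable $N_1$, which is not literally \ref{hyp:inverseGrad} but can be deduced from it by a blocking/AM--GM argument showing that the negative moment available for $\bar B_{kN_0}$ improves to order $k\mu$ as $k\to\infty$; I would isolate this last point as an auxiliary lemma in \Cref{subsubsec:assumptions}. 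Assembling these estimates yields the uniform integrability and closes the argument.
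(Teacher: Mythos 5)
Your proposal is correct and follows essentially the same route as the paper: the numerator comes from unbiasedness under \ref{hyp:hypZeroREP} together with \Cref{thm:GradNStudyAllalpha}, and the denominator is handled by a CLT for the ratio $\overline{X}_N/\overline{Y}_N$ upgraded to convergence of the variance via uniform integrability, using Rosenthal-type moment bounds of order slightly above $2$ (this is exactly where $h,h'>2$ enter) combined with uniformly bounded negative moments of $\overline{Y}_N$ of all orders — precisely the content of Assertion~\ref{item:eq:prop:handleRNsnrVarAnalysis} of \Cref{prop:handleRN}. The only minor variation is your auxiliary lemma on negative moments, which you derive by an AM--GM/blocking argument rather than the Laplace-transform representation used in \Cref{lem:equivlimsupconditionGen}; both arguments are valid.
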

The proof of \Cref{thm:SNR-REP} is deferred to
\Cref{app:thm:SNR-REP}. Besides $\partial_{\psi} \mathrm{VR}^{(\alpha)}(\theta, \phi; x)$ and $\partial_{\psi} [\gammaA(\theta,\phi; x)^2]$, which we had already identified as key quantities in \Cref{thm:GradNStudyAllalpha} to analyze the REP gradient estimator \eqref{estimREP}, it is now clear that $\vREP(\theta, \phi; x)$ also plays an important role in the success of this estimator. Note that compared to \Cref{thm:GradNStudyAllalpha}, \Cref{thm:SNR-REP} further requires $\mathbb{V}\lr{{\partial_{\psi} \lr{\tw[\phi][\varepsilon][\phi]^{1-\alpha}}}} < \infty$ so that $\vREP(\theta, \phi; x)$ is well-defined. \looseness=-1

Asymptotic results related to \eqref{eq:SNR_REP_N_Two} appear in \cite{rainforth2018tighter} and in the proof of \cite[Theorem~1]{daudel2022Alpha}, albeit under more restrictive assumptions (see \Cref{app:relWThm1}). For the specific case $\alpha = 0$,
\cite{rainforth2018tighter} establish that the SNR scales as $\mathcal{O}(\sqrt{N})$ when learning $\theta$ and decays at a rate of $\mathcal{O}(1/\sqrt{N})$ when learning $\phi$. Furthermore, they identify the quantities $\partial_{\psi} \mathrm{VR}^{(0)}(\theta, \phi; x)$, $\partial_{\psi} [\gammaA[0](\theta,\phi; x)^2]$ and $\vREP[0](\theta, \phi; x)$ in their SNR result. \cite{daudel2022Alpha} show that the SNR decay for $\phi$ can be mitigated by setting $\alpha \in (0,1)$, but the quantities $\partial_{\psi} \mathrm{VR}^{(\alpha)}(\theta, \phi; x)$, $\partial_{\psi} [\gammaA(\theta,\phi; x)^2]$ and $\vREP(\theta, \phi; x)$ do not appear explicitly in their SNR result nor are they interpreted. \Cref{thm:SNR-REP} thus generalizes the approach of \cite{rainforth2018tighter} to the entire range $[0, 1)$, and operates under strictly weaker assumptions than those required by the two aforementioned previous works. This in turn yields important implications regarding the role of $\alpha$ across the entire range $[0,1)$. \looseness=-1

More precisely, when $\partial_{\psi} \mathrm{VR}^{(\alpha)}(\theta, \phi; x) \neq 0$ (which is a reasonable assumption to make unless  $\alpha = 0$ and $\psi$ is among the components of $\phi=(\phi_1, \ldots, \phi_b)$ or we are at a local optimum for $(\theta, \phi)$), \eqref{eq:SNR_REP_N_Two} simplifies to 
\begin{align*} 
\mathrm{SNR}[\gradREP[1,N]] =
\sqrt{N} \, \dfrac{\lrav{\partial_{\psi}
     \mathrm{VR}^{(\alpha)}(\theta, \phi; x)+o\lr{1}}}{\sqrt{\vREP(\theta, \phi;x)}+o\lr{1}}\;,
   \end{align*}
which highlights a bias-variance tradeoff at play in the REP gradient estimator of the VR-IWAE bound via the choice of the hyperparameter $\alpha$. Here, the gradient of the VR bound represents the bias-controlling component, while $\vREP(\theta, \phi; x)$ is the variance term. As $\alpha$ decreases from $1$ to $0$, the VR bound interpolates monotonically from the ELBO ($\alpha \to 1$) toward the marginal log-likelihood ($\alpha = 0$) (a monotonicity property recovered, for instance, by taking $N \to \infty$ in \eqref{eq:mono:alpha}). While this monotonicity property of the VR bound suggests a preference for values of $\alpha$ near zero in marginal likelihood optimization, the practical choice of $\alpha$ is in fact constrained by the simultaneous evolution of the variance term $\vREP(\theta, \phi; x)$. Consequently, \Cref{thm:SNR-REP} formally justifies how the tuning of $\alpha$ has led to improved empirical performance in tasks seeking to optimize the marginal log-likelihood using the REP gradient estimator \eqref{estimREP} \citep[see, e.g.,][]{li2016renyi,daudel2022Alpha}. \looseness=-1

Crucially, $\vREP(\theta, \phi; x)$ can be further interpreted in light of \cite{roeder2017,Tucker2019DoublyRG}: the intuition that the score function adds variance to REP gradient estimators of the IWAE bound ($\alpha = 0$) is now supported theoretically for the general REP gradient estimator \eqref{estimREP} ($\alpha \in [0,1)$) by looking at the asymptotics of its SNR as $N \to \infty$. The score function adds variance via $ \lrder{\partial_{\psi'} \log q_{\phi'}(f(\varepsilon, \phi;x)|x)}{\phi'
    = \phi}$, which can notably be observed by considering the case $p_\theta(\cdot|x) = q_{\phi}(\cdot|x)$ in \eqref{eq:SNR_REP_N_TwovREP-def} leading to (see \eqref{rem:VarREPequalapp} of \Cref{app:thm:SNR-REP} for details): \looseness=-1
\begin{align} \label{eq:VREPoptim}
  \vREP(\theta, \phi;x) = \mathbb{V}\lr{  \lrder{\partial_{\psi'} \log q_{\phi'}(f(\varepsilon, \phi;x)|x)}{\phi' = \phi}}.
  \end{align}
Although the DREP gradient estimator \eqref{estimDREP} was designed to reduce the variance of the REP one \eqref{estimREP}, a theoretical justification of this property is currently lacking. We therefore next derive the asymptotic SNR of the DREP estimator to formally compare the REP and DREP approaches. Since the DREP gradient estimator \eqref{estimDREP} only impacts the learning of $\phi$, our analysis focuses on its asymptotic behavior w.r.t. $\phi$. This leads us to the theorem below.

\begin{thm} \label{lem:SNR} Assume \ref{hyp:inverseGrad} and
  \ref{hyp:hypZeroDREP}. Futher assume that
  $\mathbb{V}\lr{\tw[\phi][\varepsilon][\phi]^{1-\alpha}} < \infty$
  and $\mathbb{V}\lr{{[\partial_{\psi'} (\tw^{2(1-\alpha)})]|_{\phi' =
        \phi}}} < \infty$, where $\psi$ denotes a component of the $\mathbb{R}^{b}$-valued variable $\phi=(\phi_1,\dots,\phi_b)$. 
The following assertions hold.
\begin{enumerateList}
\item\label{item:drep-snr-alpha1} If $\alpha \in (0,1)$ and $\mathbb{V}\lr{{\lrder{\partial_{\psi'} \lr{\tw^{1-\alpha}}}{\phi' = \phi}}}<\infty $, then, as $N\to\infty$,
      \begin{align}\label{eq:lem:SNR1}
&        \mathrm{SNR}[\gradDREP[1,N]] = \sqrt{N}~\frac{\lrav{\partial_{\psi} \mathrm{VR}^{(\alpha)}(\theta, \phi; x)+o\lr{1}}}{\sqrt{\genvDREP(\theta, \phi;x)}                                         +o(1)} \\ 
      \label{eq:lem:SNR1:var}& \mbox{with }\genvDREP(\theta, \phi;x) = \frac{\alpha^2}{(1-\alpha)^2} \mathbb{V}
\lr{\lrder{\partial_{\psi'} \lr{\frac{\tw^{1-\alpha}}{\PE(\tw^{1-\alpha})}}}{\phi' = \phi}}\;.
      \end{align}
    \item\label{item:drep-snr-alpha0} If $\alpha = 0$, then, as
      $N\to\infty$, 
       \begin{align}\label{eq:lem:SNR0}
&     \mathrm{SNR}\lrb{\gradDREP[1,N][\psi][0]} = \sqrt{N}~\frac{\frac{1}{2} \partial_{\psi} [\gammaA[0](\theta,\phi;
x)^2]+o\lr{1}}{\sqrt{\genvDREP[0](\theta, \phi;x)}+o(1)}
                                          , \\
& \mbox{with } \genvDREP[0](\theta, \phi;x) =  \mathbb{V} \left(\frac{
  \lrder{\partial_{\psi'} \lr{\tw^{2}}}{\phi' =
    \phi}}{2\,\PE(\tw[\phi][\varepsilon][\phi])^{2}} \right. \label{eq:lem:SNR0:var} \nonumber \\
& \qquad \qquad \qquad \qquad \qquad \qquad \left. - \frac{\tw[\phi][\varepsilon][\phi]\,\PE\lr{\lrder{ \partial_{\psi'} \lr{\tw^{2}}}{\phi' =
                         \phi}}}{\PE(\tw[\phi][\varepsilon][\phi])^3}\right). 
       \end{align}
\end{enumerateList}
\end{thm}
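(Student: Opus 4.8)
The plan is to follow the two-step template already used for \Cref{thm:SNR-REP}: first obtain the $N\to\infty$ asymptotics of $\mathrm{SNR}[\gradDREP]$ under the moment assumptions alone, then recast the numerator using \ref{hyp:hypZeroDREP}. Since the $M$ Monte Carlo replicates are i.i.d., it is enough to analyse $T_N \eqdef \sum_{j=1}^N h_{1,j}^{(\alpha)}(\theta,\phi;x)\, D_{1,j}$ with $D_{1,j} \eqdef \lrder{\partial_{\psi'} \log \w(f(\varepsilon_{1,j}, \phi'; x);x)}{\phi'=\phi}$, because $\PE(\gradDREP) = \PE(T_N)$ and $\mathbb{V}(\gradDREP) = M^{-1}\mathbb{V}(T_N)$, so that $\mathrm{SNR}[\gradDREP] = \sqrt{M}\,|\PE(T_N)|/\sqrt{\mathbb{V}(T_N)}$. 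Using $D_{1,j} = \lrder{\partial_{\psi'} \w(f(\varepsilon_{1,j}, \phi'; x);x)}{\phi'=\phi}/\w(Z_{1,j};x)$ and inserting the definition \eqref{eq:defHmj} of $h_{1,j}^{(\alpha)}$, I would rearrange $T_N$ into
\begin{align*}
T_N = \alpha\,\frac{\overline{A}_N}{\overline{B}_N} + \frac{1-\alpha}{N}\,\frac{\overline{C}_N}{\overline{B}_N^{2}},
\end{align*}
where $\overline{B}_N = N^{-1}\sum_j \w(Z_{1,j};x)^{1-\alpha}$, $\overline{A}_N = N^{-1}\sum_j \w(Z_{1,j};x)^{1-\alpha} D_{1,j}$ and $\overline{C}_N = N^{-1}\sum_j \w(Z_{1,j};x)^{2(1-\alpha)} D_{1,j}$. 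Writing $\w = \w(Z;x)$ with $Z \sim q_\phi(\cdot|x)$ and $D = \lrder{\partial_{\psi'}\log\w(f(\varepsilon,\phi';x);x)}{\phi'=\phi}$, the key bookkeeping identities are $(1-\alpha)\,\w^{1-\alpha} D = \lrder{\partial_{\psi'}(\tw[\phi][\varepsilon][\phi']^{1-\alpha})}{\phi'=\phi}$ and $2(1-\alpha)\,\w^{2(1-\alpha)} D = \lrder{\partial_{\psi'}(\tw[\phi][\varepsilon][\phi']^{2(1-\alpha)})}{\phi'=\phi}$, which tie $\overline{A}_N$ and $\overline{C}_N$ to the quantities whose moments are controlled by \ref{hyp:DREPvar} and \ref{hyp:momentGradT}, while the moments of $\overline{B}_N$ are controlled by \ref{hyp:momentGrad} and its negative moments by \ref{hyp:inverseGrad}.

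For $\alpha \in (0,1)$ the dominant term of $T_N$ is the self-normalized statistic $\alpha\,\overline{A}_N/\overline{B}_N$, the second term being an $O_{\PP}(1/N)$ perturbation. I would run a Taylor (delta-method) expansion of $(\overline{A}_N, \overline{B}_N) \mapsto \overline{A}_N/\overline{B}_N$ around $(\PE(\w^{1-\alpha}D), \PE(\w^{1-\alpha}))$, upgraded to convergence of the first moment and of $N$ times the variance. This gives $\PE(T_N) \to \alpha\,\PE(\w^{1-\alpha}D)/\PE(\w^{1-\alpha})$ and $N\,\mathbb{V}(T_N) \to \frac{\alpha^2}{\PE(\w^{1-\alpha})^2}\mathbb{V}\!\lr{\w^{1-\alpha}D - \frac{\PE(\w^{1-\alpha}D)}{\PE(\w^{1-\alpha})}\,\w^{1-\alpha}}$, which, after substituting the expression for $\lrder{\partial_{\psi'}(\tw^{1-\alpha}/\PE(\tw^{1-\alpha}))}{\phi'=\phi}$, is exactly $\genvDREP(\theta,\phi;x)$; the $O_{\PP}(1/N)$ term contributes $o(1)$ to $\PE(T_N)$ and $o(1/N)$ to $\mathbb{V}(T_N)$, which I would check by bounding its second moment by $O(1/N^2)$ via \ref{hyp:momentGradT} and \ref{hyp:inverseGrad}. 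Finally \ref{hyp:hypZeroDREP} gives $\PE(T_N) = \partial_{\psi}\liren(\theta,\phi;x)$, which equals $\partial_{\psi}\mathrm{VR}^{(\alpha)}(\theta,\phi;x) + o(1)$ by \Cref{thm:GradNStudyAllalpha} (in particular $\alpha\,\PE(\w^{1-\alpha}D)/\PE(\w^{1-\alpha}) = \partial_{\psi}\mathrm{VR}^{(\alpha)}(\theta,\phi;x)$, which can also be checked by differentiating \eqref{eq:defVRbound-E} under the integral sign); combining everything and multiplying by $\sqrt{M}$ yields \eqref{eq:lem:SNR1}.

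For $\alpha = 0$ the first term of $T_N$ vanishes identically, so $T_N = N^{-1}\,\overline{C}_N/\overline{B}_N^{2}$ with $\overline{C}_N = N^{-1}\sum_j \w(Z_{1,j};x)^2 D_{1,j} = \tfrac12 N^{-1}\sum_j \lrder{\partial_{\psi'}(\tw[\phi][\varepsilon_{1,j}][\phi']^{2})}{\phi'=\phi}$ and $\overline{B}_N = N^{-1}\sum_j \w(Z_{1,j};x)$. The same moment-upgraded delta method, now applied to $\overline{C}_N/\overline{B}_N^{2}$, gives $N\,\PE(T_N) \to \PE(\w^2 D)/\PE(\w)^2$ and $N^{3}\,\mathbb{V}(T_N) \to \mathbb{V}\!\lr{\frac{\w^2 D}{\PE(\w)^2} - \frac{2\PE(\w^2 D)}{\PE(\w)^3}\,\w}$, which equals $\genvDREP[0](\theta,\phi;x)$ after substituting $\w^2 D = \tfrac12\lrder{\partial_{\psi'}(\tw[\phi][\varepsilon][\phi']^{2})}{\phi'=\phi}$ and $\w = \tw[\phi][\varepsilon][\phi]$. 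For the numerator, \ref{hyp:hypZeroDREP} gives $\PE(T_N) = \partial_{\psi}\liren[0](\theta,\phi;x)$; since $\mathrm{VR}^{(0)}(\theta,\phi;x) = \ell(\theta;x)$ does not depend on $\phi$, \Cref{thm:GradNStudyAllalpha} (with $\alpha = 0$) yields $\partial_{\psi}\liren[0](\theta,\phi;x) = -\tfrac1{2N}\partial_{\psi}[\gammaA[0](\theta,\phi;x)^2] + o(1/N)$, hence $\PE(\w^2 D)/\PE(\w)^2 = -\tfrac12\partial_{\psi}[\gammaA[0](\theta,\phi;x)^2]$ (equivalently, differentiate $\gammaA[0](\theta,\phi;x)^2 = \PE(\w^2)/\PE(\w)^2 - 1$ under the integral sign and use that $\PE(\w) = p_\theta(x)$ is $\phi$-free). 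Substituting $N\,\PE(T_N) \to -\tfrac12\partial_{\psi}[\gammaA[0](\theta,\phi;x)^2]$ and $N^{3}\,\mathbb{V}(T_N) \to \genvDREP[0](\theta,\phi;x)$ into $\mathrm{SNR}[\gradDREP[M,N][\psi][0]] = \sqrt{M}\,|\PE(T_N)|/\sqrt{\mathbb{V}(T_N)}$ produces \eqref{eq:lem:SNR0}.

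The main obstacle is the step upgrading the distributional/SLLN behaviour of the ratios $\overline{A}_N/\overline{B}_N$ and $\overline{C}_N/\overline{B}_N^{2}$ to convergence of the relevant first moments and of the appropriately rescaled variances. Since these are ratios, their Taylor remainders blow up where $\overline{B}_N$ is small; I would therefore split each expectation over the ``good'' event $\{|\overline{B}_N - \PE(\w^{1-\alpha})| \le \tfrac12 \PE(\w^{1-\alpha})\}$, on which the remainder is genuinely quadratic and handled using \ref{hyp:momentGrad} together with \ref{hyp:DREPvar}/\ref{hyp:momentGradT}, and its complement, whose contribution is made negligible by Hölder's inequality combined with the negative-moment bound \ref{hyp:inverseGrad}. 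The thresholds $h, \tilde{h}, h' > 2$ and $\mu > 1$ are exactly what makes the Hölder-conjugate exponent bookkeeping close. As this is the same machinery already developed for \Cref{thm:GradNStudyAllalpha,thm:SNR-REP}, I expect the proof to quote the corresponding technical lemmas rather than redo these estimates from scratch.
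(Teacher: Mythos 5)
Your proposal is correct and follows essentially the same route as the paper: the same decomposition $\gradDREP = \alpha\,\overline{X}^{(1)}_N/\overline{Y}_N + \frac{1-\alpha}{N}\,\overline{X}^{(2)}_N/(\overline{Y}_N)^2$, the same moment-level ratio asymptotics (the paper packages your ``moment-upgraded delta method'' with the good-event/H\"older/negative-moment control into \Cref{prop:handleRN}), and the same final identification of the limiting constants via the interchange identities of \Cref{prop:interch-deriv-expect-drep} under \ref{hyp:hypZeroDREP}. The only superfluous element is your detour through unbiasedness and \Cref{thm:GradNStudyAllalpha} (whose hypotheses \ref{hyp:momentGradREP} and \ref{hyp:hypZeroREP} are not assumed here); your parenthetical direct identifications $\alpha\,\PE(\w^{1-\alpha}D)/\PE(\w^{1-\alpha})=\partial_{\psi}\mathrm{VR}^{(\alpha)}(\theta,\phi;x)$ and $\PE(\w^{2}D)/\PE(\w)^{2}=-\tfrac12\partial_{\psi}[\gammaA[0](\theta,\phi;x)^2]$ are exactly what the paper uses and suffice.
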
 
The proof of \Cref{lem:SNR} is deferred to \Cref{app:lem:SNR}. Similarly to \Cref{thm:SNR-REP} for the REP gradient estimator \eqref{estimREP}, \Cref{lem:SNR} highlights a bias-variance tradeoff in the DREP gradient estimator \eqref{estimDREP} through the choice of the hyperparameter $\alpha \in [0,1)$. 

However, a critical distinction arises when comparing the two quantities $\vREP(\theta, \phi;x)$ and $\genvDREP(\theta, \phi;x)$. Unlike $\vREP(\theta, \phi;x)$, which remains non-zero even when the variational approximation matches the true posterior ($p_\theta(\cdot|x) = q_{\phi}(\cdot|x)$), we get that $\genvDREP(\theta, \phi;x) = 0$ in that case. \Cref{thm:SNR-REP,lem:SNR} thus suggest choosing the DREP gradient estimator over the REP one, aligning with the original intuitions of \cite{Tucker2019DoublyRG,roeder2017}. From an optimization perspective, the property that $\genvDREP$ vanishes near the optimum, hence driving the asymptotic variance of the DREP gradient estimator \eqref{estimDREP} to zero, is referred to as the interpolation condition. When this condition holds, the convergence rate of SGA is known to improve significantly \citep{vaswani2019fast}.

To the best of our knowledge, \Cref{lem:SNR} is the first asymptotic result in importance-weighted VI which supports the use of DREP gradient estimators over REP gradient estimators. While the asymptotic $\sqrt{N}$ rate for the SNR of the DREP gradient estimator of the IWAE bound ($\alpha = 0$) was in fact informally derived in \cite[Section 8.2]{Tucker2019DoublyRG}, the focus of \cite{Tucker2019DoublyRG} was on how the $\sqrt{N}$ rate of the DREP gradient estimator improved on the $\sqrt{1/N}$ rate of the REP gradient estimator when learning $\phi$ thanks to a variance reduction phenomenon. In addition, no conditions ensuring that the informal asymptotic analysis written in \cite{Tucker2019DoublyRG} held were stated. 
  
  In contrast, our approach permits us to show under mild conditions
  that the SNR rate scales like $\sqrt{N}$ for both the REP and DREP
  gradient estimators when $\alpha \in [0,1)$ (except when $\alpha =
  0$ and $\psi \in \{ \phi_1, \ldots, \phi_b \}$ in the REP gradient
  estimator). Importantly, we go beyond the variance reduction
  phenomenon that originates from increasing $N$ by taking a closer
  look at the constants appearing asymptotically in the SNR of the REP
  and DREP gradient estimators. We obtain that within a same class of
  gradient estimators (REP or DREP), the tuning of $\alpha$ leads to
  further variance reduction in parallel to increasing $N$, at the
  cost of potentially increasing the bias. Furthermore, $\vREP(\theta,
  \phi;x)$ and $\genvDREP(\theta, \phi;x)$ are quantities that enable
  the comparison between the REP and DREP gradient estimators, with
  theoretical evidence suggesting that the latter is expected to
  outperform the former. As an aside, observe that all the
  results we obtained extend naturally to the case $\alpha = 1$
  corresponding to the ELBO, albeit in a simpler non-asymptotic way
  (see \Cref{rem:alpha1REP,rem:alpha1DREP} in the appendix for more details). \looseness=-1

We now present two insightful examples where \Cref{thm:GradNStudyAllalpha,thm:SNR-REP,lem:SNR} apply and in which all the quantities of interest appearing in these results are analytically tractable. \looseness=-1

\begin{ex} \label{ex:Gaussian} Let $\theta, \phi \in \rset^d$. Set
  $p_\theta(z|x) = \mathcal{N}(z;\theta, \boldsymbol{I}_d)$ and
  $q_{\phi}(z|x) = \mathcal{N}(z; \phi, \boldsymbol{I}_d)$, where
  $\boldsymbol{I}_d$ is the $d$-dimensional identity matrix. Consider
  the reparameterization given by
  $Z = f(\varepsilon, \phi;x) = \varepsilon + \phi$ where
  $\varepsilon \sim \mathcal{N}(0, \boldsymbol{I}_d)$ and the goal is
  to learn~$\phi$. Then, we can apply
  \Cref{thm:GradNStudyAllalpha,thm:SNR-REP,lem:SNR} and all the terms
  appearing in these results are analytically tractable. In
  particular, to emphasize the impact
  of the latent space dimension $d$, we set
  $\theta-\phi = \epsilon \cdot \boldsymbol{u}_d$ where
  $\boldsymbol{u}_d$ denotes the $d$-dimensional vector whose
  coordinates are all equal to $1$ and $\epsilon > 0$. Then, we have that:
  for all $\alpha \in [0,1)$ and $k = 1 \ldots d$,
\begin{align*}
    & \PE(\gradREP[1,N][\phi_k]) = \epsilon \alpha +\frac{\epsilon(1-\alpha)\rme^{{(1-\alpha)^2 d \epsilon^2}}}{N}  + o\lr{\frac{1}{N}} \nonumber  \quad \mbox{as $N \to \infty$}, \\
   & \vREP[\alpha][\phi_k](\theta, \phi; x) = \rme^{ (1-\alpha)^2 d \epsilon^2} \lr{1 + (1-\alpha)^2 \epsilon^2}, \\
  & \genvDREP[\alpha][\phi_k](\theta, \phi; x) =
  \begin{cases}
  0  & \mbox{ if } \alpha \in (0,1), \\ \epsilon^2 \rme^{2 d \epsilon^2} \;\lr{\rme^{4 d \epsilon^2} - 4 \rme^{2 d \epsilon^2} + 4 \rme^{d \epsilon^2} - 1}  & \mbox{ if } \alpha = 0. 
  \end{cases}
\end{align*}
The derivation details for this example are deferred to \Cref{app:proof:ex:Gaussian}.
\end{ex}
\Cref{ex:Gaussian} showcases how $\alpha$ enables a bias-variance tradeoff for the REP and DREP gradient estimators of the VR-IWAE bound. For the REP gradient estimator, the variance term $\vREP[\alpha][\phi_k](\theta, \phi; x)$ is reduced by increasing $\alpha$ at the cost of increasing the bias in the gradient of the VR bound $\alpha \epsilon$. Viewing this result in terms of SNR, \Cref{thm:SNR-REP} yields: as $N \to \infty$, \looseness=-1
\begin{align*}
\mathrm{SNR}[\gradREP[1,N][\phi_k]] = \begin{cases}  
  \dfrac{ \sqrt{N} \epsilon \lr{\alpha \exp\lr{\frac{-(1-\alpha)^2 d \epsilon^2}{2}}}}{\sqrt{1+(1-\alpha)^2\epsilon^2}} (1+o(1)) & \mbox{if $\alpha \in (0,1)$}\\
    O \lr{1/\sqrt{N}} & \mbox{if $\alpha = 0$}.
\end{cases}
\end{align*}
Hence setting $\alpha \in (0,1)$ not only improves the asymptotic SNR rate in $N$, but increasing $\alpha$ within that range also improves the leading order constant. For the DREP gradient estimator, it is worth highlighting that $\genvDREP[\alpha][\phi_k](\theta, \phi;x)$ is $0$ for all $\alpha \in (0,1)$ as opposed to the case $\alpha = 0$, meaning that letting $\alpha \in (0,1)$ in this estimator improves the SNR rate in $N$ (note that the setting where $\genvDREP(\theta, \phi;x) = 0$ for certain values of $\alpha$ seems unlikely enough in practice to warrant a specific theoretical study of the leading order term in the SNR). \looseness=-1 

\begin{ex} \label{ex:LinGauss} Let $\theta \in \rset^d$,
  $\phi=(\tilde{a},\notationb) \in \rset^d \times \rset^d$ and
  $A=\mathrm{diag}(\tilde{a})$. Set
  $p_{\theta}(z)=\mathcal{N}(z;\theta, \boldsymbol{I}_d)$,
  $p_\theta(x|z)=\mathcal{N}(x;z, \boldsymbol{I}_d)$ and
  $q_{\phi}(z|x)=\mathcal{N}(z;Ax+\notationb, 2/3~\boldsymbol{I}_d)$
  as in \cite{rainforth2018tighter}. Consider the reparameterization
  given by $Z = \sqrt{\frac{2}{3}} \varepsilon + Ax +\notationb$,
  where $\varepsilon \sim \mathcal{N}(0, \boldsymbol{I}_d)$. Then, the
  assumptions from \Cref{thm:GradNStudyAllalpha,thm:SNR-REP,lem:SNR}
  are met and all the terms appearing in these results are
  analytically tractable. In particular, denoting
  $x = (x_1, \ldots, x_d)$, $\theta = (\theta_1, \ldots, \theta_d)$,
  $\notationb= (\notationb_1, \ldots, \notationb_d)$, letting
  $Ax+\notationb = (\theta+x)/2 + \epsilon \boldsymbol{u}_d$ with
  $\epsilon>0$, we have that: for all
  $\alpha \in [0,1)$  and $k = 1 \ldots d$, \looseness=-1
\begin{align*}
  & \PE(\gradREP[1,N][\notationb_k]) = -\frac{6\epsilon\alpha}{4-\alpha} - \frac{{24\epsilon(1-\alpha)(4-\alpha)^{d-1}} \rme^{\frac{24(1-\alpha)^2d\epsilon^2}{(5-2\alpha)(4-\alpha)}}}{N 3^{d/2}(5-2\alpha)^{\frac{d}{2} + 1}} + o\lr{\frac{1}{N}}, \quad \mbox{as $N \to \infty$} \\
  & \vREP[\alpha][\notationb_k](\theta, \phi; x) =  \frac{4(4-\alpha)^{d/2}}{(15-6\alpha)^{d/4}} \rme^{\frac{12(1-\alpha)^2}{(4-\alpha)(5-2\alpha)} d \epsilon^2} \sqrt{{\frac{2}{5-2\alpha} + \lr{\frac{12(1-\alpha) \epsilon}{(5-2\alpha)(4-\alpha)}}^2}} \\
  & \genvDREP[\alpha][\notationb_k](\theta, \phi;x) = \frac{\alpha^{2}}{16}~\vREP[\alpha][\notationb_k](\theta, \phi; x), \quad \alpha \in (0,1)
\end{align*}
and $\genvDREP[0][\notationb_k](\theta, \phi;x)$ is given by \eqref{eq:bkvDREPtwo} in \Cref{app:subsec:proof:ex:LinGauss}. The derivation details for this example are deferred to \Cref{app:subsec:proof:ex:LinGauss}.
\end{ex}
\Cref{ex:LinGauss} illustrates again the bias-variance tradeoff occurring in the REP and DREP gradient estimators as $\alpha$ varies. Furthermore, the reduction in asymptotic variance obtained by using the DREP gradient estimator over the REP one is remarkably simple to visualize when $\alpha \in (0,1)$: it is captured by a multiplicative factor ${\alpha^{2}}/{16}$, which itself translates into a direct improvement in terms of asymptotic SNR: as $N \to \infty$,
\begin{align*}
\mathrm{SNR}[\gradDREP[1,N][\notationb_k]] = {4}{\alpha}^{-1}~\mathrm{SNR}[\gradREP[1,N][\notationb_k]], \quad \mbox{$\alpha \in (0,1)$}.
\end{align*}

\subsection{Gradient estimators in deteriorating variational approximation regimes}
\label{sub:Collapse}

The asymptotic results of \Cref{sec:behavior-gradient-vr-Nasymp} are
obtained as $N\to\infty$ for a given model and fixed parameters
$\theta$ and $\phi$. However, how large $N$ needs to be in practice for the provided asymptotic behaviors to kick in depends on how big the mismatch between the target density and the variational density is. 

One way to see this is to keep $\alpha\in[0,1)$ fixed in
\Cref{ex:Gaussian,ex:LinGauss} and to focus on the impact of $d$ and
$\epsilon$ in those examples. Increasing $d$ or $\epsilon$ can be interpreted as worsening the mismatch between the posterior density $p_\theta(\cdot|x)$ and its variational approximation $q_\phi(\cdot|x)$: as $d$ or $\epsilon$ increase, the main terms in the predicted asymptotic behaviors of \Cref{ex:Gaussian,ex:LinGauss} vanish, and the variances will blow up. Consequently, as the mismatch between the posterior and variational densities grows, the sample size $N$ required to reach the asymptotic regimes predicted by \Cref{thm:GradNStudyAllalpha,thm:SNR-REP,lem:SNR} may become prohibitively large. This calls for a specific study of importance-weighted VI gradient estimators in settings where the variational approximation of the posterior density deteriorates. \looseness=-1

To that end, we consider in the following an asymptotic regime where $N\to\infty$ in such a way that the mismatch between the target density and the variational one increases in the meantime. Since this mismatch can be the result of various causes, we start by picking a criterion reflecting this mismatch, before making and motivating assumptions that enable us to carry out the analysis. Specifically, we propose to use the Kullback-Leibler (KL) divergence \looseness=-1
\begin{align}
\KL: = \ell(\theta;x) - \mathrm{ELBO}(\theta, \phi;x) \label{eq:decompELBO}
\end{align}
as a natural measure of mismatch and we seek to capture the behavior of importance-weighted VI gradient estimators as $N,\KL\to\infty$. Here, it is understood that the considered target and/or variational density depend on $N$ in any possible way, such as through $\theta$, $\phi$, or $x$ (for example via their dimensions) but with the constraint that both $N\to\infty$ and $\KL \to \infty$. This regime is highly challenging since the gap between the marginal log-likelihood and the ELBO (that is, the VR-IWAE bound with $N=1$) goes to infinity and the variational approximation thus severely deteriorates.

We now introduce two assumptions that enable a rigorous treatment of this regime for the REP gradient estimator \eqref{estimREP}. The first assumption is on the log-weight $\log \tw[\phi][\varepsilon][\phi]$: \looseness=-1
\begin{hyp}{B} 
\item \label{hyp:reparamHighDimGaussian} For $\varepsilon \sim \REPq$, the
  random field $\lr{\log
    \tw[\phi][\varepsilon][\phi]}_{(\theta,\phi)\in\Theta\times\Phi}$ is a Gaussian
  random field.
\end{hyp}
The second assumption, denoted \ref{hyp:hypZeroREPhighDim}, guarantees that we can interchange derivative and expectation signs as needed when \ref{hyp:reparamHighDimGaussian} also holds. It is postponed to \Cref{sec:gaussian-case-df} to not overload the paper with technical assumptions that can be overlooked in a first read and we concentrate instead on \ref{hyp:reparamHighDimGaussian}. Assumption \ref{hyp:reparamHighDimGaussian} permits us to use the convenient representation \looseness=-1 
\begin{align} \label{eq:lognormalReparm}
&  \log \tw[\phi][\varepsilon][\phi] = \PE_{\varepsilon \sim \REPq}(\log \tw[\phi][\varepsilon][\phi]) - \sqrt{\mathbb{V}_{\varepsilon \sim \REPq}\lr{\log \tw[\phi][\varepsilon][\phi]}} S(\varepsilon, \theta, \phi;x),
\end{align}
where $\lr{S(\varepsilon, \theta, \phi;x)}_{(\theta,\phi)\in\Theta\times\Phi}$ is a centered Gaussian random field with unit variance. Intuitively, as importance weights often asymptotically approach a log-normal distribution \citep[see, e.g.,][for an explanation via the Central Limit Theorem]{daudel2022Alpha}, this assumption is meant to capture the limiting behavior where the weights are exactly log-normal. Based on \eqref{eq:lognormalReparm}, we can also link our measure of mismatch to the variance of the log-weights as \looseness=-1
\begin{equation}
  \label{eq:gaussian-exp-moment-rep-proof}
 \frac12 \mathbb{V}_{\varepsilon \sim \REPq}\lr{\log \tw[\phi][\varepsilon][\phi]} = \ell(\theta;x) - \PE_{\varepsilon \sim \REPq}\lr{\log\tw[\phi][\varepsilon][\phi]} = \KL,
\end{equation}
which follows from \ref{hyp:reparameterized}, \eqref{eq:decompELBO} and the formula for exponential moments of Gaussian variables
$$
\PE_{\varepsilon \sim \REPq}\lr{\tw[\phi][\varepsilon][\phi]}=\rme^{\PE_{\varepsilon \sim \REPq}\lr{\log \tw[\phi][\varepsilon][\phi]}+\frac12 \mathbb{V}_{\varepsilon \sim \REPq}\lr{\log \tw[\phi][\varepsilon][\phi]}}.
$$
The representation \eqref{eq:lognormalReparm} will turn out to be key in our proofs since the REP gradient estimator involves differentiating $\log \tw[\phi][\varepsilon][\phi]$. We now investigate the asymptotic behavior of the REP gradient estimator \eqref{estimREP} under \ref{hyp:reparamHighDimGaussian}-\ref{hyp:hypZeroREPhighDim} as $N, \KL \to \infty$. This leads us to the next proposition.
\begin{prop} \label{thm:CollapseSNRGaussian}
  Assume \ref{hyp:reparamHighDimGaussian} and
  \ref{hyp:hypZeroREPhighDim}. Let $N,\KL\to\infty$ with
  \begin{align}
  & \label{eq:gaussian-high-dim-growing-cond}
  \log N= o\lr{\KL} \\
& \label{eq:SNRcondCollapse-corr}
                                                        \limsup \lrav{\corrREP} <1,
\end{align}
where \begin{align}
  \label{eq:def-corr-parameter-rep}
&\corrREP :=  \mathbb{C}\mathrm{orr}_{\varepsilon \sim \REPq}\lr{\log \tw[\phi][\varepsilon][\phi],\partial_{\psi}\log \tw[\phi][\varepsilon][\phi]} .
\end{align}
Then, it holds that
\begin{align} \label{eq:SNRcollapseGaussian-new-UB}
  \USNRREP=O\lr{\sqrt{\log N}\,\lrav{\corrREP}},
\end{align}
with
\begin{align}
\label{eq:def-useful-snr-rep}
\USNRREP:=\mathrm{SNR}\lrb{\gradREP[1,N]-\PE\lr{\gradREP[1,1]}}.
\end{align}
\end{prop}
The proof of \Cref{thm:CollapseSNRGaussian} is deferred to
\Cref{app:thm:gaussian-high-dim-gradient-snr}. \Cref{thm:CollapseSNRGaussian}
provides an upper bound on $\USNRREP$ as $N,\KL \to \infty$ under two
conditions: the first condition
\eqref{eq:gaussian-high-dim-growing-cond} ties $N$ to $\KL$ in such a
way that $N$ should not grow quicker than $\rme^{\KL}$, while the
second condition \eqref{eq:SNRcondCollapse-corr} is a mild condition
to avoid the degenerate case where the log likelihood and its
derivative tend to be colinear (by requiring these correlations to
stay away from $1$ and $-1$). 

Here, $\USNRREP$ captures whether the REP gradient estimator \eqref{estimREP} with $N>1$ behaves significantly differently from the case $N=1$. More precisely, it  evaluates how the difference $\PE(\gradREP[1,N])-\PE(\gradREP[1,1])$ compares to the stochastic error in the REP gradient estimator, that is, if the shift obtained by using $N > 1$ instead of $N = 1$ is large enough to dominate the noise of the REP gradient estimator. In the theorem below, we provide more precise results characterizing its asymptotic behavior.

\begin{thm} \label{thm:CollapseSNRGaussian-contd}
Assume \ref{hyp:reparamHighDimGaussian} and \ref{hyp:hypZeroREPhighDim}. Let $N,\KL\to\infty$ with~(\ref{eq:gaussian-high-dim-growing-cond}) and
\begin{equation}
  \label{eq:SNRcondCollapse-corr-stronger}
\sqrt{\log N}\,  \corrREP = O(1).
\end{equation}
Then, it holds that
\begin{align}
\label{eq:SNRcollapseGaussian-new}
  \USNRREP & =\sqrt{2\log N}\; |\corrREP|\lr{1+o(1)}.
\end{align}
If moreover, 
\begin{equation}
  \label{eq:SNRcondCollapse-corr-even-stronger}
\sqrt{\log N}\,  \corrREP = o(1),
\end{equation}
then we have
  \begin{equation}
    \label{eq:SNR-Equiv-as-old-times}
    \mathrm{SNR}\lrb{\gradREP[1,N]}=    \mathrm{SNR}\lrb{\gradREP[1,1]}\,\lr{1+o(1)}+o(1)\;.
  \end{equation}
\end{thm}
The proof of \Cref{thm:CollapseSNRGaussian-contd} can be found in \Cref{app:thm:CollapseSNRGaussian-contd}.  \Cref{thm:CollapseSNRGaussian-contd} shows that, as $N, \KL \to \infty$ with \eqref{eq:gaussian-high-dim-growing-cond} and under the condition \eqref{eq:SNRcondCollapse-corr-stronger} (which ensures \eqref{eq:SNRcondCollapse-corr} holds), the upper bound given in \eqref{eq:SNRcollapseGaussian-new-UB} is tight. In addition, it captures the asymptotic behavior of $\mathrm{SNR}[\gradREP[1,N]]$ under the stronger condition \eqref{eq:SNRcondCollapse-corr-even-stronger}. We now further delve into the interpretation of \Cref{thm:CollapseSNRGaussian-contd}. \looseness=-1

\paragraph{Assumptions made in \Cref{thm:CollapseSNRGaussian-contd}.} As previously mentioned, \eqref{eq:gaussian-high-dim-growing-cond} assumes that we are in a regime where $N$ does not grow faster than $\rme^{\KL}$. We next elucidate the conditions \eqref{eq:SNRcondCollapse-corr-stronger} and \eqref{eq:SNRcondCollapse-corr-even-stronger}. By appealing to the equivalent formulation of $|\corrREP|$ proved in \eqref{eq:rewritingCorr} of \Cref{subsec:addNotation}, we get that \eqref{eq:SNRcondCollapse-corr-stronger} and \eqref{eq:SNRcondCollapse-corr-even-stronger} are assumptions on the behavior of \looseness=-1
\begin{align*}
\sqrt{\log N} \lrav{\corrREP} = \sqrt{\frac{\log N}{ 2\KL}} \mathrm{SNR}\lrb{{\partial_{\psi} \log \lr{\frac{q_\phi(f(\varepsilon, \phi;x)|x)}{p_\theta(f(\varepsilon, \phi;x)|x)}}}} 
\end{align*}
Hence, \eqref{eq:SNRcondCollapse-corr-stronger} and \eqref{eq:SNRcondCollapse-corr-even-stronger} are conditions on ${\partial_{\psi} \log \lr{{q_\phi(f(\varepsilon, \phi;x|x))}/{p_\theta(f(\varepsilon, \phi;x)|x)}}}$, which is an estimator of $\partial_\psi \KL$. As a result, the assumptions made in \Cref{thm:CollapseSNRGaussian-contd} not only tie $N$ to $\KL$, but also to an estimator of $\partial_\psi \KL$: they quantify how the mismatch between the target density and the variational one evolves as a function of $N$. \looseness=-1

\paragraph{Interpreting \eqref{eq:SNRcollapseGaussian-new} and \eqref{eq:SNR-Equiv-as-old-times}.} The asymptotic result \eqref{eq:SNRcollapseGaussian-new} demonstrates that the optimization trajectory does not change significantly by increasing $N$ beyond $N =1$ in the REP gradient estimator \eqref{estimREP}, since $\USNRREP$ is bounded as $N$ grows (see \Cref{rem:practicalHighDim} in the appendix for a finer analysis). Furthermore, under the stronger condition \eqref{eq:SNRcondCollapse-corr-even-stronger}, $\mathrm{SNR}[\gradREP[1,N]]$ reverts to $\mathrm{SNR}[\gradREP[1,1]]$ by \eqref{eq:SNR-Equiv-as-old-times}. \looseness=-1

Hence, once the variational approximation deteriorates beyond the threshold of \eqref{eq:gaussian-high-dim-growing-cond}, \Cref{thm:CollapseSNRGaussian-contd} suggests that utilizing $N>1$ in importance-weighted VI methods imposes a computational burden without improving over the $N=1$ ELBO baseline. However, a key takeaway message from \Cref{thm:CollapseSNRGaussian-contd} is also that importance-weighted VI methods continue to yield valid gradient estimators for learning the parameters of interest $(\theta, \phi)$ even when the variational approximation deteriorates significantly. Remarkably, reverting to the $N = 1$ ELBO baseline in fact ensures that the optimization trajectory refines the variational approximation by minimizing $\KL$ w.r.t. $\phi$. Consequently, if the variational family is sufficiently expressive, $\KL$ is expected to decrease during training, thus lowering the exponential threshold $\rme^{\KL}$ and allowing a practical sample budget $N>1$ to become algorithmically viable again. We illustrate \Cref{thm:CollapseSNRGaussian-contd} by revisiting \Cref{ex:Gaussian}. \looseness=-1
\begin{ex}\label{ex:GaussianHighDim} Consider the setting of
  \Cref{ex:Gaussian} where $\theta-\phi = \epsilon \cdot
  \boldsymbol{u}_d$ with $\epsilon > 0$. Then, $\KL =
  \frac12\epsilon^2 d$, $\corrREP = 1/\sqrt{d}$ and,
  \eqref{eq:gaussian-high-dim-growing-cond} and
  \eqref{eq:SNRcondCollapse-corr-even-stronger} respectively become \looseness=-1
\begin{align} \label{eq:Nexpd}
  \log N\ll \epsilon^2 d\quad\text{and}\quad   \log N\ll d\;.
\end{align}
Now applying \Cref{thm:CollapseSNRGaussian-contd}: as $N,\epsilon^2d
\to \infty$, we have under \eqref{eq:Nexpd} that for all $\alpha \in [0,1)$ and $\psi \in \{\phi_1, \ldots, \phi_d\}$, 
\begin{align}  \label{eq:Nexpd-thm-applied}
  & \mathrm{SNR}[\gradREP[1,N]]= \epsilon \; (1+o(1)) + o(1). 
  \end{align}
We also have that $\mathbb{V}(\gradDREP[1,1]) = 0$. The derivation details for this example are deferred to
 \Cref{app:ex:GaussianHighDim}.
\end{ex}
As $N, \epsilon^2d \to \infty$, \Cref{ex:GaussianHighDim} states that
the condition \eqref{eq:Nexpd} amounts to assuming that the number of
samples $N$ does not grow quicker than exponentially with
$\rme^{\epsilon^2 d}$, in which case the REP gradient estimator with
$N>1$ reverts to the REP gradient estimator with $N=1$
(ELBO). Bypassing \eqref{eq:Nexpd} imposes a heavy computational
budget as the latent space dimension $d$ increases unless $\epsilon$
decreases to compensate, that is, the variational approximation
closely matches the posterior density.

Note that $\mathbb{V}(\gradDREP[1,1]) = 0$ for this example. As it
turns out, this property is also true if we consider a general
Gaussian model with known positive definite covariance matrix (see
\Cref{subsec:genGaussModel} for details). Such models represent the
main models of interest satisfying \ref{hyp:reparamHighDimGaussian}
and we argue that, although it is possible to study the DREP gradient
estimator as $N,\KL \to \infty$ for these models, it is better to skip
this study for the sake of conciseness in order to avoid lenghtly
derivations that are only applied to the corner case where
$\mathbb{V}(\gradDREP[1,1]) = 0$. Instead, we wrap up this subsection by putting our results into perspective with the existing literature and by discussing extensions of the assumption \ref{hyp:reparamHighDimGaussian}. \looseness=-1

\paragraph{Related work and broader impact.} The degradation of importance weights in high-mismatch regimes is a known phenomenon in the existing literature \citep{bengtsson2008curse}. Notably, \cite{agapiou2017importance,chatterjee2018sample} analyze the sample size $N$ leading to accurate estimation of self-normalized importance sampling (SNIS) estimators for bounded test functions. \cite{agapiou2017importance} obtain bounds on the bias and mean squared error of SNIS estimators suggesting $N$ should be exponential with the KL divergence between the proposal and target distributions. Provided that the log-weight is concentrated around its expected value, \cite{chatterjee2018sample} demonstrate that having $N$ of this order is necessary and sufficient to control the L1 error of SNIS estimators. \looseness=-1

While \cite{agapiou2017importance,chatterjee2018sample} establish the threshold below which SNIS estimators fail, these works provide no information on how this failure impacts the optimization trajectory of importance-weighted VI algorithms. Our work shows that the REP gradient estimator \eqref{estimREP} does not break down when $N = o(\rme^{\KL})$ but rather reverts to the ELBO case $N = 1$ and in particular minimizes $\KL$ w.r.t. $\phi$. Hence, $\KL$ is expected to decrease during training, thus reducing the sample budget $N>1$ needed to escape the weight collapse regime. In addition, our results are obtained without assuming that $\alpha = 0$ (SNIS setting) and that the test function (in our case $\partial_\psi \tw[\phi][\varepsilon][\phi]$) is bounded. In doing so, we fill a known theoretical gap in the VI literature \citep{GeffnerDomke2020Biased,vaitl2022gradients}. \looseness=-1

Relatedly, \cite{daudel2022Alpha} shows that the VR-IWAE bound reverts to the ELBO in the exact Gaussian random field case as well as in an instance of approximately Gaussian random field case. Our work departs from \cite{daudel2022Alpha} in two fundamental ways as (i) it shifts the focus from variational bounds to gradient estimators (ii) it considers the setting in which the variational approximation deteriorates and its deterioration manifests through the KL divergence $\KL$. In particular, moving from variational bounds to their gradient estimators has strong consequences in terms of proof techniques since we need to differentiate \eqref{eq:lognormalReparm} in the proofs of \Cref{thm:CollapseSNRGaussian} and \Cref{thm:CollapseSNRGaussian-contd}. To achive this, we rely on a novel bound which is key to show the collapse of self-normalized weighted averages (see \Cref{prop:new:collapse-behavior-general} in \Cref{app:prelim:collapse}). Due to the generality of \Cref{prop:new:collapse-behavior-general}, we anticipate that this bound will be of use beyond our current work. 

\paragraph*{Beyond the assumption \ref{hyp:reparamHighDimGaussian}.} A logical extension of \ref{hyp:reparamHighDimGaussian} is to assume that the first two moments of $\log \tw$ exist. We can then write without any loss of generality that
\begin{align} \label{eq:furtureWork}
\log \tw = \mathbb{E}_{\varepsilon \sim \REPq}(\log \tw) - \sqrt{\mathbb{V}_{\varepsilon \sim \REPq}(\log \tw)} \tilde{S}(\varepsilon, \theta, \phi, \phi';x),
\end{align}
where $(\tilde{S}(\varepsilon, \theta, \phi, \phi';x))_{(\theta,\phi,\phi') \in \Theta \times \Phi^2}$ is a centered and normalized random field. Under adequate conditions, the centered and normalized log likelihood ratio
$\tilde{S}(\varepsilon, \theta, \phi, \phi';x)$ is expected to be well approximated by a Gaussian random field. We conjecture that the gradient estimators $\gradREP[1,N]$ and $\gradDREP[1,N]$ will revert to the case $N=1$ under conditions akin to (and possibly stronger than) the ones from \Cref{thm:CollapseSNRGaussian-contd}. 

Adapting our results to this more general case in order to capture the behavior of both the REP and DREP gradient estimators requires further intricate derivations, which are left for future work. Indeed, our proofs build on the property that $S(\varepsilon, \theta, \phi; x)$ and its derivative $\partial_\psi S(\varepsilon, \theta, \phi; x)$ are two independent and centered Gaussian processes under \ref{hyp:reparamHighDimGaussian} (see \ref{item:propREPest-gaussian} in \Cref{subsec:addNotation}); the property \ref{item:propREPest-gaussian} does not hold anymore when we move away from \ref{hyp:reparamHighDimGaussian} to consider settings of the form \eqref{eq:furtureWork} where $\tilde{S}(\varepsilon, \theta, \phi, \phi';x)$ is well approximated by a Gaussian random field under adequate conditions. \looseness=-1

\subsection{Discussion and practical guidance}

Our approaches in \Cref{sec:behavior-gradient-vr-Nasymp,sub:Collapse} are complementary approaches that characterize different behaviors/phases occuring importance-weighted VI algorithms. 

On the one hand, \Cref{sec:behavior-gradient-vr-Nasymp} captures the setting where the variational approximation matches the posterior density reasonably well, in which case the benefits of importance-weighted VI methods are fully leveraged. For this setting, increasing $N$ leads to a variance reduction phenomenon and SNRs that scale with $\sqrt{N}$ (except for edge cases), with the tuning of $\alpha$ leading to further variance reduction. Furthermore, the gradient estimators $\gradREP[1,N]$ and $\gradDREP[1,N]$ point on average in the direction that is expected to improve on ELBO maximization.

On the other hand, \Cref{sub:Collapse} sheds light on the behavior of importance-weighted VI algorithms in the challenging regime where the
variational density is not suitably close to the posterior density in terms of KL divergence. While a weight collapse occurs when $N = o(\rme^{\KL})$ where $\KL$ can be large, this regime actively seeks to reduce $\KL$ by improving the quality of the variational approximation. \looseness=-1

These behaviors align with the two training phases hypothesized by \cite{vaitl2022gradients}: \Cref{sub:Collapse} describes the initial training phrase of importance-weighted VI algorithms and \Cref{sec:behavior-gradient-vr-Nasymp} corresponds to the final training phase near convergence. While \cite{vaitl2022gradients} treats these phases as decoupled, our analysis ties them together. Provided that the variational family is sufficiently expressive, the initial training phase indeed naturally drives the optimization trajectory toward the final one by lowering the exponential threshold $\rme^{\KL}$. From there, we can derive the following practical considerations:

\paragraph{Budget allocation.} Given a fixed total computational budget of $MN$ (where $M$ is the number of gradient steps), our results lead to a two-phase allocation strategy:

\begin{itemize}
  \item Initial training phase (high mismatch): Setting $N > 1$ imposes a strict computational penalty without altering the optimization trajectory. Therefore, the practitioner should set $N = 1$ to maximize $M$, allowing the model to traverse the parameter space as rapidly as possible to reduce $\KL$.
  
  \item Final training phrase (low mismatch): When the variational approximation is sufficiently close to the target in terms of KL divergence, the practitioner should increase the budget to $N > 1$. \looseness=-1
\end{itemize}

\paragraph{On-the-fly tuning of $N$ and $\alpha$.} Although $\KL$ cannot be calculated on-the-fly, practitioners can cheaply monitor the Effective Sample Size ($\mathrm{ESS}$) or the variance of the log-weights as a proxy. For instance, if the monitored $\mathrm{ESS}$ collapses toward $1$, it signals that the gradient estimators are suffering from a weight collapse. In this case, the practitioner can either automatically drop $N$ to $1$ to save computational effort, or dynamically tune $\alpha$ toward a value that balances the bias-variance tradeoff we identified in this paper. A detailed investigation of these tuning mechanisms is left for future work.

\paragraph{High dimensions.} Although high dimensionality typically initializes the variational approximation into the initial training phrase, the fundamental quantity in importance-weighted VI is the mismatch between the variational approximation and the posterior density, rather than the latent dimension $d$ itself. Our analysis suggests that, should the variational family be flexible enough, the variational approximation is improved during the optimisation procedure until it enters the first regime even in high dimensions. As we shall see now in our numerical experiments, one can perfectly emulate this high-dimensional behavior even within a moderate latent dimension $d$, so long as the variational approximation is sufficiently poor. \looseness=-1

\section{Numerical experiments}
\label{sec:numExp}

In this section, we provide empirical evidence supporting our theoretical claims. 

\subsection{Gaussian example}
\label{num:GaussEx}

We consider the Gaussian example from \Cref{ex:Gaussian}, for which \Cref{thm:SNR-REP} predicts the following asymptotic behavior of $\mathrm{SNR}[\gradREP[1,N][\phi_k]]$ for all $\alpha \in [0,1)$: as $N \to \infty$,
\begin{align*}
  \mathrm{SNR}[\gradREP[1,N][\phi_k]] =
   \dfrac{ \sqrt{N} \epsilon \lr{\alpha \exp\lr{\frac{-(1-\alpha)^2 d \epsilon^2}{2}} + \frac{1-\alpha}{N} \exp\lr{ \frac{(1-\alpha)^2d \epsilon^2}{2}}}}{\sqrt{1+(1-\alpha)^2\epsilon^2}} (1+o(1)). 
\end{align*}
To check the validity of this asymptotic result, we first study the case $\alpha \in (0,1)$. Consequently, we investigate whether the asymptotic behavior as $N \to \infty$ 
\begin{align}
  & \mathrm{SNR}[\gradREP[1,N][\phi_k]] =
   \dfrac{ \sqrt{N} \epsilon \alpha \exp\lr{\frac{-(1-\alpha)^2 d \epsilon^2}{2}}}{\sqrt{1+(1-\alpha)^2\epsilon^2}} (1+o(1)), \quad \mbox{$\alpha \in (0,1)$} \label{eq:predictThmtwo}
\end{align}   
matches with the observed $\mathrm{SNR}[\gradREP[1,N][\phi_k]]$ as $N$ increases (note that only the leading order term remains in \eqref{eq:predictThmtwo}). To this end, we let $\epsilon \in \{0.2, 1., 2\}$, $\alpha \in \{0.1, 0.3, 0.5, 0.7, 0.9\}$, $d \in \{10, 100\}$, $N \in \{2^j, j = 1 \ldots 14\}$ and $\phi_k$ be a random coordinate in $(\phi_1\ldots \phi_d)$. The results are shown in Figure \ref{fig:GaussianExample}.

\begin{figure}[t]
  
  \begin{tabular}{ccc} 
    \includegraphics[scale=0.28]{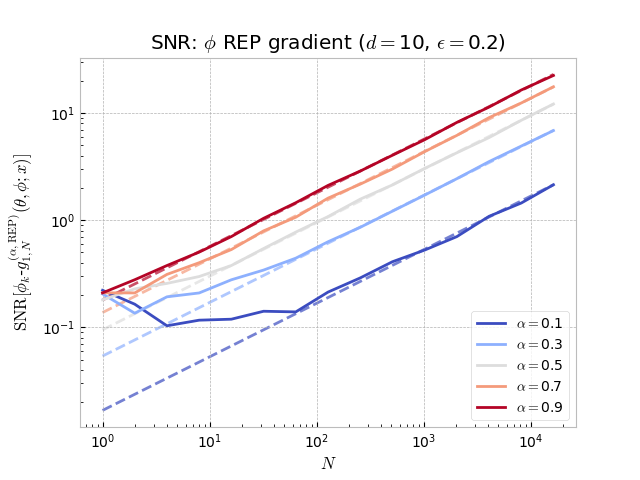} & \includegraphics[scale=0.28]{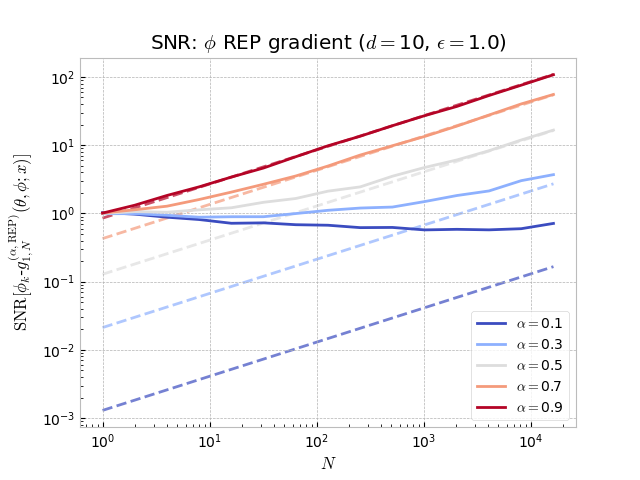}  & \includegraphics[scale=0.28]{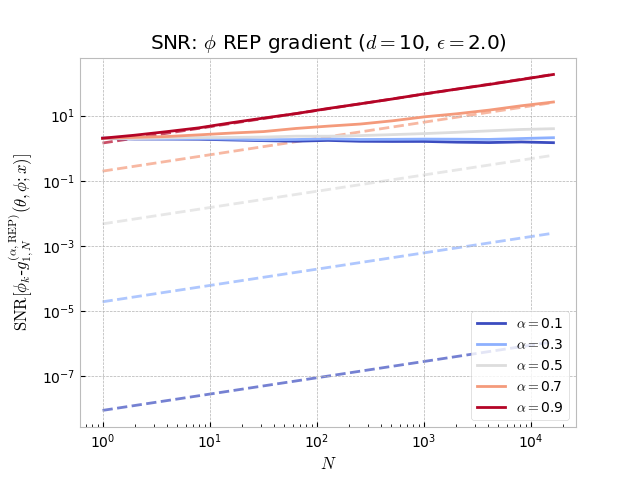}
\\
\includegraphics[scale=0.28]{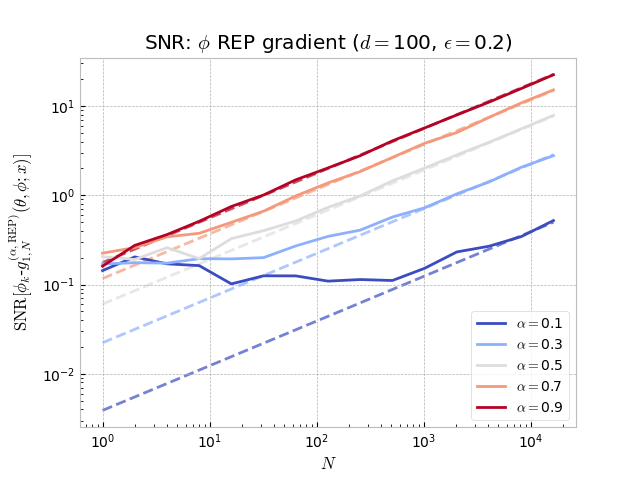} &
\includegraphics[scale=0.28]{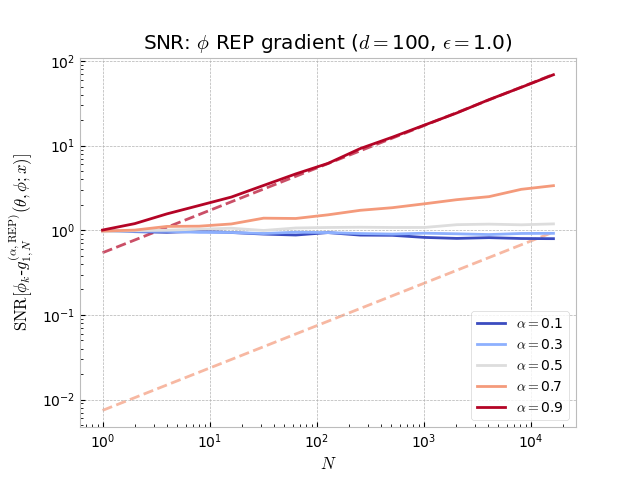} &
\includegraphics[scale=0.28]{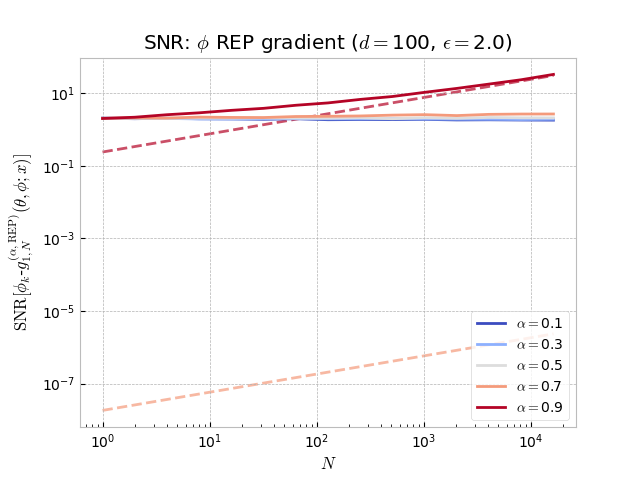}
  \end{tabular}
  \caption{Plotted is $\mathrm{SNR}[\gradREP[1,N][\phi_k]]$ computed over 2000 Monte Carlo samples for the Gaussian example described
  in \Cref{num:GaussEx} as a function of $N$, for varying values of $(\alpha,d, \epsilon)$ and a random coordinate $\phi_k$. The solid lines correspond to $\mathrm{SNR}[\gradREP[1,N][\phi_k]]$, the dashed lines correspond to predictions of the form \eqref{eq:predictThmtwo}. \label{fig:GaussianExample}}
\end{figure}

Observe first that, in the favourable setting of a low dimension $d$ with a small perturbation $\epsilon$ near the optimum (that is $(d,\epsilon) = (10,0.2)$), the asymptotic behavior predicted by \eqref{eq:predictThmtwo} for each $\alpha$ ends up matching the corresponding observed SNR as $N$ increases. In particular, the bias-variance tradeoff behavior highlighted in our theoretical analysis of \Cref{ex:Gaussian} is empirically confirmed for $N$ large enough, with the bias and variance of the REP gradient estimator increasing and decreasing with $\alpha$, respectively. In addition, the highest the value of $\alpha$ (and the lowest the value of $\epsilon$), the lowest $N$ needs to be before the observed SNR and the prediction made by \eqref{eq:predictThmtwo} coincide. This follows from the fact that the second-order term in the numerator of the SNR when $\alpha \in (0,1)$ and $\epsilon >0$ is expected to vanish quicker with $N$ as $\alpha$ increases and $\epsilon$ decreases.

As $d$ increases, $\mathrm{SNR}[\gradREP[1,N][\phi_k]]$ should revert to $\mathrm{SNR}[\gradREP[1,1][\phi_k]]$ according to \Cref{ex:GaussianHighDim} unless the number of samples $N$ is very large or $\epsilon$ is small, meaning that the asymptotic behavior \eqref{eq:predictThmtwo} will not reflect the observed SNRs anymore. This is indeed what we observe in Figure \ref{fig:GaussianExample} ($N = 10^0$ corresponds to $\mathrm{SNR}[\gradREP[1,1][\phi_k]] = \epsilon$), with the weight collapse occurring quicker the higher the value of $\epsilon$, that is the further the variational approximation is from the target posterior density. Interpreting the different values of $\epsilon$ as different stages of the training procedure, \Cref{ex:GaussianHighDim} sheds light on the role the flexibility of the variational family plays in the success of the REP gradient estimator: the advantages of using the REP gradient estimator with $N>1$ instead of with $N=1$ arise after an initial training period ensuring that the quality of the variational approximation increases. \looseness=-1

Additional results for the DREP gradient estimator and for the IWAE case $(\alpha = 0)$ leading to similar conclusions are provided in \Cref{app:num:GaussEx} for the sake of completeness. \looseness=-1 
\subsection{Linear Gaussian example}
\label{num:LinGaussEx}
We are next interested in the linear Gaussian example from \Cref{ex:LinGauss}. The data set $\mathcal{D} = \{x_1,...,x_T\}$ is generated by sampling $T = 1024$ datapoints from $\mathcal{N}(0,2 \boldsymbol{I}_d)$. We let $\theta = \theta^\star + 2 \epsilon$, $A = A^\star$ and $\notationb = \notationb^\star + 2 \epsilon$ with $\epsilon>0$, $\theta^\star = T^{-1} \sum_{t=1}^T x_t$, $A^\star = \frac{1}{2} \boldsymbol{I}_d$ and $\notationb^\star = \frac{1}{2} \theta^\star$, so that $Ax + \notationb = \frac{1}{2}(x+\theta) + \epsilon$. For all $\alpha \in [0,1)$ and all $k = 1 \ldots d$, \Cref{ex:LinGauss} thus states that \Cref{thm:SNR-REP,lem:SNR} predict the asymptotic behavior as $N \to \infty$ of $\mathrm{SNR}[\gradREP[1,N][\notationb_k]]$ and $\mathrm{SNR}[\gradDREP[1,N][\notationb_k]]$ respectively. \looseness=-1 

Let us now check the validity of the asymptotic results obtained in \Cref{ex:LinGauss}. Focusing first on the learning of $\notationb$ when $\alpha \in (0,1)$, this boils down to checking whether as $N$ increases the observed $\mathrm{SNR}[\gradREP[1,N][\notationb_k]]$ and $\mathrm{SNR}[\gradDREP[1,N][\notationb_k]]$ match with the asymptotic behaviors: as $N \to \infty$,
\begin{align}
     & \mathrm{SNR}[\gradREP[1,N][\notationb_k]] = ~\frac{\sqrt{N} \frac{3\epsilon\alpha}{4-\alpha} (1+o(1))}{\frac{(4-\alpha)^{d/2}}{(15-6\alpha)^{d/4}} \exp\lr{\frac{12(1-\alpha)^2}{(4-\alpha)(5-2\alpha)} d \epsilon^2} \sqrt{{\frac{2}{5-2\alpha} + \lr{\frac{12(1-\alpha) \epsilon}{(5-2\alpha)(4-\alpha)}}^2}}} \label{eq:LeadOrderSNRone} \\
     & \mathrm{SNR}[\gradDREP[1,N][\notationb_k]] = {4}{\alpha}^{-1}~\mathrm{SNR}[\gradREP[1,N][\notationb_k]] \label{eq:LeadOrderSNRtwo}
\end{align}
(note that once more only the leading order term remains in \eqref{eq:LeadOrderSNRone}). Now let $\epsilon \in \{0.2, 1\}$, $\alpha \in \{ 0.1, 0.3, 0.5, 0.7, 0.9 \}$, $d \in \{10, 100\}$, $N \in \{2^j, j = 1 \ldots 14 \}$ and $\notationb_k$ be a random coordinate in $(\notationb_1, \ldots, \notationb_d)$. The results are shown in Figures \ref{fig:LinGaussExamplePhibOne} and \ref{fig:LinGaussExamplePhibTwo}.

\begin{figure}[ht!]
\begin{center}\begin{tabular}{ccc} 
\includegraphics[scale=0.28]{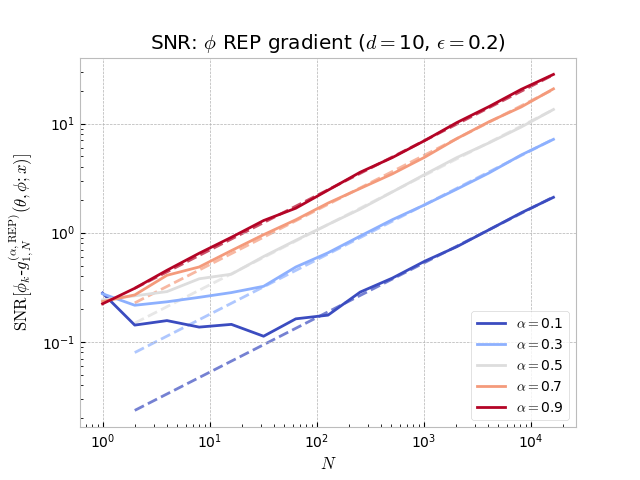} & 
\includegraphics[scale=0.28]{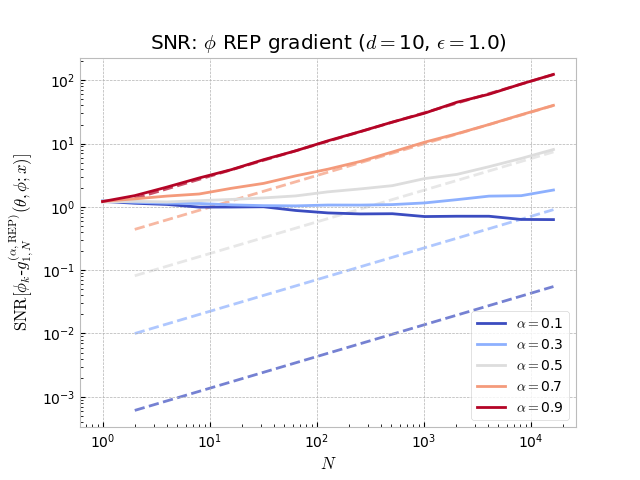} 
\\
\includegraphics[scale=0.28]{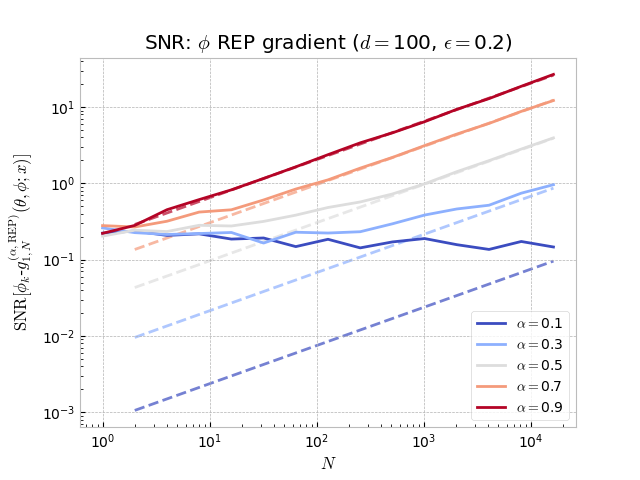}
& \includegraphics[scale=0.28]{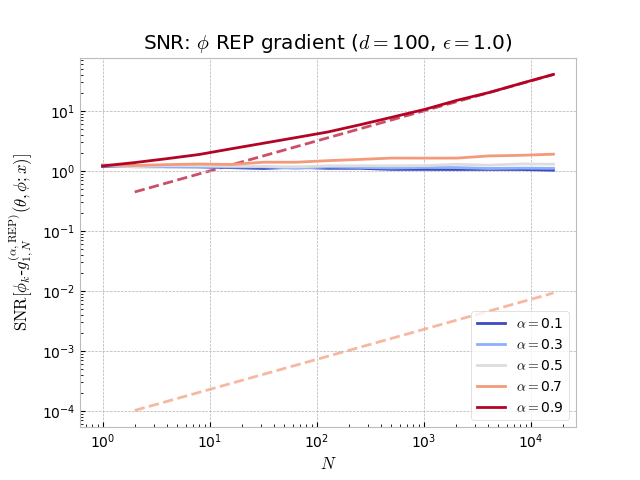}
\end{tabular}
\end{center}
  \caption{Plotted is $\mathrm{SNR}[\gradREP[1,N][\notationb_k]]$ computed over 2000 Monte Carlo samples for the Linear Gaussian example described
  in \Cref{num:LinGaussEx} as a function of $N$, for varying values of $(\alpha,d, \epsilon)$ and a randomly selected
  datapoint $x$. The solid lines correspond to $\mathrm{SNR}[\gradREP[1,N][\notationb_k]]$, the dashed lines correspond to predictions of the form \eqref{eq:LeadOrderSNRone}. \label{fig:LinGaussExamplePhibOne}}
\end{figure}

\begin{figure}[ht!]
  \begin{center}
  \begin{tabular}{cc} 
    \includegraphics[scale=0.28]{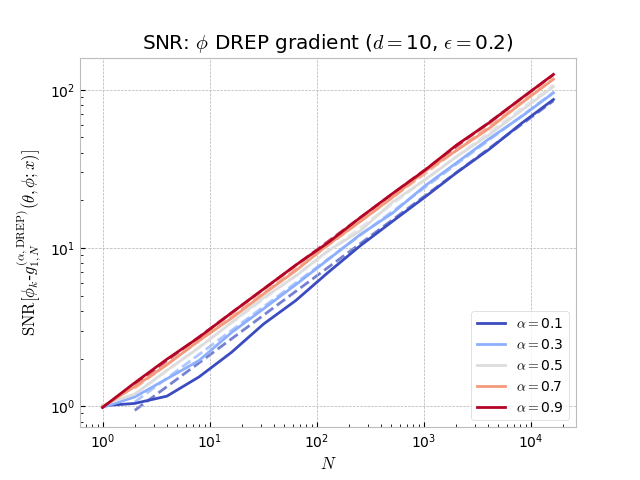} &
    \includegraphics[scale=0.28]{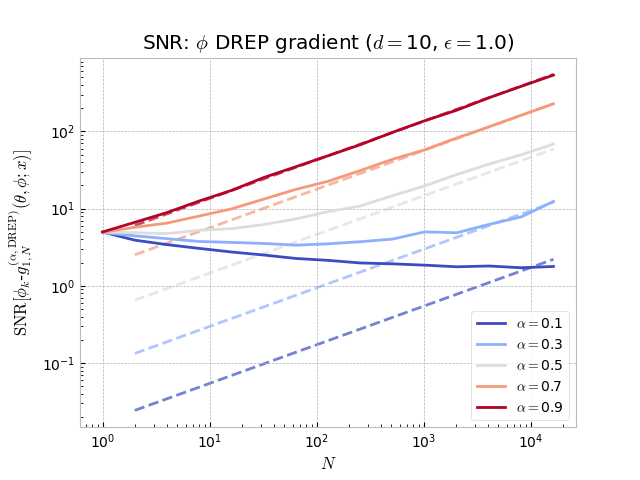}  \\
     \includegraphics[scale=0.28]{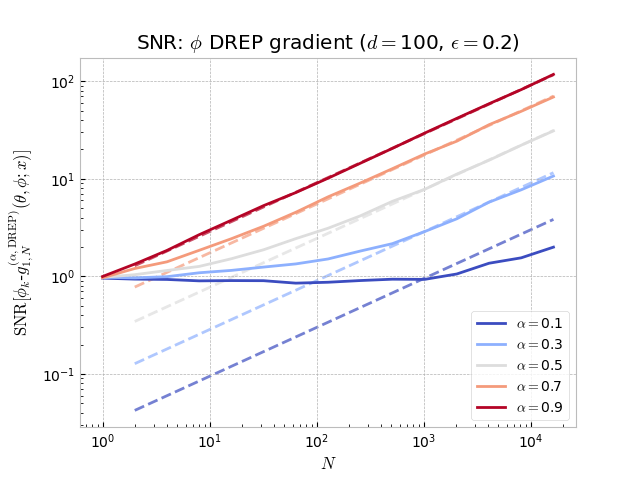} & \includegraphics[scale=0.28]{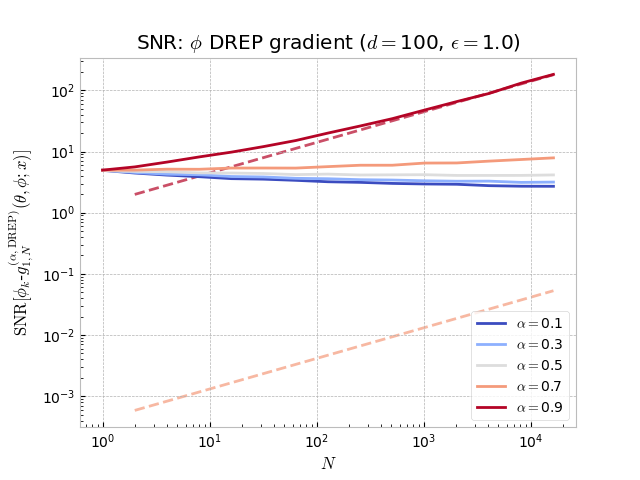}
  \end{tabular}
\end{center}
  \caption{Plotted is $\mathrm{SNR}[\gradDREP[1,N][\notationb_k]]$ computed over 2000 Monte Carlo samples for the Linear Gaussian example described in \Cref{num:LinGaussEx} as a function of $N$, for varying values of $(\alpha,d, \epsilon)$ and a randomly selected
  datapoint $x$. The solid lines correspond to $\mathrm{SNR}[\gradDREP[1,N][\notationb_k]]$, the dashed lines correspond to predictions of the form \eqref{eq:LeadOrderSNRone}. \label{fig:LinGaussExamplePhibTwo}}
\end{figure}

Starting with Figure \ref{fig:LinGaussExamplePhibOne}, we see that the conclusions drawn in \Cref{num:GaussEx} for the REP gradient estimator regarding the validity of the asymptotic behaviors predicted by \Cref{thm:SNR-REP,lem:SNR} when $\alpha \in (0,1)$ apply here too. Moving on to Figure \ref{fig:LinGaussExamplePhibTwo}, we have that these conclusions are also valid for the DREP gradient estimator, which notably confirms empirically that the DREP gradient estimator outperforms the REP one by a $4\alpha^{-1}$ factor in terms of SNR for the setting considered here. Additional plots in which 
we replace $\notationb_k$ by $\theta_k$ and/or we set $\alpha = 0$ are available in \Cref{app:num:LinGaussEx}. They further validate our conclusions regarding the empirical validity of \Cref{thm:SNR-REP,lem:SNR}.

As $d$ and $\epsilon$ increase, $\mathrm{SNR}[\gradDREP[1,N][\notationb_k]]$ reverts to $\mathrm{SNR}[\gradDREP[1,1][\notationb_k]]$ in Figure \ref{fig:LinGaussExamplePhibOne} unless $N$ is increasingly large and we observe a similar behavior for the DREP gradient estimator in Figure \ref{fig:LinGaussExamplePhibTwo}, which is coherent with our conjecture that an increasing mismatch between the posterior and variational densities takes precedence over the choice of the importance-weighted variational bound. This effect can to some extent be mitigated by increasing $\alpha$ or $N$, although we argue that improving on the quality of the variational approximation through the learning of $\phi$ is a more direct approach to fully leverage the advantages of importance-weighted VI gradient estimators. 

\subsection{Variational auto-encoder}
\label{num:exp:vae}

We consider the case of a variational auto-encoder (VAE) model designed to generate MNIST digits with a $d$-dimensional latent space, where $p_\theta(z)$ is a fixed standard Gaussian distribution, $p_\theta(x | z)$ is a product over the output dimensions of independent Bernoulli random variables with logits $\pi_\theta(z)$, $q_\phi(z | x) = \mathcal{N}(z; \mu_\phi(x), \sigma_\phi(x))$ and the functions $\pi_\theta(z)$ and $(\mu_\phi(x), \sigma_\phi(x))$ are parameterized by neural networks. More precisely, both the encoding and decoding networks are MLPs with two hidden layers of size 200 and $\tanh$ nonlinearities.

Let us then check the validity of our asymptotic results for this more involved real-data example. Focusing on the case $\alpha \in (0,1)$, \Cref{thm:SNR-REP} predicts that: as $N \to \infty$,
  \begin{align} \label{eq:vae}
    & \mathrm{SNR}[\gradREP[1,N]] =
     \sqrt{N} \, \dfrac{\lrav{\partial_{\psi}
     \mathrm{VR}^{(\alpha)}(\theta, \phi; x)+o(1)}}{\sqrt{\vREP(\theta, \phi;x)}+o\lr{1}}.
  \end{align}
  Unlike the two previous examples, we do not have a closed form expression for $\partial_{\psi} \mathrm{VR}^{(\alpha)}(\theta, \phi; x)$ and $\vREP(\theta, \phi;x)$ here and we will resort to using approximations instead. Letting $N' \in \mathbb{N}^\star$, we approximate $\partial_{\psi} \mathrm{VR}^{(\alpha)}(\theta, \phi; x)$ by $\gradREP[1,N']$ following \Cref{thm:GradNStudyAllalpha} and we  approximate $\vREP(\theta, \phi;x)$ by 
$$
{N'}~\sum_{i=1}^{N'} \lr{ \frac{\tw[\phi][\varepsilon_{1,i}][\phi]^{1-\alpha}}{\sum_{\ell = 1}^{N'} ~\tw[\phi][\varepsilon_{1,\ell}][\phi]^{1-\alpha}} \lrb{\partial_{\psi} \lr{\log \tw[\phi][\varepsilon_{1,i}][\phi]}-  \gradREP[1,N'] } }^2
$$
with $\varepsilon_{1,1}, \ldots, \varepsilon_{1,N'}$ being i.i.d. samples generated from $\REPq$. 
Letting $\alpha \in \{0.1, 0.3, 0.5, 0.7, 0.9 \}$, $d \in \{10, 100\}$, $N \in \{2^j, j = 1 \ldots 10\}$, $N'=2^{10}$, we then obtain Figure \ref{fig:VAEtheta} in which $\psi \in \{\theta_k, \phi_{k'}\}$ with $k$ and $k'$ that were randomly selected.

\begin{figure}[t] 
  \begin{center}
  \begin{tabular}{cc}
  \includegraphics[scale=0.28]{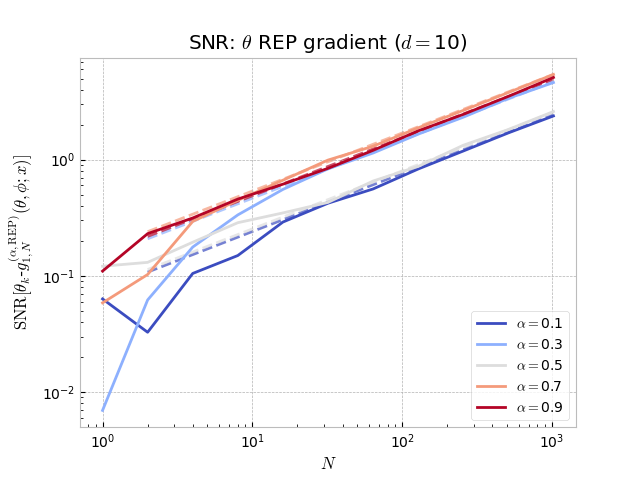} & \includegraphics[scale=0.28]{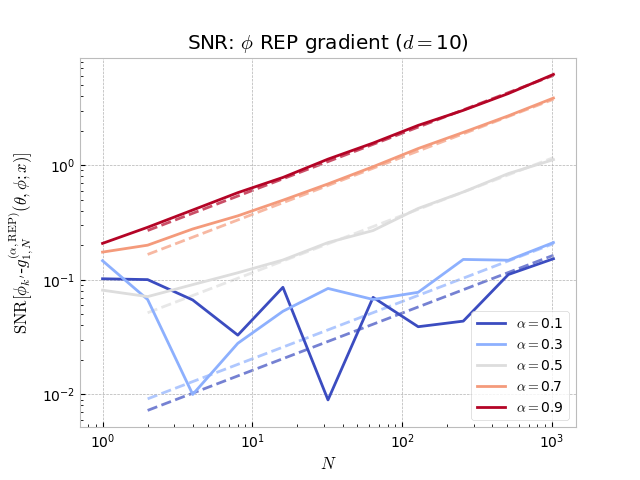} \\
    \includegraphics[scale=0.28]{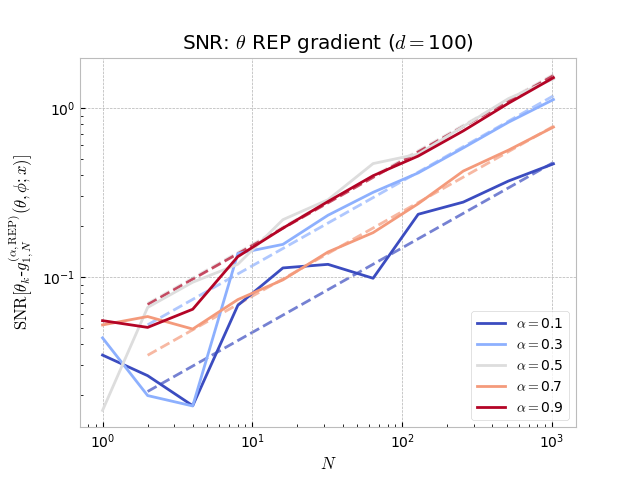} &   \includegraphics[scale=0.28]{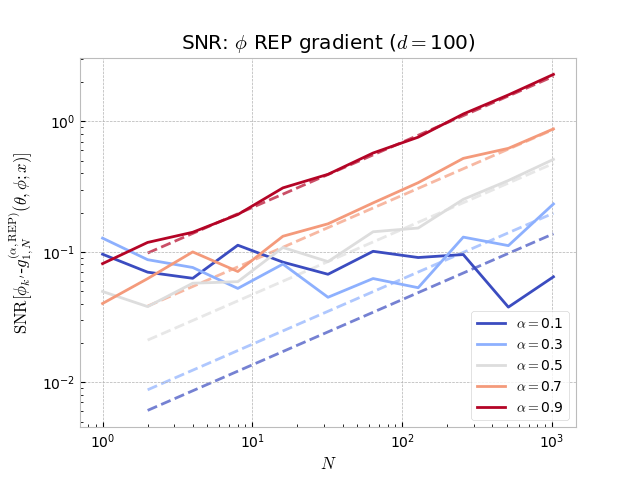}
  \end{tabular}
\end{center}
    \caption{Plotted is $\mathrm{SNR}[\gradREP[1,N]]$ computed over 2000 Monte Carlo samples for the VAE example described in \Cref{num:exp:vae} as a function of $N$, for varying values of $(\alpha,d)$, $\psi \in \{\theta_k, \phi_{k'} \}$ and a randomly selected datapoint $x$. The solid lines correspond to $\mathrm{SNR}[\gradREP[1,N]]$, the dashed lines correspond to predictions of the form \eqref{eq:vae} in which $\partial_{\psi} \mathrm{VR}^{(\alpha)}(\theta, \phi; x)$ and $\vREP(\theta, \phi;x)$ have also been approximated. \label{fig:VAEtheta}}
\end{figure}
Even though we had to resort to an additional layer of approximation here, our conclusions remain similar to the ones obtained in \Cref{num:GaussEx,num:LinGaussEx}, in the sense that the behavior predicted in \eqref{eq:vae} matches the observed rates for $N$ sufficiently large and we can observe the impact of increasing the mismatch between the variational and posterior densities via $d$. The theoretical studies we have carried out in this paper thus provides a useful framework to understand importance-weighted VI methods in real-data scenarios. \looseness=-1

\section{Conclusion}
\label{sec:ccl}

We proposed two complementary SNR analyses which advance the understanding of importance-weighted VI methods by casting a new light on the innerworkings of the ELBO, IWAE, VR, and VR-IWAE bound methodologies at the gradient level. 
Our proof techniques rely on novel theoretical tools that apply more broadly within importance weighting and are of independent interest. The validity of our analyses is demonstrated over several toy and real-data examples and we now mention three potential directions of research to extend our work. \looseness=-1

A first direction of research, also outlined at the end of \Cref{sub:Collapse}, is to investigate the extension of our results for the Gaussian random field case to the approximate Gaussian random field one in order to encompass more general settings of interest. Another possibility is to see how our results can be further exploited to tune $\alpha$ and/or to choose the variational family. Finally, our work showcases that there is much to gain theoretically and empirically from investigating the connections between sampling and importance-weighted VI. One may seek to delve deeper into those connections with importance weighting and related Monte Carlo approximations.


\section*{Acknowledgements and Disclosure of Funding}

We are grateful to the Associate Editor and to the reviewers for their insightful comments which helped us improve the manuscript. We also want to thank Pierre E. Jacob and El Mahdi Khribch for stimulating discussions that encouraged us to pursue weaker positive moment conditions for \eqref{eq:ratios-limits-optimal-cond2bis:alter} in \Cref{thm:ratios-limits-optimal-cond}. These exchanges prompted a re-examination of our proof strategy, which ultimately allowed us to obtain the refined results presented in \Cref{thm:ratios-limits-optimal-cond}.

\appendix

\section{Interchanging Derivative and Expectation Signs}
\label{app:differentiabilityCondition}

Throughout the paper, we often come across expectations of the form
$\PE_{\varepsilon \sim \REPq}\lr{ g(\psi, \varepsilon)}$ for some real valued
parameter $\psi$ in a parameter space $\Psi$,
which we want to differentiate w.r.t $\psi$. More specifically, using
that $\REPq$ does not depend on $\psi$, we want compute the derivative by
interchanging the derivative and the expectation signs, that is
\begin{align}
 \partial_{\psi}\PE_{\varepsilon \sim \REPq}\lr{ g(\psi, \varepsilon)}=\PE_{\varepsilon \sim \REPq}\lr{ \partial_{\psi} g(\psi, \varepsilon)} , \quad \psi \in \Psi\;.\label{eq:interchange}
\end{align}
General conditions to make the identity~(\ref{eq:interchange}) valid are
well known (see, e.g., \cite{ecuyer}) and specifying them can be overlooked at first reading as it sometimes burdens the technical
content with lengthy assumptions whose sole aim is to make interchanges
of derivatives and expectations well justified. Nevertheless, we
specify hereafter sufficient conditions for all the interchanges of derivatives and expectations to be valid in our main results.

To this end, we will repeatedly use the notation $\tw$ defined in~(\ref{eq:tw})
which, for every $\varepsilon$ and $x$, is differentiable with respect to
$(\theta,\phi,\phi')$ on $\Theta\times\Phi^2$ following the
differentiability conditions already assumed on
$\theta\mapsto p_{\theta}(x)$, $(z,\theta)\mapsto p_{\theta}(z|x)$,
$(z,\phi)\mapsto q_{\phi}(z|x)$ and
$\phi\mapsto f(\varepsilon,\phi;x)$ in the beginning of
\Cref{sub:GradN}. Furthermore, $\psi$ will denote a component of the $\mathbb{R}^{a+b}$-valued variable $(\theta,\phi)=(\theta_1, \ldots, \theta_a,\phi_1,\dots,\phi_b)$ and we will say that $\mathcal{V}$ is a $\psi$-neighborhood of 
$(\theta, \phi)\in\mathbb{R}^{a+b}$ if there exists $r>0$ such that for
all $\psi'\in(\psi-r,\psi+r)$, the vector obtained by replacing $\psi$
by $\psi'$ in $(\theta, \phi)$ belongs to $\mathcal{V}$.

The remaining of \Cref{app:differentiabilityCondition} is then 
concerned with providing sufficient conditions to ensure that the interchanges of derivatives and expectations necessary to establish the results of \Cref{sec:behavior-gradient-vr-Nasymp} and  \Cref{sub:Collapse} are valid.
\subsection{Interchanging Derivative and Expectation Signs in \Cref{sec:behavior-gradient-vr-Nasymp}}

The first
assumption below is concerned with the REP gradient estimator.
\begin{hypZeroREP}{A}
\item\label{hyp:hypZeroREP} All $(\theta,\phi)\in\Theta\times\Phi$ admit a
  $\psi$-neighborhood $\mathcal{V}\subset\Theta\times\Phi$ such that,
  for $\ell=0,1,2$,
  \begin{equation}
    \label{eq:hypZeroREP}
\PE_{\varepsilon \sim \REPq} \lr{\sup_{(\theta',\phi')\in\mathcal{V}}
  \lrav{\tw[\phi'][\varepsilon][\phi'][\theta']^{\ell(1-\alpha)}\;\partial_{\psi'} \lr{\log \tw[\phi'][\varepsilon][\phi'][\theta']}}}<\infty\;.
  \end{equation}
\end{hypZeroREP}
We have the following result which will cover all the interchanges of
derivatives and expectations that are necessary for our study of the
REP gradient estimator in \Cref{sec:behavior-gradient-vr-Nasymp}.
\begin{prop}
  \label{prop:interch-deriv-expect-rep}
  Under \ref{hyp:hypZeroREP}, 
we have for all $(\theta,\phi)\in\Theta\times\Phi$,
  \begin{align}\label{eq:interch-deriv-expect-rep1}
&\partial_{\psi}  \PE_{\varepsilon \sim \REPq}\lr{\tw[\phi][\varepsilon][\phi]^{1-\alpha}}=  \PE\lr{\partial_{\psi} \lr{\tw[\phi][\varepsilon][\phi]^{1-\alpha}}}\;,\\
    \label{eq:interch-deriv-expect-rep2}
&\partial_{\psi}  \PE_{\varepsilon \sim \REPq}\lr{\tw[\phi][\varepsilon][\phi]^{2(1-\alpha)}}=  \PE\lr{\partial_{\psi} \lr{\tw[\phi][\varepsilon][\phi]^{2(1-\alpha)}}}\;.
  \end{align}
Moreover, for all $N\geq1$,
  \begin{equation}
    \label{eq:interchange-liren}
  \partial_{\psi}  \PE_{\varepsilon_j\overset{\text{\tiny
        i.i.d.}}{\sim}\REPq}\lr{\log \lr{ \sum_{j=1}^N\tw[\phi][\varepsilon_j][\phi]^{1-\alpha}}
  }=
  \PE_{\varepsilon_j\overset{\text{\tiny
          i.i.d.}}{\sim}\REPq}\lr{    \partial_{\psi} \log
      \lr{\sum_{j=1}^N\tw[\phi][\varepsilon_j][\phi]^{1-\alpha} }} \;.    
  \end{equation}  
\end{prop}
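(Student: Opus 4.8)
The plan is to apply the standard criterion for differentiating under the integral sign (see, e.g., \cite{ecuyer}): if, for $\nu$-a.e.\ $\varepsilon$, the map $\psi'\mapsto g(\psi',\varepsilon)$ is differentiable on an open interval $I$ containing $\psi$, if $g(\psi_{0},\cdot)$ is $q$-integrable for some $\psi_{0}\in I$, and if $\varepsilon\mapsto\sup_{\psi'\in I}\lrav{\partial_{\psi'}g(\psi',\varepsilon)}$ is $q$-integrable, then $\psi'\mapsto\PE_{\varepsilon\sim q}(g(\psi',\varepsilon))$ is differentiable at $\psi$ and the derivative and expectation may be interchanged there. Fix $(\theta,\phi)\in\Theta\times\Phi$ and let $\mathcal{V}$ and $r>0$ be as supplied by \ref{hyp:hypZeroREP}, so that replacing the component $\psi$ of $(\theta,\phi)$ by any value in $(\psi-r,\psi+r)$ yields a vector still lying in $\mathcal{V}$. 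As recalled just before the statement, $\tw$ is differentiable in $(\theta,\phi,\phi')$ on $\Theta\times\Phi^{2}$ for every $\varepsilon$ and $x$ --- hence so is its restriction to the diagonal $\phi'=\phi$ --- and $\tw[\phi][\varepsilon][\phi]>0$ a.s.\ by \ref{hyp:VRIWAEwell-defined}; thus, along $(\psi-r,\psi+r)$, the maps $\psi'\mapsto\tw[\phi][\varepsilon][\phi]^{1-\alpha}$, $\psi'\mapsto\tw[\phi][\varepsilon][\phi]^{2(1-\alpha)}$ and $\psi'\mapsto\log\lr{\sum_{j=1}^{N}\tw[\phi][\varepsilon_j][\phi]^{1-\alpha}}$ are differentiable a.s. The required base-point integrability is mild: $\PE(\tw[\phi][\varepsilon][\phi]^{1-\alpha})\le p_{\theta}(x)^{1-\alpha}<\infty$ by \ref{hyp:VRIWAEwell-defined} and Jensen's inequality (as $1-\alpha\le1$), while the finiteness of $\PE(\tw[\phi][\varepsilon][\phi]^{2(1-\alpha)})$ and of $\PE(\lrav{\log\sum_{j}\tw[\phi][\varepsilon_j][\phi]^{1-\alpha}})$ holds in the settings where \eqref{eq:interch-deriv-expect-rep2}--\eqref{eq:interchange-liren} are used. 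It then remains only to produce, in each case, an integrable dominating function for the derivative, uniformly over $(\psi-r,\psi+r)$.

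For \eqref{eq:interch-deriv-expect-rep1}, the chain rule gives $\partial_{\psi}\lr{\tw[\phi][\varepsilon][\phi]^{1-\alpha}}=(1-\alpha)\,\tw[\phi][\varepsilon][\phi]^{1-\alpha}\,\partial_{\psi}\lr{\log\tw[\phi][\varepsilon][\phi]}$, whose modulus, for $\psi'\in(\psi-r,\psi+r)$, is at most $(1-\alpha)\sup_{(\theta',\phi')\in\mathcal{V}}\lrav{\tw[\phi'][\varepsilon][\phi'][\theta']^{1-\alpha}\,\partial_{\psi'}\lr{\log\tw[\phi'][\varepsilon][\phi'][\theta']}}$, which is $q$-integrable by the $\ell=1$ instance of \eqref{eq:hypZeroREP}. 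Identity \eqref{eq:interch-deriv-expect-rep2} is obtained the same way, with $1-\alpha$ replaced by $2(1-\alpha)$ and the $\ell=2$ instance of \eqref{eq:hypZeroREP} invoked. For \eqref{eq:interchange-liren}, writing $w_j=\tw[\phi][\varepsilon_j][\phi]^{1-\alpha}$, the chain rule gives
\[
  \partial_{\psi}\log\lr{\sum_{j=1}^{N}w_j}=(1-\alpha)\sum_{j=1}^{N}\frac{w_j}{\sum_{k=1}^{N}w_k}\,\partial_{\psi}\lr{\log\tw[\phi][\varepsilon_j][\phi]},
\]
and since each normalized weight $w_j/\sum_{k=1}^{N}w_k$ lies in $[0,1]$, the modulus of the right-hand side is at most $(1-\alpha)\sum_{j=1}^{N}\sup_{(\theta',\phi')\in\mathcal{V}}\lrav{\partial_{\psi'}\lr{\log\tw[\phi'][\varepsilon_j][\phi'][\theta']}}$, a sum of $N$ i.i.d.\ terms each $q$-integrable by the $\ell=0$ instance of \eqref{eq:hypZeroREP}; hence the bound is integrable under the law of $(\varepsilon_1,\dots,\varepsilon_N)$. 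Applying the differentiation criterion in each of the three cases, and recalling that $(\theta,\phi)$ was arbitrary, yields the claimed identities.

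There is no deep obstacle here; the argument is a routine domination argument and only a couple of bookkeeping points require attention. First, when $\psi$ is a coordinate of $\phi$, the derivative $\partial_{\psi}$ must simultaneously capture the explicit dependence of $w_{\theta,\phi}$ on $\phi$ and the implicit dependence through the reparameterization map $f(\varepsilon,\phi;x)$; this is exactly what the term $\partial_{\psi'}\lr{\log\tw[\phi'][\varepsilon][\phi'][\theta']}$ in \ref{hyp:hypZeroREP} encodes, since there one differentiates $\log w_{\theta',\phi'}(f(\varepsilon,\phi';x);x)$ with respect to $\psi'$ through both occurrences of $\phi'$, so no extra hypothesis is needed. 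Second, one must pair each interchange with the right exponent in \eqref{eq:hypZeroREP}: $\ell=1$ for \eqref{eq:interch-deriv-expect-rep1}, $\ell=2$ for \eqref{eq:interch-deriv-expect-rep2}, and $\ell=0$ for \eqref{eq:interchange-liren}, the last being where the crude bound that each normalized weight is at most one suffices in place of a genuine moment control. Finally, the suprema over $\mathcal{V}$ above are measurable in $\varepsilon$ because the integrands are continuous in $(\theta',\phi')$, so they agree with suprema over a countable dense subset of $\mathcal{V}$.
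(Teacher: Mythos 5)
Your proof is correct and follows essentially the same route as the paper: chain rule plus dominated convergence with the $\ell=1$ and $\ell=2$ instances of \eqref{eq:hypZeroREP} for the first two identities, and for \eqref{eq:interchange-liren} the observation that each normalized weight is bounded by one (equivalently, bounding $\lrav{\partial_{\psi'}(\tw[\phi'][\varepsilon_j][\phi'][\theta']^{1-\alpha})}/\sum_k\tw[\phi'][\varepsilon_k][\phi'][\theta']^{1-\alpha}$ by $(1-\alpha)\lrav{\partial_{\psi'}\log\tw[\phi'][\varepsilon_j][\phi'][\theta']}$), so that the $\ell=0$ instance dominates. The extra remarks on base-point integrability and measurability of the suprema are sensible bookkeeping that the paper leaves implicit.
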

\begin{proof}
Suppose that  \ref{hyp:hypZeroREP} holds. Observe that, for
$\ell=1,2$, by~(\ref{eq:tw}), we have
$$
\partial_{\psi} \lr{\tw[\phi][\varepsilon][\phi]^{\ell(1-\alpha)}}=\ell(1-\alpha)\;\w[\theta][\phi](f(\varepsilon, \phi; x);
      x)^{\ell(1-\alpha)}\;\partial_{\psi} \lr{\log \w[\theta][\phi](f(\varepsilon, \phi; x);
      x)}\;.
$$
Thus, by the usual dominated
convergence argument for interchanging the expectation and derivative
signs, the condition in~(\ref{eq:hypZeroREP}) for $\ell=1,2$
leads to~(\ref{eq:interch-deriv-expect-rep1})
and~(\ref{eq:interch-deriv-expect-rep2}), respectively.

Let now $N\geq1$. To get~(\ref{eq:interchange-liren}), by the same
dominated convergence argument, it is
sufficient to show that all $(\theta,\phi)\in\Theta\times\Phi$ admit a
$\psi$-neighborhood $\mathcal{V}\subset\Theta\times\Phi$ such that
$$
  \PE_{\varepsilon_j\overset{\text{\tiny
          i.i.d.}}{\sim}\REPq}\lr{ \sup_{(\theta',\phi')\in\mathcal{V}}   \lrav{\partial_{\psi'} \log
      \lr{\sum_{j=1}^N\tw[\phi'][\varepsilon_j][\phi'][\theta']^{1-\alpha} }}}<\infty\;.
  $$
Differentiating and using that the sup of a sum is bounded by the sum
of the sup from above, the latter condition is implied by having that for all $j=1,\dots,N$,
$$
  \PE_{\varepsilon_j\overset{\text{\tiny
          i.i.d.}}{\sim}\REPq}\lr{ \sup_{(\theta',\phi')\in\mathcal{V}}   \frac{\lrav{\partial_{\psi'}\lr{\tw[\phi'][\varepsilon_j][\phi'][\theta']^{1-\alpha} } }}
      {\sum_{k=1}^N\tw[\phi'][\varepsilon_k][\phi'][\theta']^{1-\alpha} }}<\infty\;.
  $$
  Using that $\tw[\phi'][\varepsilon_k][\phi'][\theta']\geq0$ 
  for all $k\neq j$, this ratio is bounded
  from above by
  $$
  \frac{\lrav{\partial_{\psi}\lr{\tw[\phi'][\varepsilon_j][\phi'][\theta']^{1-\alpha} } }}
      {\tw[\phi'][\varepsilon_j][\phi'][\theta']^{1-\alpha} }=(1-\alpha)\;\lrav{\partial_{\psi}\lr{\log\tw[\phi'][\varepsilon_j][\phi'][\theta'] }}\;.
      $$
      Hence assuming~(\ref{eq:hypZeroREP}) with $\ell=0$ ensures that the
      previous condition holds and we have concluded the proof of~(\ref{eq:interchange-liren}).
\end{proof}
The next assumption is concerned
with the DREP gradient estimator and is only required for a component $\psi$ of
the $\mathbb{R}^{b}$-valued variable $\phi=(\phi_1,\dots,\phi_b)$,
\begin{hypZeroDREP}{A}
\item\label{hyp:hypZeroDREP} All $(\theta,\phi)$ in $\Theta\times\Phi$ admit a
  $\psi$-neighborhood $\mathcal{V}\subset\Theta\times\Phi$ such that:
\begin{enumerateList}
\item  If $\alpha \in (0,1)$, we have
  \begin{align}
    \label{eq:interAsumpDREPall-alpha-neq-zero}
&\PE_{\varepsilon \sim \REPq} \lr{\sup_{(\theta,\phi')\in\mathcal{V}} \lrav{\partial_{\psi'} \lr{\tw[\phi'][\varepsilon][\phi']^{1-\alpha}}}}<\infty\;.    \\
    \label{eq:interAsumpDREP1-alpha-neq-zero}
    &\PE_{\varepsilon \sim \REPq} \lr{\sup_{(\theta,\phi')\in\mathcal{V}} \lrav{\partial_{\psi'} \lr{\tw[\phi][\varepsilon][\phi']^{1-\alpha}}}}<\infty\;.    \\
    \label{eq:interAsumpDREP2-alpha-neq-zero}
&\PE_{Z\sim q_{\phi}(\cdot|x)} \lr{\w(Z;
                                                x)^{1-\alpha}\,\sup_{(\theta,\phi')\in\mathcal{V}}
                                                \lrav{\partial_{\psi'}\lr{\log
                                                q_{\phi'}(Z |x)}\;\frac{q_{\phi'}(Z|x)}{q_{\phi}(Z|x)}}}<\infty\;.    
  \end{align}
\item If $\alpha=0$, we have
  \begin{align}
    \label{eq:interAsumpDREPall-alpha-zero}
&\PE_{\varepsilon \sim \REPq} \lr{\sup_{(\theta,\phi')\in\mathcal{V}}
                                                           \lrav{\partial_{\psi'}
                                                           \lr{\tw[\phi'][\varepsilon][\phi']^{2}}}}<\infty\;,
    \\
    \label{eq:interAsumpDREP1-alpha-zero}
&\PE_{\varepsilon \sim \REPq} \lr{\sup_{(\theta,\phi')\in\mathcal{V}}
                                                           \lrav{\partial_{\psi'}
                                                           \lr{\tw[\phi][\varepsilon][\phi']^{2}}}}<\infty\;,
    \\
    \label{eq:interAsumpDREP2-alpha-zero}
&\PE_{Z\sim q_{\phi}(\cdot|x)} \lr{\w[\theta][\phi](Z; x)^{2} \sup_{(\theta,\phi')\in\mathcal{V}} \lrav{\partial_{\psi'}\lr{\log
                                                q_{\phi'}(Z |x)}\;\frac{q_{\phi'}(Z|x)}{q_{\phi}(Z|x)}}}<\infty\;.    
  \end{align}
\end{enumerateList}
\end{hypZeroDREP} 
We have the following result which will cover all the interchanges of
derivatives and expectations that are necessary for our study of the
DREP gradient estimator in \Cref{sec:behavior-gradient-vr-Nasymp}.
\begin{prop}
  \label{prop:interch-deriv-expect-drep}
Let $\psi$ denote a component of the
$\mathbb{R}^{b}$-valued variable
$\phi=(\phi_1,\dots,\phi_b)$.   Under \ref{hyp:hypZeroDREP}, we have the following
  assertions for all  $(\theta,\phi)$ in $\Theta\times\Phi$.
\begin{enumerateList}
\item\label{item:drep-exp-deriv-alpha-neq-zero}  If $\alpha \in (0,1)$, we have
  \begin{align}
    \label{eq:drep-exp-deriv-alpha-neq-zero1}
    \lrder{\partial_{\psi'}\PE_{\varepsilon \sim \REPq}\lr{\tw^{1-\alpha}}}{\phi'=\phi}=\PE_{\varepsilon \sim \REPq}\lr{\lrder{\partial_{\psi'}\lr{\tw^{1-\alpha}}}{\phi'=\phi}},\\
    \label{eq:drep-exp-deriv-alpha-neq-zero2}
    \partial_{\psi}\PE_{\varepsilon \sim \REPq}\lr{\tw[\phi][\varepsilon][\phi]^{1-\alpha}}=\alpha\,\PE_{\varepsilon \sim \REPq}\lr{\lrder{\partial_{\psi'}\lr{\tw^{1-\alpha}}}{\phi'=\phi}}.
  \end{align}
\item\label{item:drep-exp-deriv-alpha-zero} If $\alpha=0$, we have
  \begin{align}
    \label{eq:drep-exp-deriv-alpha-zero}
    \partial_{\psi}\PE_{\varepsilon \sim \REPq}\lr{\tw[\phi][\varepsilon][\phi]^{2}} =
    -\PE_{\varepsilon \sim \REPq}\lr{\lrder{\partial_{\psi'}\lr{\tw^{2}}}{\phi'=\phi}}.
  \end{align}
\end{enumerateList}
\end{prop}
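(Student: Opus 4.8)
The plan is to treat assertions~\ref{item:drep-exp-deriv-alpha-neq-zero} and~\ref{item:drep-exp-deriv-alpha-zero} together by putting $\beta\eqdef1-\alpha$ when $\alpha\in(0,1)$ and $\beta\eqdef2$ when $\alpha=0$: in both cases the nontrivial identity to be proved reads
\[\partial_{\psi}\PE_{\varepsilon\sim q}\lr{\tw[\phi][\varepsilon][\phi]^{\beta}}=(1-\beta)\,\PE_{\varepsilon\sim q}\lr{\lrder{\partial_{\psi'}\lr{\tw^{\beta}}}{\phi'=\phi}}\;,\]
with $1-\beta=\alpha$ giving \eqref{eq:drep-exp-deriv-alpha-neq-zero2} and $1-\beta=-1$ giving \eqref{eq:drep-exp-deriv-alpha-zero}; the identity \eqref{eq:drep-exp-deriv-alpha-neq-zero1} will come out as a by-product. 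Throughout one keeps in mind that $\psi$ is a component of $\phi$, that in $\tw[\phi][\varepsilon][\phi]^{\beta}=\w(f(\varepsilon,\phi;x);x)^{\beta}$ the parameter $\phi$ occurs both in the weight $\w[\theta][\phi]$ and inside $f$, whereas in $\tw^{\beta}=\w(f(\varepsilon,\phi';x);x)^{\beta}$ the weight $\w[\theta][\phi]$ is frozen and $\phi'$ acts only through $f$. The two workhorses are Roeder's identity \eqref{eq:Roeder} and the elementary equality of the reparameterized and score-function gradients; the various clauses of \ref{hyp:hypZeroDREP} serve only to license the interchanges of derivative and expectation.

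I would first establish a pointwise-in-$\varepsilon$ identity. The chain rule gives $\partial_{\psi}\lr{\tw[\phi][\varepsilon][\phi]^{\beta}}=\beta\,\tw[\phi][\varepsilon][\phi]^{\beta}\,\partial_{\psi}\log\w(f(\varepsilon,\phi;x);x)$ and $\lrder{\partial_{\psi'}\lr{\tw^{\beta}}}{\phi'=\phi}=\beta\,\tw[\phi][\varepsilon][\phi]^{\beta}\,\lrder{\partial_{\psi'}\log\w(f(\varepsilon,\phi';x);x)}{\phi'=\phi}$, so inserting \eqref{eq:Roeder} into the first expression yields
\[\partial_{\psi}\lr{\tw[\phi][\varepsilon][\phi]^{\beta}}=\lrder{\partial_{\psi'}\lr{\tw^{\beta}}}{\phi'=\phi}-\beta\,\tw[\phi][\varepsilon][\phi]^{\beta}\,\lrder{\partial_{\psi'}\log q_{\phi'}(f(\varepsilon,\phi;x)|x)}{\phi'=\phi}\;.\]
Then I would take $\PE_{\varepsilon\sim q}$ of both sides: on the left, interchanging $\partial_{\psi}$ and $\PE_{\varepsilon\sim q}$ is justified by dominated convergence with the dominating function from \eqref{eq:interAsumpDREPall-alpha-neq-zero} (resp.\ \eqref{eq:interAsumpDREPall-alpha-zero}), and since $f(\varepsilon,\phi;x)\sim q_{\phi}(\cdot|x)$ under \ref{hyp:reparameterized} the last term equals $\beta\,\PE_{Z\sim q_{\phi}(\cdot|x)}\lr{\w(Z;x)^{\beta}\,\partial_{\psi}\log q_{\phi}(Z|x)}$, whence
\[\partial_{\psi}\PE_{\varepsilon\sim q}\lr{\tw[\phi][\varepsilon][\phi]^{\beta}}=\PE_{\varepsilon\sim q}\lr{\lrder{\partial_{\psi'}\lr{\tw^{\beta}}}{\phi'=\phi}}-\beta\,\PE_{Z\sim q_{\phi}(\cdot|x)}\lr{\w(Z;x)^{\beta}\,\partial_{\psi}\log q_{\phi}(Z|x)}\;.\]

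Next I would show that the two terms on the right are in fact equal, namely that
\[\PE_{\varepsilon\sim q}\lr{\lrder{\partial_{\psi'}\lr{\tw^{\beta}}}{\phi'=\phi}}=\lrder{\partial_{\psi'}\PE_{\varepsilon\sim q}\lr{\tw^{\beta}}}{\phi'=\phi}=\PE_{Z\sim q_{\phi}(\cdot|x)}\lr{\w(Z;x)^{\beta}\,\partial_{\psi}\log q_{\phi}(Z|x)}\;.\]
The first equality interchanges $\partial_{\psi'}$ and $\PE_{\varepsilon\sim q}$ over the $\phi'$-free law $q$, which is legitimate under \eqref{eq:interAsumpDREP1-alpha-neq-zero} (resp.\ \eqref{eq:interAsumpDREP1-alpha-zero}); for $\beta=1-\alpha$ it is precisely \eqref{eq:drep-exp-deriv-alpha-neq-zero1}. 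For the second equality I would write $\PE_{\varepsilon\sim q}\lr{\tw^{\beta}}=\PE_{Z\sim q_{\phi'}(\cdot|x)}\lr{\w(Z;x)^{\beta}}=\int q_{\phi'}(z|x)\,\w(z;x)^{\beta}\,\nu(\rmd z)$, where the weight $\w[\theta][\phi]$ no longer carries $\phi'$; differentiating under the $\nu$-integral (legitimate by \eqref{eq:interAsumpDREP2-alpha-neq-zero}, resp.\ \eqref{eq:interAsumpDREP2-alpha-zero}, after writing $\partial_{\psi'}q_{\phi'}=q_{\phi'}\,\partial_{\psi'}\log q_{\phi'}$) and evaluating at $\phi'=\phi$ gives $\int q_{\phi}(z|x)\,\partial_{\psi}\log q_{\phi}(z|x)\,\w(z;x)^{\beta}\,\nu(\rmd z)$, as wanted. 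Substituting this equality back into the previous display cancels the score-function term up to the factor $\beta$ and leaves $\partial_{\psi}\PE_{\varepsilon\sim q}\lr{\tw[\phi][\varepsilon][\phi]^{\beta}}=(1-\beta)\,\PE_{\varepsilon\sim q}\lr{\lrder{\partial_{\psi'}\lr{\tw^{\beta}}}{\phi'=\phi}}$, i.e.\ \eqref{eq:drep-exp-deriv-alpha-neq-zero2} for $\beta=1-\alpha$ and \eqref{eq:drep-exp-deriv-alpha-zero} for $\beta=2$.

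I expect the main obstacle to be organizational rather than conceptual: one must keep the three roles of the variational parameter strictly apart (the $\phi$ in the weight $\w[\theta][\phi]$, the $\phi$ inside $f(\varepsilon,\phi;x)$, and the auxiliary $\phi'$), be careful with the chain rule in the pointwise identity, and pair each of the four interchanges above with the matching clause of \ref{hyp:hypZeroDREP}. The one spot needing genuine care is the $\nu$-integral step, where one should observe that the change-of-measure form of \eqref{eq:interAsumpDREP2-alpha-neq-zero} (resp.\ \eqref{eq:interAsumpDREP2-alpha-zero}) is nothing but $\int\sup_{\phi'}\lrav{\partial_{\psi'}q_{\phi'}(z|x)}\,\w(z;x)^{\beta}\,\nu(\rmd z)<\infty$, which is exactly the dominating function required there.
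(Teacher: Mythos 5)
Your proof is correct and follows essentially the same route as the paper's: your $\beta$-unification of the two cases is the paper's observation that $\alpha=0$ behaves like $\alpha=-1$, your use of Roeder's identity \eqref{eq:Roeder} is exactly the paper's splitting of the total derivative $\partial_{\psi}\lr{\tw[\phi][\varepsilon][\phi]^{1-\alpha}}$ into the part through $f$ and the part through the weight, and the key chain (interchange via \eqref{eq:interAsumpDREP1-alpha-neq-zero}, reparameterization, differentiation under the $\nu$-integral via \eqref{eq:interAsumpDREP2-alpha-neq-zero}) matching the score term to $\PE\lr{\lrder{\partial_{\psi'}\lr{\tw^{\beta}}}{\phi'=\phi}}$ is the paper's equalities (a)--(c). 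The assignment of each interchange to its clause of \ref{hyp:hypZeroDREP} is also the one the paper uses.
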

\begin{proof}
  First observe that~(\ref{eq:drep-exp-deriv-alpha-zero}) in
  Assertion~\ref{item:drep-exp-deriv-alpha-zero} corresponds
  to~(\ref{eq:drep-exp-deriv-alpha-neq-zero2}) in
  Assertion~\ref{item:drep-exp-deriv-alpha-neq-zero} with $\alpha$
  replaced by $-1$. In addition, the case $\alpha=0$ in
  Assumption~\ref{hyp:hypZeroDREP} corresponds to the case
  $\alpha\in(0,1)$ with $\alpha$ also replaced by $-1$.
  We thus only have to prove that for any
  $\alpha\in(0,1)\cup\lrcb{-1}$,
  Conditions~(\ref{eq:interAsumpDREPall-alpha-neq-zero})--(\ref{eq:interAsumpDREP2-alpha-neq-zero})
  imply the identities~(\ref{eq:drep-exp-deriv-alpha-neq-zero1})
  and~(\ref{eq:drep-exp-deriv-alpha-neq-zero2}).

  We will in fact prove that more generally: for all $\alpha\in\mathbb{R}$, Conditions~(\ref{eq:interAsumpDREPall-alpha-neq-zero})--(\ref{eq:interAsumpDREP2-alpha-neq-zero})
  imply the identities~(\ref{eq:drep-exp-deriv-alpha-neq-zero1})
  and~(\ref{eq:drep-exp-deriv-alpha-neq-zero2}). The case $\alpha=1$ is immediate so let us pick $\alpha\in\mathbb{R}\setminus\lrcb1$ in the following and assume that~(\ref{eq:interAsumpDREPall-alpha-neq-zero})--(\ref{eq:interAsumpDREP2-alpha-neq-zero}) hold. 
  Note first that (\ref{eq:drep-exp-deriv-alpha-neq-zero1}) is a simple interchange of the derivative and expectation signs which holds by dominated convergence
  using~(\ref{eq:interAsumpDREP1-alpha-neq-zero}). To then prove~(\ref{eq:drep-exp-deriv-alpha-neq-zero2}),
  observe that by~(\ref{eq:interAsumpDREPall-alpha-neq-zero}), we can interchange the derivative and expectation
  signs on the left-hand side
  of~(\ref{eq:drep-exp-deriv-alpha-neq-zero2}), that is: for all $\alpha \in (0,1)$, 
  \begin{align*}
    \partial_{\psi}\PE_{\varepsilon \sim \REPq}\lr{\tw[\phi][\varepsilon][\phi]^{1-\alpha}}=    \PE_{\varepsilon \sim \REPq}\lr{\partial_{\psi}\lr{\tw[\phi][\varepsilon][\phi]^{1-\alpha}}}.
  \end{align*}
  Since $\partial_{\psi}\lr{\tw[\phi][\varepsilon][\phi]^{1-\alpha}}=\lrder{\partial_{\psi'}\lr{\tw^{1-\alpha}}+    \partial_{\psi'}\lr{\tw[\phi'][\varepsilon][\phi]^{1-\alpha}}}{\phi'=\phi}$, to obtain (\ref{eq:drep-exp-deriv-alpha-neq-zero2}) it remains to prove that 
  \begin{align}
    \label{eq:drep-exp-deriv-alpha-neq-zero2-lhs}
\frac{1}{\alpha-1}\,   \PE_{\varepsilon \sim \REPq}\lr{\lrder{\partial_{\psi'}\lr{\tw[\phi'][\varepsilon][\phi]^{1-\alpha}}}{\phi'=\phi}}=\PE_{\varepsilon \sim \REPq}\lr{ \lrder{\partial_{\psi'}\lr{\tw^{1-\alpha}}}{\phi'=\phi}}\;.
  \end{align}
Observe then that, using \eqref{eq:tw}, it holds that
  $$
 \frac{1}{\alpha-1} \lrder{\partial_{\psi'}\lr{\tw[\phi'][\varepsilon][\phi]^{1-\alpha}}}{\phi'=\phi}=\,\lrder{\w(z;
    x)^{1-\alpha}\frac{\partial_{\psi}
      q_{\phi}(z)}{q_{\phi}(z|x)}}{z=f(\varepsilon, \phi; x)}\;.
  $$
  Hence the left-hand side of~(\ref{eq:drep-exp-deriv-alpha-neq-zero2-lhs}) reads
  \begin{align}\nonumber
  \PE_{\varepsilon \sim \REPq}\lr{\lrder{\w(z;
    x)^{1-\alpha}\frac{\partial_{\psi}
      q_{\phi}(z|x)}{q_{\phi}(z|x)}}{z=f(\varepsilon, \phi; x)}}&=\PE_{Z\sim q_{\phi}(\cdot|x)}\lr{\w(Z;
    x)^{1-\alpha}\frac{\partial_{\psi}
                                                              q_{\phi}(Z|x)}{q_{\phi}(Z|x)}}\\
\nonumber    &=\int \w(Z;
    x)^{1-\alpha}\,\partial_{\psi}
                                                              q_{\phi}(Z|x)\;
      \nu(\rmd z)  
\end{align}
and the proof of~(\ref{eq:drep-exp-deriv-alpha-neq-zero2-lhs}) will be concluded if we can prove that 
\begin{align}
  \PE_{\varepsilon \sim \REPq}\lr{ \lrder{\partial_{\psi'}\lr{\tw^{1-\alpha}}}{\phi'=\phi}}
  & \stackrel{\text{(a)}}{=} \lrder{\partial_{\psi'}\PE_{\varepsilon \sim \REPq}{ \lr{\tw^{1-\alpha}}}}{\phi'=\phi} \nonumber \\
  &\stackrel{\text{(b)}}{=}\lrder{\partial_{\psi'}\PE_{Z\sim  q_{\phi'}(\cdot|x)}\lr{\w[\theta][\phi](Z;
  x)^{1-\alpha}}}{\phi'=\phi} \nonumber \\
 & \stackrel{\text{(c)}}{=} \int \w(Z;
  x)^{1-\alpha}\,\partial_{\psi}
                                                            q_{\phi}(Z|x)\;
    \nu(\rmd z). \nonumber
  \end{align}
  The equality (a) follows from interchanging the integral and the
  derivative signs again thanks to
  (\ref{eq:interAsumpDREP1-alpha-neq-zero}) and the equality (b)
  follows from the reparameterization trick. As for the equality (c),
  it is obtained by interchanging the integral and the derivative
  thanks to the fact that for a $\psi$-neighborhood $\mathcal{V}$ of
  $(\theta,\phi)$,
   \begin{align*}
  \int \w(z;
    x)^{1-\alpha}\,\sup_{(\theta,\phi')\in\mathcal{V}}\lrav{\partial_{\psi'}
                                                              q_{\phi'}(z|x)}\;
      \nu(\rmd z)<\infty\;,
   \end{align*}
  which holds using (\ref{eq:interAsumpDREP2-alpha-neq-zero}) and that
   $$
   \sup_{(\theta,\phi')\in\mathcal{V}}\lrav{\partial_{\psi'}
                                                              q_{\phi'}(Z|x)}
   =  \sup_{(\theta,\phi')\in\mathcal{V}}\lrav{\lr{\partial_{\psi'} \log 
                                                              q_{\phi'}(Z|x)}\,\frac{{q_{\phi'}(Z|x)}}{q_{\phi}(Z|x)}}
      \,q_{\phi}(Z|x)\;.
  $$
  The proof of \Cref{prop:interch-deriv-expect-drep} is thus
concluded.
\end{proof}
The conclusions from \Cref{prop:interch-deriv-expect-drep} will be
used to express the constants in the asymptotic behavior of the SNR of
the DREP estimator, see the proof of \Cref{lem:SNR}.

\subsection{Interchanging Derivative and Expectation Signs in \Cref{sub:Collapse}}
\label{sec:gaussian-case-df}
We now examine the specific case where the Gaussian
assumption~\ref{hyp:reparamHighDimGaussian} holds. In this special
case, we rely on the following assumption for the REP
gradient estimator.

\begin{hypZeroREP}{B}
\item\label{hyp:hypZeroREPhighDim} All $(\theta,\phi)$ in
$\Theta\times\Phi$ admit a $\psi$-neighborhood $\mathcal{V} \subset \Theta\times\Phi$ such that
  \begin{align}
    \label{eq:df-gaussian-rep}
&\PE_{\varepsilon \sim \REPq}\lr{
  \sup_{(\theta',\phi')\in\mathcal{V}}\lr{\partial_{\psi'} \log
                                 \tw[\phi'][\varepsilon][\phi'][\theta']}^2} <\infty\;,\\
    \label{eq:df-gaussian-rep-B}
    &\PE_{\varepsilon \sim \REPq}\lr{
  \sup_{(\theta',\phi')\in\mathcal{V}}\lr{ \log
    \tw[\phi'][\varepsilon][\phi'][\theta']}^2} <\infty\;.
  \end{align}
\end{hypZeroREP}
We have the following result which will cover all the interchanges of derivatives and expectations that are necessary for our study of the REP gradient estimator in \Cref{thm:CollapseSNRGaussian}.
\begin{prop}\label{prop:gaussian-case-REP-df}
Let $\varepsilon \sim \REPq$. Under \ref{hyp:reparamHighDimGaussian}
  and~\ref{hyp:hypZeroREPhighDim}, we have that:
for all
  $(\theta,\phi)\in\Theta\times\Phi$, 
  \begin{align}
    \label{eq:gaussian-case-REP-df1-one}
&    \partial_\psi\PE_{\varepsilon \sim \REPq}\lr{\log\tw[\phi][\varepsilon][\phi]}=    \PE_{\varepsilon \sim \REPq}\lr{\partial_\psi\log\tw[\phi][\varepsilon][\phi]} \\
    \label{eq:gaussian-case-REP-df2-one}
&    \partial_\psi\PE_{\varepsilon \sim \REPq}\lr{\lr{\log\tw[\phi][\varepsilon][\phi]}^2}=
                                       \PE_{\varepsilon \sim \REPq}\lr{\partial_\psi\lr{\lr{\log\tw[\phi][\varepsilon][\phi]}^2}}
  \end{align}
and $\lr{\lr{\log\tw[\phi][\varepsilon][\phi],\partial_\psi\log\tw[\phi][\varepsilon][\phi]}}_{(\theta,\phi)\in\Theta\times\Phi}$ is a Gaussian process.
\end{prop}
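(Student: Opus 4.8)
The plan is to derive the two interchange identities \eqref{eq:gaussian-case-REP-df1-one}--\eqref{eq:gaussian-case-REP-df2-one} from the classical dominated-convergence criterion for differentiating under an expectation (as in \cite{ecuyer}), and to obtain the Gaussian-process property by realising $\partial_\psi\log\tw[\phi][\varepsilon][\phi]$ as an almost sure limit of difference quotients, each of which is a finite linear combination of values of the Gaussian random field $\lr{\log\tw}_{(\theta,\phi,\phi')\in\Theta\times\Phi^2}$ supplied by \ref{hyp:reparamHighDimGaussian}.

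For \eqref{eq:gaussian-case-REP-df1-one} I would fix $(\theta,\phi)$ together with a $\psi$-neighborhood $\mathcal V$ as in \ref{hyp:hypZeroREPhighDim}. By the differentiability assumptions recalled at the start of \Cref{app:differentiabilityCondition} (on $\theta\mapsto p_\theta(x)$, $(z,\theta)\mapsto p_\theta(z|x)$, $(z,\phi)\mapsto q_\phi(z|x)$ and $\phi\mapsto f(\varepsilon,\phi;x)$) together with the a.s.\ positivity of $\w(\cdot;x)$ from \ref{hyp:VRIWAEwell-defined}, the map sending the $\psi$-component of $(\theta',\phi')$ to $\log\tw[\phi'][\varepsilon][\phi'][\theta']$ is a.s.\ differentiable on $\mathcal V$, with derivative bounded on $\mathcal V$ by $\sup_{(\theta',\phi')\in\mathcal V}\lrav{\partial_{\psi'}\log\tw[\phi'][\varepsilon][\phi'][\theta']}\in L^1$ by \eqref{eq:df-gaussian-rep}; dominated convergence then gives \eqref{eq:gaussian-case-REP-df1-one}. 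For \eqref{eq:gaussian-case-REP-df2-one} I would write $\partial_{\psi'}\lr{\log\tw[\phi'][\varepsilon][\phi'][\theta']}^2=2\,\log\tw[\phi'][\varepsilon][\phi'][\theta']\;\partial_{\psi'}\log\tw[\phi'][\varepsilon][\phi'][\theta']$ and dominate it on $\mathcal V$ by twice the product of the two suprema occurring in \eqref{eq:df-gaussian-rep} and \eqref{eq:df-gaussian-rep-B}, which is integrable by the Cauchy--Schwarz inequality; dominated convergence again yields \eqref{eq:gaussian-case-REP-df2-one}.

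For the last assertion I would fix an arbitrary finite family $(\theta_1,\phi_1),\dots,(\theta_n,\phi_n)$ in $\Theta\times\Phi$, and for $t$ close to $0$ let $(\theta_i^t,\phi_i^t)$ be the vector obtained from $(\theta_i,\phi_i)$ by adding $t$ to its $\psi$-component. For $|t|$ small enough each $(\theta_i^t,\phi_i^t)$ stays in $\Theta\times\Phi$, and the associated difference quotient of $\log\tw[\phi_i][\varepsilon][\phi_i][\theta_i]$ equals $t^{-1}\lr{\log\tw[\phi_i^t][\varepsilon][\phi_i^t][\theta_i^t]-\log\tw[\phi_i][\varepsilon][\phi_i][\theta_i]}$, i.e.\ $t^{-1}$ times the difference of two values of the field $\lr{\log\tw}_{(\theta,\phi,\phi')}$ taken on its diagonal $\{\phi'=\phi\}$. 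Hence, by \ref{hyp:reparamHighDimGaussian}, the random vector formed by these $n$ difference quotients together with $\lr{\log\tw[\phi_i][\varepsilon][\phi_i][\theta_i]}_{i=1,\dots,n}$ is Gaussian. Letting $t\to0$, it converges a.s.\ --- by the same a.s.\ differentiability invoked above --- to $\lr{\log\tw[\phi_i][\varepsilon][\phi_i][\theta_i],\partial_\psi\log\tw[\phi_i][\varepsilon][\phi_i][\theta_i]}_{i=1,\dots,n}$; since a.s.\ convergence implies convergence in distribution and the set of (possibly degenerate) Gaussian laws on $\mathbb R^{2n}$ is closed under weak convergence, the limit is Gaussian, and its coordinates have finite variance by \eqref{eq:df-gaussian-rep}--\eqref{eq:df-gaussian-rep-B}. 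As $n$ and the points are arbitrary, every finite-dimensional distribution of $\lr{\lr{\log\tw[\phi][\varepsilon][\phi],\partial_\psi\log\tw[\phi][\varepsilon][\phi]}}_{(\theta,\phi)\in\Theta\times\Phi}$ is Gaussian, which is the claim.

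The step requiring the most care is the identification of the difference quotient: one must check that, whether $\psi$ is one of the components of $\theta$ or one of the components of $\phi$, perturbing $(\theta,\phi)$ along $\psi$ produces $\log\tw$ evaluated on the diagonal $\{\phi'=\phi\}$ (so that \ref{hyp:reparamHighDimGaussian} is applicable), and that the a.s.\ limit of the difference quotient is indeed $\partial_\psi\log\tw[\phi][\varepsilon][\phi]$, including the contribution produced because $\phi$ enters the weight both as the density parameter and through $f(\varepsilon,\cdot\,;x)$; this is where the chain rule and the a.s.\ positivity of the weights come in.
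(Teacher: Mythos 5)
Your proposal is correct and follows essentially the same route as the paper: dominated convergence via \eqref{eq:df-gaussian-rep} for \eqref{eq:gaussian-case-REP-df1-one}, the product rule plus Cauchy--Schwarz on \eqref{eq:df-gaussian-rep}--\eqref{eq:df-gaussian-rep-B} for \eqref{eq:gaussian-case-REP-df2-one}, and the realisation of $\partial_\psi\log\tw[\phi][\varepsilon][\phi]$ as a limit of difference quotients (finite linear combinations of the Gaussian field on the diagonal $\{\phi'=\phi\}$) for the Gaussian-process claim. The only cosmetic difference is that the paper passes to the limit in $L^2$ while you use a.s.\ convergence together with the closure of Gaussian laws under weak convergence; both are standard and valid here.
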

\begin{proof}
  Using the dominated convergence theorem,~(\ref{eq:df-gaussian-rep}) in~\ref{hyp:hypZeroREPhighDim}
  implies~(\ref{eq:gaussian-case-REP-df1-one}) but also that
  $\partial_\psi\log\tw[\phi][\varepsilon][\phi]$ can be seen as the
  $L^2$ limit of a linear combination of the process
  $\lr{\log\tw[\phi][\varepsilon][\phi]}_{(\theta,\phi)\in\Theta\times\Phi}$. 
  
  It
  follows by (\ref{eq:lognormalReparm}) that
  $\lr{\lr{\log\tw[\phi][\varepsilon][\phi],\partial_\psi\log\tw[\phi][\varepsilon][\phi]}}_{(\theta,\phi)\in\Theta\times\Phi}$
  is a Gaussian process. As for~(\ref{eq:gaussian-case-REP-df2-one}),
  it follows from the dominated convergence theorem and 
  \begin{multline*}
    \PE_{\varepsilon \sim \REPq}\lr{
  \sup_{(\theta',\phi')\in\mathcal{V}} {\partial_{\psi'}\lr{ \lr{\log
    \tw[\phi'][\varepsilon][\phi'][\theta']}^2}}}\\
\leq 2\lr{
  \PE_{\varepsilon \sim \REPq}\lr{
  \sup_{(\theta',\phi')\in\mathcal{V}}\lr{\partial_{\psi'} \log
    \tw[\phi'][\varepsilon][\phi'][\theta']}^2}  \PE_{\varepsilon \sim \REPq}\lr{
  \sup_{(\theta',\phi')\in\mathcal{V}}\lr{ \log
    \tw[\phi'][\varepsilon][\phi'][\theta']}^2}}^{\frac12}<\infty
\;,
\end{multline*}
where the first inequality uses the Cauchy-Schwarz inequality and the
second is a consequence of~(\ref{eq:df-gaussian-rep})
and~(\ref{eq:df-gaussian-rep-B}). 
\end{proof}

\section{Deferred Proofs of \Cref{sec:key-result-ratios}}
\label{sec:proof-s-crefsec:key}

In this section, we let $(X_1,W_1),\dots,(X_N,W_N)$ be i.i.d. with the same distribution as a generic pair $(X,W)$ valued in $\rset\times\rset_+$ and we set: for
all $v\geq0$,
\begin{align}  
&\phi(v)=\PE\lr{\rme^{-v\,W}}\quad\text{and}\quad
  \phi_1(v)=\PE\lr{X\,\rme^{-v\,W}}, \label{eq:defphiAndPhiOne}\\
&\overline{W}_N:=\frac1N\sum_{i=1}^NW_i  
\quad\text{and}\quad\overline{X}_N:=\frac1N\sum_{i=1}^NX_i,\\
  \label{eq:double-produit}
& \overline{X^2}_N:=\frac1N\sum_{i=1}^NX_i^2\quad\text{and}\quad
\overline{X}^{(i\neq j)}_N:=\frac2{N(N-1)}\displaystyle\sum_{1\leq
  i<j\leq N}X_iX_j,\\  
  \label{eq:centered-empirical}
& \overline{X^c}_N:=\frac1N\sum_{i=1}^NX_i^c\quad\text{and}\quad
\overline{X^c}^{(i\neq j)}_N:=\frac2{N(N-1)}\displaystyle\sum_{1\leq
  i<j\leq N}X_i^cX_j^c,\\
  \nonumber
&  \text{where we set $X_i^c:=X_i-\PE(X)$}.
\end{align}

\subsection{Preliminary Lemmas}

In the following, we provide two preliminary lemmas that gather useful properties of the functions $\phi$ and $\phi_1$ defined in \eqref{eq:defphiAndPhiOne}.

\begin{lem}[Properties of the Laplace transform $\phi$]
  \label{lem:tech2}
  The following assertions
  hold. 
  \begin{enumerate}[label=(\roman*)]
  \item \label{item:lem:conslimsupconditionGen7prereq1} 
  The function $\phi$ is continuous and bounded on
  $\rset_+$. 
\item \label{item:lem:conslimsupconditionGen7prereq1bis0} Assume that $\PE\lr{W}=1$. Then, for all
  $N\in\mathbb{N}^*$ and all
  $u\geq0$,
  $\lr{\phi\lr{\frac uN}}^N-\rme^{-u}\geq 0  \;.$
\item \label{item:lem:conslimsupconditionGen7prereq1bis} Assume that $\PE\lr{W}=1$ and $\PE\lr{W^2}<\infty$. Then, for all
  $N\in\mathbb{N}^*$ and all 
  $u\in\lrb{0,\frac{N}{\PE\lr{W^2}}}$,
  \begin{align}
  0\leq N \lr{\lr{\phi\lr{\frac uN}}^N-\rme^{-u}}\leq \rme^{-u/2} \;. \label{eq:lem:conslimsupconditionGen7prereq1biseq}
  \end{align}
  Furthermore, 
  for all
  $u\geq0$,
  \begin{align}
  \lim_{N\to\infty} N \lr{\lr{\phi\lr{\frac uN}}^N-\rme^{-u}}=\frac{\mathbb{V}\lr{W}}2\, u^2\;\rme^{-u}\;. \label{item:lem:conslimsupconditionGen7prereq1ter} 
  \end{align}
\item \label{item:lem:conslimsupconditionGen7prereq0} Assume that there exists $\eta>0$ such that
  \begin{align}  \label{eq:simple-cond-ratio-equiv}
    \sup_{t>0} \lr{t^\eta\; \PE\lr{\rme^{-t\,W}}}<\infty \;.
  \end{align}
Then, for any exponent $m>0$, there exists $c,p>0$ such that: for all $q\geq p$ and $v\geq0$,
  $$
  \lr{\phi\lr{\frac vq}}^{q}\leq 1\wedge \lr{c\,v^{-m}}\;.
  $$
  \end{enumerate}
\end{lem}

\begin{proof} The assertion \ref{item:lem:conslimsupconditionGen7prereq1} is immediate since $\phi$ is valued in $[0,1]$ and is continuous by the dominated convergence theorem. In addition, \ref{item:lem:conslimsupconditionGen7prereq1bis0} follows from Jensen's inequality applied to the convex function $W \mapsto \rme^{-{uW}/{N}}$ and paired up with the fact that $\PE(W) = 1$ by assumption. We now move on to the proof of \ref{item:lem:conslimsupconditionGen7prereq1bis} and \ref{item:lem:conslimsupconditionGen7prereq0}. 

  \begin{itemize}[wide=0pt, labelindent=\parindent]
\item \textbf{Proof of \ref{item:lem:conslimsupconditionGen7prereq1bis}.}
  First note that for all $x\geq0$, 
  \begin{align}
  \label{eq:lem:tech1-3}
     - \frac{x^2}2 \lr{1-\rme^{-x}} & \leq \rme^{-x}-1+x-\frac{x^2}2\leq 0
  \end{align}
since for all $x\geq0$: $1-\rme^{-t}\leq1-\rme^{-x}$ for $0\leq t\leq x$ and 
\begin{align*}
&    \rme^{-x}-1+x-\frac{x^2}2=-\int_0^x(x-t)(1-\rme^{-t})\rmd t.
  \end{align*}
Applying the second inequality in~(\ref{eq:lem:tech1-3}) with $x=v W$ and taking expectation yields: \looseness=-1
  $$
  \phi(v)\leq 1-v+\frac{v^2}2\PE(W^2), \quad v\geq0.
  $$
  We deduce from the inequality above and \ref{item:lem:conslimsupconditionGen7prereq1bis0} that: for all $u\geq0$
  and $N\in\mathbb{N}^*$,
  $$
  0\leq \lr{\phi\lr{\frac
      uN}}^N-\rme^{-u} \leq\rme^{-u}\lr{\rme^{u+N\ln\lr{1-\frac uN+\frac{u^2}{2N^2}\PE\lr{W^2}}}-1}\;.
  $$
  Since $\ln (1+ x)\leq x$, we then get that for all $u\geq0$ and $N\in\mathbb{N}^*$,
  $$
  0 \leq \lr{\phi\lr{\frac
      uN}}^N-\rme^{-u}\leq\rme^{-u}\lr{\rme^{\frac{u^2}{2N}\PE\lr{W^2}}-1}\leq\rme^{-u\lr{1-\frac{u}{2N}\PE\lr{W^2}}}\;.
  $$
  Restricting $u$ to the set $\lrb{0,\frac{N}{\PE\lr{W^2}}}$ yields \eqref{eq:lem:conslimsupconditionGen7prereq1biseq}. 
  
  Now let $u\geq0$ be given. Applying~(\ref{eq:lem:tech1-3}) with
  $x=u\,W/N$ and taking expectation, we get that for all $N\in\mathbb{N}^*$,
  $$
  \lrav{\phi\lr{\frac
      uN}- 1+\frac
      uN-\frac{\PE\lr{W^2}}2\,\frac{u^2}{N^2}}\leq\frac{u^2}{2N^2}\;\PE\lr{W^2\lr{1-\rme^{-W\,u/N}}}\;.
  $$
  By dominated convergence, the expectation in the upper bound
  converges to 0 as $N\to\infty$. Hence we have, as $N\to\infty$,
  $$
  \ln \phi\lr{\frac
      uN}= \ln\lr{1-\frac
      uN+\frac{\PE\lr{W^2}}2\,\frac{u^2}{N^2} +o\lr{\frac1{N^2}}}=
    -\frac
    uN+\frac{\mathbb{V}\lr{W}}2\,\frac{u^2}{N^2} +o\lr{\frac1{N^2}}\;,
    $$
    where used that $\ln(1+x)=x-x^2/2+o(x^2)$ as $x\to0$ and $\PE\lr{W^2}-1=\mathbb{V}\lr{W}$.
  It follows that, as $N\to\infty$,
  $$
  N \lr{\lr{\phi\lr{\frac uN}}^N-\rme^{-u}}=N\,\rme^{-u}\,\lr{\rme^{N\ln \phi\lr{\frac
      uN}+u}-1}=\rme^{-u}\,N\,\lr{\rme^{\frac{\mathbb{V}\lr{W}}2\,\frac{u^2}{N} +o\lr{\frac1{N}}}-1}\;,
$$
which yields the given limit \eqref{item:lem:conslimsupconditionGen7prereq1ter} using that $\rme^x=1+x+o(x)$ as $x\to0$. 

    \item \textbf{Proof of \ref{item:lem:conslimsupconditionGen7prereq0}.} First note that for any $v\geq0$, $p\mapsto \lr{\phi\lr{\frac vp}}^{p}$ is non-increasing as $p$ increases (since for all $q\geq p$, $\lr{\phi\lr{\frac vp}}^{p}= \lr{\phi\lr{\frac v q \frac{q}{p}}}^{p} \geq \lr{\phi\lr{\frac vq}}^{q}$  by Jensen's inequality). Furthermore, by \eqref{eq:simple-cond-ratio-equiv}, we have: for any $p>0$,
$$
c_1(p):=\sup_{t>0} \lr{(t/p)^\eta\; \phi(t/p)}<\infty \;.
$$
Let $m>0$ and take $p=m/\eta$. Then we have for all $q\geq p$ and $v \geq 0$,
$$
\lr{\phi\lr{\frac vq}}^{q}\leq \lr{\phi\lr{\frac vp}}^{p}\leq (v/p)^{-p\,\eta}\;\lr{\sup_{t>0}(t/p)^\eta\;\phi\lr{\frac tp}}^{p}=c\;v^{-m}\;,
$$
where we set $c=p^{p\eta}\;\lr{c_1(p)}^p$, hence the proof of \ref{item:lem:conslimsupconditionGen7prereq0} is completed.
\end{itemize} 
\end{proof}

\begin{lem}[Properties of $\phi_1$] \label{lem:tech3} The following assertions
  hold. 
  \begin{enumerateList}
\item \label{item:lem:conslimsupconditionGen7prereq3} Assume that
  $\PE(|X|)<\infty$. Then, $\phi_1$ is continuous and bounded on
  $\rset_+$.
\item \label{item:lem:conslimsupconditionGen7prereq4} Assume that $\PE(X)=0$
  and $\PE(|X|\, W)<\infty$. Then, for all $u\geq0$,
  \begin{align*}
    & \sup_{N \geq1}   \lr{ N \lrav{\phi_1\lr{\frac uN}+\PE\lr{X W}\;\frac
      uN }}\leq \PE(|X| W)u\;,\\
    & \lim_{N \to\infty} N \lrav{\phi_1\lr{\frac uN}+\PE\lr{X W}\frac uN}=0\;.
  \end{align*}
\item \label{item:lem:conslimsupconditionGen7prereq5} Assume that
  $\PE(X)=0$ and $\PE(|X|\,W^\delta) <\infty$ with  $0<\delta<1$. Then, for all $u \geq 0$,
  \begin{align*}
    & \sup_{N \geq1} \lr{N^{\delta} \lrav{\phi_1\lr{\frac uN}}}\leq
      \PE\lr{ |X| \,W^\delta } \; u^\delta\;, \\
    & \lim_{N \to\infty} N^{\delta} \lrav{\phi_1\lr{\frac uN}}=0\;.
  \end{align*}
\end{enumerateList}
\end{lem}

\begin{proof} The assertion \ref{item:lem:conslimsupconditionGen7prereq3} is immediate since $\phi_1$ is valued in $[-\PE(X_-),\PE(X_+)]$ and it is
continuous by the dominated convergence theorem when $\PE(|X|)<\infty$. As for \ref{item:lem:conslimsupconditionGen7prereq4}, first note that for all $x\geq0$,
  \begin{align}
  \label{eq:lem:tech1-1}
    0 & \leq \rme^{-x}-1+x\leq x\lr{1-\rme^{-x}},
  \end{align}
since for all $x\geq0$, $1-\rme^{-t}\leq1-\rme^{-x}$ for $0\leq t\leq x$ and
\begin{align*}
&    \rme^{-x}-1+x=\int_0^x\lr{1-\rme^{-t}}\rmd t.
  \end{align*}   
Using $\PE(X)=0$ and $\PE(|X|\,W)<\infty$, we have, for all $u\geq0$ and $N\geq1$,
\begin{align*}
  \lrav{\phi_1\lr{\frac u{N}}+\frac {u\,\PE\lr{X\,W}}{N}}&=
                                                             \lrav{\PE\lr{X\lr{\rme^{-u\,W/N}-1+u\,W/N}}}\\
&\leq
                                                             \PE\lr{\lrav{X}\lr{\rme^{-u\,W/N}-1+u\,W/N}}\\                                                             
& \leq \frac u{N}\;\PE\lr{\,|X|\,W\;\lr{1-\rme^{-u\,W/N}}}\;,
\end{align*}
where in the last inequality we applied~(\ref{eq:lem:tech1-1}) with
$x=u\,W/N$. The two claims of the assertion \ref{item:lem:conslimsupconditionGen7prereq4} easily follow.

Moving on to \ref{item:lem:conslimsupconditionGen7prereq5}, we use
that, for all $x\geq0$, $0\leq 1-\rme^{-x}\leq x$, which implies
$$
0\leq 1-\rme^{-x}=\lr{1-\rme^{-x}}^\delta\;\lr{1-\rme^{-x}}^{1-\delta}\leq x^\delta\;\lr{1-\rme^{-x}}^{1-\delta}\;.
$$
We get that, since $\PE(X)=0$, for all $u \geq 0$ and $N \geq 1$,
\begin{align*}
 \lrav{\phi_1\lr{\frac u{N}}} & =  \lrav{\PE\lr{ X \lr{\rme^{-uW/N}  -1 }}} \\
 & \leq \PE\lr{ |X| \lr{1 - \rme^{-uW/N} } } \\
 & \leq \lr{\frac{u}{N}}^\delta \PE\lr{ |X| \,{W}^\delta\,\lr{1-\rme^{-uW/N}}^{1-\delta}} 
\end{align*}
and the desired asssertions in
\ref{item:lem:conslimsupconditionGen7prereq5} follow under $\PE\lr{ |X| \,{W}^\delta}<\infty$ using
that $\lr{1-\rme^{-uW/N}}^{1-\delta}$ is bounded by 1 and vanishes as
$N\to\infty$ for $u\,W>0$.
\end{proof}
Next, we provide useful equivalences for the condition \eqref{eq:simple-cond-ratio-equiv}.

\subsection{Useful Equivalences} 
\label{app:usefulEq}
  \begin{prop}
    \label{lem:equivlimsupconditionGen}
    Set $\overline{W}_N = N^{-1} \sum_{i=1}^N W_i$ for all
    $N \in \mathbb{N}^\star$, where $W_1, \ldots, W_N$ are positive
    i.i.d. random variables distributed as a generic random variable $W$. Then the following assertions hold.
    \begin{enumerate}[label=(\roman*),series=W]
  \item\label{item:lem:equivlimsupconditionGen2} For all $\mu>0$, we have
  \begin{align} \label{ass:limsupWequiv}
    \limsup_{N\to\infty  } \PE\lr{\lr{\overline{W}_N}^{-\mu}} <
    \infty\Longleftrightarrow\exists N\geq1\;,\,\PE\lr{\lr{\overline{W}_N}^{-\mu}} < \infty.
  \end{align}
  Furthermore, if the assertions of the equivalence~(\ref{ass:limsupWequiv}) hold for some $\mu>0$ and the distribution of $W$ does not reduce to a Dirac measure, there exists $N_0\geq1$ such that $\PE\lr{\lr{1/\overline{W}_N}^\mu}=\infty$ for $1\leq N < N_0$ and the sequence $\lr{\PE\lr{\lr{1/\overline{W}_N}^\mu}}_{N \geq N_0}$ is strictly decreasing in $(0,\infty)$. 
\item \label{item:lem:equivlimsupconditionGen1} For all $\eta>0$,
    the two conditions~(\ref{eq:simple-cond-ratios})
    and~(\ref{eq:simple-cond-ratio-equiv}) are equivalent.
\item\label{item:lem:equivlimsupconditionGen4}   If
  condition~(\ref{eq:simple-cond-ratio-equiv}) holds for $\eta>0$,
  then $\PE\lr{(\overline{W}_N)^{-\mu}}<\infty$ for all $\mu>0$
  and $N\geq1$ such that $N>\mu/\eta$.
\item \label{item:implicationcond-jacob} If
  condition~(\ref{eq:simple-cond-ratio-equiv}) holds for $\eta>0$, then
  $\PE(W^{-\mu})<\infty$ for  $0<\mu<\eta$.
\item\label{item:reverse_implicationcond-jacob} If
  $\PE(W^{-\mu})<\infty$ for $\mu>0$, then
  condition~(\ref{eq:simple-cond-ratios}) holds for $0<\eta\leq\mu$.
\item\label{item:lem:equivlimsupconditionGen3improved} If
  $\PE\lr{\lr{\overline{W}_N}^{-\mu}} < \infty$ for some $\mu>0$ and
  $N\geq1$, then~(\ref{eq:simple-cond-ratio-equiv}) hold with $\eta=\mu/N$.
\item \label{item:lem:equivlimsupconditionGen6} Suppose 
that~(\ref{eq:simple-cond-ratios}) holds for some $\eta>0$.  
Then, for all $\mu>0$, we have
  \begin{align} \label{ass:limWnegpower}
    \lim_{N\to\infty  } \PE\lr{(1/\overline{W}_N)^\mu} =
    \begin{cases}
      0 &\text{ if $\PE(W)=\infty$,}\\
    \lr{\PE(W)}^{-\mu} &\text{ otherwise.}
    \end{cases}
  \end{align}
\end{enumerate}
\end{prop}
  \begin{proof}
    We prove Assertions~\ref{item:lem:equivlimsupconditionGen2}--\ref{item:lem:equivlimsupconditionGen6} successively.
\begin{enumerateList}
\item \textbf{Proof of Assertion~\ref{item:lem:equivlimsupconditionGen2}.}
Let $\Gamma$ denote the Euler Gamma function, so that $\Gamma(\mu)=\int_0^\infty
t^{\mu-1}\rme^{-t}\;\rmd t$ for all $\mu > 0$. Following an idea of \cite{cressie81}, we have: for all $N\geq1$, \looseness=-1
\begin{align}
  \nonumber
\PE\lr{\lr{\overline{W}_N}^{-\mu}}& = (\Gamma(\mu))^{-1}\,\int_0^\infty
                                 t^{\mu-1}\PE\lr{\rme^{-\overline{W}_N\,t}}\;\rmd t\\
\label{eq:lem:equivlimsupconditionGen1}  &= (\Gamma(\mu))^{-1}\,\int_0^\infty
t^{\mu-1}\lr{\PE\lr{\rme^{-W\,t/N}}}^N\;\rmd t,
\end{align}
where we have used that, for all $x>0$,
$x^{-\mu}=(\Gamma(\mu))^{-1}\,\int_0^\infty
t^{\mu-1}\rme^{-x\,t}\;\rmd t$ and Tonelli's theorem. For all $t\geq0$ and $N\geq1$, by Jensen's inequality and strict convexity of
$x\mapsto x^{(N+1)/N}$ on $(0,\infty)$, we have, for all $t>0$,
$$
\lr{\PE\lr{\rme^{-W\,t/N}}}^{N}=\lr{\PE\lr{\lr{\rme^{-W\,t/(N+1)}}^{(N+1)/N}}}^{N}\geq\lr{\PE\lr{\rme^{-W\,t/(N+1)}}}^{N+1}\;,
$$
with equality if and only if both sides of the inequality are
infinite or $\rme^{-W\,t/(N+1)}$ is equal to its
mean a.s. and we can thus deduce Assertion~\ref{item:lem:equivlimsupconditionGen2}.

\item \textbf{Proof of
  Assertion~\ref{item:lem:equivlimsupconditionGen1}.} Let $\eta >0$. We first assume that 
\begin{equation}
  \label{eq:simple-cond-negative-moments}
  \sup_{t>0} \lr{t^\eta\; \PE\lr{\rme^{-t\,W}}}<\infty
\end{equation}
holds and
show
\begin{equation}
    \label{eq:simple-cond-negative-moments-equiv}
    \sup_{u>0}\lr{u^{-\eta}\;\PP\lr{W\leq u} }<\infty.
  \end{equation}
We have, for all
$u>0$, by Markov's inequality,
$$
\PP\lr{W\leq u}=\PP\lr{\rme^{\eta(1-W/u)}\geq
  1}\leq\rme^{\eta}\;\PE\lr{\rme^{-\eta\,W/u}}\leq \rme^{\eta}\;C\,\eta^{-\eta}\,u^\eta,
$$
where $C$ is the sup in the left-hand side
of~(\ref{eq:simple-cond-negative-moments}). Hence (\ref{eq:simple-cond-negative-moments-equiv}) follows. Let us now
assume~(\ref{eq:simple-cond-negative-moments-equiv}) and show
that~(\ref{eq:simple-cond-negative-moments}) holds. 
We have, for all $t>0$,
  $$
  \PE\lr{\rme^{-t\,W}}=t\int_0^\infty \rme^{-t\,u} \; \PP\lr{W\leq
    u}\;\rmd u = \int_0^\infty \rme^{-v} \; \PP\lr{W\leq
    v/t}\;\rmd v \;.
  $$
  Let $C$ now denote the (finite) sup in the left-hand side
  of~(\ref{eq:simple-cond-negative-moments-equiv}), so that
  $\PP\lr{W\leq u}\leq C\,u^{\eta}$ for all $u>0$.  Using this bound
  in the previous integral, we get that, for all $t>0$,
  $$
  \PE\lr{\rme^{-t\,W}}\leq C\,t^{-\eta} \int_0^\infty \rme^{-v} \; v^{\eta}\;\rmd v\;.
  $$
  Since the latter integral is a finite constant, this concludes the
  proof of \ref{item:lem:equivlimsupconditionGen1}. 
\item \textbf{Proof of
    Assertion~\ref{item:lem:equivlimsupconditionGen4}.}  Suppose
  that~(\ref{eq:simple-cond-ratio-equiv}) holds for some
  $\eta>0$. Then there exists $c>0$ such that, for all $t\geq0$,
  $\PE\lr{\rme^{-W\,t}}\leq1\wedge\lr{c\,t^{-\eta}}$. 
Hence for $\mu>0$ and $N\geq1$ such that  $N>\mu/\eta$, the right-hand side
of~(\ref{eq:lem:equivlimsupconditionGen1}) is finite and
$\PE\lr{(1/\overline{W}_N)^\mu}$ as well.
\item \textbf{Proof of Assertion~\ref{item:implicationcond-jacob}.}
  This is a special case of \ref{item:lem:equivlimsupconditionGen4}
  imposing $N=1$.
\item  \textbf{Proof of
    Assertion~\ref{item:reverse_implicationcond-jacob}.} Suppose that
  $\PE(W^{-\mu})<\infty$ for some $\mu>0$. Then $\PP(W=0)=0$ and, for all $u>0$,
  $$
  \PP(W\leq u)=\PP(W^{-\mu}u^{\mu}\geq 1)\leq u^{\mu}\PE(W^{-\mu})\;,
  $$
  where we used the Markov inequality.

\item \textbf{Proof of
    Assertion~\ref{item:lem:equivlimsupconditionGen3improved}.}
  Applying Assertions~\ref{item:reverse_implicationcond-jacob} and~\ref{item:lem:equivlimsupconditionGen1}
  successively with $W$
  replaced by $\overline{W}_N$, the
  given assumption implies
  $$
  \sup_{t>0}\lr{t^\mu\PE\lr{\rme^{-t\overline{W}_N}}}<\infty\;.
  $$
  Since
  $\PE\lr{\rme^{-t\overline{W}_N}}=\lr{\PE\lr{\rme^{-t\,W/N}}}^N$, we
  get, setting $u=t/N$,
  $$
  \sup_{u>0}\lr{u^\mu\,\lr{\PE\lr{\rme^{-u\,W}}}^N}<\infty\;.
  $$
  The claimed implication follows.

\item \textbf{Proof of
    Assertion~\ref{item:lem:equivlimsupconditionGen6}.}  By the
  strong law of large numbers we have $\overline{W}_N\to\PE\lr{W}$
  as $N\to\infty$ a.s., including in the case where
  $\PE\lr{W}=\infty$. Hence the sequence
  $\lr{(1/\overline{W}_N)^\mu}_{N\geq 1}$ converges to the right-hand
  side of~(\ref{ass:limWnegpower}) a.s. To conclude the proof of
  Assertion~\ref{item:lem:equivlimsupconditionGen6}, it suffices to
  show that, for all $\mu>0$, the sequence
  $\lr{(1/\overline{W}_N)^\mu}_{N\geq N_0}$ is uniformly integrable
  for some $N_0\geq1$ \citep[see for example Theorem 3.5 of][]{BillingsleyBook}. By the Markov inequality, the uniform integrability then follows from the fact
  that: for some $\epsilon>0$, 
  $$
  \limsup_{N\to\infty  } \PE\lr{\lr{(1/\overline{W}_N)^{\mu}}^{1+\epsilon}}<\infty.
  $$
  In other words, the uniform integrability will follow from
  $$
  \limsup_{N\to\infty  } \PE\lr{(1/\overline{W}_N)^{\mu'}}<\infty,
  $$
  with $\mu' = \mu(1+\epsilon)> \mu$. By Assertion~\ref{item:lem:equivlimsupconditionGen4}, the assertions of the equivalence~(\ref{ass:limsupWequiv}) hold for
  all $\mu>0$. The proof is concluded by taking $\mu' = \mu(1+\epsilon)$ in lieu of $\mu$ in the assertions of the equivalence~(\ref{ass:limsupWequiv}).
\end{enumerateList}
\end{proof}

\begin{rem}
  Assertion~\ref{item:lem:equivlimsupconditionGen1} in
  \Cref{lem:equivlimsupconditionGen} is related to the famous
  Abelian and Tauberian theorems for the Laplace transform of a
  probability measure on $\rset_+$, see \cite[Section~XIII.5]{feller1991introduction}.
\end{rem}

\begin{rem}
  Assertion~\ref{item:lem:equivlimsupconditionGen2} in \Cref{lem:equivlimsupconditionGen} can be deduced from the proof of \cite[Lemma~4]{daudel2022Alpha}. Here we provided a simpler
  proof of this assertion and, more importantly, we obtained the novel assertions~\ref{item:lem:equivlimsupconditionGen1}--\ref{item:lem:equivlimsupconditionGen6}. 
\end{rem}

\subsection{Asymptotic Behavior of Sample Means Ratios: Key Special Cases}

In this section, we investigate the asymptotic behavior of sample means ratios in some special cases that are pivotal to prove
\Cref{thm:ratios-limits-optimal-cond}. We start with a preliminary lemma.

\begin{lem} \label{lem:ReexpressingExpectations}
 Suppose that~(\ref{eq:simple-cond-ratio-equiv}) holds for some $\eta>0$ and let $\mu>0$. Then the following assertions hold.
 \begin{enumerate}[label=(\roman*)]
 \item \label{item:lem:conslimsupconditionGen7prereq2}
 If $X$ is valued in $\rset_+$, then the two following identities
 hold for $N\geq1$ and $N\geq2$ respectively:
  \begin{align}
    \label{eq:lem:equivlimsupconditionGen2-N1}
&\PE\lr{\frac{\overline{X}_N}{\lr{\overline{W}_N}^\mu}} = \int_0^\infty
\frac{t^{\mu-1}}{\Gamma(\mu)}\,\phi_1\lr{\frac t{N}}\;\lr{\phi\lr{\frac t{N}}}^{N-1}\rmd t,\\
    \label{eq:lem:equivlimsupconditionGen2bis-N2}
&\PE\lr{\frac{\overline{X}^{(i\neq j)}_N}{\lr{\overline{W}_N}^\mu}} = \int_0^\infty
\frac{t^{\mu-1}}{\Gamma(\mu)}\lr{\phi_1\lr{\frac t{N}}}^2\lr{\phi\lr{\frac t{N}}}^{N-2}\rmd t.
  \end{align}
\item \label{item:lem:conslimsupconditionGen7} If $\PE(|X|)<\infty$,
  then there exists $N_0\geq 3$ such that for all $N\geq N_0$, the
  identities~(\ref{eq:lem:equivlimsupconditionGen2-N1})
  and~(\ref{eq:lem:equivlimsupconditionGen2bis-N2}) hold with well
  defined and finite integrals. Moreover the following identity holds for all
  $N\geq N_0$:
  \begin{equation}
    \label{eq:identity-doubleproduit-centered}
    \PE\lr{\frac{\overline{X^c}^{(i\neq
          j)}_N}{\lr{\overline{W}_N}^\mu}}=\PE\lr{\frac{\overline{X}^{(i\neq
          j)}_N}{\lr{\overline{W}_N}^\mu}}-2\PE\lr{\frac{\PE(X)\,\overline{X}_N}{\lr{\overline{W}_N}^\mu}}
    +\PE\lr{\frac{\lr{\PE(X)}^2}{\lr{\overline{W}_N}^\mu}}\;.
  \end{equation}
\end{enumerate}
\end{lem}

\begin{proof} We prove \ref{item:lem:conslimsupconditionGen7prereq2} and \ref{item:lem:conslimsupconditionGen7} sequentially.

\begin{enumerateList}
\item \textbf{Proof of \ref{item:lem:conslimsupconditionGen7prereq2}.} 
Proceeding as in~(\ref{eq:lem:equivlimsupconditionGen1}) and using that
$x^{-\mu}=(\Gamma(\mu))^{-1}\,\int_0^\infty
t^{\mu-1}\rme^{-x\,t}\;\rmd t$ for all $x>0$, we have by
Tonelli's theorem (since $\overline{X}_N$ and $\overline{W}_N$ are
nonnegative): for all $N\geq1$, \looseness=-1
\begin{align}
\nonumber
\PE\lr{\frac{\overline{X}_N}{\lr{\overline{W}_N}^\mu}} & =  (\Gamma(\mu))^{-1}\,\PE\lr{\overline{X}_N \int_0^\infty
t^{\mu-1}\;\rme^{-\overline{W}_N\,t}\rmd
                                                         t}\\
  \nonumber 
  & = (\Gamma(\mu))^{-1}\,\PE\lr{X_1 \int_0^\infty
t^{\mu-1}\;\rme^{-\overline{W}_N\,t}\rmd
                                                         t} \\ \nonumber
& =  (\Gamma(\mu))^{-1}\,\int_0^\infty
t^{\mu-1}\lr{\PE\lr{X_1\rme^{-\overline{W}_N\,t}}} \rmd
                                                         t\\
  \nonumber
  &= (\Gamma(\mu))^{-1}\,\int_0^\infty
t^{\mu-1}\,\phi_1(t/N)\,\lr{\phi(t/N)}^{N-1}\rmd
                                           t.
\end{align}
As for (\ref{eq:lem:equivlimsupconditionGen2bis-N2}), it is obtained similarly by using that for all $t>0$ and $N\geq2$,
$$
\frac2{N(N-1)}\sum_{1\leq i<j\leq N}\PE\lr{X_iX_j\rme^{-\overline{W}_N\,t}}=\lr{\phi_1\lr{\frac t{N}}}^2\lr{\phi\lr{\frac t{N}}}^{N-2}\;,
$$
so that 
\begin{align*}
\PE\lr{\frac{\overline{X}^{(i\neq j)}_N}{\lr{\overline{W}_N}^\mu}} =  (\Gamma(\mu))^{-1}\,\int_0^\infty
t^{\mu-1}\,\phi_1(t/N)^2\,\lr{\phi(t/N)}^{N-2}\rmd t.
\end{align*}

\item \textbf{Proof of \ref{item:lem:conslimsupconditionGen7}.}
Setting $u=(N-1) t/N$ for $N\geq2$ and $u=(N-2)t/N$ for $N\geq3$
respectively yield that~(\ref{eq:lem:equivlimsupconditionGen2-N1})
and~(\ref{eq:lem:equivlimsupconditionGen2bis-N2}) can be equivalently
written as
  \begin{align}
    \label{eq:lem:equivlimsupconditionGen2}
    &\PE\lr{\frac{\overline{X}_N}{\lr{\overline{W}_N}^\mu}} = \lr{\frac N{N-1}}^\mu\int_0^\infty
\frac{u^{\mu-1}}{\Gamma(\mu)}\,\phi_1\lr{\frac u{N-1}}\;\lr{\phi\lr{\frac u{N-1}}}^{N-1}\rmd u,\\
    \label{eq:lem:equivlimsupconditionGen2bis}
&    \PE\lr{\frac{\overline{X}^{(i\neq j)}_N}{\lr{\overline{W}_N}^\mu}} = \lr{\frac N{N-2}}^\mu\int_0^\infty
\frac{u^{\mu-1}}{\Gamma(\mu)}\lr{\phi_1\lr{\frac u{N-2}}}^2\lr{\phi\lr{\frac u{N-2}}}^{N-2}\rmd u.
  \end{align}
  Based on the proof \ref{item:lem:conslimsupconditionGen7prereq2}, ~(\ref{eq:lem:equivlimsupconditionGen2}) will be valid when $X$ is not valued in $\rset_+$ provided that Fubini's theorem applies. In other words, (\ref{eq:lem:equivlimsupconditionGen2}) will hold if $\PE\lr{{\overline{\lrav{X}}_N}/{\lr{\overline{W}_N}^\mu}}<\infty$ where $\overline{\lrav{X}}_N=N^{-1}\sum_{i=1}^N\lrav{X_i}$. Denoting $\tilde{\phi}_1(v)=\PE\lr{|X|\rme^{-v\,W}}$, we have by applying (\ref{eq:lem:equivlimsupconditionGen2}) that 
  $$
  \PE\lr{\frac{\overline{|X|}_N}{\lr{\overline{W}_N}^\mu}} = \lr{\frac N{N-1}}^\mu\frac1{\Gamma(\mu)}\int_0^\infty
  u^{\mu-1}\,\tilde{\phi}_1\lr{\frac u{N-1}}\;\lr{\phi\lr{\frac u{N-1}}}^{N-1} \rmd u
  \;.
  $$
  Since $\PE(|X|)<\infty$ by assumption, the function $\tilde{\phi}_1$
  is bounded on $\rset_+$. Furthermore,
  \ref{item:lem:conslimsupconditionGen7prereq0} of \Cref{lem:tech2}
  with $m=\mu+1$
  yields that: there exist $c>0$ and $N_0 \geq 2$
  such that, for all $N\geq N_0$, \looseness=-1
  \begin{align}\label{eq:boundedFubini}
\left|u^{\mu-1}\,\tilde{\phi}_1\lr{\frac u{N-1}}    \lr{\phi\lr{\frac u{N-1}}}^{N-1} \right| \leq  \PE(|X|)~u^{\mu-1} \lrb{1\wedge \lr{c\,u^{-\mu-1}}}, \quad u>0.
  \end{align}
Consequently
$\PE\lr{{\overline{\lrav{X}}_N}/{\lr{\overline{W}_N}^\mu}}<\infty$ for
all $N\geq N_0$. Hence, for all $N\geq N_0$ Fubini's theorem applies
and (\ref{eq:lem:equivlimsupconditionGen2}) holds, and so does its
equivalent form~(\ref{eq:lem:equivlimsupconditionGen2-N1}). The proof is concluded by applying the same line of reasoning to (\ref{eq:lem:equivlimsupconditionGen2bis}) and using that there exist $c>0$ and $N_0 \geq 3$ such that, for all $N\geq N_0$, \looseness=-1
  \begin{align}\label{eq:boundedFubiniTwo}
\left|u^{\mu-1}\,\tilde{\phi}_1\lr{\frac u{N-2}}^2    \lr{\phi\lr{\frac u{N-2}}}^{N-2} \right| \leq \lr{ \PE(|X|)}^2~u^{\mu-1} \lrb{1\wedge \lr{c\,u^{-\mu-1}}}, \quad u>0.
  \end{align}
  Hence we get~(\ref{eq:lem:equivlimsupconditionGen2bis-N2}). We
  conclude by observing that the
  identity~(\ref{eq:identity-doubleproduit-centered}) is obtained by
  applying~(\ref{eq:lem:equivlimsupconditionGen2-N1})
  and~(\ref{eq:lem:equivlimsupconditionGen2bis-N2}) to $X$ and $X_i$
  replaced by $X^c=X-\PE(X)$ and $X_i^c$ by $X_i-\PE(X)$ (with
  corresponding similar upper bounds in~(\ref{eq:boundedFubini}) and~(\ref{eq:boundedFubiniTwo})) and noting that
  $\phi_1^c(u)=\PE\lr{X^c\rme^{-u\,W}}$ satisfies
    $\phi_1^c=\phi_1-\PE(X)\,\phi$.  
\end{enumerateList}
\end{proof}
\begin{prop}\label{prop:conslimsupconditionGen}
  Suppose that~(\ref{eq:simple-cond-ratio-equiv}) holds for some $\eta>0$ and let $\mu>0$. Then the following assertions hold.
\begin{enumerate}[label=(\roman*)]
\item \label{item:lem:conslimsupconditionGen0} If $\PE(W)=1$ and $\PE(W^2)<\infty$, then
  $$
  \lim_{N\to\infty}N\,\lr{\PE\lr{\lr{\overline{W}_N}^{-\mu}}-1}= \frac{(\mu+1)\mu}2\,\mathbb{V}\lr{W}\;.
  $$
\item \label{item:lem:conslimsupconditionGen7ter} If $\PE(|X|)<\infty$, then
  \begin{align*}
  \lim_{N\to\infty}\PE\lr{\frac{\overline{X}_N}{\lr{\overline{W}_N}^\mu}}=\frac{\PE(X)}{\lr{\PE(W)}^\mu}
\quad\text{and}\quad\lim_{N\to\infty}\PE\lr{\frac{\overline{X}^{(i\neq
      j)}_N}{\lr{\overline{W}_N}^\mu}}=
  \frac{\lr{\PE(X)}^2}{\lr{\PE(W)}^\mu} \;.
  \end{align*}
\item \label{item:lem:conslimsupconditionGen8} If $\PE(X)=0$ and
  $\PE(|X|\,W)<\infty$, then
  $$
  \lim_{N\to\infty}N\,\PE\lr{\frac{\overline{X}_N}{\lr{\overline{W}_N}^\mu}}=-\frac{\mu\;\mathbb{C}\mathrm{ov}\lr{X,W}}{\lr{\PE(W)}^{\mu+1}} \;.
  $$

\item \label{item:possibleImprovbis} If  $\PE(X) = 0$ and $\PE(|X|^2)<\infty$, then
$$
\lim_{N\to\infty} \PE\lr{\frac{\lr{\overline{X}_N}^2}{\lr{\overline{W}_{N}}^{\mu}}}= 0 \;.
$$

\item \label{item:possibleImprovTwo} If $\PE(X) = 0$ and
  $\PE\lr{|X|\,W^\delta})<\infty$ with $0<\delta<1$, then
\begin{align*}
  & \lim_{N\to\infty}N^{\delta}\,\PE\lr{\frac{\overline{X}_N}{\lr{\overline{W}_N}^\mu}}=0.
\end{align*}
Further assuming that $\delta=1/2$ and $\PE(|X|^2)<\infty$, it holds that
\begin{align}
  \label{eq:item:possibleImprovTwo2}
  & \lim_{N\to\infty}N\,\PE\lr{\frac{\lr{\overline{X}_N}^2}{\lr{\overline{W}_{N}}^{\mu}}}=\frac{\mathbb{V}\lr{X}}{\lr{\PE(W)}^{\mu}}\;.
\end{align}
\end{enumerate}  
\end{prop}
\begin{proof} We prove Assertions
  \ref{item:lem:conslimsupconditionGen0}--\ref{item:possibleImprovTwo}
  successively.
\begin{enumerateList}
\item \textbf{Proof of \ref{item:lem:conslimsupconditionGen0}.} 
Using \eqref{eq:lem:equivlimsupconditionGen1} and the definition of the Euler Gamma function we obtain that, for all $N\in\mathbb{N}^*$, 
  \begin{align*}
  N\,\lr{\PE\lr{\lr{\overline{W}_N}^{-\mu}}-1}&=
  (\Gamma(\mu))^{-1}\,\int_0^\infty
                                                u^{\mu-1}\,N\,\lr{\lr{\phi\lr{\frac
                                                uN}}^N-\rme^{-u}}\;\rmd u\\
    &=:  (\Gamma(\mu))^{-1}\,\lr{A_N+B_N}\;,
  \end{align*}
  where $A_N$ and $B_N$ are defined by separating the integral in two
  parts and setting 
  \begin{align*}
    & A_N = \int_0^\infty u^{\mu-1}\,N\,\lr{\lr{\phi\lr{\frac uN}}^N-\rme^{-u}} \mathds{1}_{u \in \lrb{0,\frac{N}{\PE\lr{W^2}}}} \;\rmd u \\
    & B_N =  \int_0^\infty u^{\mu-1}\,N\,\lr{\lr{\phi\lr{\frac uN}}^N-\rme^{-u}}  \mathds{1}_{u \in \lr{\frac{N}{\PE\lr{W^2}},\infty}} \;\rmd u.
  \end{align*}
Using next \eqref{eq:lem:conslimsupconditionGen7prereq1biseq} and~\eqref{item:lem:conslimsupconditionGen7prereq1ter} of \Cref{lem:tech2}, we obtain that the dominated convergence theorem applies and leads to
  $$
  \lim_{N\to\infty}A_N=\frac{\mathbb{V}\lr{W}}2\,\int_0^\infty u^{\mu+1}\;\rme^{-u}\;\rmd u=\frac{\mathbb{V}\lr{W}\,\Gamma(\mu+2)}2.
  $$
  As for $B_N$, \ref{item:lem:conslimsupconditionGen7prereq0} of \Cref{lem:tech2} with $m=\mu+2$ yields that: there exist
  $c>0$ and $N_0\geq1$ such that, for all $N\geq N_0$,
  $$
  \lr{\phi\lr{\frac uN}}^N-\rme^{-u}\leq \lr{\phi\lr{\frac
      uN}}^N\leq c\; u^{-\mu-2}\;.
  $$
  Combining this with \ref{item:lem:conslimsupconditionGen7prereq1bis0} of
  \Cref{lem:tech2}, gives us that, for all $N\geq N_0$,
  $$
  0\leq B_N\leq c\,N\, \int_{\frac{N}{\PE\lr{W^2}}}^\infty u^{-3}\;\rmd
  u \leq \frac c2\;\lr{\PE\lr{W^2}}^2\;N^{-1}\;.
  $$
  Thus $B_N$ converges to zero and the given limit is proved using the
  well known identity $\Gamma(\mu+2)=(\mu+1)\,\mu\,\Gamma(\mu)$.

\item \textbf{Proof of \ref{item:lem:conslimsupconditionGen7ter}.} Let
  us first consider the case where $\PE(X)=0$, that is, we show that
  \begin{align}
  \lim_{N\to\infty}\PE\lr{\frac{\overline{X}_N}{\lr{\overline{W}_N}^\mu}}=
   \lim_{N\to\infty}\PE\lr{\frac{\overline{X}^{(i\neq j)}_N}{\lr{\overline{W}_N}^\mu}}=0 \;. \label{item:lem:conslimsupconditionGen7bis}
\end{align}
By \ref{item:lem:conslimsupconditionGen7} of
\Cref{lem:ReexpressingExpectations}, we can use the
identities~(\ref{eq:lem:equivlimsupconditionGen2-N1})
and~(\ref{eq:lem:equivlimsupconditionGen2bis-N2}) for $N$ large
enough, or their equivalent versions~(\ref{eq:lem:equivlimsupconditionGen2})
and~(\ref{eq:lem:equivlimsupconditionGen2bis}).
The desired result
\eqref{item:lem:conslimsupconditionGen7bis} will follow if we can
apply the dominated convergence theorem in the right-hand sides
of~(\ref{eq:lem:equivlimsupconditionGen2})
and~(\ref{eq:lem:equivlimsupconditionGen2bis}). Using
\ref{item:lem:conslimsupconditionGen7prereq0} of \Cref{lem:tech2} and
since $\PE(|X|) < \infty$ ($\PE(X) = 0$ by assumption), we have that:
for all $m > \mu$, there exist $c>0$ and $N_0\geq 3$ such that, for
all $N\geq N_0$, \eqref{eq:boundedFubini} and
\eqref{eq:boundedFubiniTwo} hold. Consequently we can apply the
dominated convergence theorem: using
\ref{item:lem:conslimsupconditionGen7prereq3} of \Cref{lem:tech3} with
$\phi_1(0)=\PE(X)=0$ and the fact that $\phi$ is bounded by $1$ yields
\eqref{item:lem:conslimsupconditionGen7bis}. Now considering the case
where $\PE(X)\neq0$, we get from the first limit in
\eqref{item:lem:conslimsupconditionGen7bis} that \looseness=-1
\begin{align*}
    \lim_{N\to\infty}\PE\lr{\frac{\overline{X}_N - \PE(X)}{\lr{\overline{W}_N}^\mu}}=0
\end{align*}
and the first desired result follows using \ref{item:lem:equivlimsupconditionGen6} of \Cref{lem:equivlimsupconditionGen}.
As for the second one, applying the second limit in
\eqref{item:lem:conslimsupconditionGen7bis} with $X$ and $X_i$
replaced by $X^c=X-\PE(X)$ and $X_i^c=X_i -\PE(X)$ for all $i = 1
\ldots N$ yields
$$
\lim_{N \to \infty} \PE\lr{\frac{\overline{X^c}^{(i\neq j)}_N}{\lr{\overline{W}_{N}}^{\mu}}} = 0.
$$
Under the present assumptions, we can apply
\ref{item:lem:conslimsupconditionGen7} of
\Cref{lem:ReexpressingExpectations} and, in particular
Identity~(\ref{eq:identity-doubleproduit-centered}), which, inserted in the
previous limit, gives
\begin{align*}
\lim_{N \to \infty} \lrb{ \PE\lr{\frac{\overline{X}^{(i\neq j)}_N}{\lr{\overline{W}_{N}}^{\mu}}} - 2 \PE(X) \PE \lr{\frac{\overline{X}_N}{\lr{\overline{W}_{N}}^{\mu}}} + \PE(X)^2 \PE\lr{\frac{1}{\lr{\overline{W}_{N}}^{\mu}}}} = 0
\end{align*}
The limits of $\PE
\lr{{\overline{X}_N}/{\lr{\overline{W}_{N}}^{\mu}}}$ and
  $\PE\lr{{\lr{\overline{W}_{N}}^{-\mu}}}$ as $N\to\infty$ have been
established previously to be ${\PE(X)}/{\lr{\PE(W)}^\mu}$ and
$\lr{\PE(W)}^{-\mu}$, and this leads to the second desired limit.

\item \textbf{Proof of \ref{item:lem:conslimsupconditionGen8}.} Let us define
\begin{equation}
  \label{eq:defIN}
I_N:=N\,\PE\lr{\frac{\overline{X}_N}{\lr{\overline{W}_N}^\mu}}+\PE\lr{X\,W}\lr{\frac N{N-1}}^{\mu+1}\,\frac{\Gamma(\mu+1)}{\Gamma(\mu)}\PE\lr{\frac1{\lr{\overline{W}_{N-1}}^{\mu+1}}}\;.  
\end{equation}
By \ref{item:lem:conslimsupconditionGen7ter} with $X = 1$, the second term in the right-hand side of~(\ref{eq:defIN}) converges to \looseness=-1
$$
\PE\lr{X\,W}\frac{\Gamma(\mu+1)}{\Gamma(\mu)}\frac1{\lr{\PE(W)}^{\mu+1}}=\frac{\mu\;\PE\lr{X\,W}}{\lr{\PE(W)}^{\mu+1}}\;,
$$
where we used the well known identity of the Euler Gamma
function. Now using \eqref{eq:lem:equivlimsupconditionGen1} with $N$ replaced by $N-1$ and $\mu$ replaced by $\mu+1$, we have: for all $N\geq2$,
$$
\PE\lr{\frac1{\lr{\overline{W}_{N-1}}^{\mu+1}}} = \frac1{\Gamma(\mu+1)}\,\int_0^\infty
u^{\mu}\lr{\phi\lr{\frac u{N-1}}}^{N-1}\;\rmd u
$$
and by pairing it up with \ref{item:lem:conslimsupconditionGen7} of \Cref{lem:ReexpressingExpectations} ($\PE(|X|)< \infty$ since $\PE(X) = 0$), we get that for $N$ large enough 
$$
I_N=N \,\lr{\frac N{N-1}}^{\mu}\,\int_0^\infty
\frac{u^{\mu-1}}{\Gamma(\mu)}\,\lr{\phi_1\lr{\frac u{N-1}}+\frac {u\,\PE\lr{X\,W}}{N-1}}\;\lr{\phi\lr{\frac u{N-1}}}^{N-1}\;\rmd u\;.
$$
Therefore, to conclude the proof of~\ref{item:lem:conslimsupconditionGen8}, it remains to show that $I_N$ converges to zero as $N\to\infty$, which is implied by
\begin{equation}
  \label{eq:lem:conslimsupconditionGen8remains}
  \lim_{N\to\infty}\int_0^\infty
\frac{u^{\mu-1}}{\Gamma(\mu)}\,\lrcb{N\,\lr{\phi_1\lr{\frac u{N-1}}+\frac {u\,\PE\lr{X\,W}}{N-1}}}\;\lr{\phi\lr{\frac u{N-1}}}^{N-1}\;\rmd u=0\;.
\end{equation}
Using~\ref{item:lem:conslimsupconditionGen7prereq0}  of~\Cref{lem:tech2} and the first claim in \ref{item:lem:conslimsupconditionGen7prereq4} of \Cref{lem:tech3}, we get that the absolute value of the integrand
in~(\ref{eq:lem:conslimsupconditionGen8remains}) is bounded by an
integrable function for $N$ large enough, therefore the dominated convergence theorem
applies. Furthermore, using~\ref{item:lem:conslimsupconditionGen7prereq1} of \Cref{lem:tech2} and the second claim in~\ref{item:lem:conslimsupconditionGen7prereq4}
 of~\Cref{lem:tech3}, the point-wise limit of the integrand is zero.
Thus~(\ref{eq:lem:conslimsupconditionGen8remains}) follows from the
dominated convergence convergence theorem and the proof of \ref{item:lem:conslimsupconditionGen8} is concluded. \looseness=-1

\item Proof of \ref{item:possibleImprovbis}. Using  the
  definitions introduced in~(\ref{eq:double-produit}), we have
  $$
   \PE\lr{\frac{\lr{\overline{X}_N}^2}{\lr{\overline{W}_{N}}^{\mu}}}=
   \frac1N
   \PE\lr{\frac{\lr{\overline{X^2}_N}}{\lr{\overline{W}_{N}}^{\mu}}} +
   \frac{N(N-1)}{N^2}\PE\lr{\frac{\overline{X}^{(i\neq j)}_N}{\lr{\overline{W}_{N}}^{\mu}}}\;.
   $$
   Under the assumption $\PE(|X|^2) < \infty$,~\ref{item:lem:conslimsupconditionGen7ter} applies to both
expectations in the right-hand side (with $X$ and $X_i$ replaced by their squares for the first
one) and yields the result.

\item \textbf{Proof of \ref{item:possibleImprovTwo}.} Let
  $0<\delta<1$. Again we can
  apply the identities of \ref{item:lem:conslimsupconditionGen7} in
  \Cref{lem:ReexpressingExpectations} for $N$ large enough. Here we
  pick the equivalent form~(\ref{eq:lem:equivlimsupconditionGen2bis})
  to write, for $N$ large enough, 
\begin{align*}
N^{\delta}\PE\lr{\frac{\overline{X}_N}{\lr{\overline{W}_N}^\mu}} = \lr{\frac N{N-1}}^{\mu+\delta}\int_0^\infty
\frac{u^{\mu-1}}{\Gamma(\mu)} (N-1)^{\delta}\,\phi_1\lr{\frac u{N-1}}\;\lr{\phi\lr{\frac u{N-1}}}^{N-1}\rmd u.
\end{align*}
The first claim then follows from the dominated convergence theorem
since \ref{item:lem:conslimsupconditionGen7prereq0} of
\Cref{lem:tech2} paired up with
\ref{item:lem:conslimsupconditionGen7prereq5} of \Cref{lem:tech3}
yield: there exist $c>0$ and $N_0\geq3$ such that, for all $N\geq N_0$,
\begin{align*}
& \left|{u^{\mu-1}} (N-1)^{\delta}\,\phi_1\lr{\frac u{N-1}}\;\lr{\phi\lr{\frac u{N-1}}}^{N-1}\right| \leq u^{\mu+\delta-1}\PE(|X|{W}^\delta) \lrb{ 1\wedge \lr{c\,u^{-\mu-\delta-1}}} \\
&\text{and}\quad  \lim_{N \to \infty} \left|{u^{\mu-1}} (N-1)^{\delta}\,\phi_1\lr{\frac u{N-1}}\;\lr{\phi\lr{\frac u{N-1}}}^{N-1}\right| = 0,
\end{align*}
where we have also used the fact that $\phi$ is bounded by $1$.

Moving on to the second claim where  $\delta=1/2$ and $\PE(|X|^2)<\infty$, we observe that
$$
N\,\PE\lr{\frac{\lr{\overline{X}_N}^2}{\lr{\overline{W}_{N}}^{\mu}}}=\PE\lr{\frac{\overline{X^2}_N}{\lr{\overline{W}_{N}}^{\mu}}}+
  (N-1)\,\PE\lr{\frac{\overline{X}^{(i\neq j)}_N}{\lr{\overline{W}_{N}}^{\mu}}}\;,
$$
where $\overline{X^2}_N$ and $\overline{X}^{(i\neq j)}_N$ are defined in~(\ref{eq:double-produit}).
By applying~\ref{item:lem:conslimsupconditionGen7ter} with $X$ and
$X_i$ replaced by their squares the first expectation in the
right-hand side converges to the right-hand side
of~(\ref{eq:item:possibleImprovTwo2}). Thus the proof of~(\ref{eq:item:possibleImprovTwo2}) will be complete if we can show that
\begin{align*}
  \lim_{N \to\infty}N\,\PE\lr{\frac{\overline{X}^{(i\neq j)}_N}{\lr{\overline{W}_N}^\mu}}=0\;. 
\end{align*}
Under our assumptions, we can apply
\ref{item:lem:conslimsupconditionGen7} of
\Cref{lem:ReexpressingExpectations}, notably giving that, the previous
display is equivalent to
\begin{align*}
& \lim_{N \to\infty} \lr{\frac N{N-2}}^{\mu+1}\int_0^\infty
\frac{u^{\mu-1}}{\Gamma(\mu)}\lr{(N-2)^{1/2} \phi_1\lr{\frac u{N-2}}}^2\lr{\phi\lr{\frac u{N-2}}}^{N-2}\rmd u=0\;.
\end{align*}
Again  \ref{item:lem:conslimsupconditionGen7prereq0} of
\Cref{lem:tech2} paired up with
\ref{item:lem:conslimsupconditionGen7prereq5} of \Cref{lem:tech3}
(with $\delta=1/2$)
allow us to apply the
dominated convergence theorem, which yields the claimed limit.
\end{enumerateList}

\end{proof}
\subsection{Proof of \Cref{thm:ratios-limits-optimal-cond}}

By \ref{item:lem:equivlimsupconditionGen1} of \Cref{lem:equivlimsupconditionGen}, the condition (\ref{eq:simple-cond-ratios}) is equivalent to the condition~(\ref{eq:simple-cond-ratio-equiv}). Consequntly, we will use the facts established in \Cref{prop:conslimsupconditionGen} (which relies
on~(\ref{eq:simple-cond-ratio-equiv})) to prove
\Cref{thm:ratios-limits-optimal-cond} (which assumes (\ref{eq:simple-cond-ratios})).

\begin{enumerateList}
\item \textbf{Proof of~\ref{item:ratios-limits-optimal-cond1}.} This is exactly~\ref{item:lem:conslimsupconditionGen7ter} of
\Cref{prop:conslimsupconditionGen}.
\item \textbf{Proof of~\ref{item:ratios-limits-optimal-cond2bis}.} We first start with the general case $\mu > 0$. Multiplying both sides of~(\ref{eq:ratios-limits-optimal-cond2bis}) by
  $\lr{\PE(W)}^\mu$ and replacing $W/\PE(W)$, $W_i/\PE(W)$ by $W$ and
  $W_i$ respectively, we see that we can assume
  $\PE(W)=1$ without loss of generality, in which
  case~(\ref{eq:ratios-limits-optimal-cond2bis}) reads
  \begin{align}
    \label{eq:ratios-limits-optimal-cond2ter}
\lim_{N\to\infty}N\,\lr{\PE\lr{\frac{\overline{X}_{N}}{\lr{\overline{W}_{N}}^\mu}}
        -\PE(X)}=\frac{(\mu+1)\mu\,\PE(X)}2\,\mathbb{V}\lr{W}-\mu\;\mathbb{C}\mathrm{ov}\lr{X,W}\;.
\end{align}
Under our assumptions and since $\PE(W)=1$, we can apply \ref{item:lem:conslimsupconditionGen0} and \ref{item:lem:conslimsupconditionGen8} of \Cref{prop:conslimsupconditionGen}, so that \looseness=-1
\begin{align}
    \label{eq:ratios-limits-optimal-cond2ter2}
&\lim_{N\to\infty}N\,\lr{\frac{\PE(X)}{\lr{\overline{W}_{N}}^\mu}-\PE(X)}=\frac{(\mu+1)\mu\,\PE(X)}2\,\mathbb{V}\lr{W}\;, \\
    \label{eq:ratios-limits-optimal-cond2ter1}
&\lim_{N\to\infty}N\,\PE\lr{\frac{\overline{X^c}_{N}}{\lr{\overline{W}_{N}}^\mu}}=-\mu\;\mathbb{C}\mathrm{ov}\lr{X,W}\;,
\end{align}
where $\overline{X^c}_N$ is defined in~(\ref{eq:centered-empirical}). Observing that
$$
\frac{\overline{X^c}_{N}}{\lr{\overline{W}_{N}}^\mu}=\frac{\overline{X}_{N}}{\lr{\overline{W}_{N}}^\mu}-\frac{\PE(X)}{\lr{\overline{W}_{N}}^\mu}\;,
$$
we see that~(\ref{eq:ratios-limits-optimal-cond2ter}) is obtained by
summing (\ref{eq:ratios-limits-optimal-cond2ter2}) and (\ref{eq:ratios-limits-optimal-cond2ter1}).

Moving on to the case $\mu = 1$, we can assume once again that $\PE(W)=1$ without loss of generality, in which
  case~(\ref{eq:ratios-limits-optimal-cond2bis:alter}) becomes
  \begin{equation}
    \label{eq:ratios-limits-optimal-cond2}
\lim_{N\to\infty}N\,\lr{\PE\lr{\frac{\overline{X}_{N}}{\overline{W}_{N}}}
        -\PE(X)}=\PE\lr{W\lrb{\PE(X)\,W-X}}\;.
  \end{equation}
Now replacing $X$ and $X_i$ by $\lr{X-\PE(X)W}$ and
$\lr{X_i-\PE(X)W_i}$, respectively, we can further assume that
$\PE(X)=0$ and~(\ref{eq:ratios-limits-optimal-cond2}) now reads
\begin{align*}
  \lim_{N\to\infty} N\, \PE\lr{\frac{\overline{X}_{N}}{\overline{W}_{N}}}
    = -\PE\lr{X\,W}\;,
\end{align*}
under the assumptions $\PE(W)=1$, $\PE(X)=0$ and $\PE(|X|\,W)<\infty$, which follows directly from an application of \ref{item:lem:conslimsupconditionGen8} in \Cref{prop:conslimsupconditionGen} with $\mu=1$.

\item \textbf{Proof of \ref{item:ratios-limits-optimal-cond3}.} 
By Minkowski's inequality, it holds that
   \begin{align*}
     \lr{\mathbb{V}\lr{\frac{\overline{X}_N}{(\overline{W}_{N})^{\mu}}}}^{\frac{1}{2}}& 
     \leq {\lr{\mathbb{V}\lr{\frac{\overline{X}_N-\PE(X)}{(\overline{W}_{N})^{\mu}}}}^{\frac12}+\lrav{\PE(X)}\,\lr{\mathbb{V}\lr{\frac{1}{(\overline{W}_{N})^{\mu}}}}^{\frac12}}\\
&    \leq {\lr{\PE\lr{\frac{\lr{\overline{X^c}_N}^2}{(\overline{W}_{N})^{2\mu}}}}^{\frac12}+\lrav{\PE(X)}\,\lr{\mathbb{V}\lr{\frac{1}{(\overline{W}_{N})^{\mu}}}}^{\frac12}}\;.
   \end{align*}
The variance in the second term of the sum in the last right-hand side converges to $0$ by applying \ref{item:lem:equivlimsupconditionGen6} of
 \Cref{lem:equivlimsupconditionGen} twice. As for the first term, it
 converges to zero by applying~\ref{item:possibleImprovbis} of
 \Cref{prop:conslimsupconditionGen}  (with $\mu$, $X_i$ and $X$
 replaced by  $2\mu$, $X^c$ and $X_i^c$), which concludes the proof of  \ref{item:ratios-limits-optimal-cond3}. 

\item \textbf{Proof of \ref{item:ratios-limits-optimal-cond4}.} We start with the general case $\mu > 0$. First observe that we can assume that $\PE(W)=1$ again without loss of generality. For all $N\in\mathbb{N}^*$,
\begin{equation}
  \label{eq:variance-decomp-3terms}
\mathbb{V}\lr{\frac{\overline{X}_N}{(\overline{W}_{N})^{\mu}}}=
V_N^{(1)}+\lr{\PE(X)}^2\,V_N^{(2)}+2\,\PE(X)\,C_N\;,  
\end{equation}
where
\begin{align*}
V_N^{(1)}:=
  \mathbb{V}\lr{\frac{\overline{X^c}_N}{(\overline{W}_{N})^{\mu}}}\;,\quad
  V_N^{(2)}:=\mathbb{V}\lr{\lr{\overline{W}_{N}}^{-\mu}}
  \quad\text{and}\quad C_N:=\mathbb{C}\mathrm{ov}\lr{\frac{\overline{X^c}_N}{(\overline{W}_{N})^{\mu}},\lr{\overline{W}_{N}}^{-\mu}}\;.  
\end{align*}
We treat each term separately.
\begin{itemize}
  \item First variance term: $V_N^{(1)}$. Note that $\PE(W^2)<\infty$ and
    $\PE(|X|^2)<\infty$ imply that $\PE(|X|\,W^{1/2})$ and
    $\PE(|X|\,W)$ are finite. We can thus
    apply~(\ref{eq:item:possibleImprovTwo2})
    and~\ref{item:lem:conslimsupconditionGen8} 
    of \Cref{prop:conslimsupconditionGen}  (with $\mu$, $X_i$  and $X$
 replaced by $2\mu$, $X_i^c$ and $X^c$), which gives us the asymptotic
behavior of the two expectations in the expression
$$
V_N^{(1)}=\PE\lr{\frac{\lr{\overline{X^c}_N}^2}{(\overline{W}_{N})^{2\mu}}}-\lr{\PE\lr{\frac{\overline{X^c}_N}{(\overline{W}_{N})^{\mu}}}}^2\;.
$$
We obtain
\begin{align} \label{eq:V2N-prereq0}
  \lim_{N\to\infty}N\,V_N^{(1)} &=\mathbb{V}\lr{X}\;.
\end{align}

\item Second variance term: $V_N^{(2)}$. For all $N\in\mathbb{N}^*$, 
  $$
  \mathbb{V}\lr{\lr{\overline{W}_{N}}^{-\mu}}=\lrcb{\PE\lr{\lr{\overline{W}_{N}}^{-2\mu}}-1}-\lrcb{\lr{1+\lrb{\PE\lr{\lr{\overline{W}_{N}}^{-\mu}}-1}}^2-1}\;,
$$
Applying \ref{item:lem:conslimsupconditionGen0} of
\Cref{prop:conslimsupconditionGen} to get the asymptotic behavior of
the expectations in the right-hand side, we get that
\begin{align}\nonumber
  \lim_{N\to\infty}N\,\mathbb{V}\lr{\lr{\overline{W}_{N}}^{-\mu}}&=(2\mu+1)\,\mu\,\mathbb{V}\lr{W}-(\mu+1)\,\mu\,\mathbb{V}\lr{W}\\
\label{eq:V2N-prereq}  &=\mu^2\,\mathbb{V}\lr{W}\;.
\end{align}

\item Covariance term: $C_N$.
$$
C_N=\PE\lr{\frac{\overline{X^c}_N}{(\overline{W}_{N})^{2\mu}}}-\PE\lr{\frac{\overline{X^c}_N}{(\overline{W}_{N})^{\mu}}}\,\PE\lr{\lr{\overline{W}_{N}}^{-\mu}}\;.
$$
Applying \ref{item:lem:conslimsupconditionGen8}
of \Cref{prop:conslimsupconditionGen} twice, once
with $\mu$ replaced by $2\mu$ and once again without changing $\mu$, along with
 \ref{item:lem:equivlimsupconditionGen6} of \Cref{lem:equivlimsupconditionGen}, we get that 
\begin{align*}
\lim_{N\to\infty}N\,C_N&=
                         -2\mu\;\mathbb{C}\mathrm{ov}\lr{X,W}+\mu\;\mathbb{C}\mathrm{ov}\lr{X,W}=
    -\mu\;\mathbb{C}\mathrm{ov}\lr{X,W}
    \;.
\end{align*}
\end{itemize}
Combining (\ref{eq:variance-decomp-3terms}) with \eqref{eq:V2N-prereq0}, \eqref{eq:V2N-prereq} and the above limit, we obtain that
\begin{align*}
  \lim_{N\to\infty}N\,\mathbb{V}\lr{\frac{\overline{X}_N}{(\overline{W}_{N})^{\mu}}}=
                                                                                       \mathbb{V}\lr{X}+\lr{\PE(X)}^2\,\mu^2\,\mathbb{V}\lr{W}-2\,\PE(X)\,\mu\;\mathbb{C}\mathrm{ov}\lr{X,W}  \;.
\end{align*}
This is exactly the desired result \ref{item:ratios-limits-optimal-cond4} in the case $\PE(W)=1$, and the proof is completed for a general $\mu > 0$.

Moving on to the case $\mu = 1$, we can assume once more that $\PE(W)= 1$ without loss of generality, in which case~(\ref{eq:limVarFirstOrderMuOne}) reads
\begin{align} \label{eq:NrateInterVar}
    \lim_{N\to\infty}N\,
    \mathbb{V}\lr{\frac{\overline{X}_N}{{\overline{W}_{N}}}} =
    \mathbb{V}\lr{X -\PE(X) W}\;.
\end{align}
Replacing $X$ and $X_i$ by $(X-\PE(X)W)$ and $(X_i - \PE(X) W_i)$,
respectively, we can further assume that $\PE(X) = 0$,
$\PE(X^2)<\infty$ (transposing the original
$\PE(\lrav{\PE(W)X-\PE(X)W}^2)<\infty$ to the new definition of $X$
when $\PE(W)=1$) and \eqref{eq:NrateInterVar} reads 
\begin{align}
\label{eq:item:ratios-limits-optimal-cond4-last}      \lim_{N\to\infty}N\,
    \mathbb{V}\lr{\frac{\overline{X}_N}{{\overline{W}_{N}}}} =
    \mathbb{V}\lr{X}\;.
\end{align}
Now using that  $\PE(|X|^2)<\infty$ and $\PE(W)=1$, which implies
$\PE(|X|\sqrt{W})<\infty$, we can apply the two assertions in 
\ref{item:possibleImprovTwo} of \Cref{prop:conslimsupconditionGen}
with $\mu=1,2$ and $\delta=1/2$ so that
\begin{align*}
  \lim_{N\to\infty} N \lr{\,\PE\lr{\frac{\overline{X}_N}{\lr{\overline{W}_N}}}}^2= 0 \quad\text{and}\quad \lim_{N\to\infty}N\,\PE\lr{\lr{\frac{\overline{X}_N}{\overline{W}_{N}}}^2}={\mathbb{V}\lr{X}}.
\end{align*}
This yields~(\ref{eq:item:ratios-limits-optimal-cond4-last}) and the
proof is completed.
\end{enumerateList}

\subsection{Related works for \Cref{thm:ratios-limits-optimal-cond}}
\label{app:relWThm1}

\begin{enumerateList}
  \item \cite{deligiannidis25} concurrently established a result that is comparable to \eqref{eq:ratios-limits-optimal-cond2bis:alter}: their analysis in \cite[Theorem~2.1]{deligiannidis25} focused on the self-normalized importance sampling estimator, which corresponds to taking $(X,W) = (f(Z) \omega(Z), \omega(Z))$ with $Z \sim q$, where $\omega = \pi/q$ is the importance weight, $\pi$ is the target distribution, $q$ the proposal distribution and $f$ is a function of interest. Letting $Z_1, \ldots, Z_N$ be i.i.d. random variables with the same distribution as $Z$, \eqref{eq:ratios-limits-optimal-cond2bis:alter} then becomes 
\begin{align*}
  \lim_{N \to \infty} N \lr{\PE \lr{\frac{\sum_{i=1}^N w(Z_i) f(Z_i)}{\sum_{i=1}^N  w(Z_i)}} - \PE(w(Z)f(Z)) } = \PE\lr{w(Z)^2\lrb{f(Z) - \PE(w(Z)f(Z))}},
\end{align*}
which is exactly the result obtained in \cite{deligiannidis25}. However, their derivation relies on a distinct proof strategy that necessitates a stronger positive moment condition on $| \mathbb{E}(W)X - \mathbb{E}(X)W | W$.

\item \cite{rainforth2018tighter} and \cite{daudel2022Alpha} directed their analyses exclusively towards the REP gradient estimator of the IWAE and VR-IWAE bound respectively. They showed that  \eqref{eq:ratios-limits-optimal-cond2bis:alter} and \eqref{eq:limVarFirstOrderMuOne} hold for specific choices of $(X,W)$ under much stringent conditions than that of \Cref{thm:ratios-limits-optimal-cond}. More precisely, letting $\varepsilon \sim \tq$, they take 
\begin{align*}
  & X = \w(f(\varepsilon,\phi;x);x)^{1-\alpha} \partial_\psi \log \w(f(\varepsilon,\phi;x);x) \\
  & W = \w(f(\varepsilon,\phi;x);x)^{1-\alpha},
\end{align*}
with \cite{rainforth2018tighter} considering the case $\alpha = 0$ and assuming that $X$ and $W$ have moments of order $6$, while \cite{daudel2022Alpha} considers the case $\alpha \in (0,1)$ and assumes moments of order $8$ for $X$ and $W$. The derivations in \cite{rainforth2018tighter} and \cite{daudel2022Alpha} rely on distinct proof strategies that necessitates stronger conditions compared to those used in \Cref{thm:ratios-limits-optimal-cond}.
\end{enumerateList}

\section{Deferred proofs and results for \Cref{sec:behavior-gradient-vr-Nasymp}}
\label{sec:deferr-proofs-result-2}
\subsection{Proofs of
  \Cref{thm:GradNStudyAllalpha,thm:SNR-REP,lem:SNR}}

\subsubsection{Notation}
\label{sec:preliminaries}

Let us first introduce some notation and resulting identities that will be used for the proofs of \Cref{thm:GradNStudyAllalpha,thm:SNR-REP,lem:SNR}. Letting $\varepsilon \sim \REPq$ and with $\tw[\phi][\varepsilon][\phi]$ as in~(\ref{eq:tw}), we define
$$
X = (1-\alpha)^{-1} \partial_{\psi} \lr{\tw[\phi][\varepsilon][\phi]^{1-\alpha}}\quad\text{and}\quad
W = \tw[\phi][\varepsilon][\phi]^{1-\alpha}\;,
$$
Furthermore, let
$\varepsilon_1, \ldots, \varepsilon_N$ be i.i.d. copies of
$\varepsilon$, and define
$X_i$ and
$W_i$ for all $i =1,2,\ldots$ as $X$ and $W$ but with $\varepsilon_i$
replacing $\varepsilon$. Further
denote $\overline{X}_N = \frac{1}{N} \sum_{i=1}^N X_i$ and
$\overline{W}_N = \frac{1}{N} \sum_{i=1}^N W_i$ for all $N\in\mathbb{N}^*$.
Then, using the above
definitions  in~\eqref{estimREP}, we have
  \begin{align}
    \label{eq:expREP_Y}
&  \PE\lr{ \gradREP[1,N]} = \PE\lr{\frac{\overline{X}_N}{\overline{W}_N}}\;,\\
    \label{eq:estimREP_Y}
&        \mathbb{V}(\gradREP[1,N]) = \mathbb{V}\lr{\frac{\overline{X}_N}{\overline{W}_N}}\;,
  \end{align}
  \begin{rem}\label{rem:cond-notationREP}
    We note that with these definitions,~\ref{hyp:inverseGrad} means
    that~(\ref{eq:simple-cond-ratios}) holds with
    $\eta=\delta/(1-\alpha)$, $\mathbb{V}\lr{\tw[\phi][\varepsilon][\phi]^{1-\alpha}} < \infty$ that $\PE\lr{W^2}<\infty$, $\mathbb{V}\lr{{\partial_{\psi} \lr{\tw[\phi][\varepsilon][\phi]^{1-\alpha}}}} < \infty$ that $\PE\lr{\lrav{X}^{2}}<\infty$ and \ref{hyp:hypZeroREP} implies $\PE\lr{\lrav{X}\,W^{\ell-1}}<\infty$ for $\ell=0,1,2$.    
  \end{rem}
  \begin{rem}\label{rem:alpha1REP}
    The above quantities are defined for $0\leq\alpha<1$. However, the
    definition of $W$ boils down for $\alpha=1$ to $W=1$, hence
    $\overline{W}_N=1$ for all $N\geq1$. As for $X$, since it holds
    for all $\alpha\in[0,1)$ that
    $X=W\partial_\psi\lr{\log\tw[\phi][\varepsilon][\phi]}$, it
    becomes $X=\partial_\psi\lr{\log\tw[\phi][\varepsilon][\phi]}$ for
    $\alpha=1$. In this case, explicit formulas hold
    of~(\ref{eq:expREP_Y}) and~(\ref{eq:estimREP_Y}), namely,
    \begin{align*}
    &\PE\lr{ \gradREP[1,N]}=\PE\lr{\partial_\psi\lr{\log\tw[\phi][\varepsilon][\phi]}}=\partial_\psi\mathrm{ELBO} (\theta, \phi; x)\;,\\
     & \mathbb{V}(\gradREP[1,N]) =\frac1N\mathbb{V}\lr{\partial_\psi\lr{\log\tw[\phi][\varepsilon][\phi]}}\;.
    \end{align*}
    where the second equality in the first displayed line follows from
    \ref{hyp:hypZeroREP}, which for $\alpha=1$ takes the form
    $$
    \PE_{\varepsilon \sim \REPq} \lr{\sup_{(\theta',\phi')\in\mathcal{V}}
  \lrav{\partial_{\psi'} \lr{\log \tw[\phi'][\varepsilon][\phi'][\theta']}}}<\infty\;.
$$
Thus the results of \Cref{thm:GradNStudyAllalpha,thm:SNR-REP} hold for $\alpha=1$ in a non-asymptotic way,
namely, with all $o$-terms replaced by zeros,
$\mathrm{VR}^{(1)}(\theta, \phi; x)=\mathrm{ELBO} (\theta, \phi; x)$,
$\vREP[1](\theta,
\phi;x)=\mathbb{V}\lr{\partial_\psi\lr{\log\tw[\phi][\varepsilon][\phi]}}$
and $\partial_{\psi} [\gammaA[1](\theta,\phi; x)^2]=0$.
  \end{rem}
  \subsubsection{Proof of \Cref{thm:GradNStudyAllalpha}}
  \label{app:proofGradNAllalpha}

Under \ref{hyp:hypZeroREP}, it holds that 
\begin{align}
  &  \partial_{\psi} \liren(\theta, \phi; x) \label{eq:expectationREP} =\PE(\gradREP) \\
  \label{eq:GradNStudyAllalpha-ident2}
  &  \partial_{\psi} \mathrm{VR}^{(\alpha)}(\theta, \phi; x) = \frac{\PE(X)}{\PE(W)} \\
  \label{eq:GradNStudyAllalpha-ident3}
  &\partial_{\psi} [\gammaA(\theta,\phi;x)^2]= 2\,\frac{\PE\lr{W\lrb{\PE(W)\,X-\PE(X)\,W}}}{\lr{\PE(W)}^3},
\end{align}
which follows from  \Cref{prop:interch-deriv-expect-rep}, the definitions of \Cref{sec:preliminaries},
(\ref{eq:lalpha-def-E}), (\ref{eq:defVRbound-E}), \eqref{eq:defGammaAlphaFirst} and~(\ref{eq:interch-deriv-expect-rep1})--(\ref{eq:interchange-liren}), as they imply \looseness=-1
\begin{align}
\nonumber
&   \liren (\theta,\phi; x)= \frac1{1-\alpha}\,\PE\lr{\log \overline{W}_N }
                            \;,\\
\nonumber
& \mathrm{VR}^{(\alpha)}(\theta, \phi; x) =    \frac1{1-\alpha}\,
                                    \log\PE\lr{W} \;,\\
\nonumber
& \gammaA(\theta, \phi;x)^2 =
                 \frac1{1-\alpha}\,\lr{\frac{\PE(W^2)}{\PE(W)^2}-1}\;,
  \nonumber \\
  &  \partial_{\psi}\PE(W)=(1-\alpha)\,\PE(X) \; ,\\
  \nonumber
  &  \partial_{\psi}\PE\lr{W^2}=2\,(1-\alpha)\,\PE\lr{X\,W} \; ,\\
  \nonumber
  &  \partial_{\psi}\PE\lr{\log \overline{W}_n} =(1-\alpha)\,\PE\lr{\frac{\overline{X}_n}{\overline{W}_n}}\;. 
\end{align}
We can conclude by using \Cref{rem:cond-notationREP} and the interchanges of the derivative and expectation signs obtained above under \ref{hyp:hypZeroREP} to rewrite the asymptotic results \ref{item:ratios-limits-optimal-cond1} and the second part of \ref{item:ratios-limits-optimal-cond2bis} from \Cref{thm:ratios-limits-optimal-cond} into the convenient formulations~\eqref{eq:asymptoticalGradientThetaOneFormer} and \eqref{eq:asymptoticalGradientThetaOne} respectively.

\subsubsection{Proof of \Cref{thm:SNR-REP}}\label{app:thm:SNR-REP}

Note that the conditions of \Cref{thm:SNR-REP} are stronger
than those required in \Cref{thm:GradNStudyAllalpha}. We thus already
have that \eqref{eq:asymptoticalGradientThetaOne} holds as $N\to\infty$,
which gives that $|\PE(\gradREP)|$ behaves as the numerator
of the r.h.s of~\eqref{eq:SNR_REP_N_Two}. To conclude the proof, it remains to study the asymptotic behavior of $\mathbb{V}(\gradREP)$ as
$N\to\infty$. As in the proof of \Cref{thm:GradNStudyAllalpha}, we
first note that,  following
\Cref{rem:cond-notationREP}, the assumptions of \Cref{thm:SNR-REP}
imply
\begin{enumerate}[label=(W\arabic*),series=W]
\item\label{item:wrapping-ass-snr-rep-step1} There exists $\eta>0$ such that (\ref{eq:simple-cond-ratios})
  holds and   $\PE(|X|)$, $\PE(W)$ and
  $\PE\lr{\lrav{\PE(W)X-\PE(X)W}^2}$ are all finite.
\end{enumerate}
Then the claimed asymptotic behavior of the variance is obtained in
two steps: (i) we show a first asymptotic behavior for
$\mathbb{V}(\gradREP[1,N])$ as $N\to\infty$ which holds
under~\ref{item:wrapping-ass-snr-rep-step1} and (ii) we show how the
conclusions of \Cref{prop:interch-deriv-expect-rep} allow us to
express this asymptotic result under the convenient formulation
\eqref{eq:SNR_REP_N_TwovREP-def}.

\noindent\textbf{First step.}
Under~\ref{item:wrapping-ass-snr-rep-step1}, we can apply the second
part of Assertion~\ref{item:ratios-limits-optimal-cond4} of
\Cref{thm:ratios-limits-optimal-cond}. Using~(\ref{eq:limVarFirstOrderMuOne})
and (\ref{eq:estimREP_Y}) we have that, as $N\to\infty$,
   \begin{align*}
     \mathbb{V}(\gradREP[1,N]) & = \frac{1}{N}\frac{\mathbb{V}\lr{\PE(W) X - \PE(X) W}}{\lr{\PE(W)}^{4}} +o\lr{\frac{1}{N}} \\
     & = \frac{1}{N} \mathbb{V}\lr{{\frac{X}{\PE(W)} - W \frac{\PE(X)}{\PE(W)^2}}} +o\lr{\frac{1}{N}}. 
   \end{align*}

\noindent\textbf{Second step.} We prove that, under \ref{hyp:hypZeroREP}, the constant defined
in~(\ref{eq:SNR_REP_N_TwovREP-def}) reads
\begin{align}
\label{eq:newdef:vREP}
\vREP(\theta, \phi;x) =
                 \mathbb{V}\lr{{\frac{X}{\PE(W)} - W
                 \frac{\PE(X)}{\PE(W)^2}}} \;.
\end{align}
By using the definitions of $X$ and $W$ in
\Cref{sec:preliminaries} and~(\ref{eq:interch-deriv-expect-rep1}) of \Cref{prop:interch-deriv-expect-rep}, we have 
\begin{align*}
\partial_{\psi} \lr{  \frac{W}{\PE(W)}} = \frac{\partial_{\psi} W}{\PE(W)} - W \frac{\partial_{\psi} \PE(W)}{(\PE(W))^2} = (1-\alpha) \lr{{\frac{X}{\PE(W)} - W
\frac{\PE(X)}{\PE(W)^2}}} 
\end{align*}
meaning that the two variables in the variances of the right-hand sides of~(\ref{eq:SNR_REP_N_TwovREP-def})
and~(\ref{eq:newdef:vREP}) coincide up to the $(1-\alpha)$ multiplicative term, which is squared when put in front of the variance. The
identity~(\ref{eq:newdef:vREP}) follows.  

\begin{rem} \label{rem:VarREPequal}
As a byproduct of the proof of \Cref{app:thm:SNR-REP}, we have established \eqref{eq:newdef:vREP}, that is using the definition of $X$ and $W$ in \Cref{sec:preliminaries}, 
\begin{align*}
& \vREP(\theta, \phi;x) \\ 
& = \frac{1}{(1-\alpha)^2}
                   \mathbb{V}\lr{{\frac{\partial_{\psi}(\tw[\phi][\varepsilon][\phi]^{1-\alpha})}{\PE(\tw[\phi][\varepsilon][\phi]^{1-\alpha})} - 
                   \frac{\tw[\phi][\varepsilon][\phi]^{1-\alpha} \PE(\partial_{\psi} (\tw[\phi][\varepsilon][\phi]^{1-\alpha}))}{\PE(\tw[\phi][\varepsilon][\phi]^{1-\alpha})^2}}} \\
& = \mathbb{V}\lr{{\frac{\tw[\phi][\varepsilon][\phi]^{1-\alpha}}{\PE(\tw[\phi][\varepsilon][\phi]^{1-\alpha})} \lr{\partial_{\psi}(\log \tw[\phi][\varepsilon][\phi]) -
                   \PE \lr{\frac{\tw[\phi][\varepsilon][\phi]^{1-\alpha}}{\PE(\tw[\phi][\varepsilon][\phi]^{1-\alpha})} \partial_{\psi} (\log \tw[\phi][\varepsilon][\phi])}}}}.
\end{align*}
Now considering the case $p_\theta(\cdot|x) = q_{\phi}(\cdot|x)$ in the above and using 
    \begin{align} \label{eq:Roeder}
    \partial_{\psi} \log \w(f(\varepsilon, \phi; x);x) = \lrder{\partial_{\psi'} \log \w(f(\varepsilon, \phi'; x); x) - \partial_{\psi'} \log q_{\phi'}(f(\varepsilon, \phi;x)|x)}{\phi' = \phi},
    \end{align} it follows that 
\begin{align} \label{rem:VarREPequalapp}
  \vREP(\theta, \phi;x) = \mathbb{V}\lr{  \lrder{\partial_{\psi'} \log q_{\phi'}(f(\varepsilon, \phi;x)|x)}{\phi' = \phi}} \quad \mbox{when $p_\theta(\cdot|x) = q_{\phi}(\cdot|x)$}.
  \end{align}
\end{rem}

\subsubsection{Proof of \Cref{lem:SNR}}\label{app:lem:SNR}

We first introduce some additional notation for the DREP
estimators. Namely, we set
\begin{align*}
&X^{(1)} =  \frac1{1-\alpha}\lrder{\partial_{\psi'} \lr{\tw^{1-\alpha}}}{\phi' =
                 \phi} = \tw[\phi][\varepsilon][\phi]^{1-\alpha} \lrder{\partial_{\psi} \log \tw}{\phi' =
                 \phi} \\
  &X^{(2)}=\frac1{2(1-\alpha)}\lrder{\partial_{\psi'} \lr{\tw^{2(1-\alpha)}}}{\phi' =
                 \phi}   = \tw[\phi][\varepsilon][\phi]^{1-\alpha} X^{(1)}=W\,X^{(1)}\;.
\end{align*}
Then, $\mathbb{V}\lr{\tw[\phi][\varepsilon][\phi]^{1-\alpha}} < \infty$ means
that $\PE\lr{\lrav{W}^2} < \infty$, $\mathbb{V}\lr{{\lrder{\partial_{\psi'} \lr{\tw^{1-\alpha}}}{\phi' = \phi}}}<\infty $ that
$\PE\lr{|X^{(1)}|^{2}}<\infty$ and $\mathbb{V}\lr{{[\partial_{\psi'} (\tw^{2(1-\alpha)})]|_{\phi' = \phi}}} < \infty$ that
$\PE\lr{|X^{(2)}|^{2}} <\infty$.
With \Cref{rem:cond-notationREP}, this leads to the following two lists of conditions valid
respectively in Part~\ref{item:drep-snr-alpha1} 
and~\ref{item:drep-snr-alpha0} of \Cref{lem:SNR}:
\begin{enumerate}[resume*=W]
\item\label{item:wrapping-ass-snr-drep-step1} There exists $\eta>0$ such that (\ref{eq:simple-cond-ratios})
  holds and $\PE\lr{\lrav{X^{(1)}}}$, $\PE\lr{|X^{(2)}|^2}$,
  $\PE(W)$ and $\PE\lr{|\PE(W)X^{(1)}-\PE\lr{X^{(1)}}W|^2}$ are all finite.
\item \label{item:wrapping-ass-snr-drep-alpha0-step1}  There exists $\eta>0$ such that (\ref{eq:simple-cond-ratios})
  holds and $\PE\lr{\lrav{X^{(2)}}^2}$
and $\PE(W^2)$ are finite.
\end{enumerate}
As in \Cref{sec:preliminaries}  for $X_i$ and $W_i$, the random
variables $X_i^{(1)}$ and
$X^{(2)}_i$ are defined for all $i\geq1$ as $X^{(1)}$ and $X^{(2)}$ respectively but with 
$\varepsilon_i$ replacing $\varepsilon$, and we set
$\overline{X}_N^{(\ell)} = \frac{1}{N} \sum_{i=1}^N X^{(\ell)}_i$ for
$\ell=1,2$. Then, using the above
definitions  in~(\ref{estimDREP}), we have
\begin{align}
  \label{eq:Exp-gradDREP-XY}
&  \PE\lr{\gradDREP[1,N]}= \PE\lr{\alpha\,\frac{\overline{X}_N^{(1)}}{\overline{W}_N}+\frac{1-\alpha}{N}\,\frac{\overline{X}_N^{(2)}}{(\overline{W}_N)^2}}\\
  \label{eq:gradDREP-XY}
&  \mathbb{V}(\gradDREP[1,N])= \mathbb{V}\lr{\alpha\,\frac{\overline{X}_N^{(1)}}{\overline{W}_N}+\frac{1-\alpha}{N}\,\frac{\overline{X}_N^{(2)}}{(\overline{W}_N)^2}}
\end{align}
and we further define
\begin{align}
  \label{eq:def-vrdrep}
  \VRDREP= &\alpha \,\frac{\PE\lr{X^{(1)}}}{\PE(W)} \;, \\
    \label{eq:def-gammadrep}
  \gammaDREP[0]=&
 \frac{\PE\lr{X^{(2)}}}{\lr{\PE(W)}^2}
\end{align}
as well as
\begin{align}
    \label{eq:newdef:vDREP1}
  \vDREPun(\theta, \phi;x) =  &                  \alpha^2\,
                                                \mathbb{V}\lr{{\frac{X^{(1)}}{\PE(W)} - W
                                                    \frac{\PE(X^{(1)})}{\PE(W)^2}}} \\
    \label{eq:newdef:vDREP2}
  \text{and}\quad
    \vDREPdeux(\theta, \phi;x) =  &(1-\alpha)^2\,
                                                \mathbb{V}\lr{{\frac{X^{(2)}}{\PE(W)^2}
                                                    - 2 W
                                                    \frac{\PE(X^{(2)})}{\PE(W)^3}}} \;.
\end{align}
As for the proofs of \Cref{thm:GradNStudyAllalpha,thm:SNR-REP}, the
proof of \Cref{lem:SNR} is made of two steps. In the first step, we
establish that under~\ref{item:wrapping-ass-snr-drep-step1} with
$\alpha\in(0,1)$: as $N\to\infty$,
\begin{equation}
  \label{eq:snrDREP-straightconstants-alpha1}
  \mathrm{SNR}[\gradDREP[1,N]] = \sqrt{N}~\frac{\lrav{\VRDREP} +o(1)}{\sqrt{\vDREPun(\theta, \phi;x)}
+o(1)} \;,
\end{equation}
while under~\ref{item:wrapping-ass-snr-drep-alpha0-step1} with $\alpha=0$: as $N\to\infty$,                                  
\begin{equation}
  \label{eq:snrDREP-straightconstants-alpha0}
  \mathrm{SNR}\lrb{\gradDREP[1,N][\psi][0]} = \sqrt{N}~\frac{\lrav{\gammaDREP[0]}+o\lr{1}}{\sqrt{\vDREPdeux[0](\theta, \phi;x)}+o(1)}\;.
\end{equation}
In the second step we use the conclusions of
\Cref{prop:interch-deriv-expect-drep} to rewrite 
\eqref{eq:snrDREP-straightconstants-alpha1} and \eqref{eq:snrDREP-straightconstants-alpha0} under the convenient formulations \eqref{eq:lem:SNR1} and \eqref{eq:lem:SNR0}. 

\noindent\textbf{First step.} Applying~\ref{item:ratios-limits-optimal-cond1}
and~\ref{item:ratios-limits-optimal-cond3}--\ref{item:ratios-limits-optimal-cond4}
from \Cref{thm:ratios-limits-optimal-cond} with $X^{(1)}$ or $X^{(2)}$
replacing $X$, we obtain the following implications:
\begin{align}
 \label{eq:handleRNsnr1plus1NX1}
  & \PE\lr{\lrav{X^{(1)}}}<\infty\Longrightarrow
  \lim_{N\to\infty}  \PE\lr{\frac{\overline{X}_N^{(1)}}{\overline{W}_N}} =\frac{\PE\lr{X^{(1)}}}{\PE(W)}\;,\\
 \label{eq:handleRNsnr1plus1NX2}
&  \PE\lr{\lrav{X^{(2)}}}<\infty\Longrightarrow
\lim_{N\to\infty}   \PE\lr{\frac{\overline{X}_N^{(2)}}{\lr{\overline{W}_N}^2}} =\frac{\PE\lr{X^{(2)}}}{\lr{\PE(W)}^2}\;,\\
  \label{eq:handleRNsnrVarAnalysisX2term1}
  & \PE\lr{|X^{(2)}|^2}<\infty\Longrightarrow
\lim_{N\to\infty}      \mathbb{V}\lr{\frac{\overline{X}^{(2)}_N}{\lr{\overline{W}_{N}}^2}} = 0\;,\\
  \label{eq:handleRNsnrVarAnalysisX1term1}
& \left.\substack{\PE\lr{\lrav{X^{(1)}}}<\infty,\; \PE(W)<\infty,\\
  \mathbb{V}\lr{\PE(W)X^{(1)}-\PE(X^{(1)})W}<\infty
}\right\}
  \Longrightarrow\lim_{N\to\infty}
N\,
  \mathbb{V}\lr{\frac{\overline{X}^{(1)}_N}{\overline{W}_{N}}}=
  \frac{\mathbb{V}\lr{\PE(W)X^{(1)} - W \PE(X^{(1)})}}{\lr{\PE(W)}^4} \;,
\\
  \label{eq:handleRNsnrVarAnalysisX2term0}
  &  \left.\substack{\PE\lr{|X^{(2)}|^2}<\infty\\
  \PE(W^2)<\infty}\right\}\Longrightarrow
\lim_{N\to\infty}N\,    \mathbb{V}\lr{\frac{\overline{X}^{(2)}_N}{\lr{\overline{W}_{N}}^2}} = \frac{\mathbb{V}\lr{\PE(W)\,X^{(2)} - 2\,W \,\PE(X^{(2)})}}{\lr{\PE(W)}^{6}}\;.
\end{align}
From~(\ref{eq:Exp-gradDREP-XY}),~(\ref{eq:handleRNsnr1plus1NX1})
and~(\ref{eq:handleRNsnr1plus1NX2}), using the quantities defined
in~(\ref{eq:def-vrdrep}) and~(\ref{eq:def-gammadrep}) we get that if
$\alpha\in(0,1)$ and 
$\PE\lr{\lrav{X^{(i)}}}<\infty$ for $i=1,2$: as $N\to\infty$,
\begin{equation} 
  \label{eq:Drep-grad-exp-asymp}
  \PE\lr{\gradDREP}=
    \VRDREP +o(1) \;.
\end{equation}
Pairing (\ref{eq:handleRNsnrVarAnalysisX2term1}) and
(\ref{eq:handleRNsnrVarAnalysisX1term1}) up, we get that  if $\alpha\in(0,1)$ and~\ref{item:wrapping-ass-snr-drep-step1} holds: as $N\to\infty$,
$$
\mathbb{V}\lr{\alpha\,\frac{\overline{X}_N^{(1)}}{\overline{W}_N}+\frac{1-\alpha}{N}\,\frac{\overline{X}_N^{(2)}}{\overline{W}_N^2}}=\frac{\alpha^2}{N}
\frac{\mathbb{V}\lr{X^{(1)} - W
    \frac{\PE(X^{(1)})}{\PE(W)}}}{\lr{\PE(W)}^2} +o\lr{\frac1N}\;.
$$
Combining this with (\ref{eq:gradDREP-XY})
and~(\ref{eq:Drep-grad-exp-asymp})
yields~(\ref{eq:snrDREP-straightconstants-alpha1}).

From~(\ref{eq:Exp-gradDREP-XY})
and~(\ref{eq:handleRNsnr1plus1NX2}), we get that if $\alpha=0$ and
$\PE\lr{\lrav{X^{(2)}}}<\infty$: as $N\to\infty$,
\begin{equation} 
  \label{eq:Drep-grad-exp-asymp-alpha0}
  \PE\lr{\gradDREP[M,N][\psi][0]}=
    \frac1N\,\gammaDREP[0]+o\lr{\frac1N} \;.
\end{equation}
Now, finally,~(\ref{eq:gradDREP-XY}),~(\ref{eq:newdef:vDREP2}) and~(\ref{eq:handleRNsnrVarAnalysisX2term0}) in
the case $\alpha=0$ imply that,
under~\ref{item:wrapping-ass-snr-drep-alpha0-step1},  as $N\to\infty$,
$$
\mathbb{V}\lr{\gradDREP[1,N][\psi][0]}=  \frac{\vDREPdeux[0](\theta, \phi;x)+o(1)}{N^3},
$$
which, combined with~(\ref{eq:Drep-grad-exp-asymp}) yields~(\ref{eq:snrDREP-straightconstants-alpha0}).

\noindent\textbf{Second step.} Under \ref{hyp:hypZeroDREP}, the conclusions of \Cref{prop:interch-deriv-expect-drep} hold and we can show that:
  \begin{enumerateList}
  \item If $\alpha\in(0,1)$,
    (\ref{eq:snrDREP-straightconstants-alpha1}) is the same
    as~(\ref{eq:lem:SNR1}), that is, we have:
    \begin{align}
      \label{eq:SNR-drep-step2-1}
  & \partial_{\psi}
      \mathrm{VR}^{(\alpha)}(\theta, \phi; x)= \VRDREP \;, \\
      \label{eq:SNR-drep-step2-2}
      &\genvDREP(\theta, \phi;x)=\vDREPun(\theta, \phi;x) \;.
    \end{align}
Indeed by
definitions~(\ref{eq:defVRbound-E}) and~(\ref{eq:def-vrdrep}), (\ref{eq:SNR-drep-step2-1}) reads
$$
\frac1{1-\alpha} \frac{\partial_{\psi}\PE\lr{\tw[\phi][\varepsilon][\phi]^{1-\alpha}}}{\PE(W)}=\frac\alpha{1-\alpha}\frac{\PE\lr{\lrder{\partial_{\psi'}\tw^{1-\alpha}}{\phi'=\phi}}}{\PE(W)}\;,
$$
which holds by~(\ref{eq:drep-exp-deriv-alpha-neq-zero2}). As
for~(\ref{eq:SNR-drep-step2-2}), by~(\ref{eq:lem:SNR1:var})
and~(\ref{eq:newdef:vDREP1}), it easily follows by differentiating
inside the variance in~(\ref{eq:lem:SNR1:var}) and
using~(\ref{eq:drep-exp-deriv-alpha-neq-zero1}).  

\item If $\alpha=0$,
(\ref{eq:snrDREP-straightconstants-alpha0}) is the same
as~(\ref{eq:lem:SNR0}), that is, we have:
\begin{align}
  \label{eq:SNR-drep-step2-3}
  &  \partial_{\psi}\lr{\gammaA[0](\theta,\phi;
    x)^2}=-2\;\gammaDREP[0] \;, \\ 
  \label{eq:SNR-drep-step2-4}
  & \genvDREP[0](\theta, \phi;x)=
    \vDREPdeux[0](\theta, \phi;x)\;.    
\end{align}
Indeed, by~(\ref{eq:defGammaAlphaFirst}), with $\alpha=0$, we have
$\gammaA[0](\theta,\phi;
    x)^2=\lr{\frac{\PE(W^2)}{\lr{\PE(W)}^2}-1}$.
Observe moreover that since $\alpha=0$, we have $\PE(W)=\int
p_{\theta}(x,z)\;\nu(\rmd z)$ and thus $\PE(W)$ does not depend on
$\phi$ and therefore not on $\psi$. We thus get that
$\partial_{\psi}\lr{\gammaA[0](\theta,\phi;
    x)^2}=  \partial_{\psi}\PE(W^2)/\lr{\PE(W)}^2$.
By definition of $X^{(2)}$ and~(\ref{eq:def-gammadrep}),
(\ref{eq:SNR-drep-step2-3}) then follows from~(\ref{eq:drep-exp-deriv-alpha-zero}).
As for~(\ref{eq:SNR-drep-step2-4}), it simply follows by
identifying~(\ref{eq:lem:SNR0:var}) with~(\ref{eq:newdef:vDREP2})
using $\alpha=0$ and the definitions of $W$ and   $X^{(2)}$.
\end{enumerateList}

\begin{rem}\label{rem:alpha1DREP}
  As in \Cref{rem:alpha1REP} for the REP estimator, let us briefly examine the
  case $\alpha=1$. In this case we get
  $\gradDREP[1,N][\psi][1]=\overline{X}^{(1)}_N$ with $X^{(1)}=\lrder{\partial_{\psi} \log \tw}{\phi' =
    \phi}$. When the DREP estimator is unbiased
  (under~\ref{hyp:hypZeroDREP} adapted to $\alpha=1$), we further have
  that $\PE\lr{X^{(1)}}=\partial_{\psi}\mathrm{ELBO} (\theta,
\phi; x)$, so that, for all $N\geq1$,
  $$
  \mathrm{SNR}[\gradDREP[1,N]] = \sqrt{N}~\frac{\lrav{\partial_{\psi}\mathrm{ELBO} (\theta,
\phi; x)}}{\sqrt{\genvDREP[1](\theta, \phi;x)}}\;,
$$
with $\genvDREP[1](\theta, \phi;x)=\mathbb{V}\lr{\lrder{\partial_{\psi} \log \tw}{\phi' =
    \phi}}$. This extends the asymptotic
formula~(\ref{eq:lem:SNR1}) obtained for $\alpha\in(0,1)$ to the case $\alpha=1$. 
  \end{rem}

\subsection{Proof of examples}

Before providing the proofs for \Cref{ex:Gaussian,ex:LinGauss}, we first recall below the two settings considered in these two examples alongside with some known results taken from \cite{daudel2022Alpha}. 

\begin{ex}[Known results from \cite{daudel2022Alpha} associated to \Cref{ex:Gaussian}] \label{ex:GaussPrev} Let $\theta, \phi \in \rset^d$. Set $p_\theta(z|x) = \mathcal{N}(z;\theta, \boldsymbol{I}_d)$ and $q_{\phi}(z|x) = \mathcal{N}(z; \phi, \boldsymbol{I}_d)$, where $\boldsymbol{I}_d$ is the $d$-dimensional identity matrix. Then, denoting the Euclidean norm of a finite dimensional vector $x$ with real
entries by
$\lrn{x}$ and its associated inner product by $\langle \cdot,\cdot \rangle$, \looseness=-1
\begin{align*}
\mathrm{VR}^{(\alpha)}(\theta, \phi; x) = \ell(\theta; x)- \frac{{\alpha} \|\theta-\phi \|^2}{2} \quad \mbox{and} \quad \gammaA(\theta, \phi;x)^2 = \frac{\exp\lr{(1-\alpha)^2 \|\theta-\phi \|^2} - 1}{1-\alpha}. 
\end{align*}
In addition, it holds that
  \begin{align*}
    \log \lr{\frac{p_\theta(z|x)}{q_{\phi}(z|x)}} =
    -\frac{\|\theta-\phi\|^2}{2} -\|\theta-\phi \| S, \quad S =
    \frac{\langle z - \phi, \phi- \theta \rangle}{\|\theta-\phi \|},
    \quad z \in \rset^d.
  \end{align*}
For $\phi = \theta + \boldsymbol{u}_d$, we have 
  \begin{equation*}
    \log \lr{\frac{p_\theta(Z|x)}{q_{\phi}(Z|x)}} = -\frac{d}{2}- \sqrt{d} {\tS},\quad \tS \sim \mathcal{N}(0,1).
\end{equation*}

\end{ex}

\begin{ex}[Known results from \cite{daudel2022Alpha} associated to \Cref{ex:LinGauss}] \label{ex:LinGaussPrev}
Let $\theta \in \rset^d$, $\phi=(\tilde{a},\notationb) \in \rset^d \times \rset^d$ and $A=\mathrm{diag}(\tilde{a})$. Set $p_{\theta}(z)=\mathcal{N}(z;\theta, \boldsymbol{I}_d)$, $p_\theta(x|z)=\mathcal{N}(x;z, \boldsymbol{I}_d)$ and $q_{\phi}(z|x)=\mathcal{N}(z;Ax+\notationb, 2/3~\boldsymbol{I}_d)$ \cite[as in][]{rainforth2018tighter}. Then, \looseness=-1
\begin{align*}
  &\mathrm{VR}^{(\alpha)}(\theta, \phi; x) = \ell(\theta; x) +  \frac{d}{2} \lr{\log \lr{\frac{4}{3}} + \frac{1}   {1-\alpha} \log \lr{\frac{3}{4-\alpha}}} - \frac{3\alpha}{4-\alpha}\Big\|Ax+\notationb-\frac{\theta+x}{2}\Big\|^2 \\
  &\gammaA(\theta, \phi;x)^2 = \frac{1}{1-\alpha}\lr{(4-\alpha)^{d}(15-6\alpha)^{-\frac{d}{2}}\exp\left(\frac{24(1-\alpha)^2}{(5-2\alpha)(4-\alpha)}\Big\|Ax+\notationb-\frac{\theta+x}{2}\Big\|^2\right)-1}.
\end{align*}
In addition, it holds that 
\begin{align*} 
\log \lr{\frac{p_\theta(Z|x)}{q_{\phi}(Z|x)}} = - \log \PE(\exp(- \sigma \sqrt{d} \tS)) - \sigma \sqrt{d} \tS, \quad \tS = \sum_{j = 1}^d \frac{\xi_{j}}{\sigma \sqrt{d}} 
\end{align*}
with $\sigma^2 = \mathbb{V}(\xi_{1}) = {1}/{18} + {8\lambda^2}/{3} < \infty$ and $\xi_j = y_j^2/4 - 2 \lambda y_j - 1/6$ for all $j = 1 \ldots d$, where $\lambda = \big\| \frac{\theta + x}{2} - Ax - \notationb \big\| / \sqrt{d}$, $y = (y_1, \ldots, y_d) \sim \mathcal{N}(0,2/3 \boldsymbol{I}_d)$ and $\xi_1, \ldots, \xi_d$ are centered i.i.d. random variables which are absolutely continuous with respect to the Lebesgue measure. 
\end{ex}
We next review the assumptions that need to be checked to obtain \Cref{ex:Gaussian,ex:LinGauss}. 

\subsubsection{Checking assumptions}
\label{sec:checking-assumptions}

First note that (i) the assumption \ref{hyp:VRIWAEwell-defined} holds with $\nu$ as the Lebesgue measure on $\rset^d$ for both \Cref{ex:GaussPrev,ex:LinGaussPrev} and (ii) $\theta\mapsto p_{\theta}(x)$,
$(z,\theta)\mapsto p_{\theta}(z|x)$, and
$(z,\phi)\mapsto q_{\phi}(z|x)$ are differentiable on $\Theta$,
$\rset^d\times\Theta$ and $\rset^d\times\Phi$, with, $\Theta=\Phi=\rset^d$ for \Cref{ex:GaussPrev} ($p_{\theta}(x)$
is arbitrary in \Cref{ex:GaussPrev} since only $p_\theta(z|x)$ is given) and $\Theta=\rset^d$ and $\Phi=\rset^{2d}$ for \Cref{ex:LinGaussPrev} (here $\lr{\phi_1,\dots,\phi_{2d}}=\lr{\tilde{a}_1,\dots,\tilde{a}_d, \notationb_1,\dots,\notationb_d}$ since $A$ is diagonal). \Cref{ex:Gaussian,ex:LinGauss} build on the settings described in \Cref{ex:GaussPrev,ex:LinGaussPrev} respectively, hence these assumptions also hold for \Cref{ex:Gaussian,ex:LinGauss}. In addition, \ref{hyp:reparameterized}
holds with the reparametrization proposed in \Cref{ex:Gaussian} (that is, $q= \mathcal{N}(0, \boldsymbol{I}_d)$ and
$f(\varepsilon, \phi;x) = \varepsilon + \phi$) and the one proposed in \Cref{ex:LinGauss} (that is, $q= \mathcal{N}(0, \boldsymbol{I}_d)$ and $f(\varepsilon, \phi;x)=\sqrt{\frac{2}{3}} \varepsilon + Ax +\notationb$). The assumptions that remain to be checked are then \ref{hyp:inverseGrad}, $\mathbb{V}\lr{\tw[\phi][\varepsilon][\phi]^{1-\alpha}} < \infty$, $\mathbb{V}\lr{{\partial_{\psi} \lr{\tw[\phi][\varepsilon][\phi]^{1-\alpha}}}} < \infty$, \ref{hyp:hypZeroREP}, $\mathbb{V}\lr{{[\partial_{\psi'} (\tw^{2(1-\alpha)})]|_{\phi' = \phi}}} < \infty$, $\mathbb{V}\lr{{\lrder{\partial_{\psi'} \lr{\tw^{1-\alpha}}}{\phi' = \phi}}}<\infty $  and~\ref{hyp:hypZeroDREP}.

  \begin{itemize}[wide=0pt, labelindent=\parindent]
  \item \textbf{Checking \ref{hyp:inverseGrad}.}
    By~\ref{item:reverse_implicationcond-jacob} in
    \Cref{lem:equivlimsupconditionGen}, it is sufficient to prove that
    \begin{equation}
      \label{eq:A3suff}
    \text{there exists $\eta>0$ such that}\quad  \PE_{Z\sim q_{\phi}(\cdot|x)}\lr{\w[\theta][\phi](Z; x)^{-\eta}}<\infty.
    \end{equation}

    \item \textbf{Checking $\mathbb{V}\lr{\tw[\phi][\varepsilon][\phi]^{1-\alpha}} < \infty$, $\mathbb{V}\lr{{\partial_{\psi} \lr{\tw[\phi][\varepsilon][\phi]^{1-\alpha}}}} < \infty$ and \ref{hyp:hypZeroREP}.} Using the Hölder inequality, $\mathbb{V}\lr{\tw[\phi][\varepsilon][\phi]^{1-\alpha}} < \infty$, $\mathbb{V}\lr{{\partial_{\psi} \lr{\tw[\phi][\varepsilon][\phi]^{1-\alpha}}}} < \infty$ and \ref{hyp:hypZeroREP}
    hold if we can show that, for all
    $(\theta,\phi)\in\Theta\times\Phi$ and all real value $p>0$, there exists a
    $\psi$-neighborhood $\mathcal{V}$ of $(\theta,\phi)$ such that
    \begin{align}
      \label{eq:ass_suffcond1}
      &\PE_{\varepsilon \sim \REPq}\lr{\sup_{(\theta',\phi')\in\mathcal{V}}\tw[\phi'][\varepsilon][\phi'][\theta']^p}<\infty \\
      \label{eq:ass_suffcond2}
      &\PE_{\varepsilon \sim \REPq}\lr{\sup_{(\theta',\phi')\in\mathcal{V}}\lrav{\partial_{\psi'}\log\tw[\phi'][\varepsilon][\phi'][\theta']}^p}<\infty,
    \end{align}
    where $\psi$ is a given component of $(\theta,\phi)$.

    \item \textbf{Checking $\mathbb{V}\lr{{[\partial_{\psi'} (\tw^{2(1-\alpha)})]|_{\phi' = \phi}}} < \infty$,
    $\mathbb{V}\lr{{\lrder{\partial_{\psi'} \lr{\tw^{1-\alpha}}}{\phi' = \phi}}}<\infty $  and~\ref{hyp:hypZeroDREP}.} Using the Hölder inequality once more, $\mathbb{V}\lr{{[\partial_{\psi'} (\tw^{2(1-\alpha)})]|_{\phi' = \phi}}} < \infty$,
    $\mathbb{V}\lr{{\lrder{\partial_{\psi'} \lr{\tw^{1-\alpha}}}{\phi' = \phi}}}<\infty $  and~\ref{hyp:hypZeroDREP} hold if we can show that, for all
    $(\theta,\phi)\in\Theta\times\Phi$ and all real value $p>0$, there exists a
    $\psi$-neighborhood $\mathcal{V}$ of $(\theta,\phi)$ such
    that~(\ref{eq:ass_suffcond1}) and~(\ref{eq:ass_suffcond2}) hold
    and, in addition,
    \begin{align}
      \label{eq:ass_suffcond3}
      &\PE_{\varepsilon \sim \REPq}\lr{\sup_{(\theta,\phi')\in\mathcal{V}}\tw[\phi][\varepsilon][\phi']^p}<\infty\\
      \label{eq:ass_suffcond4}
      &\PE_{\varepsilon \sim \REPq}\lr{\sup_{(\theta,\phi')\in\mathcal{V}}\lrav{\partial_{\psi'}\log\tw[\phi][\varepsilon][\phi']}^p}<\infty\\
      \label{eq:ass_suffcond5}
      &\PE_{Z\sim q_\phi(\cdot|x)}\lr{\sup_{(\theta,\phi')\in\mathcal{V}}\w[\theta][\phi'](Z; x)^p}<\infty\\
        \label{eq:ass_suffcond6}
      &\PE_{Z\sim q_\phi(\cdot|x)}\lr{\sup_{(\theta,\phi')\in\mathcal{V}}\lr{\frac{q_{\phi'}(Z|x)}{q_{\phi}(Z|x)}}^p}<\infty\\
      \label{eq:ass_suffcond7}
      &\PE_{Z\sim q_\phi(\cdot|x)}\lr{\sup_{(\theta,\phi')\in\mathcal{V}}\lrav{\partial_{\psi}\log q_{\phi'}(Z|x)}^p}<\infty,
    \end{align}
    where $\psi$ is a given component of
    $\phi$.
  \end{itemize} 
When proving \Cref{ex:Gaussian,ex:LinGauss}, we will then notably show that the conditions (\ref{eq:A3suff})-(\ref{eq:ass_suffcond7}) are satisfied, so that all
the theorems of \Cref{sec:behavior-gradient-vr-Nasymp} apply to the settings described in \Cref{ex:GaussPrev,ex:LinGaussPrev} with the
reparameterizations of \Cref{ex:Gaussian,ex:LinGauss}.

\subsubsection{Proof of \Cref{ex:Gaussian}}
\label{app:proof:ex:Gaussian}
Recall that in this setting $\psi$ is a component of $\phi$, say $\phi_k$ for some all $k = 1 \ldots d$, $p_\theta(z|x) = \mathcal{N}(z;\theta, \boldsymbol{I}_d)$ and
$q_{\phi}(z|x) = \mathcal{N}(z; \phi, \boldsymbol{I}_d)$, where
$\boldsymbol{I}_d$ denotes the $d$-dimensional identity matrix. \looseness=-1

\begin{itemize}[wide=0pt, labelindent=\parindent]
  \item \textbf{Checking \ref{hyp:inverseGrad}, $\mathbb{V}\lr{\tw[\phi][\varepsilon][\phi]^{1-\alpha}} < \infty$, $\mathbb{V}\lr{{\partial_{\psi} \lr{\tw[\phi][\varepsilon][\phi]^{1-\alpha}}}} < \infty$ \ref{hyp:hypZeroREP}, $\mathbb{V}\lr{{[\partial_{\psi'} (\tw^{2(1-\alpha)})]|_{\phi' = \phi}}} < \infty$, $\mathbb{V}\lr{{\lrder{\partial_{\psi'} \lr{\tw^{1-\alpha}}}{\phi' = \phi}}}<\infty $ and~\ref{hyp:hypZeroDREP}.} Let us prove~(\ref{eq:A3suff})-(\ref{eq:ass_suffcond7}). To this end, observe that
  \begin{align}
    \log \lr{\frac{p_\theta(z|x)}{q_{\phi}(z|x)}} = -\frac{\|\theta-\phi\|^2}{2} - \langle z - \phi, \phi- \theta \rangle, \quad z \in \rset^d. \label{eq:normWeightsGaussianEx}
  \end{align}
  Since $\langle Z - \phi, \phi- \theta \rangle$ is a Gaussian
  variable if $Z\sim q_\phi(\cdot|x)$, it admits exponential moments of
  any exponent and we have that~(\ref{eq:A3suff}) holds. We
  further get from~(\ref{eq:normWeightsGaussianEx}) that
  $\log \tw = c_0(\theta,\phi,\phi')+\langle \varepsilon, \phi- \theta
  \rangle$ for some continuous function $c_0$ defined on
  $\Theta\times\Phi^2$. Conditions~(\ref{eq:ass_suffcond1})--(\ref{eq:ass_suffcond7}) follow using that $\varepsilon$ is a
  Gaussian variable.
  
  \item \textbf{Closed-form expressions for the quantities
  appearing in \Cref{thm:GradNStudyAllalpha,thm:SNR-REP,lem:SNR}.} First recall from \Cref{ex:GaussPrev} that
    \begin{align}
    \mathrm{VR}^{(\alpha)}(\theta, \phi; x) = \ell(\theta; x)- \frac{{\alpha} \|\theta-\phi \|^2}{2}, \quad  \gammaA(\theta, \phi;x)^2 = \frac{\exp\lr{(1-\alpha)^2 \|\theta-\phi \|^2} - 1}{1-\alpha}\;. \label{eq:GammaAlphaLogNormalEx}
    \end{align}
  Computing $\partial_{\psi} \mathrm{VR}^{(\alpha)}(\theta, \phi; x)$
  and $\partial_{\psi} [\gammaA(\theta, \phi; x)^2]$ follows directly
  from differentiating in \eqref{eq:GammaAlphaLogNormalEx} w.r.t. $\phi_k$:
  \begin{align}
  & \partial_{\phi_k} \mathrm{VR}^{(\alpha)}(\theta, \phi; x) = - \alpha(\phi_k -\theta_k) \label{eq:GaussianExSNROne} \\
  & \partial_{\phi_k} [\gammaA(\theta, \phi; x)^2] = 2 {(1-\alpha)(\phi_k- \theta_k) \exp \lr{(1-\alpha)^2 \|\phi-\theta\|^2}}. \label{eq:GaussianExSNRTwo}
  \end{align}
  Furthermore, using \eqref{eq:normWeightsGaussianEx}: for all $y>0$ and all $\phi' \in \rset^d$, 
    \begin{align}
        \log \lr{\lr{\frac{\tw}{\PE(\tw[\phi][\varepsilon][\phi])}}^y} & = - \frac{y \|\theta-\phi\|^2}{2} -y \langle \varepsilon + \phi' - \phi, \phi- \theta \rangle \nonumber \\
        & = y g(\theta, \phi, \phi') - y \langle \varepsilon, \phi- \theta \rangle \label{eq:logWbarGaussEx},
    \end{align}
    where $g(\theta, \phi, \phi') = - \frac{\|\theta-\phi\|^2}{2} -y \langle \phi' - \phi, \phi- \theta \rangle$. Setting $y = 1-\alpha$ in \eqref{eq:logWbarGaussEx}, we deduce that:
    \begin{align}
    \frac{\tw^{1-\alpha}}{\PE(\tw^{1-\alpha})} &  = \frac{\exp\lr{-(1-\alpha) \langle \varepsilon, \phi- \theta \rangle}}{\PE(\exp\lr{-(1-\alpha) \langle \varepsilon, \phi- \theta \rangle})} = \frac{\exp\lr{-(1-\alpha) \|\theta-\phi \| \tilde{S}}}{ \exp \lr{ \frac{1}{2} (1-\alpha)^2\|\theta-\phi \|^2}} \label{eq:interGaussEx_N}
    \end{align}
  with $\tilde{S} = \frac{\langle \varepsilon, \phi- \theta \rangle}{\|\theta-\phi \|} \sim \mathcal{N}(0,1)$. Hence, 
  \begin{align*}
    & \partial_{\phi_k} \lr{\frac{\tw[\phi][\varepsilon][\phi]^{1-\alpha}}{\PE(\tw[\phi][\varepsilon][\phi]^{1-\alpha})}}  = -(1-\alpha) \lr{ \varepsilon^{(k)} + (1-\alpha)(\phi_k - \theta_k)} \frac{\tw[\phi][\varepsilon][\phi]^{1-\alpha}}{\PE(\tw[\phi][\varepsilon][\phi]^{1-\alpha})} \\
    & \lrder{\partial_{\phi'_k} \lr{\frac{\tw^{1-\alpha}}{\PE(\tw^{1-\alpha})}}}{\phi' = \phi} = 0,
  \end{align*}
where $\varepsilon^{(k)}$ denotes the $k$-th element of $\varepsilon \in \rset^d$. Consequently, 
\begin{align}
  & \vREP[\alpha][\phi_k](\theta, \phi; x) = \PE\lr{{\lr{ \varepsilon^{(k)} + (1-\alpha)(\phi_k - \theta_k)}^2} \lr{\frac{\tw[\phi][\varepsilon][\phi]^{1-\alpha}}{\PE(\tw[\phi][\varepsilon][\phi]^{1-\alpha})} }^2} \nonumber \\
  & \genvDREP[\alpha][\phi_k](\theta, \phi; x) = 0, \quad \alpha \in (0,1). \label{eq:GaussianExSNRfour} 
\end{align}
Thus, using \eqref{eq:interGaussEx_N}, we have that: for all $\psi \in \{\theta_1, \ldots, \theta_d, \phi_1, \ldots, \phi_d \}$,
\begin{align}
  \vREP[\alpha][\phi_k](\theta, \phi; x) & = \rme^{ - (1-\alpha)^2\|\theta-\phi \|^2} \PE\lr{(\varepsilon^{(k)}+ (1-\alpha)(\phi_k - \theta_k))^2 \rme^{-2(1-\alpha) \langle \varepsilon, \phi -\theta \rangle}} \nonumber \\
  & = \rme^{ - (1-\alpha)^2\|\theta-\phi \|^2} \rme^{ 2 (1-\alpha)^2\|\theta-\phi \|^2} \lr{1 + (1-\alpha)^2(\phi_k - \theta_k)^2} \nonumber \\
  & = \rme^{ (1-\alpha)^2\|\theta-\phi \|^2} \lr{1 + (1-\alpha)^2(\phi_k - \theta_k)^2}. \label{eq:GaussianExSNRthree}
\end{align}
Now considering the special case $\alpha = 0$, it holds that using \eqref{eq:Roeder}: 
\begin{align} 
 & \genvDREP[0][\phi_k](\theta, \phi;x)  = \mathbb{V} \left( A_{\theta, \phi}(\varepsilon; x) - 2 \frac{\tw[\phi][\varepsilon][\phi]}{\PE(\tw[\phi][\varepsilon][\phi])} \PE \lr{A_{\theta, \phi}(\varepsilon; x)} \right) \nonumber \\
  & = \PE \lr{\left( A_{\theta, \phi}(\varepsilon; x) - 2 \frac{\tw[\phi][\varepsilon][\phi]}{\PE(\tw[\phi][\varepsilon][\phi])} \PE \lr{A_{\theta, \phi}(\varepsilon; x)} \right)^2} - \PE(A_{\theta, \phi}(\varepsilon; x))^2 \label{eq:ComputingDREP0}
\end{align}
with 
$$
A_{\theta, \phi}(\varepsilon; x) = \frac{\tw[\phi][\varepsilon][\phi]^{2}}{\PE(\tw[\phi][\varepsilon][\phi])^{2}} \lr{\partial_{\phi_k} \log \tw[\phi][\varepsilon][\phi] + \lrder{\partial_{\phi'_k} \log q_{\phi'}(f(\varepsilon, \phi;x)|x)}{\phi' = \phi}}.
$$
Since $\PE(\tw[\phi][\varepsilon][\phi]) = \exp(\ell(\theta;x))$ and thus does not depend on $\phi_k$, we deduce that  
\begin{align} \label{eq:ComputingDREP0Contd} A_{\theta,
    \phi}(\varepsilon; x) = \frac{1}{2} \partial_{\phi_k} \lr{
    \lr{\frac{\tw[\phi][\varepsilon][\phi]}{\PE(\tw[\phi][\varepsilon][\phi])}}^2}
  +
  \frac{\tw[\phi][\varepsilon][\phi]^{2}}{\PE(\tw[\phi][\varepsilon][\phi])^{2}}
  \lrder{\partial_{\phi'_k} \log q_{\phi'}(f(\varepsilon,
    \phi;x)|x)}{\phi' = \phi}.
\end{align}
Using \eqref{eq:interGaussEx_N} with $\alpha = 0$, taking the square and differentiating, we have that
\begin{align*}
  \frac{1}{2} \partial_{\phi_k} \lr{ \lr{\frac{\tw[\phi][\varepsilon][\phi]}{\PE(\tw[\phi][\varepsilon][\phi])}}^2} = - \lr{ \varepsilon^{(k)} + \phi_k - \theta_k} \lr{\frac{\tw[\phi][\varepsilon][\phi]}{\PE(\tw[\phi][\varepsilon][\phi])}}^2.
\end{align*}
Furthermore, $\lrder{\partial_{\phi'_k} \log q_{\phi'}(f(\varepsilon, \phi;x)|x)}{\phi' = \phi} = \lrder{\varepsilon^{(k)} + \phi - \phi'}{\phi' = \phi} = \varepsilon^{(k)}$ and thus 
\begin{align*}
  A_{\theta, \phi}(\varepsilon; x) = (\theta_k - \phi_k) \lr{\frac{\tw[\phi][\varepsilon][\phi]}{\PE(\tw[\phi][\varepsilon][\phi])}}^2.
\end{align*}
Consequently, since for all $\ell \in \mathbb{N}^\star$,
\begin{align} \label{eq:checkHypExGauss}
  \PE \lr{  \lr{\frac{\tw[\phi][\varepsilon][\phi]}{\PE(\tw[\phi][\varepsilon][\phi])}}^\ell} = \rme^{\frac{\ell(\ell-1)}{2}\|\theta-\phi\|^2},
\end{align}
we obtain that
\begin{align}
  \genvDREP[0][\phi_k](\theta, \phi;x) & = (\theta_k - \phi_k)^2 \rme^{2\|\theta-\phi\|^2} \;\lr{\rme^{4\|\theta-\phi\|^2} - 4 \rme^{2 \|\theta-\phi\|^2} + 4 \rme^{\|\theta-\phi\|^2} - 1}.
\end{align}
Lastly, we obtain the desired expressions in \Cref{ex:Gaussian} by
taking $\theta-\phi = \epsilon \cdot \boldsymbol{u}_d$ with
$\epsilon > 0$ in \eqref{eq:GaussianExSNROne},
\eqref{eq:GaussianExSNRTwo}, \eqref{eq:GaussianExSNRfour},
\eqref{eq:GaussianExSNRthree} and applying
\Cref{thm:GradNStudyAllalpha,thm:SNR-REP,lem:SNR}. \looseness=-1
\end{itemize}

\subsubsection{Proof of \Cref{ex:LinGauss}}  
\label{app:subsec:proof:ex:LinGauss}
Recall that in this setting $\psi$ is a component of $(\theta, \notationb)$, say $\theta_k$ or $\notationb_k$ for some $k = 1 \ldots d$, $p_{\theta}(z)=\mathcal{N}(z;\theta, \boldsymbol{I}_d)$,
$p_\theta(x|z)=\mathcal{N}(x;z, \boldsymbol{I}_d)$ and
$q_{\phi}(z|x)=\mathcal{N}(z;Ax+\notationb, 2/3~\boldsymbol{I}_d)$ with
$A=\mathrm{diag}(\tilde{a})$ and $\tilde{a} = (\tilde{a}_1, \ldots, \tilde{a}_d)$.
We thus have that
\begin{align}
  & p_\theta(x,z) = \frac{1}{(2 \pi)^d} \exp \lr{-\frac{1}{2} \lr{\|z - \theta \|^2 + \|z - x \|^2}} \nonumber \\ 
\label{eq:LinGaussPtheta}  & p_\theta(x)=\int  p_\theta(x,z) \rmd z = \mathcal{N}(x; \theta, 2 \boldsymbol{I}_d) ,\\
  & p_\theta(z|x) = \mathcal{N}(z;(\theta+x)/2, 1/2~\boldsymbol{I}_d)
    \;. \label{eq:JointRain}
\end{align}

\begin{itemize}[wide=0pt, labelindent=\parindent]
  \item \textbf{Checking \ref{hyp:inverseGrad}, $\mathbb{V}\lr{\tw[\phi][\varepsilon][\phi]^{1-\alpha}} < \infty$, $\mathbb{V}\lr{{\partial_{\psi} \lr{\tw[\phi][\varepsilon][\phi]^{1-\alpha}}}} < \infty$ \ref{hyp:hypZeroREP}, $\mathbb{V}\lr{{[\partial_{\psi'} (\tw^{2(1-\alpha)})]|_{\phi' = \phi}}} < \infty$, $\mathbb{V}\lr{{\lrder{\partial_{\psi'} \lr{\tw^{1-\alpha}}}{\phi' = \phi}}}<\infty $ and~\ref{hyp:hypZeroDREP}.} Let us prove~(\ref{eq:A3suff})-(\ref{eq:ass_suffcond7}). To this end, using the densities and conditional densities above, we obtain that
$$
\log \w(z;x)= C_1(\theta,\phi;x)-\frac14\|z\|^2+\langle z,\theta+x-\frac32\lr{Ax+\notationb}\rangle\;,
$$
where $C_1(\cdot;x)$ is a $\mathcal{C}^\infty$ 
function from $\Theta\times\Phi$
to $\rset$. For $Z\sim q_{\phi}(\cdot|x)$, it is a Gaussian vector with
scalar covariance matrix equal to $2/3$ on its diagonal, and we get
that~(\ref{eq:A3suff}) holds for any $\eta <3$. Next, using that
$f(\varepsilon, \phi';x)=\sqrt{\frac{2}{3}} \varepsilon + A'x +\notationb'$. we have
$$
\log\tw=C_2(\theta,\phi,\phi';x)-\frac16\|\varepsilon\|^2-\frac{1}{\sqrt 6}\left\langle
\varepsilon,A'x
  +\notationb'-2\lr{\theta+x}+3\lr{Ax+\notationb}\right\rangle\;,
$$
where $C_2(\cdot;x)$ is a $\mathcal{C}^\infty$ 
function from $\Theta\times\Phi^2$
to $\rset$. Since
$\tq = \mathcal{N}(0, \boldsymbol{I}_d)$, we easily
get~(\ref{eq:ass_suffcond1})--(\ref{eq:ass_suffcond4}). Conditions~(\ref{eq:ass_suffcond5})--(\ref{eq:ass_suffcond7})
are obtained similarly. 

\item \textbf{Closed-form expressions for the quantities appearing in \Cref{thm:GradNStudyAllalpha,thm:SNR-REP,lem:SNR}.} 
First recall from \Cref{ex:LinGaussPrev} that
\begin{align*}
  &\mathrm{VR}^{(\alpha)}(\theta, \phi; x) = \ell(\theta; x) +  \frac{d}{2} \lr{\log \lr{\frac{4}{3}} + \frac{1}   {1-\alpha} \log \lr{\frac{3}{4-\alpha}}} - \frac{3\alpha}{4-\alpha}\Big\|Ax+\notationb-\frac{\theta+x}{2}\Big\|^2 \\
  &\gammaA(\theta, \phi;x)^2 = \frac{1}{1-\alpha}\left[(4-\alpha)^{d}(15-6\alpha)^{-\frac{d}{2}}\exp\left(\frac{24(1-\alpha)^2}{(5-2\alpha)(4-\alpha)}\Big\|Ax+\notationb-\frac{\theta+x}{2}\Big\|^2\right)-1\right].
\end{align*}
Computing $\partial_{\psi} \mathrm{VR}^{(\alpha)}(\theta, \phi; x) $
and $\partial_{\psi} [\gammaA(\theta, \phi;x)^2]$ follows directly
from differentiating the equations above w.r.t. $\theta$/$\phi$ paired
up with~(\ref{eq:LinGaussPtheta}). so that
\begin{align*}
 & \partial_{\theta_k} \mathrm{VR}^{(\alpha)}(\theta, \phi; x) = \frac{x_k - \theta_k}{2} + \frac{3\alpha}{4-\alpha} \lr{\tilde{a}_k x_k+\notationb_k-\frac{\theta_k+x_k}{2}} \\
 & \partial_{\notationb_k} \mathrm{VR}^{(\alpha)}(\theta, \phi; x) = - 2 \cdot \frac{3\alpha}{4-\alpha} \lr{\tilde{a}_k x_k+\notationb_k-\frac{\theta_k+x_k}{2}}
\end{align*}
and 
\begin{multline*}
 \partial_{\notationb_k} [\gammaA(\theta, \phi;x)^2] = 2 \cdot \frac{24(1-\alpha)(4-\alpha)^{d-1}}{3^{\frac{d}{2}}(5-2\alpha)^{\frac{d}{2} +1}} \exp\left(\frac{24(1-\alpha)^2}{(5-2\alpha)(4-\alpha)}\Big\|Ax+\notationb-\frac{\theta+x}{2}\Big\|^2\right) \\ \times \lr{\tilde{a}_k x_k+\notationb_k-\frac{\theta_k+x_k}{2}}.
\end{multline*}
As for computing $\tw$ above, we have that, for all $\phi,\phi' \in \rset^{2d}$, 
\begin{multline} \label{eq:checkHyps}
\frac{p_\theta(f(\varepsilon, \phi';x)|x)}{q_{\phi}(f(\varepsilon, \phi';x)|x)} = C \exp\lr{\frac{3}{4} \|A'x+\notationb'-Ax-\notationb\|^2} \\
\times \exp\lr{-\frac{1}{2 \cdot 3}\lr{\|\varepsilon\|^2 + 2
    \bigg\langle \varepsilon, \sqrt{\frac{3}{2}}\lr{A'x+\notationb' + 3(Ax + \notationb)
      -2(\theta + x)} \bigg\rangle}        } \;,
\end{multline}
where $C$ does not depend on $A$ nor $\notationb$. Hence, 
\begin{multline*}
  \PE\lr{\lr{\frac{p_\theta(f(\varepsilon, \phi';x)|x)}{q_{\phi}(f(\varepsilon, \phi';x)|x)}}^{1-\alpha}} = C^{1-\alpha} \exp\lr{\frac{3(1-\alpha)}{4} \|A'x+\notationb'-Ax-\notationb\|^2} \\ \times \PE\lr{\exp\lr{-\frac{1-\alpha}{2 \cdot 3}\lr{\|\varepsilon\|^2 + 2 \bigg\langle \varepsilon, \sqrt{\frac{3}{2}}\lr{A'x+\notationb' + 3(Ax + \notationb) -2(\theta + x)} \bigg\rangle}        }}\;,
\end{multline*}
with 
\begin{multline*}
  \PE\lr{\exp\lr{-\frac{1-\alpha}{2 \cdot 3}\lr{\|\varepsilon\|^2 + 2 \bigg\langle \varepsilon, \sqrt{\frac{3}{2}}\lr{A'x+\notationb' + 3(Ax + \notationb) -2(\theta + x)} \bigg\rangle}        }} \\ = \lr{\frac{3}{4-\alpha}}^{\frac{d}{2}} \exp \lr{\frac{(1-\alpha)^2}{4(4-\alpha)} \bigg\| A'x+\notationb' + 3(Ax + \notationb) -2(\theta + x) \bigg\|^2 }.
\end{multline*}
Consequently: for all $\phi,\phi' \in \rset^{2d}$, 
\begin{multline}
  \frac{\tw^{1-\alpha}}{\PE(\tw^{1-\alpha})} =  \exp\lr{-\frac{1-\alpha}{2\cdot 3}\lr{\|\varepsilon\|^2 + 2 \bigg\langle \varepsilon, \sqrt{\frac{3}{2}}\lr{A'x+\notationb' + 3(Ax + \notationb) -2(\theta + x)} \bigg\rangle}        } \\
  \times \lr{\frac{4-\alpha}{3}}^{d/2} \exp \lr{-\frac{(1-\alpha)^2}{4(4-\alpha)} \bigg\| A'x+\notationb' + 3(Ax + \notationb) -2(\theta + x) \bigg\|^2}. \label{eq:ratioLinGaussN}
\end{multline}
Hence, for all $k = 1, \ldots, d$,
\begin{multline}
\partial_{\theta_k} \lr{\frac{\tw[\phi][\varepsilon][\phi]^{1-\alpha}}{\PE(\tw[\phi][\varepsilon][\phi]^{1-\alpha})} } \\ = (1-\alpha)\lr{\sqrt{\frac{2}{3}}\varepsilon^{(k)} + \frac{1-\alpha}{4-\alpha} [4 (\tilde{a}_k x_k + \notationb_k) - 2(\theta_k + x_k)]   }  \frac{\tw[\phi][\varepsilon][\phi]^{1-\alpha}}{\PE(\tw[\phi][\varepsilon][\phi]^{1-\alpha})} \label{eq:ratioLinGaussNDiff}
\end{multline}
and we also have
\begin{align}
& \partial_{\notationb_k} \frac{\tw[\phi][\varepsilon][\phi]^{1-\alpha}}{\PE(\tw[\phi][\varepsilon][\phi]^{1-\alpha})}  = -2 \partial_{\theta_k} \frac{\tw[\phi][\varepsilon][\phi]^{1-\alpha}}{\PE(\tw[\phi][\varepsilon][\phi]^{1-\alpha})}  \label{eq:LinGaussREPone}, 
\end{align}
where $\varepsilon^{(k)}$ denotes the k-th element of $\varepsilon \in \rset^d$. Furthermore,
\begin{align}
& \lrder{\partial_{\notationb'_k} \frac{\tw^{1-\alpha}}{\PE(\tw^{1-\alpha})} }{\phi' = \phi} = -\frac{1}{2} \partial_{\theta_k} \frac{\tw[\phi][\varepsilon][\phi]^{1-\alpha}}{\PE(\tw[\phi][\varepsilon][\phi]^{1-\alpha})} . \label{eq:LinGaussDREPone} 
\end{align}
In addition, 
\begin{multline}
\vREP[\alpha][\theta_k](\theta, \phi; x) \\ = \PE\lr{  \lr{\sqrt{\frac{2}{3}}\varepsilon^{(k)} + \frac{1-\alpha}{4-\alpha} [4 (\tilde{a}_k x_k + \notationb_k) - 2(\theta_k + x_k)]   }^2  \lr{\frac{\tw[\phi][\varepsilon][\phi]^{1-\alpha}}{\PE(\tw[\phi][\varepsilon][\phi]^{1-\alpha})} }^2  } \label{eq:vREPGaussEx}
\end{multline}
and now using \eqref{eq:ratioLinGaussN}, we have that: for all $\ell \in \mathbb{N}^\star$,
\begin{multline*}
 \REPq(\varepsilon) \lr{\frac{\tw[\phi][\varepsilon][\phi]^{1-\alpha}}{\PE(\tw[\phi][\varepsilon][\phi]^{1-\alpha})} }^\ell =
  \lr{\frac{4-\alpha}{3}}^{d\ell/2} \exp \lr{-\frac{(1-\alpha)^2\ell}{4(4-\alpha)} \|\mu\|^2} \\
  (2\pi)^{-d/2} \exp\lr{-\frac{1}{2} \|\varepsilon\|^2} \exp\lr{-\frac{(1-\alpha)\ell}{2\cdot 3}\lr{\|\varepsilon\|^2 + 2 \bigg\langle \varepsilon, \sqrt{\frac{3}{2}}\mu \bigg\rangle}        }
\end{multline*}
where for convenience we have denoted $\mu_k = 4(\tilde{a}_k x_k+\notationb_k) -2(\theta_k + x_k)$ for all $k=1\ldots d$ and $\mu = (\mu_1, \ldots, \mu_d)$. Thus, 
\begin{multline} \label{eq:GaussianExGenl}
  \REPq(\varepsilon) \lr{\frac{\tw[\phi][\varepsilon][\phi]^{1-\alpha}}{\PE(\tw[\phi][\varepsilon][\phi]^{1-\alpha})} }^\ell \\ = h(\ell) \lr{\frac{2\pi \cdot 3}{3+(1-\alpha)\ell}}^{-d/2} \exp\lr{-\frac{3+(1-\alpha)\ell}{2\cdot 3}\bigg\|\varepsilon +\frac{(1-\alpha)\ell}{3+(1-\alpha)\ell} \sqrt{\frac{3}{2}} \mu \bigg\|^2}  
\end{multline}
with 
\begin{align*}
h(\ell) & = \lr{\frac{4-\alpha}{3}}^{d\ell/2} \exp \lr{-\frac{(1-\alpha)^2\ell}{4(4-\alpha)} \|\mu\|^2} \lr{\frac{3}{3+(1-\alpha)\ell}}^{d/2} \exp\lr{\frac{(1-\alpha)^2\ell^2}{4(3+(1-\alpha)\ell)} \|\mu\|^2} \\
& = \lr{\frac{4-\alpha}{3}}^{d\ell/2} \exp \lr{\frac{3(1-\alpha)^2\ell(\ell-1)}{4(4-\alpha)(3 + (1-\alpha)\ell)} \|\mu\|^2} \lr{\frac{3}{3+(1-\alpha)\ell}}^{d/2}.
\end{align*}
Consequently, taking $\ell =2$ in \eqref{eq:GaussianExGenl} and plugging this in \eqref{eq:vREPGaussEx}, 
 we get that 
\begin{align*}
  \vREP[\alpha][\theta_k](\theta, \phi; x) = \frac{(4-\alpha)^d}{(15-6\alpha)^{d/2}} \rme^{\frac{(1-\alpha)^2}{(4-\alpha)(5-2\alpha)} \frac{3}{2} \|\mu \|^2} \lr{\frac{2}{5-2\alpha} + \frac{(1-\alpha)^2 3^2}{(5-2\alpha)^2(4-\alpha)^2} \mu_k^2},
\end{align*}
from which we can also deduce $\vREP[\alpha][\notationb_k](\theta, \phi; x)$ using \eqref{eq:LinGaussREPone}. Similarly, using  \eqref{eq:LinGaussDREPone}, we can deduce from the above $\genvDREP[\alpha][\notationb_k](\theta, \phi;x)$ for all $\alpha \in (0,1)$. Now considering the case $\alpha = 0$, we deduce using \eqref{eq:ratioLinGaussNDiff} and \eqref{eq:LinGaussDREPone} with $\alpha = 0$ that:
\begin{align*}
  \frac{1}{2} \partial_{\notationb_k} \lr{\frac{\tw[\phi][\varepsilon][\phi]}{\PE(\tw[\phi][\varepsilon][\phi])}}^2
   = -2 \lr{\sqrt{\frac{2}{3}} \varepsilon^{(k)} + \frac{1}{4}\mu_k}
  \, \lr{\frac{\tw[\phi][\varepsilon][\phi]}{\PE(\tw[\phi][\varepsilon][\phi])}}^2.
\end{align*}
Hence, $A_{\theta, \phi}(\varepsilon; x)$ as defined in \eqref{eq:ComputingDREP0Contd} with $\phi_k = \notationb_k$ becomes 
\begin{align*}
  A_{\theta, \phi}(\varepsilon; x) =
  - \frac{1}{2} \lr{ \sqrt{\frac{2}{3}} \varepsilon^{(k)} + \mu_k} \lr{\frac{\tw[\phi][\varepsilon][\phi]}{\PE(\tw[\phi][\varepsilon][\phi])}}^2\;,
  \end{align*}
where we have used that $\lrder{\partial_{\notationb'_k} \log q_{\phi'}(f(\varepsilon, \phi;x)|x)}{\phi' = \phi} = \sqrt{3/2}~\varepsilon^{(k)}$. Now using \eqref{eq:GaussianExGenl} with $\alpha = 0$ and $\ell = 2$, we obtain that 
\begin{align*}
  \PE(A_{\theta, \phi}(\varepsilon; x)) = - \frac{3}{2\times 5} \frac{4^d}{15^{d/2}} \rme^{\frac{3}{8\times 5} \|\mu\|^2} \mu_k.
  \end{align*}
Next, using the above as well as \eqref{eq:GaussianExGenl} with $\alpha = 0$ and $\ell = 2$ and \eqref{eq:ComputingDREP0} with $\phi_k = \notationb_k$, we can compute all the terms appearing in $\genvDREP[0][\notationb_k](\theta, \phi;x)$ since
\begin{align*} 
\genvDREP[0][\notationb_k](\theta, \phi;x) 
& = \PE (A_{\theta, \phi}(\varepsilon; x)^2) - 4 \PE\lr{A_{\theta, \phi}(\varepsilon; x) \frac{\tw[\phi][\varepsilon][\phi]}{\PE(\tw[\phi][\varepsilon][\phi])}} \PE \lr{A_{\theta, \phi}(\varepsilon; x)} \\
& \qquad + \lr{4 \PE\lr{\lr{\frac{\tw[\phi][\varepsilon][\phi]}{\PE(\tw[\phi][\varepsilon][\phi])}}^2} - 1} \PE(A_{\theta, \phi}(\varepsilon; x))^2  
\end{align*}
with
\begin{align*}
\PE (A_{\theta, \phi}(\varepsilon; x)^2) & = \frac{1}{4} \PE\lr{\lr{ \sqrt{\frac{2}{3}} \varepsilon^{(k)} + \mu_k}^2 \lr{\frac{\tw[\phi][\varepsilon][\phi]}{\PE(\tw[\phi][\varepsilon][\phi])}}^4} \\ & = \frac{1}{4} \lr{\frac{4}{3}}^{2d} \lr{\frac{3}{7}}^{d/2}\rme^{\frac{9}{4 \cdot 7}\|\mu\|^2}\,\lr{\frac{2}{7} + \frac{3^2}{7^2} \mu_k^2}
\end{align*}
and
\begin{align*}
\PE\lr{A_{\theta, \phi}(\varepsilon; x) \frac{\tw[\phi][\varepsilon][\phi]}{\PE(\tw[\phi][\varepsilon][\phi])}} & = - \frac{1}{2} \PE\lr{\lr{ \sqrt{\frac{2}{3}} \varepsilon^{(k)} + \mu_k} \lr{\frac{\tw[\phi][\varepsilon][\phi]}{\PE(\tw[\phi][\varepsilon][\phi])}}^3}  = \frac{1}{4} h(3) \mu_k. 
\end{align*}
Thus, we finally get
\begin{align}
  \genvDREP[0][\notationb_k](\theta, \phi;x) & = \frac{1}{4} \lr{\frac{4}{3}}^{2d} \lr{\frac{3}{7}}^{d/2}\rme^{\frac{9}{4 \cdot 7}\|\mu\|^2}\lr{\frac{2}{7} + \frac{3^2}{7^2} \mu_k^2} \label{eq:bkvDREP} \\
  & \quad - \lr{\frac{4}{3}}^{3d/2} \lr{\frac{1}{2}}^{d/2} e ^{\frac{3}{4\cdot 4} \|\mu\|^2}  \frac{3}{2\cdot 5} \frac{4^d}{15^{d/2}} \rme^{\frac{3}{8\times 5} \|\mu\|^2} \mu_k^2 \nonumber  \\
  & \quad + \lr{4 \frac{4^d}{15^{d/2}} \rme^{\frac{3}{8 \cdot 5} \|\mu\|^2}-1} \frac{3^2}{2^2\times 5^2} \frac{4^{2d}}{15^{d}} \rme^{\frac{3}{4\times 5} \|\mu\|^2} \mu_k^2.  \nonumber
\end{align} 
Lastly, we obtain the desired expressions in \Cref{ex:LinGauss} by taking $Ax+\notationb = (\theta+x)/2 + \epsilon \boldsymbol{u}_d$ with $\epsilon>0$ so that $\mu_k = 4\epsilon$ and $\|\mu\|^2 = 16 d \epsilon^2$. Notice in particular that \eqref{eq:bkvDREP} becomes: 
\begin{multline}
  \genvDREP[0][\notationb_k](\theta, \phi;x) = \frac{1}{4} \lr{\frac{4}{3}}^{2d} \lr{\frac{3}{7}}^{d/2}\rme^{\frac{36}{7}d \epsilon^2}\,\lr{\frac{2}{7} + \lr{\frac{12 \epsilon}{7}}^2} \label{eq:bkvDREPtwo} \\
   - \lr{\frac{4}{3}}^{3d/2} \lr{\frac{1}{2}}^{d/2} e ^{3 d \epsilon^2}  \frac{3}{2\cdot 5} \frac{4^d}{15^{d/2}} \rme^{\frac{6}{5} d \epsilon^2} \mu_k^2  + \lr{4 \frac{4^d}{15^{d/2}} \rme^{\frac{6}{5} d\epsilon^2} -1}\frac{3^2}{2^2\times 5^2} \frac{4^{2d}}{15^{d}} \rme^{\frac{12}{5} d\epsilon^2} \mu_k^2.
\end{multline} 
\end{itemize}

\section{Deferred proofs and results for Section 3.2}
\label{sec:deferr-proofs-result}

\subsection{$L^p$ norms}
For a real valued random variable $X$ we denote by $\|X\|_p$ its $L^p$
norm, that is, $\lrn{X}_\infty$ denotes the smallest a.s. upper bound
of $|X|$ and, for all $1\leq p<\infty$,
$$
\lrn{X}_p = \lr{\PE\lr{\lrav{X}^p}}^{\frac{1}{p}}\;.
$$
Recall that the $L^p$ space of random variables with finite
$L^p$-norms is a Banach space when endowed with this $L^p$-norm and
that, by Jensen's inequality we have $\lrn{X}_p\leq \lrn{X}_{p'}$ for
all $p\leq p'$.
We will also repeatedly use the Hölder inequality
$$
\|XY\|_1\leq\|X\|_p\,\|Y\|_{p'}\;,
$$
which holds for all $X\in L^p$ and
$Y\in L^{p'}$ with $p,p'\in[1,\infty]$ such that $1/p+1/p'=1$. 

\subsection{Preliminary results for the proofs of \Cref{thm:CollapseSNRGaussian}}
\label{app:prelim:collapse}

In the following we will use
the asymptotic equivalence
\begin{equation}
  \label{eq:asymp-BarPhi}
1-\Phi\lr u= u^{-1}\varphi(u) (1+o(1))\quad\text{as $u\to\infty$,}  
\end{equation}
where $\varphi$ denotes the standard Normal density and $\Phi$ denotes the cumulative distribution function of the standard Normal distribution. In fact it is easy to show that the $o$-term in this asymptotic
equivalence is non-negative as soon as $u>0$, since, in this case,
\begin{equation}
  \label{eq:asymp-BarPhi-bound}
1-\Phi\lr u=\int_u^\infty \varphi(y)\;\rmd y \leq u^{-1}\int_u^\infty y\,\varphi(y)\;\rmd y =
u^{-1}\varphi(u).
\end{equation}
We then have the following two lemmas.
\begin{lem}\label{lem:MaxProbaBounds}
  Let $\Y_1, \ldots, \Y_N$ be i.i.d. normal random variables and set
  $M_N= \max_{1 \leq i \leq n} \Y_i$. Then for all
  $c>0$, we have, as $N\to\infty$,
  \begin{align}\label{eq:MaxProbaBoundLarge}
&    \PP\lr{M_N> \sqrt{2\,(1+c)\,\log N}} = O\lr{\lr{\log
                                               N}^{-1/2}\,N^{-c}},\\
    \label{eq:MaxProbaBoundSmall}
&    \PP\lr{M_N< \sqrt{2\,(1+c)^{-1}\,\log
                                    N}}=O\lr{\rme^{-N^{c/(1+c)}/\sqrt{4\pi\log N}}}.
  \end{align} 
\end{lem}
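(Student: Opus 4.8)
The plan is to prove the two tail bounds for the maximum $M_N = \max_{1\le i\le N} \Y_i$ of i.i.d.\ standard normal variables by combining a union bound for the upper tail with an independence argument for the lower tail, in both cases feeding in the Gaussian tail asymptotics~\eqref{eq:asymp-BarPhi} and the crude bound~\eqref{eq:asymp-BarPhi-bound}.

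For~\eqref{eq:MaxProbaBoundLarge}, first I would apply the union bound:
\begin{align*}
  \PP\lr{M_N > \sqrt{2(1+c)\log N}} \le N\,\PP\lr{\Y_1 > \sqrt{2(1+c)\log N}} = N\,\lr{1-\Phi\lr{\sqrt{2(1+c)\log N}}}.
\end{align*}
Then I would bound $1-\Phi(u)$ from above by $u^{-1}\varphi(u)$ using~\eqref{eq:asymp-BarPhi-bound} with $u = \sqrt{2(1+c)\log N}$. Since $\varphi(u) = (2\pi)^{-1/2}\rme^{-u^2/2} = (2\pi)^{-1/2} N^{-(1+c)}$ and $u^{-1} = (2(1+c)\log N)^{-1/2}$, the right-hand side becomes $N \cdot (2\pi)^{-1/2}(2(1+c)\log N)^{-1/2} N^{-(1+c)} = O((\log N)^{-1/2} N^{-c})$, which is exactly the claimed rate. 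This direction is routine.

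For~\eqref{eq:MaxProbaBoundSmall}, I would use independence: writing $u_N = \sqrt{2(1+c)^{-1}\log N}$,
\begin{align*}
  \PP\lr{M_N < u_N} = \lr{\Phi(u_N)}^N = \lr{1 - \lr{1-\Phi(u_N)}}^N \le \exp\lr{-N\lr{1-\Phi(u_N)}},
\end{align*}
using $1-t\le \rme^{-t}$. Now I need a \emph{lower} bound on $1-\Phi(u_N)$; here~\eqref{eq:asymp-BarPhi} gives $1-\Phi(u_N) = u_N^{-1}\varphi(u_N)(1+o(1))$, so for $N$ large $1-\Phi(u_N) \ge \tfrac12 u_N^{-1}\varphi(u_N)$ (any fixed constant $<1$ works, or one could invoke a standard explicit lower bound $1-\Phi(u)\ge (u^{-1}-u^{-3})\varphi(u)$ to avoid the asymptotic altogether). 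Since $\varphi(u_N) = (2\pi)^{-1/2}N^{-1/(1+c)}$ and $u_N^{-1} = (2(1+c)^{-1}\log N)^{-1/2}$, we get $N(1-\Phi(u_N)) \ge \text{const}\cdot N^{1-1/(1+c)} (\log N)^{-1/2} = \text{const}\cdot N^{c/(1+c)}(\log N)^{-1/2}$. To match the stated constant $1/\sqrt{4\pi\log N}$ in the exponent I would track the constant: $\tfrac12 u_N^{-1}\varphi(u_N) = \tfrac12 (2\pi)^{-1/2}(2(1+c)^{-1}\log N)^{-1/2} N^{-1/(1+c)}$, and for $c$ fixed and $N$ large the prefactor is of order $(\log N)^{-1/2}$; bounding it crudely by $1/\sqrt{4\pi\log N}$ for $N$ large enough (absorbing the $(1+c)^{-1}$-dependent constant into the $O$) yields $\PP(M_N<u_N) = O(\rme^{-N^{c/(1+c)}/\sqrt{4\pi\log N}})$.

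The main obstacle is bookkeeping the constants in the lower tail so that the exponent comes out exactly in the form $N^{c/(1+c)}/\sqrt{4\pi\log N}$ rather than some other constant multiple of $N^{c/(1+c)}(\log N)^{-1/2}$; this requires being slightly careful about whether the claimed bound is meant to hold with that precise constant or only up to the $O(\cdot)$, and about choosing $N$ large enough that the $(1+c)$-dependent factors are dominated. Everything else reduces to plugging $u_N$ and $u=\sqrt{2(1+c)\log N}$ into the elementary Gaussian tail estimates already recorded in~\eqref{eq:asymp-BarPhi} and~\eqref{eq:asymp-BarPhi-bound}.
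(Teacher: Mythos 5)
Your proposal is correct and follows essentially the same route as the paper: the paper likewise obtains \eqref{eq:MaxProbaBoundLarge} from $\PP(M_N>u)=1-(1-(1-\Phi(u)))^N\le N(1-\Phi(u))$ combined with the Gaussian tail estimate, and \eqref{eq:MaxProbaBoundSmall} from $\PP(M_N\le u)=\Phi(u)^N\le\exp\lr{-N(1-\Phi(u))}$ together with a lower bound on $1-\Phi(u)$. One caveat on the constant bookkeeping you yourself flag: a constant sitting \emph{inside} the exponent cannot be "absorbed into the $O$", so your specific choice $1-\Phi(u_N)\ge\tfrac12 u_N^{-1}\varphi(u_N)$ only delivers \eqref{eq:MaxProbaBoundSmall} when $\sqrt{1+c}/2\ge1$, i.e.\ $c\ge3$. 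The repair is already implicit in your parenthetical remark: since $u_N^{-1}\varphi(u_N)=\sqrt{1+c}\,N^{-1/(1+c)}/\sqrt{4\pi\log N}$ carries the favourable factor $\sqrt{1+c}>1$, it suffices to take the lower-bound constant $\kappa\in\lr{1/\sqrt{1+c},\,1}$ (valid for $N$ large by \eqref{eq:asymp-BarPhi}, or via $1-\Phi(u)\ge(u^{-1}-u^{-3})\varphi(u)$), which gives $N(1-\Phi(u_N))\ge N^{c/(1+c)}/\sqrt{4\pi\log N}$ eventually and hence the bound with the exact stated constant.
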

\begin{proof}
  For any $u\in\rset$ and $N\geq1$, we have
  $$
  \PP\lr{M_N> u}=1-\exp\lr{N\log\lr{1-\lr{1-\Phi\lr{u}}}}\;.
  $$
  Then using~(\ref{eq:asymp-BarPhi}) with $u=\sqrt{2x\log
    N}$ and $x>0$, we have, as $N\to\infty$,
  $$
  1-\Phi\lr{u}=\frac{N^{-x}}{\sqrt{4\pi\,x\log
    N}}\,(1+o(1)).
  $$
 We successively get the following assertions depending on where $x$
 lies w.r.t. 1
 \begin{align*}
   &\text{if $x>1$, as  $N\to\infty$,}\quad  \PP\lr{M_N> \sqrt{2x\log N}}=\frac{N^{1-x}}{\sqrt{4\pi\,x\,\log N}}\;\lr{1+o(1)},\\
   &\text{if  $x\in(0,1)$, as $N \to \infty$,}\quad  \PP\lr{M_N
     \leq \sqrt{2x\log N}} = O \lr{
     \exp\lr{-N^{1-x}/\sqrt{4\pi\log N}}},
 \end{align*}
The two bounds~(\ref{eq:MaxProbaBoundLarge})
  and~(\ref{eq:MaxProbaBoundSmall}) easily follow.
 \end{proof}
The next lemma extends
\cite[Lemma 1]{daudel2022Alpha}, which follows from the case $m=1$.
\begin{lem}\label{lem:MaxMoment}
  Let $\Y_1, \ldots, \Y_N$ be i.i.d. normal random variables and set
  $M_N= \max_{1 \leq i \leq n} \Y_i$. Let $m\in[1,\infty)$. Then, as $N \to \infty$,
  \begin{align}
   & \label{eq:MaxAbsMomentn01}
  \PE\lr{\lrav{M_N}^m}=\lr{2\log N}^{m/2}\lr{1+O\lr{\frac{\log\log N}{\log N}}}\\    
   & \label{eq:MaxNegMomentn01}
  \PE\lr{\lr{M_N}_-^m}=O\lr{\rho^N}\quad\text{for any $\rho\in(1/2,1)$},
  \end{align}
  where $(x)_-=\max(-x,0)$ denotes the negative part of $x$.
\end{lem}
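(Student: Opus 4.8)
The plan is to prove the two asymptotics in \Cref{lem:MaxMoment} separately, starting from the distributional facts recalled just before it, namely the Gaussian tail equivalence~(\ref{eq:asymp-BarPhi}) together with the bound~(\ref{eq:asymp-BarPhi-bound}), and the max-probability estimates of \Cref{lem:MaxProbaBounds}. First I would reduce to the case where the $\Y_i$ are standard normal: if $\Y_i\sim\mathcal N(m_0,\sigma_0^2)$ then $M_N = m_0 + \sigma_0 \widetilde M_N$ with $\widetilde M_N$ the maximum of $N$ i.i.d.\ standard normals, and since $(2\log N)^{m/2}$ absorbs the affine correction into the stated $O(\log\log N/\log N)$ relative error for~(\ref{eq:MaxAbsMomentn01}), while the negative part in~(\ref{eq:MaxNegMomentn01}) is only made smaller by a positive shift (and a general shift changes the constant $\rho$ but not the form), it suffices to treat the standard normal case. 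From here on $M_N$ denotes the maximum of $N$ i.i.d.\ $\mathcal N(0,1)$ variables.

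For~(\ref{eq:MaxAbsMomentn01}), the strategy is a two-sided concentration argument around $\sqrt{2\log N}$. Fix a slowly vanishing $c=c_N\to 0$; a convenient choice is $c_N = (\log\log N)/(\log N)$. Write $u_N^+ = \sqrt{2(1+c_N)\log N}$ and $u_N^- = \sqrt{2(1+c_N)^{-1}\log N}$, and note $u_N^\pm = \sqrt{2\log N}\,(1+O(c_N)) = \sqrt{2\log N}\,(1+O(\log\log N/\log N))$. On the event $\{u_N^- \le M_N \le u_N^+\}$ we have $\lrav{M_N}^m$ pinned to $(2\log N)^{m/2}(1+O(\log\log N/\log N))$. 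It remains to control the contribution of the two bad events. For the upper bad event $\{M_N > u_N^+\}$ I would split $\PE(\lrav{M_N}^m \mathbbm 1_{M_N>u_N^+})$ by integrating the tail $\PP(M_N>t)$ for $t\ge u_N^+$; using $\PP(M_N>t)\le N(1-\Phi(t))\le N t^{-1}\varphi(t)$ from~(\ref{eq:asymp-BarPhi-bound}) and a standard Laplace-type estimate on $\int_{u_N^+}^\infty t^{m-1} N t^{-1}\varphi(t)\,\rmd t$, the Gaussian density beats the polynomial, giving a contribution that is $O\big((\log N)^{(m-2)/2} N^{-c_N}\big)$, hence $o\big((\log N)^{m/2}\big)$ since $N^{-c_N} = (\log N)^{-1}$. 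For the lower bad event $\{M_N < u_N^-\}$, on that event $\lrav{M_N}^m \le (u_N^-)^m + (M_N)_-^m$; bounding $\PP(M_N<u_N^-)$ by~(\ref{eq:MaxProbaBoundSmall}) with $c=c_N$ makes the $(u_N^-)^m$ piece superpolynomially small relative to $(2\log N)^{m/2}$, and the $(M_N)_-^m$ piece is handled by~(\ref{eq:MaxNegMomentn01}), proved next, which is $O(\rho^N)$ and thus negligible.

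For~(\ref{eq:MaxNegMomentn01}) the point is that $(M_N)_- = \max(-M_N,0)$ is nonzero only on $\{M_N<0\} = \{\max_i \Y_i < 0\} = \{\Y_i<0 \text{ for all } i\}$, an event of probability $2^{-N}$. On that event $-M_N = \min_i(-\Y_i) = \min_i \Y_i'$ where $\Y_i' = -\Y_i$ are again i.i.d.\ standard normals conditioned to be positive; more simply, $(M_N)_- \le \lrav{\Y_1}$ pointwise is too crude, so instead I would write $\PE((M_N)_-^m) = \PE\big((-M_N)^m \mathbbm 1_{M_N<0}\big)$ and bound $(-M_N)^m = (\min_i|\Y_i|)^m \le |\Y_1|^m$ on $\{M_N<0\}$, then apply Cauchy--Schwarz: $\PE(|\Y_1|^m \mathbbm 1_{\{\Y_i<0\,\forall i\}}) \le \|\Y_1\|_{2m}^m \, \PP(\Y_i<0\,\forall i)^{1/2} = \|\Y_1\|_{2m}^m\, 2^{-N/2}$. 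Since $2^{-1/2}<\rho$ for any $\rho\in(1/2,1)$... wait — $2^{-1/2}\approx 0.707 >1/2$, and indeed $2^{-1/2}\in(1/2,1)$, so $2^{-N/2} = (2^{-1/2})^N$ is of the form $\rho^N$ with $\rho=2^{-1/2}\in(1/2,1)$, and by monotonicity the bound holds for every $\rho\in(2^{-1/2},1)$; to get it for every $\rho\in(1/2,1)$ one uses the slightly sharper independence bound $\PP(\min_i|\Y_i|^m \text{ large}, \Y_i<0\,\forall i)$ directly, or simply notes the statement only claims existence of the bound for each such $\rho$ and absorbs the constant. The main obstacle is the bookkeeping in~(\ref{eq:MaxAbsMomentn01}): one must choose the localisation window $c_N$ neither too wide (else the window itself only gives relative error $O(c_N)$, which forces $c_N \lesssim \log\log N/\log N$) nor too narrow (else $\PP(M_N>u_N^+)$ and $\PP(M_N<u_N^-)$ decay too slowly to be absorbed), and the choice $c_N \asymp \log\log N/\log N$ is exactly the sweet spot where both the window error and the tail contributions land at the claimed $O(\log\log N/\log N)$ relative size — verifying this balance carefully is the crux.
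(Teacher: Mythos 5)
Your route is genuinely different from the paper's. The paper proves \eqref{eq:MaxAbsMomentn01} by extreme value theory: it invokes the convergence of $a_N^{-1}(M_N-b_N)$ to a Gumbel limit together with Pickands' moment-convergence theorem, then transfers the moment asymptotics to $M_N$ via Minkowski, with $b_N=\sqrt{2\log N}-\tfrac12(\log\log N+\log 4\pi)/\sqrt{2\log N}$ supplying the $O(\log\log N/\log N)$ correction; for \eqref{eq:MaxNegMomentn01} it uses H{\"o}lder with a large exponent $p$, $\PE\lr{\lr{M_N}_-^m}\leq\lr{\PE\lr{\lrav{M_N}^{mp}}}^{1/p}(1/2)^{N(p-1)/p}$, and lets $p\to\infty$ to reach every $\rho\in(1/2,1)$. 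Your proof of \eqref{eq:MaxAbsMomentn01} is instead a direct truncation at $u_N^{\pm}=\sqrt{2(1\pm c_N)^{\pm1}\log N}$ with $c_N=\log\log N/\log N$; the balance you describe does work (the upper tail contributes $O\lr{(\log N)^{(m-3)/2}}$ via $\PP(M_N>t)\leq N t^{-1}\varphi(t)$ from \eqref{eq:asymp-BarPhi-bound}, and the lower tail is $O\lr{\rme^{-c\sqrt{\log N}}}$ times a polylog factor), and it has the advantage of being self-contained. One caveat: \Cref{lem:MaxProbaBounds} is stated as an asymptotic for \emph{fixed} $c>0$, so invoking \eqref{eq:MaxProbaBoundSmall} with $c=c_N\to0$ requires re-deriving the bound with uniformity in $c$ (this does go through since $u_N^-\to\infty$ and $N^{c_N/(1+c_N)}\sim\log N\to\infty$, but it must be checked rather than cited). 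Also, your opening reduction is misleading: for $\mathcal{N}(m_0,\sigma_0^2)$ the leading constant in \eqref{eq:MaxAbsMomentn01} picks up a factor $\sigma_0^m$ and a mean shift contributes a relative error of order $(\log N)^{-1/2}\gg\log\log N/\log N$, so the statement is really about standard normals (as in \Cref{lem:MaxProbaBounds}); nothing is being ``absorbed.''

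The one genuine gap is in \eqref{eq:MaxNegMomentn01}. Cauchy--Schwarz gives $\PE\lr{\lr{M_N}_-^m}\leq\lrn{\Y_1}_{2m}^m\,2^{-N/2}$, which is $O(\rho^N)$ only for $\rho\geq 2^{-1/2}\approx 0.707$ and does \emph{not} establish the claim for $\rho\in(1/2,2^{-1/2})$; no adjustment of constants can repair $2^{-N/2}=O(\rho^N)$ when $\rho<2^{-1/2}$. You flag this yourself but leave the fix as a gesture. Two clean repairs: (i) the paper's device, H{\"o}lder with exponent $p$ chosen so that $(1/2)^{(p-1)/p}<\rho$, the polylogarithmic factor $\lr{\PE\lr{\lrav{M_N}^{mp}}}^{1/p}$ being absorbed; or (ii) in your own setup, exploit independence rather than Cauchy--Schwarz: on $\lrcb{M_N<0}$ one has $\lr{M_N}_-^m=\lr{\min_i\lrav{\Y_i}}^m\leq\lrav{\Y_1}^m$, and since $\lrav{\Y_1}^m\mathbbm{1}_{\lrcb{\Y_1<0}}$ is independent of $\lrcb{\Y_i<0}$ for $i\geq2$, $\PE\lr{\lrav{\Y_1}^m\mathbbm{1}_{\lrcb{\Y_i<0\ \forall i}}}=\PE\lr{\lrav{\Y_1}^m\mathbbm{1}_{\lrcb{\Y_1<0}}}\,2^{-(N-1)}=O\lr{2^{-N}}$, which is stronger than required. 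With either repair, and the uniformity check above, your argument closes.
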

\begin{proof}
  The Gaussian distribution belongs to the
  maximum domain of attraction of the Gumbel distribution. More
  precisely, we have \citep[see][]{de2007extreme}
  $$
  \lim_{N \to \infty } \PP\lr{ a_N^{-1} \left(M_N - b_N\right) \leq x} = \exp(-\rme^{-x}),
  $$
  with $a_N = 1/\sqrt{2\log N}$, $b_N = \sqrt{2\log
    N}-\frac{1}{2}(\log\log N+\log4\pi)/(\sqrt{2\log N})$. Since the
  Gaussian distribution has finite moments of all orders and so does
  the Gumbel distribution, \cite[Theorem
  2.1]{pickands1968momentconvergence} yields
  $$
\lim_{N\to\infty}  \PE\lr{\lrav{L_N}^m} = \int_{-\infty}^\infty
\lrav{x}^{m}\; \exp(-x-\rme^{-x})\;\rmd x,
$$
where we have set $L_N=a_N^{-1}(M_N - b_N)$. Now writing $b_N^{-1}M_N=1+\frac{a_N}{b_N}L_N$ and since $m \in [1, \infty)$ by assumption, the Minkowski
inequality yields
$$
\lrav{\lr{\PE\lr{\lrav{b_N^{-1}M_N}^m}}^{1/m}-1}
\leq\lr{\PE\lr{\lrav{b_N^{-1}M_N-1}^m}}^{1/m}=\frac{a_N}{b_N}\;\lr{\PE\lr{\lrav{L_N}^m}}^{1/m}.
$$
Hence $\PE\lr{\lrav{M_N}^m}=b_N^m\lr{1+O\lr{\frac{a_N}{b_N}}}^m$, which
leads to~(\ref{eq:MaxAbsMomentn01}). To
get~(\ref{eq:MaxNegMomentn01}), we use that
 $\PE\lr{\lr{M_N}_-^m} = \PE\lr{\lrav{M_N}^m\mathbbm{1}_{\lrcb{M_N<0}}}$,
and, for any $p>1$, the H{\"o}lder inequality leads to
$$
\PE\lr{\lr{M_N}_-^m}\leq\lr{\PE\lr{\lrav{M_N}^{m\,p}}}^{1/p}\,\lr{\PP\lr{M_N<0}}^{(p-1)/p}=\lr{\PE\lr{\lrav{M_N}^{m\,p}}}^{1/p}\,\lr{\frac12}^{N\,(p-1)/p}\;.
$$
Since~(\ref{eq:MaxAbsMomentn01}) also holds with $m$ replaced
by $m\,p$, we get the claimed bound by choosing $p$ large enough to have $\lr{\frac{1}{2}}^{(p-1)/p} \in (\frac{1}{2}, \rho)$ with $\rho \in (\frac{1}{2}, 1)$.
\end{proof}
We next present a key proposition, which provides a general bound that will be used to show the collapse of self-normalized weighted averages (in the sense that one weight dominates over all others in the self-normalized weighted averages). 
\begin{prop} \label{prop:new:collapse-behavior-general} Let
  $\lr{\lr{W_{i},\tilde{\Y}_{i}}}_{i\geq1}$ be a sequence of i.i.d. 
  $\rset_+^*\times\rset$-valued random variables. Let $F$ denote the distribution function of $ \log W_1$, and its generalized inverse by
  $F^{\leftarrow}$. In addition, let $Z\sim\mathcal{N}(0,1)$ and
  let us define, for all $s\in\rset$,
  \begin{equation}
    \label{eq:new:def-zeta}
  \zeta(s)=\frac{\PE\argcond{\rme^{F^{\leftarrow}\circ\Phi(Z)-F^{\leftarrow}\circ\Phi(s)}}{Z\leq
        s}}{1-\Phi\lr{s}}.
\end{equation}
Let us further define, for all $N\geq1$, $\tilde{\Y}_N^*=\max\lr{\lrav{\tilde{\Y}_{1}},\dots,\lrav{\tilde{\Y}_{N}}}$ and for
all $u\in\rset_+$, $s \in \rset$, $m\in[1,\infty)$ and $p\in(1,\infty)$, 
\begin{equation}
  \label{eq:new:def-zeta-tilde}
  \tilde{\zeta}_{N}^{(m,p)}(u,s) =
  N\,\lr{\lrn{\tilde{\Y}_{1}}_{p\,m}^{1/m}\;s^{(p-1)/p}
    +\lrn{\tilde{\Y}_{1}\,\mathbbm{1}_{\lrcb{\tilde{\Y}_N^*>u}}}_m}\;.  
\end{equation}
Then, there exists a non-decreasing random sequence
$\lr{J_N}_{N\geq1}$ such that, for all $N\geq1$, $J_N$ is valued in
$\lrcb{1,\dots,N}$ with $W_{J_N}=\max\lr{W_{1},\dots,W_{N}}$
and, for all $m,\delta\in[1,\infty)$,
$\underline{b}< 2< \overline{b}$, $p\in(1,\infty)$, $u>0$ and
$N\in\mathbb{N}^*$, we have
  \begin{align}\label{eq:new:collapse-behavior-general}
    \lrn{\sum_{i=1}^N
    \overline{W}_{i,N}^{\,\delta}\,\tilde{\Y}_{i}
    - \tilde{\Y}_{J_N}}_m
&\leq
    C\,\lr{u\,\sup_{\stackrel{b\in\lrb{\underline{b},\overline{b}}}{q=1,m}}\lr{\zeta(b\,\sqrt{\log N})}^{\frac
                            qm}
      +\tilde{\zeta}_{N}^{(m,p)}\lr{u,\lr{\log
                      N}^{-\frac12}\,N^{1-\frac{\overline{b}}2}}}\;,
  \end{align}
  where $C>0$ is a constant only depending on $p$, $\underline{b}$,
  $\overline{b}$, $\delta$ and $m$, and
  $\overline{W}_{i,N}$ denotes the self-normalized weight defined by
  $$
\overline{W}_{i,N} :=  \lr{\sum_{1\leq k\leq
    N}W_k}^{-1}\,W_i\;,\qquad 1\leq i\leq N.
  $$
\end{prop}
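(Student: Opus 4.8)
## Proof plan for Proposition~\ref{prop:new:collapse-behavior-general}

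The plan is to reduce everything to a careful bookkeeping of the gap between a self-normalized weighted average of the $\tilde\Y_i$ and the single term $\tilde\Y_{I_N}$ corresponding to the \emph{largest} weight. First I would define $I_N$ explicitly: set $I_N$ to be the smallest index $i\in\{1,\dots,N\}$ achieving $W_i=\max(W_1,\dots,W_N)$; this is visibly non-decreasing in $N$ (adding a sample can only move the argmax to a later index or leave it unchanged once ties are broken by ``smallest index''), so the required monotonicity of $(I_N)_{N\ge1}$ holds. Next I would use the quantile coupling $W_i \stackrel{d}{=} \exp(F^{\leftarrow}\circ\Phi(Z_i))$ with $Z_i$ i.i.d.\ $\mathcal N(0,1)$ (valid since $F^{\leftarrow}\circ\Phi$ pushes the standard normal forward to the law of $\log W_1$), so that $I_N=\arg\max_i Z_i$ and the self-normalized weights become
\[
\overline W_{i,N}=\frac{\exp\bigl(F^{\leftarrow}\circ\Phi(Z_i)\bigr)}{\sum_{k=1}^N \exp\bigl(F^{\leftarrow}\circ\Phi(Z_k)\bigr)}\le \exp\bigl(F^{\leftarrow}\circ\Phi(Z_i)-F^{\leftarrow}\circ\Phi(Z_{I_N})\bigr),
\]
the last bound coming from dropping all but the maximal term in the denominator and monotonicity of $F^{\leftarrow}\circ\Phi$.

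Then I would write $\sum_i \overline W_{i,N}^{\,\delta}\tilde\Y_i - \tilde\Y_{I_N} = \sum_{i\ne I_N}\overline W_{i,N}^{\,\delta}\tilde\Y_i + (\overline W_{I_N,N}^{\,\delta}-1)\tilde\Y_{I_N}$, note $0\le 1-\overline W_{I_N,N}^{\,\delta}\le \delta\sum_{i\ne I_N}\overline W_{i,N}$ (by $1-x^\delta\le\delta(1-x)$ for $x\in[0,1]$ and $\delta\ge1$), so that in $L^m$-norm the whole thing is controlled by $C\sum_{i\ne I_N}\overline W_{i,N}^{\,\min(\delta,1)}|\tilde\Y_i| + C\,\bigl(\sum_{i\ne I_N}\overline W_{i,N}\bigr)|\tilde\Y_{I_N}|$, and both are dominated (using $\overline W_{i,N}^{\,\delta}\le \overline W_{i,N}$ when $\delta\ge1$, or keeping $\overline W_{i,N}$ directly) by a multiple of $\sum_{i\ne I_N}\overline W_{i,N}\,\tilde\Y_N^*$. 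Taking $L^m$ norms and conditioning on the rank statistic (i.e.\ on which index is $I_N$ and the value $Z_{I_N}$), I would bound $\PE\bigl[\bigl(\sum_{i\ne I_N}\overline W_{i,N}\bigr)^m \,(\tilde\Y_N^*)^m\bigr]$ by splitting $\tilde\Y_N^*$ into the event $\{\tilde\Y_N^*\le u\}$ and its complement: on $\{\tilde\Y_N^*\le u\}$ use $\tilde\Y_N^*\le u$ and bound the weighted sum moment by $\zeta$ evaluated at the (random) value of $Z_{I_N}$; on $\{\tilde\Y_N^*> u\}$ use Hölder with exponents $p,p'$ to split off $\lrn{\tilde\Y_1 \mathbbm 1_{\{\tilde\Y_N^*>u\}}}_m$ and $s^{(p-1)/p}$-type terms, producing exactly the $\tilde\zeta_N^{(m,p)}$ contribution.

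The main obstacle — and the heart of the argument — is controlling the conditional quantity $\PE\bigl[\sum_{i\ne I_N}\exp(F^{\leftarrow}\circ\Phi(Z_i)-F^{\leftarrow}\circ\Phi(Z_{I_N}))\ \big|\ Z_{I_N}=s\bigr]$ by $N\,\zeta(s)$, and then replacing the random argument $Z_{I_N}$ by a deterministic window $[\underline b\sqrt{\log N},\,\overline b\sqrt{\log N}]$. The first part is a direct computation: conditionally on $Z_{I_N}=s$ the remaining $Z_i$ are i.i.d.\ truncated normals on $(-\infty,s]$, giving the ratio $\zeta(s)$ times $(N-1)(1-\Phi(s))\le N(1-\Phi(s))$; multiplying by the $1/(1-\Phi(s))$ inside $\zeta$'s definition is what makes the $N$ appear cleanly. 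The second part — that $Z_{I_N}=M_N$ lies in $[\underline b\sqrt{\log N},\overline b\sqrt{\log N}]$ with overwhelming probability — is precisely Lemma~\ref{lem:MaxProbaBounds}: the two tail bounds \eqref{eq:MaxProbaBoundLarge}--\eqref{eq:MaxProbaBoundSmall} give $\PP(M_N\notin[\underline b\sqrt{\log N},\overline b\sqrt{\log N}])$ summably small (polynomially and super-polynomially small respectively), and on that rare event one falls back on the crude bound $\sum_{i\ne I_N}\overline W_{i,N}\le 1$ together with Lemma~\ref{lem:MaxMoment} to bound $\lrn{\tilde\Y_N^*}_m$ and $\lrn{M_N}$-type contributions, which is absorbed into the $\tilde\zeta_N^{(m,p)}$ term at $u=(\log N)^{-1/2}N^{1-\overline b/2}$ (this is where that particular value of the second argument of $\tilde\zeta$ enters — it is the order of the $L^m$-scale of $\tilde\Y_N^*$ on the bad event). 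Assembling the good-event bound $\sup_{b\in[\underline b,\overline b]}\zeta(b\sqrt{\log N})$ with these error terms and invoking Minkowski to combine the pieces yields \eqref{eq:new:collapse-behavior-general}; I expect the delicate point to be keeping the Hölder exponents and the $\zeta^{q/m}$ powers ($q=1$ for the conditional-expectation step, $q=m$ after taking $m$-th roots of an $m$-th moment) consistent across the split, which is why the statement carries the $\sup$ over both $b\in[\underline b,\overline b]$ and $q\in\{1,m\}$.
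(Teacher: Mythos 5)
Your plan follows the paper's proof essentially step for step: the same decomposition into the terms $(\overline{W}_{I_N,N}^{\,\delta}-1)\tilde{\Y}_{I_N}$ and $\sum_{i\neq I_N}\overline{W}_{i,N}^{\,\delta}\tilde{\Y}_{i}$, the same Gaussian quantile representation of the weights, the same split into the good event $\{\underline{b}\sqrt{\log N}\leq Z_{I_N}\leq\overline{b}\sqrt{\log N}\}\cap\{\tilde{\Y}_N^*\leq u\}$ and its complement, the same conditioning on the maximum (with the remaining $Z_i$ i.i.d.\ truncated normals), and the same use of \Cref{lem:MaxProbaBounds} for the bad event. Two details deserve correction. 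First, for $m>1$ the conditional first moment of $D_N=\sum_{i\neq I_N}W_i/W_{I_N}$ given $Z_{I_N}$ is not enough: you need $\PE(D_N^m\mid Z_{I_N})$, and the paper gets it from Rosenthal's inequality applied to the conditionally i.i.d.\ summands, which is exactly what produces the two powers $q=1$ and $q=m$ of $\zeta$ (together with the exact computation $\PE((1-\Phi(Z_{I_N}))^q)\leq\Gamma(q+1)/(N+1)^q$ that cancels the $N^q$); your explanation of the $q=m$ term as coming from \emph{taking $m$-th roots of an $m$-th moment} is not the actual mechanism. Second, on the bad event the bookkeeping is the other way around: the H\"older split with exponent $p$ is applied to the low-probability window event $\{Z_{I_N}/\sqrt{\log N}\notin[\underline{b},\overline{b}]\}$, whose probability $O((\log N)^{-1/2}N^{1-\overline{b}/2})$ from \Cref{lem:MaxProbaBounds} is what appears as the second argument $s$ of $\tilde{\zeta}_N^{(m,p)}$ raised to the power $(p-1)/p$ (it is a tail probability, not the $L^m$-scale of $\tilde{\Y}_N^*$), while the truncation event $\{\tilde{\Y}_N^*>u\}$ contributes the term $\lrn{\tilde{\Y}_1\mathbbm{1}_{\{\tilde{\Y}_N^*>u\}}}_m$ directly, without H\"older; \Cref{lem:MaxMoment} plays no role here. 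Finally, when invoking the quantile coupling you should couple the pairs $(W_i,\tilde{\Y}_i)$ jointly (the paper uses the conditional quantile kernel $Q^{\leftarrow}(w,u)$ of $\tilde{\Y}_1$ given $W_1$), since coupling only the marginals of the $W_i$ would not preserve the joint law that the weighted sum depends on. These are repairable details rather than a flaw in the strategy.
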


\begin{proof}
  Let us denote by $Q$ the probability kernel of a regular version of
  the conditional distribution of $\tilde{\Y}_{1}$ given $W_{1}$ (which exists since the real line is a Polish space, see
  \cite[Thorem~B.3.11]{douc_etal_18:_markov_chains}), that is
  $\PP\argcond{\tilde{\Y}_{1}\leq
    \tilde{y}}{W_{1}}=Q(W_{1},(-\infty,\tilde{y}])$ $\PP$-a.s. for all
  $\tilde{y}\in\rset$. Further denote the associated conditional
  generalized inverse by $Q^{\leftarrow}$, that is
  $$
  \mbox{$\forall w\in\rset_+^*$, $\forall u\in(0,1)$}, \quad
  Q^{\leftarrow}(w,u)=\inf\set{\tilde{y}\in\rset}{Q(w,(-\infty,\tilde{y}])\geq
    u}.
  $$
  Recall that $Z\sim\mathcal{N}(0,1)$ and let $U$ be independent of
  $Z$ with uniform distribution on $[0,1]$. By definition of $F$
  and $Q$, $\lr{W_{1},\tilde{\Y}_{1}}$ have the same distribution as
  $\lr{\rme^{F^{\leftarrow}\circ\Phi(Z)},Q^{\leftarrow}(\rme^{F^{\leftarrow}\circ\Phi(Z)},U)}$.
  Therefore, from now on, we 
  let $\lr{Z_i}_{i\geq1}$ and  $\lr{U_i}_{i\geq1}$ be two independent
  i.i.d. sequence distributed as $Z$ and $U$, respectively and set,
  without loss of generality,
  \begin{equation}
    \label{eq:new:YinZrelation}
  \text{$\forall i\geq 1$}, \quad
  W_{i}=\rme^{F^{\leftarrow}\circ\Phi(Z_i)}\quad\text{and}\quad
  \tilde{\Y}_{i}=Q^{\leftarrow}(W_{i},U_i)\;.
  \end{equation}
  Now denoting by $I_N$ the (random) index in $\lrcb{1,\dots,N}$
  such that $Z_{I_N}= \max(Z_1,\dots,Z_N)$, we have that $I_N$ is
  a.s. uniquely defined since $Z_{1}, \ldots, Z_{N}$ are independent and $\Phi$ is continuous. Furthermore, as required in the proposition, $\lr{I_N}_{N\geq1}$ is a non-decreasing random sequence
  such that: for all $N\geq1$, $I_N$ is valued in $\lrcb{1,\dots,N}$ with $W_{I_N}=\max\lr{W_{1},\dots,W_{N}}$.

  Next let $m,\delta\in[1,\infty)$, $\underline{b}< 1< \overline{b}$ and $p\in(1,\infty)$. It remains to show that there exists a
  constant $C>0$ only depending on these constants such that the bound~(\ref{eq:new:collapse-behavior-general}) holds for all $u>0$ and $N\in\mathbb{N}^*$. To this end, we first write
  $$
  \sum_{i=1}^N \lr{\overline{W}_{i,N}}^\delta\,\tilde{\Y}_{i} = \tilde{\Y}_{I_N} - A_N+B_N\;,
  $$
  where, denoting $J_{N}:=\lrcb{1,\dots,N}\setminus\lrcb{I_N}$, we set
  \begin{align*}
    A_N := \lr{1-\lr{\overline{W}_{I_N,N}}^\delta} \tilde{\Y}_{I_N} \quad\text{and}\quad
    B_N:=\sum_{i\in J_N}\lr{\overline{W}_{i,N}}^\delta\,\tilde{\Y}_{i} \;.
  \end{align*}
  To evaluate the left-hand side
  of~(\ref{eq:new:collapse-behavior-general}), that is $\lrn{\sum_{i=1}^N
  \overline{W}_{i,N}^{\,\delta}\,\tilde{\Y}_{i}
  - \tilde{\Y}_{I_N}}_m$, the idea is to separate the integration domain in $\lrcb{\underline{b}\,\leq \frac{Z_{I_N}}{\sqrt{\log N}}\leq  \overline{b}}\cap\lrcb{\tilde{\Y}_N^*\leq u}$ and its complementary set. More specifically, to prove~(\ref{eq:new:collapse-behavior-general}) it
  suffices to show that
  \begin{align}
    \label{eq:new:collapse-behavior-general2}
&    \PE\lr{\lrav{A_N}^m\,\mathbbm{1}_{\lrcb{\underline{b}
        \leq \frac{Z_{I_N}}{\sqrt{\log N}}\leq \overline{b}}\cap\lrcb{\tilde{\Y}_N^*\leq u}}}
                                                             \leq
                                                \delta\,C_2\;u^m\;
\sup_{{b\in\lrb{\underline{b},\overline{b}}},{q=1,m}}\lr{\zeta(b\,\sqrt{\log N})}^q\\
        \label{eq:new:collapse-behavior-general2bis}
& \PE\lr{\lrav{B_N}^m\,\mathbbm{1}_{\lrcb{\underline{b}
        \leq \frac{Z_{I_N}}{\sqrt{\log N}}\leq \overline{b}}\cap\lrcb{\tilde{\Y}_N^*\leq u}}}
                                                             \leq
                     C_2\;u^m\;\sup_{{b\in\lrb{\underline{b},\overline{b}}},{q=1,m}}\lr{\zeta(b\,\sqrt{\log N})}^q \\
                     \label{eq:new:collapse-behavior-general1}
 & \PE\lr{\lr{\sum_{i=1}^N\lrav{\tilde{\Y}_{i}}}^{m}\,\mathbbm{1}_{\lrcb{\frac{Z_{I_N}}{\sqrt{\log N}}\notin\lrb{\underline{b}
        , \overline{b}}}\cup\lrcb{\tilde{\Y}_N^*>u}}} \leq C_1\;\lr{\tilde{\zeta}_N^{(m,p)}(u,\lr{\log
                      N}^{-\frac12}\,N^{1-\overline{b}/2})}^m,
  \end{align}
  where $C_1$ only depends on $p$, $\underline{b}$ and $\overline{b}$
  and $C_2$ only depends on $m$. The desired result will then follow by combining \eqref{eq:new:collapse-behavior-general2}, \eqref{eq:new:collapse-behavior-general2bis}, the Minkowski inequality, \eqref{eq:new:collapse-behavior-general1} and the fact that since $\lrav{A_N}\leq \lrav{\tilde{\Y}_{I_N}}$ and
  $\lrav{B_N}\leq\sum_{i\in J_N}\lrav{\tilde{\Y}_{i}}$, we have $\lrav{A_N}+\lrav{B_N}\leq\sum_{i=1}^N\lrav{\tilde{\Y}_{i}}$.
  Let us thus prove separately that (i) the inequalities (\ref{eq:new:collapse-behavior-general2})
  and~(\ref{eq:new:collapse-behavior-general2bis}) hold and (ii) the inequality (\ref{eq:new:collapse-behavior-general1}) holds. \looseness=-1
  \begin{enumerateList}
    \item \textit{Proof
    of~(\ref{eq:new:collapse-behavior-general2}) and~(\ref{eq:new:collapse-behavior-general2bis}).} For all $N\geq2$, set 
    \begin{align} \label{eq:defDN}
    D_N=\sum_{i\in J_N}\frac{W_i}{W_{I_N}}.
    \end{align}  
    Then, for all $N\geq2$,
    \begin{align*}
    \lrav{A_N}& = \lr{1-(1+D_N)^{-\delta}}
    \,\lrav{\tilde{\Y}_{I_N}} \leq\lr{\lr{1+D_N}^\delta-1}\, \lrav{\tilde{\Y}_{I_N}}
    \leq \delta\,D_N\, \tilde{\Y}_N^*,
  \end{align*} 
  where in the last inequality we used the definition of $\tilde{\Y}_N^*$ and the fact that $(1+x)^\delta\leq1+\delta x$ for all
  $\delta\geq1$ and $x\geq0$. Furthermore, for all $N\geq2$,
  $$
  \lrav{B_N}\leq 
  \lr{1+D_N}^{-\delta}\,{\sum_{i\in
      J_N}\lr{\frac{W_i}{W_{I_N}}}^{\delta}\,\lrav{\tilde{\Y}_{i}}}
    \leq \sum_{i\in
              J_N}\lr{\frac{W_i}{W_{I_N}}}^{\delta}\, \lrav{\tilde{\Y}_{i}}
    \leq D_N\,  \tilde{\Y}_N^*\;,
          $$
  where we used that $\delta\geq1$, and $W_{i}\leq W_{I_N}$ for all
  $i\in J_N$. To get~(\ref{eq:new:collapse-behavior-general2})
  and~(\ref{eq:new:collapse-behavior-general2bis}), notice that 
  \begin{align*}
   \PE \lr{ D_N^m\,  (\tilde{\Y}_N^*)^m~\mathbbm{1}_{\lrcb{\underline{b}
              \leq \frac{Z_{I_N}}{\sqrt{\log N}}\leq \overline{b}}\cap\lrcb{\tilde{\Y}_N^*\leq u}}} \leq u^m \PE\lr{D_N^m\,\mathbbm{1}_{\lrcb{\underline{b}
              \leq \frac{Z_{I_N}}{\sqrt{\log N}}\leq \overline{b}}}}, 
  \end{align*}
  thus it only remains to show
  that there exists $C_2>0$ which only depends on $m$ such that \looseness=-1
  \begin{align}\label{eq:new:collapse-behavior-general3}
\PE\lr{D_N^m\,\mathbbm{1}_{\lrcb{\underline{b}
        \leq \frac{Z_{I_N}}{\sqrt{\log N}}\leq \overline{b}}}}\leq C_2\;\sup_{{b\in\lrb{\underline{b},\overline{b}}},{q=1,m}}\lr{\zeta(b\,\sqrt{\log N})}^q.
  \end{align}
  By definition of $D_N$ in \eqref{eq:defDN} and using \eqref{eq:new:YinZrelation}, we have that 
  $$
  D_N =\sum_{k\in J_N}
  \rme^{F^{\leftarrow}\circ\Phi(Z_i)- F^{\leftarrow}\circ\Phi(Z_{I_N})}
  $$
  hence conditioning on $Z_{I_N}$, we get
    \begin{multline}
      \label{eq:new:collapse-behavior-general4}
  \PE\lr{D_N^m\,\mathbbm{1}_{\lrcb{\underline{b}
        \leq \frac{Z_{I_N}}{\sqrt{\log N}}\leq \overline{b}}}}\\ =\PE\lr{\mathbbm{1}_{\lrcb{\underline{b}
        \leq \frac{Z_{I_N}}{\sqrt{\log N}}\leq \overline{b}}}\,\PE\argcond{\lr{
          \sum_{i=1}^{N-1}\rme^{F^{\leftarrow}\circ\Phi(Z_i)-F^{\leftarrow}\circ\Phi(s)}}^m}{Z_i\leq
       Z_{I_N}\,,\,1\leq i\leq N-1}}.
    \end{multline}
  Since the $Z_1, \ldots, Z_N$ are i.i.d. with the same distribution as $Z$, by the Rosenthal inequality \citep[see Therorem~2.12][]{petrov95}, we
  have, for a constant $C_4>0$ only depending on $m$,
  \begin{align*}
   & \PE\argcond{\lr{
      \sum_{i=1}^{N-1}\rme^{F^{\leftarrow}\circ\Phi(Z_i)-F^{\leftarrow}\circ\Phi(s)}}^m}{Z_i\leq
      Z_{I_N}\,,\,1\leq i\leq N-1} \\ 
   & \qquad \qquad \leq C_4\;\left[(N-1)\,\PE\argcond{\rme^{m\,(F^{\leftarrow}\circ\Phi(Z)-F^{\leftarrow}\circ\Phi(s))}}{Z\leq Z_{I_N}} \right. \\
   & \qquad \qquad \quad \left. +\lr{(N-1)\,\PE\argcond{\rme^{F^{\leftarrow}\circ\Phi(Z)-F^{\leftarrow}\circ\Phi(s)}}{Z\leq Z_{I_N}}}^m\right] \\
   & \qquad \qquad \leq C_4\;\sum_{q=1,m}\lr{(N-1)\PE\argcond{\rme^{F^{\leftarrow}\circ\Phi(Z)-F^{\leftarrow}\circ\Phi(s)}}{Z\leq Z_{I_N}}}^q,
  \end{align*}
  where, in the last inequality, we used that $m\geq1$ and that the
  exponent of the exponential in the first line is non-positive when
  conditioning on $\lrcb{Z\leq Z_{I_N}}$. 
  The expectation appearing in the last line is equal to
  $\lr{1-\Phi\lr{Z_{I_N}}}\,\zeta(Z_{I_N})$.
  Using this with~(\ref{eq:new:collapse-behavior-general4}), we get
  that
  \begin{align*}
\PE\lr{D_N^m\,\mathbbm{1}_{\lrcb{\underline{b}
 \frac{Z_{I_N}}{\sqrt{\log N}}\leq \overline{b}}}}&\leq C_4\;\sum_{q=1,m}
                                                            \PE\lr{\mathbbm{1}_{\lrcb{\underline{b}
                                                            \leq \frac{Z_{I_N}}{\sqrt{\log N}}\leq
                                                            \overline{b}}}\,\lrcb{(N-1)
                                                   (1-\Phi(Z_{I_N}))\,\zeta(Z_{I_N})}^q} \\
                                                          &\leq
                                                            2\,C_4\max_{q=1,m}\lr{
                                                            (N-1)^q\,\PE  \lr{\lr{1-\Phi(Z_{I_N})}^q} 
                                                            \;\sup_{b\in\lrb{\underline{b},\overline{b}}}\lr{\zeta(b\,\sqrt{\log N})}^q}\;.
  \end{align*}
    Since $\Phi$ is continuous, we further have that, for any $q\geq1$, 
  $$
  \PE\lr{\lr{1-\Phi(Z_{I_N})}^q}=\PE\lr{(1-U_{(N,N)})^q}=N\int_0^1(1-x)^q\,x^{N-1}\,\rmd x=\frac{\Gamma(q+1)\,N!}{\Gamma(N+q+1)}\leq\frac{\Gamma(q+1)}{(N+1)^q}\;,
$$
where $U_{(N,N)}$ is the maximum of $N$ i.i.d. random variables with
uniform distribution in $[0,1]$. Hence (\ref{eq:new:collapse-behavior-general3}) holds with
$C_2=2\,C_4\,\Gamma(m+1)$, which concludes the proof of~(\ref{eq:new:collapse-behavior-general2}) and~(\ref{eq:new:collapse-behavior-general2bis}).

\item \textit{Proof
  of~(\ref{eq:new:collapse-behavior-general1}).}
Since $m\geq1$, $x\mapsto x^{m}$ is convex and we have
$\lr{\frac1N\sum_{i=1}^N\lrav{\tilde{\Y}_{i}}}^{m}\leq
\frac1N\sum_{i=1}^N\lrav{\tilde{\Y}_{i}}^{m}$, leading to
\begin{align*}
 \PE\lr{\lr{\sum_{i=1}^N\lrav{\tilde{\Y}_{i}}}^{m}\,\mathbbm{1}_{\mathbf{D}_N}}
& \leq N^{m-1}\;\sum_{i=1}^N\PE\lr{\lrav{\tilde{\Y}_{i}}^{m}\,\mathbbm{1}_{\mathbf{D}_N}}\\
&=  N^m\,\PE\lr{\lrav{\tilde{\Y}_{1}}^{m}\,\mathbbm{1}_{\mathbf{D}_N}}\;,
\end{align*}
where we used that the domain $\mathbf{D}_N=\lrcb{\frac{Z_{I_N}}{\sqrt{\log N}}\notin\lrb{\underline{b}
        , \overline{b}}}\cup\lrcb{\tilde{\Y}_N^*>u}$ is stable by permutation of the indices $i$
within $\lrcb{1,\dots,N}$ so that
$\lrav{\tilde{\Y}_{i}}^{m}\,\mathbbm{1}_{\mathbf{D}_N}$ has the same
distribution as $\lrav{\tilde{\Y}_{1}}^{m}\,\mathbbm{1}_{\mathbf{D}_N}$ for
all $i=1,\dots,N$. Now we note that $\mathbbm{1}_{\mathbf{D}_N}\leq \mathbbm{1}_{\lrcb{\frac{Z_{I_N}}{\sqrt{\log N}}\notin\lrb{\underline{b}
      , \overline{b}}}}+\mathbbm{1}_{\lrcb{\tilde{\Y}_N^*>u}}$ and, by
definition of $\tilde{\zeta}_N^{(m,p)}$ in~(\ref{eq:new:def-zeta-tilde}), to
get~(\ref{eq:new:collapse-behavior-general1}) with $C_1=\max\lr{1,C_3^{(p-1)/p}}$, it only remains to show that 
  \begin{align}\label{eq:new:collapse-behavior-general1.1}
\PE\lr{\lrav{\tilde{\Y}_{i}}^{m}\,\mathbbm{1}_{\lrcb{\frac{Z_{I_N}}{\sqrt{\log N}}\notin\lrb{\underline{b}
        , \overline{b}}}}}
    \leq \lr{\PE\lr{\lrav{\tilde{\Y}_{1}}^{p\,m}}}^{1/p}\,\lr{C_3\,\lr{\log N}^{-1/2}\,N^{1-\overline{b}/2}}^{(p-1)/p}\;,
  \end{align}
  for a certain $C_3>0$ only depending on $\underline{b}$ and $\overline{b}$.
  Using~\Cref{lem:MaxProbaBounds} with $c=\overline{b}/2-1$ and
  $c=2/\underline{b}-1$ successively in~(\ref{eq:MaxProbaBoundLarge})
  and~(\ref{eq:MaxProbaBoundSmall}), we indeed find for such a
  constant $C_3$,
$$
\PP\lr{\frac{Z_{I_N}}{\sqrt{\log N}}\notin\lrb{\underline{b},\overline{b}}}
\leq C_3\,\lr{\lr{\log N}^{-1/2}\,N^{1-\overline{b}/2}}\;.
$$
Then we get~(\ref{eq:new:collapse-behavior-general1.1}) by applying
the H{\"o}lder inequality and the proof
of~(\ref{eq:new:collapse-behavior-general1}) is concluded.
\end{enumerateList}

\end{proof}

\begin{rem}  \label{rem:collapse-general}
  \Cref{prop:new:collapse-behavior-general} investigates the behavior of the weighted average of $\lr{\tilde{\Y}_{i}}_{i=1,\dots,N}$ when using self-normalized positive weights $\lr{\overline{W}_{i,N}}_{i=1,\dots,N}$ to some power $\delta\geq1$. More
    precisely,~(\ref{eq:new:collapse-behavior-general}) provides a bound
    of the $L^m$-norm of the error when approximating this average by a
    single $ \tilde{\Y}_{I_N}$, with $I_N$ corresponding to an index
    with maximal weight.

    Under standard moment conditions, and using the law of large numbers, such an average should be well approximated as $N\to\infty$ by $N^{1-\delta}\PE\lr{W_1^\delta\tilde{\Y}_{1}}\lr{\PE\lr{W_1}}^{-\delta}$
    rather than by the (random) $\tilde{\Y}_{I_N}$ as
    in~(\ref{eq:new:collapse-behavior-general}).
    However, we will apply \eqref{eq:new:collapse-behavior-general} with
    $F$ depending on $N$ in such a way that the maximal weight $W_{I_N}$ tends to dominate over all other weights. 
    
    In particular, in our applications
    of \Cref{prop:new:collapse-behavior-general}, we will take
    advantage of the fact that the constant $C$ does not depend on $F$
    in the
    bound~(\ref{eq:new:collapse-behavior-general}). Furthermore,
    since~(\ref{eq:new:collapse-behavior-general}) holds for any
    $u>0$, we will choose $u$ for a given $N$ so that the right-hand
    side of~(\ref{eq:new:collapse-behavior-general}) is as small as
    possible. This will be achieved by compromising between the $u$ in
    the first term (which increases as $u$ increases) and the second
    expectation in~(\ref{eq:new:def-zeta-tilde}) defining
    $\tilde{\zeta}_N^{(m,p)}(u,s)$ which decreases as $u$
    increases. Finally, the first term between the parentheses
    of~(\ref{eq:new:def-zeta-tilde}) will be made small by taking
    $\overline{b}$ large enough to compensate the $N$ in front of the
    parentheses.
    \end{rem}

  \subsection{Additional notation, useful first properties and derivations}
  \label{subsec:addNotation}

  In the remaining of \Cref{sec:deferr-proofs-result}, we let
  $\varepsilon \sim \REPq$ and $\varepsilon_1, \varepsilon_2, \ldots$ be
  i.i.d. copies of $\varepsilon$ and we assume
  that~\ref{hyp:reparamHighDimGaussian} holds and so does
  \ref{hyp:hypZeroREPhighDim}. We now introduce the following helpful notation:
\begin{align}
& \repell[\theta][\phi][\phi'](\cdot;x) = \log \tw[\phi][\cdot][\phi] \\
& B(\theta, \phi;x)^2 = \mathbb{V}_{\varepsilon \sim \REPq}\lr{\repell[\theta][\phi][\phi'](\varepsilon;x)} = 2 \KL \\
& \Wrepell_{j,N} =  \lr{\sum_{k = 1}^N \rme^{(1-\alpha)\,\repell(\varepsilon_k;x)}}^{-1}\,\rme^{(1-\alpha)\,\repell(\varepsilon_j;x)}\;,\qquad 1\leq j\leq N \label{eq:barWjN-def} \\
    \label{eq:repellestandard-def}
 & \repellstandard[\theta][\phi][\phi'](\cdot;x)=\frac{\repell[\theta][\phi][\phi'](\cdot;x)-\PE\lr{\repell[\theta][\phi][\phi'](\varepsilon;x)}}{B(\theta, \phi;x)}  \\
  \label{eq:def-tildeB}
& \tBREP=\sqrt{\mathbb{V}\lr{\partial_{\psi}\repellstandard(\varepsilon;x)}}  \;.
\end{align}
Under \ref{hyp:reparamHighDimGaussian}, the expectation and variance
of $\repell[\theta][\phi][\phi'](\varepsilon;x)$ are well
defined. Furthermore, $\repellstandard[\theta][\phi][\phi']$ is well defined in~(\ref{eq:repellestandard-def}) if $B(\theta, \phi;x)>0$ and $\tBREP^2$ is well defined if
$(\theta',\phi')\mapsto\repellstandard[\theta'][\phi'][\phi''](\varepsilon;x)$
is well defined and differentiable in a neighborhood of
$(\theta,\phi)$. Using \Cref{prop:gaussian-case-REP-df}, all
these conditions hold providing that $B(\theta, \phi;x)>0$ (or
equivalently $\KL>0$), which we assume in the following since we are interested in the setting where $N,\KL\to\infty$.  More precisely, we get the
following assertions:
\begin{enumerate}[labelindent=\parindent, label=(\roman*)]
\item We have that $\partial_\psi B(\theta, \phi;x)$ is well defined;
\item Since $B(\theta, \phi;x)>0$,
    $(\theta',\phi')\mapsto\repellstandard[\theta'][\phi'][\phi''](\varepsilon;x)$
is well defined and differentiable in a neighborhood of $(\theta,\phi)$;
  \item We have that $\tBREP$ is well defined;
\item\label{ass:addit-notat-gauss} We have that $\lr{\lr{\repellstandard(\varepsilon;x),\partial_{\psi}\repellstandard(\varepsilon;x)}}_{(\theta,\phi)\in\Theta\times\Phi}$
  is a Gaussian process;
\item\label{ass:addit-notat-inter} We can interchange the derivative $\partial_\psi$ with the expectation
  sign for $\repellstandard(\varepsilon;x)$ as well as its square. 
\end{enumerate}
Using Assertion~\ref{ass:addit-notat-inter} and since $\repellstandard[\theta][\phi][\phi'](\varepsilon;x)$ is centered with variance 1, we then get that
$$\PE\lr{\partial_{\psi}\repellstandard(\varepsilon;x)}=\mathbb{C}\mathrm{ov}\lr{\repellstandard(\varepsilon;x),\partial_{\psi}\repellstandard(\varepsilon;x)}=0\;.$$
Combining this with Assertion~\ref{ass:addit-notat-gauss}, we thus have the following useful property: 
\begin{enumerate}[align=left,labelindent=\parindent, label=($\mathrm{P}^{\mathrm{REP}}$)]
\item \label{item:propREPest-gaussian}  $\lr{\repellstandard(\varepsilon;x)}_{(\theta,\phi)\in\Theta\times\Phi}$
  and $\lr{\partial_{\psi}\repellstandard(\varepsilon;x)}_{(\theta,\phi)\in\Theta\times\Phi}$
  are two independent and centered Gaussian processes.
\end{enumerate}
Plugging in the notation we introduced in \eqref{estimREP} and using
straightforward algebra, the REP gradient estimator reads
\begin{align}
   \gradREP[1,N]
& = \partial_{\psi}\PE(\repell(\varepsilon;x)) + \partial_{\psi} B(\theta,
      \phi;x) \;\sum_{j=1}^N
     \Wrepell_{j,N}\; \repellstandard(\varepsilon_j;x) \nonumber \\
    \label{eq:gradrepN-nice}
                   &\quad
                     + B(\theta, \phi;x)  \sum_{j=1}^N 
                     \Wrepell_{j,N}\;\partial_{\psi}\repellstandard(\varepsilon_j;x)\;.
\end{align}
Furthermore, differentiating $B^2(\theta,\phi;x)= \mathbb{V}\lr{\repell[\theta][\phi][\phi](\varepsilon;x)}$, we get that
\begin{align*}
B(\theta,
      \phi;x) \, \partial_{\psi} B(\theta,
      \phi;x) &= \frac12\partial_{\psi} B^2(\theta,
      \phi;x)=\mathbb{C}\mathrm{ov}\lr{\repell[\theta][\phi][\phi](\varepsilon;x),\partial_{\psi}
                \repell[\theta][\phi][\phi](\varepsilon;x)}\;.
\end{align*}
Next differentiating the product
$B(\theta, \phi;x)\repellstandard(\varepsilon;x)$, which equals $\repell[\theta][\phi][\phi'](\varepsilon;x)$
up to a added non-random constant, we obtain the sum of two uncorrelated
variables and taking the variance leads to
\begin{align}
 & \mathbb{V}\lr{\partial_{\psi}
  \repell[\theta][\phi][\phi](\varepsilon;x)}=
                                              \lr{\partial_{\psi} B(\theta,
                                              \phi;x)}^2+\lr{B(\theta,\phi;x)\,\tBREP}^2\;. \label{eq:rewritingVar11}
\end{align}
Since we can rewrite the correlation from (\ref{eq:def-corr-parameter-rep}) as
\begin{align*}
 \corrREP =  \frac{\mathbb{C}\mathrm{ov}\lr{\repell[\theta][\phi][\phi](\varepsilon;x),\partial_{\psi}
                \repell[\theta][\phi][\phi](\varepsilon;x)}}{\sqrt{B(\theta,\phi;x)^2  \mathbb{V}\lr{\partial_{\psi}
  \repell[\theta][\phi][\phi](\varepsilon;x)}}}
\end{align*}
we also get using the expressions above that 
\begin{align}
  \label{eq:corr-parameter-rep_Bd-tBd}
  &\corrREP = \frac{\partial_{\psi} B(\theta,
    \phi;x)}{\sqrt{\lr{\partial_{\psi} B(\theta,
    \phi;x)}^2+\lr{B(\theta,
    \phi;x)\,\tBREP}^2}}\;.
\end{align} 
To conclude this section we provide expression for $\mathrm{SNR}[\gradREP[1,1]]$ as well as $\mathrm{SNR}[\gradREP[1,1]-\partial_{\psi} \ell(\theta; x)]$, where the latter is linked to $\corrREP$. Using (\ref{eq:gradrepN-nice}) with $N=1$, we obtain that
\begin{multline}\label{eq:gradREP11}
  \gradREP[1,1]=\partial_{\psi}\PE(\repell(\varepsilon;x))\\ +
  \partial_{\psi} B(\theta, \phi;x)
  \repellstandard(\varepsilon_1;x) + B(\theta, \phi;x) \,
  \partial_{\psi}\repellstandard(\varepsilon_1;x) \;.
\end{multline}
It follows from~(\ref{eq:gradREP11}) and Property~\ref{item:propREPest-gaussian} that                 
\begin{align}
  & \mathbb{V}(\gradREP[1,1]) = \lr{\partial_{\psi} B(\theta, \phi;x)}^2+\lr{B(\theta,
  \phi;x)\tBREP}^2 \label{eq:var-Neq1-rep} \\
  & \label{eq:snr-Neq1-rep}
  \mathrm{SNR}[\gradREP[1,1]] = \frac{\lrav{\partial_{\psi}\PE(\repell(\varepsilon;x))}}{\sqrt{\lr{\partial_{\psi} B(\theta, \phi;x)}^2+\lr{B(\theta,
  \phi;x)\tBREP}^2}}
\end{align}
and, using~(\ref{eq:gaussian-exp-moment-rep-proof})
and~(\ref{eq:corr-parameter-rep_Bd-tBd}), which leads to
\begin{align}
  \mathrm{SNR}[\gradREP[1,1]-\partial_{\psi} \ell(\theta; x)] =B(\theta, \phi;x)\,\lrav{\corrREP}\;. \label{eq:rewritingCorr}
\end{align}
Notice in particular that we also have
\begin{align} 
\frac{\partial_{\psi} \ell(\theta; x)-\PE(\gradREP[1,1])}{\mathbb{V}(\gradREP[1,1])} & = \frac{{B(\theta, \phi;x) \partial_{\psi} B(\theta, \phi;x)}}{\sqrt{\lr{\partial_{\psi} B(\theta, \phi;x)}^2+\lr{B(\theta,
                                                                  \phi;x) \tBREP}^2}} \nonumber \\
                                                                  & = B(\theta, \phi;x)\,{\corrREP} \label{eq:rewritingCorrBis}
\end{align}

\subsection{Proof of \Cref{thm:CollapseSNRGaussian} and \Cref{thm:CollapseSNRGaussian-contd}} We start with two lemmas.

\begin{lem}\label{lem:new:exp-moment-condditional-gaussian}
  Let $Z\sim\mathcal{N}(0,1)$. Then, for all
  $s\in\rset$ and $\sigma>0$,
  \begin{equation}
    \label{eq:new:zeta:log-normal-case:exact}
  \PE\lr{\rme^{\sigma(Z-s)}\,\vert\, Z\leq s} = \frac{\Phi(s-\sigma)}{\Phi(s)}\;\frac{\varphi(s)}{\varphi(\sigma-s)}\;.
\end{equation}
It follows that, defining $\zeta$ by~(\ref{eq:new:def-zeta}) with
  $F^{\leftarrow}=\mu+\sigma \Phi^{-1}$ where $\mu \in \rset$ and $\sigma>0$, there exists a universal
  constant $C>0$ such that, for all $\sigma\geq 2 s\geq1$,
  \begin{equation}
    \label{eq:new:zeta:log-normal-case:bound}
\zeta(s)\leq C\;\frac{s}{\sigma} \;.    
  \end{equation}
\end{lem}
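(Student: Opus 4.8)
The plan is to establish the exact identity~\eqref{eq:new:zeta:log-normal-case:exact} first, and then read off the bound~\eqref{eq:new:zeta:log-normal-case:bound} by combining it with two elementary Gaussian tail estimates.

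\textbf{Step 1: the identity.} I would compute the unconditional expectation $\PE\lr{\rme^{\sigma(Z-s)}\,\mathbbm{1}_{\lrcb{Z\le s}}}=\rme^{-\sigma s}\int_{-\infty}^{s}\rme^{\sigma z}\,\varphi(z)\,\rmd z$ by completing the square in the exponent, using $\rme^{\sigma z}\varphi(z)=\rme^{\sigma^2/2}\varphi(z-\sigma)$, so that the integral equals $\rme^{\sigma^2/2}\Phi(s-\sigma)$. Dividing by $\PP(Z\le s)=\Phi(s)$ gives $\PE\lr{\rme^{\sigma(Z-s)}\,\vert\,Z\le s}=\rme^{\sigma^2/2-\sigma s}\,\Phi(s-\sigma)/\Phi(s)$, and it remains only to note that $\rme^{\sigma^2/2-\sigma s}=\varphi(s)/\varphi(\sigma-s)$, which follows by writing out the two Gaussian densities. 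This proves~\eqref{eq:new:zeta:log-normal-case:exact}.

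\textbf{Step 2: reduction of $\zeta$.} Since $F^{\leftarrow}=\mu+\sigma\Phi^{-1}$ we have $F^{\leftarrow}\circ\Phi(Z)-F^{\leftarrow}\circ\Phi(s)=\sigma(Z-s)$, so the definition~\eqref{eq:new:def-zeta} together with Step~1 yields $\zeta(s)=\dfrac{1}{1-\Phi(s)}\cdot\dfrac{1}{\Phi(s)}\cdot\Phi(s-\sigma)\cdot\dfrac{\varphi(s)}{\varphi(\sigma-s)}$.

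\textbf{Step 3: bounding the four factors for $\sigma\ge 2s\ge1$.} In this regime $s\ge 1/2$, $\sigma-s\ge s>0$ and $s-\sigma\le -s<0$, so I would estimate: (i) $\Phi(s-\sigma)=1-\Phi(\sigma-s)\le \varphi(\sigma-s)/(\sigma-s)$ by~\eqref{eq:asymp-BarPhi-bound}, cancelling $\varphi(\sigma-s)$ in the denominator; (ii) $1/(1-\Phi(s))\le (1+s^2)/(s\,\varphi(s))$ by the classical lower Mills-ratio bound $1-\Phi(u)\ge \tfrac{u}{1+u^2}\varphi(u)$ for $u>0$, cancelling $\varphi(s)$ in the numerator; (iii) $1/\Phi(s)\le 1/\Phi(1/2)$ since $s\ge 1/2$. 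Collecting these gives $\zeta(s)\le \dfrac{1+s^2}{s(\sigma-s)\,\Phi(1/2)}$; then $\sigma\ge 2s$ gives $\sigma-s\ge \sigma/2$ and $s\ge 1/2$ gives $\tfrac{1+s^2}{s}=\tfrac1s+s\le 5s$, so $\zeta(s)\le \dfrac{10}{\Phi(1/2)}\,\dfrac{s}{\sigma}$, i.e.~\eqref{eq:new:zeta:log-normal-case:bound} with the universal constant $C=10/\Phi(1/2)$.

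There is no serious obstacle: the computation is elementary. The only point needing a little care is Step~3 — one must pair the \emph{upper} tail bound on $1-\Phi$ already recorded in~\eqref{eq:asymp-BarPhi-bound} (applied at the argument $\sigma-s$) with a matching \emph{lower} Mills-ratio bound on $1-\Phi$ (applied at the argument $s$), and verify that both arguments stay bounded away from $0$ over the whole regime $\sigma\ge 2s\ge1$, so that the residual polynomial factor $\tfrac{1+s^2}{s}$ is $O(s)$ uniformly.
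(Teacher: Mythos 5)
Your proposal is correct and follows essentially the same route as the paper: the identity via a Gaussian completion-of-the-square (equivalent to the paper's likelihood-ratio argument), and the bound via the same pair of Mills-ratio inequalities $\tfrac{u}{1+u^2}\varphi(u)\le 1-\Phi(u)\le \varphi(u)/u$ applied at $s$ and $\sigma-s$. If anything, your Step~3 is slightly more careful than the paper's, which quotes the hypothesis $2s\ge 1$ but then bounds $1/\Phi(s)$ by $1/\Phi(1)$ as if $s\ge1$; your use of $\Phi(1/2)$ and the estimate $\tfrac{1+s^2}{s}\le 5s$ for $s\ge1/2$ handles the full stated range.
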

\begin{proof}
We have, for all $s\in\rset$  and $\sigma>0$, 
\begin{align*}
\PE\lr{\rme^{\sigma(Z-s)}\,\vert\, Z\leq s}
  &=\frac1{\Phi(s)}\,\PE\lr{\rme^{\sigma(Z-s)}\,\mathbbm{1}_{\lrcb{Z\leq s}}} \\
  &=\frac1{\Phi(s)}\;\PE\lr{\frac{\varphi(s)\,\varphi(Z-\sigma)}{\varphi(\sigma-s)\,\varphi(Z)}\mathbbm{1}_{\lrcb{Z\leq
    s}}} \\
&=\frac1{\Phi(s)}\,\frac{\varphi(s)}{\varphi(\sigma-s)}\;\PE\lr{\mathbbm{1}_{\lrcb{Z+\sigma\leq
              s}}},
\end{align*}
where, in the last line, we used that $Z+\sigma$ has density $u\mapsto\varphi(u-\sigma)$ hence the likelihood ratio $\varphi(Z-\sigma)/\varphi(Z)$ amounts to change $Z$ into $Z+\sigma$. We thus get~(\ref{eq:new:zeta:log-normal-case:exact}). Now, defining $\zeta$ by~(\ref{eq:new:def-zeta}) with
$F^{\leftarrow}=\mu+\sigma \Phi^{-1}$, we get, for all $s>0$ and $\sigma>0$,
$$
\zeta(s)=\frac{1}{\Phi(s)} \frac{m(\sigma-s)}{m(s)},
$$
where the function $m$ corresponds to the Mills ratio $m(u) = \frac{1-\Phi(u)}{\varphi(u)}$ with $u > 0$. Since
\begin{align*}
  \forall u>0, \quad  \frac{u}{u^2+1} < m(u) < \frac{1}{u},
\end{align*}
\citep[see][]{gordon/aoms/1177731721} we finally deduce that: for all $\sigma> 2s \geq 1$,
\begin{align*}
  \zeta(s) & \leq \frac{1}{\Phi(1)} \frac{s^2 + 1}{s} \frac{1}{\sigma-s} \leq C ~\frac{s}{\sigma},
\end{align*}
where $C = {6}{\Phi(1)}^{-1}$ and where we have used that $s \geq 1$ and $s \leq {\sigma}/{2}$.
\end{proof}
\begin{lem}
  \label{lem:gaussian-collapse-gathered-elementary-statements} Let
  $\xi_1,\xi_2,\dots$ be i.i.d. standard normal random variables
  and set, for all $N=1,2,\dots$, and $\beta\in\mathbb{R}$, 
  \begin{equation}
    \label{eq:weights-def-with-Z}
    \sumW :=  \lr{\sum_{k=1}^{N} \rme^{\beta\,\xi_k}}^{-1}\,\rme^{\beta\,\xi_j}\;,\qquad 1\leq j\leq N\;.   
  \end{equation}
Let $\delta\geq1$, $\lambda \in [0,1]$ and $q\geq1$. Then we have, as $N, \beta \to \infty$ with $\sqrt{\log
N}=o(\beta)$, 
\begin{align}
  \label{eq:collapse-phen-gaussian-case} 
&\PE\lr{\sum_{j=1}^N
    (\sumW)^\delta \; \xi_j} =  \lr{2\,\log N}^{1/2} \lr{1+ o(1)},   \\  
&  \label{eq:thm:gaussian-high-dim-gradient-snr:conclude1}
\mathbb{V}\lr{\sum_{j=1}^N [\lambda \sumW + (1-\lambda) (\sumW)^2]\,\xi_j} = o \lr{ \log N},   \\
\label{eq:thm:gaussian-high-dim-gradient-snr:conclude2}
& \PE\lr{\sum_{j=1}^N
    \lr{\sumW}^\delta} =  1+ o(1).
\end{align}  
\end{lem}
\begin{proof}
  We will first show that, for any $m\in[1,\infty)$,   as $N, \beta \to \infty$ with $\sqrt{\log
N}=o(\beta)$, 
  \begin{align} \label{eq:Prop3ToUseGauss1}
&  \lrn{\sum_{i=1}^N
  (\sumW[i])^{\,\delta}\,\xi_{i}
  - \max(\xi_1,\dots,\xi_N)}_m=  o\lr{\sqrt{\log N}} \;,\\
  \label{eq:Prop3ToUseGauss2}
&  \lrn{\sum_{i=1}^N
  (\sumW[i])^{\,\delta}
  - 1}_m=  O\lr{\lr{\frac{\sqrt{\log N}}{\beta}}^{\frac1m}+N^{-\frac qm}} \;.
\end{align}    
Then we will show that~(\ref{eq:Prop3ToUseGauss1}) implies
 \eqref{eq:collapse-phen-gaussian-case}-\eqref{eq:thm:gaussian-high-dim-gradient-snr:conclude1}, and
  that~(\ref{eq:Prop3ToUseGauss2}) implies
  \eqref{eq:thm:gaussian-high-dim-gradient-snr:conclude2}.
\begin{enumerateList}
  \item \textit{Proof of~(\ref{eq:Prop3ToUseGauss1})
    and~(\ref{eq:Prop3ToUseGauss2})}. We use
  \Cref{prop:new:collapse-behavior-general} successively in the two settings
  \begin{enumerate}[label=Setting~\arabic*),wide=0pt,labelindent=\parindent]
  \item\label{item:setting1}   for all
  $j = 1 \ldots N$,  $W_j = \rme^{\beta\,\xi_j}$ and
  $\tilde{Y}_j = \xi_j$;
\item\label{item:setting2} for all
  $j = 1 \ldots N$,  $W_j = \rme^{\beta\,\xi_j}$ and $\tilde{Y}_j = 1$.
  \end{enumerate}
In both settings, we have $F^{\leftarrow}(u)=\beta
  \, \Phi^{-1}(u)$. Then
  \Cref{prop:new:collapse-behavior-general} gives us that there exists a
  non-decreasing random sequence $\lr{I_N}_{N\geq1}$ such that, for
  all $N\geq1$, $I_N$ is valued in $\lrcb{1,\dots,N}$ with
  $W_{I_N}=\max\lr{W_{1},\dots,W_{N}}$ and, for all
  $m,\delta\in[1,\infty)$, $\underline{b}< 2< \overline{b}$,
  $p\in(1,\infty)$, $u>0$ and $N\in\mathbb{N}^*$, we have
  in~\ref{item:setting1} and~\ref{item:setting2},
    \begin{align*}
      \lrn{\sum_{i=1}^N
      (\sumW[i])^{\,\delta}\,\tilde{\Y}_{i}
      - \tilde{\Y}_{I_N}}_m
  &\leq
      C\,\lr{u\,\sup_{\stackrel{b\in\lrb{\underline{b},\overline{b}}}{q=1,m}}\lr{\zeta(b\,\sqrt{\log N})}^{\frac
                              qm}
        +\tilde{\zeta}_{N}^{(m,p)}\lr{u,\lr{\log
                        N}^{-\frac12}\,N^{1-\frac{\overline{b}}2}}}\;,
    \end{align*}
    where $\zeta$ is defined in \eqref{eq:new:def-zeta}, $\tilde{\zeta}_{N}^{(m,p)}$ is defined in \eqref{eq:new:def-zeta-tilde} and $C>0$ is a constant only depending on $p$, $\underline{b}$,
    $\overline{b}$, $\delta$ and $m$. By
    \Cref{lem:new:exp-moment-condditional-gaussian}, using
    $F^{\leftarrow}$ as above, we have that there exists a universal
    constant  $C_0>0$ such that, for all $b>0$, 
    \begin{align*}
\beta \geq 2 b
    \sqrt{\log N} \geq 1\Longrightarrow      \zeta(b\,\sqrt{\log N}) \leq C_0 ~ \frac{b\,\sqrt{\log N}}{\beta}\;.
    \end{align*}
Note that if $b\leq\overline{b}$ and  $\beta \geq 2\,(\overline{b}\vee C_0)
    \sqrt{\log N} \geq 1$, then the condition on the left-hand side is
    satified and the upper bound in the right-hand side is at most 1.     
    Using this in the previous display for the special case $p =
    2$, we get that for all
  $m,\delta\in[1,\infty)$ and  $\overline{b}>2$, there exists
  $C,C_0>0$ such that, for all $u>0$, if $\beta \geq 2\,(\overline{b}\vee C_0)
    \sqrt{\log N} \geq 1$,  
\begin{align} \label{eq:Prop3ToUseGauss}
  \lrn{\sum_{i=1}^N
  (\sumW[i])^{\,\delta}\,\tilde{\Y}_{i}
  - \tilde{\Y}_{I_N}}_m
&\leq
  C\,\lr{u  \lr{\frac{\overline{b}\, C_0 \sqrt{\log N}}{\beta}}^{\frac1m} +
                          \tilde{\zeta}_{N}^{(m,2)}\lr{u,\lr{\log
                          N}^{-\frac12}\,N^{1-\frac{\overline{b}}2}}}\;.
\end{align}
To prove \eqref{eq:Prop3ToUseGauss1} and \eqref{eq:Prop3ToUseGauss2} we now consider \ref{item:setting1} and~\ref{item:setting2} separately
by taking  $\tilde{Y}_j = \xi_j$ and $\tilde{Y}_j = 1$ respectively in $\tilde{\zeta}_{N}^{(m,2)}\lr{u,\lr{\log
    N}^{-\frac12}\,N^{-1-\frac{\overline{b}}2}}$ and choosing
adequate values
of $\overline{b}$ and $u$ in the upper bound above. Namely, we will show that
\begin{itemize}
  \item If  $\tilde{Y}_j = \xi_j$ for all $j\geq1$, then: as $N \to \infty$,
  \begin{align}
    \label{eq:setting1tildezeta}
  \tilde{\zeta}_{N}^{(m,2)}\lr{\sqrt{10\log N},\lr{\log
      N}^{-\frac12}\,N^{-2}} = o \lr{\log N}\;.
  \end{align}
  \item If  $\tilde{Y}_j = 1$ for all $j\geq1$, then: for all $N\geq2$,
  \begin{align}
    \label{eq:setting2tildezeta}
    \tilde{\zeta}_{N}^{(m,2)}\lr{2,\lr{\log
      N}^{1-\frac{\overline{b}}2}}= (\log N)^{-\frac14}\,N^{\frac32-\frac{\overline{b}}4}\;.
  \end{align}
\end{itemize}
Observe then that (\ref{eq:setting2tildezeta}) follows directly from plugging in $(u,s) = \lr{2,\lr{\log
      N}^{1-\frac{\overline{b}}2}}$ in~(\ref{eq:new:def-zeta-tilde}). As for \eqref{eq:setting1tildezeta}: setting $\tilde{Y}_j = \xi_j$ for all $j\geq1$ and plugging in $(u,s) = (\sqrt{10\log N},\lr{\log
      N}^{-\frac12}\,N^{-2})$ in (\ref{eq:new:def-zeta-tilde}) yields that for all
$N\geq2$,
\begin{align*}
\tilde{\zeta}_{N}^{(m,2)}\lr{\sqrt{10\log N},\lr{\log
    N}^{-\frac12}\,N^{-2}} & = \lrn{\xi_{1}}_{p\,m}^{1/m}(\log
                             N)^{-\frac{1}{4}} +N
                             \lrn{\xi_{1}\,\mathbbm{1}_{\lrcb{\xi_N^*>\sqrt{10
                             \log N}}}}_m \;,
\end{align*}
where
$\xi_N^*=\max\lr{\lrav{\xi_1},\dots,\lrav{\xi_N}}$. Since
$$\lrn{\xi_{1}\,\mathbbm{1}_{\lrcb{\xi_N^*>\sqrt{10 \log
        N}}}}_m\leq\lrn{\xi_{1}}_{2m}
\lrn{\mathbbm{1}_{\lrcb{\xi_N^*>\sqrt{2(1+c) \log
        N}}}}_{2m},$$ the desired result (\ref{eq:setting1tildezeta}) will follow if we can show that: as $N\to\infty$,
$$
\PP\lr{\xi_N^*>\sqrt{10 \log N}}= o\lr{N^{m/2}}\;.
$$
Writing that $\xi_N^*=\max(M_N,M'_N)$ where we have set
$M_N=\max\lr{\xi_1,\dots,\xi_N}$ and $M'_N=\max\lr{-\xi_1,\dots,-\xi_N}$, it holds that
$\PP\lr{\xi_N^*>u}\leq\PP(M_N>u)+\PP(M'_N>u)=2\PP(M_N>u)$ for any $u>0$, hence the last
display follows from~(\ref{eq:MaxProbaBoundLarge}) in
\Cref{lem:MaxProbaBounds} with $c=4$.

Now, for any given $\overline{b}>2$, if $N, \beta \to \infty$ with
$\sqrt{\log N}=o(\beta)$, we eventually have
$\beta \geq 2\,(\overline{b}\vee C_0) \sqrt{\log N} \geq 1$ so
that~(\ref{eq:Prop3ToUseGauss}) eventually applies.  If
$\tilde{Y}_j = \xi_j$ for all $j$, we have
$\tilde{\Y}_{I_N}=\max(\xi_1,\dots,\xi_N)$, and the
bound~(\ref{eq:Prop3ToUseGauss}) with $\overline{b}=6$ among
with~(\ref{eq:setting1tildezeta}), gives us  that, as  $N, \beta \to \infty$ with $\sqrt{\log
N}=o(\beta)$, (\ref{eq:Prop3ToUseGauss1}) hold.
If now
$\tilde{Y}_j = 1$ for all $j$, we have 
  $\tilde{\Y}_{I_N}=1$ and  the
bound~(\ref{eq:Prop3ToUseGauss}) with $\overline{b}$ large enough
among with~(\ref{eq:setting2tildezeta}), gives us that~(\ref{eq:Prop3ToUseGauss2}) holds.
\item \textit{Proofs of~(\ref{eq:collapse-phen-gaussian-case})
    and~(\ref{eq:thm:gaussian-high-dim-gradient-snr:conclude1})}. Since
  the expectation is 1-Lipschitz for the $L^1$-norm, the
  asymptotic behavior~(\ref{eq:Prop3ToUseGauss1}) with
  $m=1$ implies that, for any $\delta\geq1$, 
  $$
  \PE\lr{\sum_{i=1}^N
  (\sumW[i])^{\,\delta}\,\xi_{i}} = \PE\lr{\max(\xi_1,\dots,\xi_N)}_m +  o\lr{\sqrt{\log N}}\;.
$$
Applying \Cref{lem:MaxMoment}  with $m=1$ the first term in the
right-hand side is asymptotically equivalent to $\sqrt{2\log N}$ and
we get~(\ref{eq:collapse-phen-gaussian-case}). The cases $\delta=1,2$
in particular together give that, for any $\lambda\in[0,1]$,
\begin{equation}
  \label{eq:firstMomentsumWwithLambda}
\PE\lr{\sum_{j=1}^N [\lambda \sumW + (1-\lambda)
  (\sumW)^2]\,\xi_j} = \lr{2\,\log N}^{1/2} \lr{1+ o(1)}\;.  
\end{equation}
By the Minkowski inequality for the $L^2$-norm, we obtain that
\begin{align*}
  &\lrav{\lrn{\sum_{j=1}^N [\lambda \sumW + (1-\lambda)
  (\sumW)^2]\,\xi_j}_2-\lrn{\max(\xi_1,\dots,\xi_N)}_2}\\
& \leq\lrn{\sum_{j=1}^N [\lambda \sumW + (1-\lambda)
                                                            (\sumW)^2]\,\xi_j-\max(\xi_1,\dots,\xi_N)}_2\\
& \leq\lambda\lrn{\sum_{j=1}^N  \sumW\,\xi_j-\max(\xi_1,\dots,\xi_N)}_2 + (1-\lambda)\lrn{\sum_{j=1}^N(\sumW)^2\,\xi_j-\max(\xi_1,\dots,\xi_N)}_2.
\end{align*}
Hence, using~(\ref{eq:Prop3ToUseGauss1})
again, this time with $m=2$, we deduce 
\begin{align*}
  &\lrav{\lrn{\sum_{j=1}^N [\lambda \sumW + (1-\lambda)
  (\sumW)^2]\,\xi_j}_2-\lrn{\max(\xi_1,\dots,\xi_N)}_2} =o\lr{\sqrt{\log N}},
\end{align*}
which we can rewrite equivalently as 
$$
\lrn{\sum_{j=1}^N [\lambda \sumW + (1-\lambda)
  (\sumW)^2]\,\xi_j}_2=\lrn{\max(\xi_1,\dots,\xi_N)}_2+o\lr{\sqrt{\log N}}.
$$
Using \Cref{lem:MaxMoment} with $m=2$ to evaluate the first term in
the right-hand side and taking the sqaure, we get that
$$
\PE\lr{\lr{\sum_{j=1}^N [\lambda \sumW + (1-\lambda)
  (\sumW)^2]\,\xi_j}^2}=2\,\log N\,(1+o(1))\;.
$$
This asymptotic behavior with the square
of~(\ref{eq:firstMomentsumWwithLambda}) yields~(\ref{eq:thm:gaussian-high-dim-gradient-snr:conclude1}).

\item \textit{Proof of~(\ref{eq:thm:gaussian-high-dim-gradient-snr:conclude2})}. The
  proof of~(\ref{eq:thm:gaussian-high-dim-gradient-snr:conclude2}) is obtained similarly to the proofs of~(\ref{eq:collapse-phen-gaussian-case})
  and~(\ref{eq:thm:gaussian-high-dim-gradient-snr:conclude1}), but starting from~(\ref{eq:Prop3ToUseGauss2}) instead of (\ref{eq:Prop3ToUseGauss1}).
\end{enumerateList}
\end{proof}
We can now prove \Cref{thm:CollapseSNRGaussian} and \Cref{thm:CollapseSNRGaussian-contd}. 

 \subsubsection{Proof of \Cref{thm:CollapseSNRGaussian}}
\label{app:thm:gaussian-high-dim-gradient-snr}

The proof is made of two steps. Firstly we show that: as $N,B(\theta, \phi;x)\to\infty$ with~(\ref{eq:gaussian-high-dim-growing-cond}) holding, 
   \begin{align}
   \label{eq:gaussian-high-dim-gradient-equiv}
   \PE \lr{\gradREP[1,N]} &=\partial_{\psi}\PE(\repell(\varepsilon;x)) + \sqrt{2\log N} \;\partial_{\psi}
     B(\theta, \phi;x)\; \lr{1+o(1)},\\
\label{eq:lastBeforeNDvarLB}   \mathbb{V}\lr{\gradREP[1,N]} & \geq \lr{B(\theta, \phi;x)\tBREP}^2\lr{1+o(1)}.
   \end{align}
Secondly, we deduce the claim of the proposition.
\begin{enumerateList}
\item \textbf{Proof of~(\ref{eq:gaussian-high-dim-gradient-equiv}) and~(\ref{eq:lastBeforeNDvarLB}).} Recall that
  $\varepsilon_1,\varepsilon_2,\dots$ are i.i.d. copies of
  $\varepsilon$. Since $\Wrepell_{1,N}, \ldots, \Wrepell_{N,N}$ as defined in~(\ref{eq:barWjN-def}) can be expressed as functions of $\repell(\varepsilon_k;x)$ for all $k=1 \dots N$, by Property~\ref{item:propREPest-gaussian} they are independent of the centered random variables $\partial_{\psi}\repellstandard(\varepsilon_1;x), \ldots, \partial_{\psi}\repellstandard(\varepsilon_N;x)$. Furthermore,   Property~\ref{item:propREPest-gaussian} also gives us that $\repellstandard(\varepsilon_1;x), \ldots, \repellstandard(\varepsilon_N;x)$ are i.i.d. $\mathcal{N}(0,1)$. Next noticing that $\Wrepell_{1,N}, \ldots, \Wrepell_{N,N}$ can be rewritten using $\repellstandard(\varepsilon_k;x)$ for all $k=1 \dots N$, we can apply \Cref{lem:gaussian-collapse-gathered-elementary-statements} with $\xi_k=\repellstandard(\varepsilon_k;x)$, and
  $\beta=(1-\alpha)B(\theta, \phi;x)$ to prove \eqref{eq:gaussian-high-dim-gradient-equiv} and \eqref{eq:lastBeforeNDvar}. More precisely, by taking the expectation in (\ref{eq:gradrepN-nice}) we get that
  $$
  \PE\lr{\gradREP[1,N]} = \partial_{\psi}\PE(\repell(\varepsilon;x)) + \partial_{\psi} B(\theta,
  \phi;x) \;\PE\lr{\sum_{j=1}^N
    \Wrepell_{j,N}\; \repellstandard(\varepsilon_j;x) }
  $$
  and, as $N,B(\theta, \phi;x)\to\infty$ with~(\ref{eq:gaussian-high-dim-growing-cond})
  holding, (\ref{eq:collapse-phen-gaussian-case}) with $\delta = 1$ gives us that (\ref{eq:gaussian-high-dim-gradient-equiv}) holds. Similarly, by taking the variance in (\ref{eq:gradrepN-nice}) we have that
\begin{multline}
  \mathbb{V}\lr{\gradREP[1,N]}= \lr{\partial_{\psi} B(\theta,
    \phi;x)}^2 \, \mathbb{V}\lr{\sum_{j=1}^N
    \Wrepell_{j,N}\; \repellstandard(\varepsilon_j;x)} \\
  + \lr{B(\theta, \phi;x)}^2\,\mathbb{V}\lr{\sum_{j=1}^N
    \Wrepell_{j,N}\;\partial_{\psi}\repellstandard(\varepsilon_j;x)}. \label{eq:varToreuse}
\end{multline}
Thus,
\begin{align*}
  \mathbb{V}\lr{\gradREP[1,N]} \geq \lr{B(\theta, \phi;x)}^2\,\mathbb{V}\lr{\sum_{j=1}^N
    \Wrepell_{j,N}\;\partial_{\psi}\repellstandard(\varepsilon_j;x)}
\end{align*}
and to get~(\ref{eq:lastBeforeNDvarLB}) it only remains to
show that the second variance in the right-hand side above is $(B(\theta, \phi;x)\tBREP)^2 \lr{1+o(1)}$. Using Property~\ref{item:propREPest-gaussian} again,
\begin{align}
  \mathbb{V}\lr{\sum_{j=1}^N 
                     \Wrepell_{j,N}\;\partial_{\psi}\repellstandard(\varepsilon_j;x)}&=\PE\lr{\lr{\sum_{j=1}^N 
                                                                                       \Wrepell_{j,N}\;\partial_{\psi}\repellstandard(\varepsilon_j;x)}^2} \nonumber \\
  & =\PE\lr{\sum_{j=1}^N 
    \lr{\Wrepell_{j,N}}^2\;\PE\lr{\lr{\partial_{\psi}\repellstandard(\varepsilon_j;x)}^2}} \nonumber \\
  & =\tBREP^2\, \PE\lr{\sum_{j=1}^N 
    \lr{\Wrepell_{j,N}}^2} \nonumber \\
    & = (B(\theta, \phi;x)\tBREP)^2 \lr{1+o(1)} \label{eq:secondVartoReuse}
\end{align}
where in the last line we have used~(\ref{eq:thm:gaussian-high-dim-gradient-snr:conclude2}) with $\delta=2$.

\item We can now prove the claim of the proposition. By
  Property~\ref{item:propREPest-gaussian}, the stochastic terms in the
  second line of~(\ref{eq:gradREP11}) have mean zero and
  by~(\ref{eq:def-useful-snr-rep})
  and~(\ref{eq:gaussian-high-dim-gradient-equiv}), we have
\begin{equation}
  \label{eq:usnrep-withvar}
\USNRREP= \frac{\sqrt{2\log N} \;\partial_{\psi}
     B(\theta, \phi;x)\; \lr{1+o(1)}}{\sqrt{\mathbb{V}\lr{\gradREP[1,N]}}} \;.
\end{equation}
Applying~(\ref{eq:lastBeforeNDvarLB}), we get that
$$
\USNRREP\leq \frac{\sqrt{2\log N} \;\lrav{\partial_{\psi}
     B(\theta, \phi;x)}\; \lr{1+o(1)}}{B(\theta, \phi;x)\tBREP\lr{1+o(1)}}\;.
$$
Now, by~(\ref{eq:corr-parameter-rep_Bd-tBd}), we have
\begin{align} \label{eq:corrToreuse}
\frac{\lrav{\partial_{\psi}
     B(\theta, \phi;x)}}{B(\theta, \phi;x)\tBREP}=
\lr{\lr{\lrav{\corrREP}}^{-2}-1}^{-1/2}
\end{align}
which is $O\lr{\lrav{\corrREP}}$ under (\ref{eq:SNRcondCollapse-corr}). Hence we get~(\ref{eq:SNRcollapseGaussian-new-UB}).
\end{enumerateList}

\subsubsection{Proof of \Cref{thm:CollapseSNRGaussian-contd}}
\label{app:thm:CollapseSNRGaussian-contd} Since \eqref{eq:SNRcondCollapse-corr} is implied by \eqref{eq:SNRcondCollapse-corr-stronger}, we can build on the equations established in the proof of \Cref{thm:CollapseSNRGaussian} such as \eqref{eq:varToreuse}. We then have that the first variance in the right-hand side of \eqref{eq:varToreuse} is $o\lr{\log N }$ as
$N,B(\theta, \phi;x)\to\infty$
with~(\ref{eq:gaussian-high-dim-growing-cond}) holding thanks to (\ref{eq:thm:gaussian-high-dim-gradient-snr:conclude1}) with $\lambda=1$ in
\Cref{lem:gaussian-collapse-gathered-elementary-statements}. Pairing this up with \eqref{eq:secondVartoReuse} leads to 
   \begin{align}   
  \label{eq:lastBeforeNDvar} \mathbb{V}\lr{\gradREP[1,N]} & = \lr{\partial_{\psi}
                                    B(\theta, \phi;x)}^2\; o\lr{\log N } \\ 
  & \quad + \lr{B(\theta, \phi;x)\tBREP}^2 \lr{1+o(1)} \nonumber.   
  \end{align} 
In addition, the condition \eqref{eq:SNRcondCollapse-corr-stronger} paired up with \eqref{eq:corrToreuse} yields
$$
\frac{\lrav{\partial_{\psi}
     B(\theta, \phi;x)}}{B(\theta, \phi;x)\tBREP}=
|\corrREP|\;\lr{1+o(1)}=O\lr{\lr{\log N}^{-1/2}} \;.
$$
so that~(\ref{eq:lastBeforeNDvar}) becomes
\begin{align}
  \mathbb{V}\lr{\gradREP[1,N]} & =  \lr{B(\theta, \phi;x)\tBREP}^2 \lr{1+o(1)} \nonumber \\
  &  = \mathbb{V}(\gradREP[1,1])\lr{1+o(1)} \label{eq:SNRcollapseGaussianVarEquivalence-new}
\end{align}
Combining this with \eqref{eq:var-Neq1-rep} and \eqref{eq:usnrep-withvar} leads to \eqref{eq:SNRcollapseGaussian-new}. Furthermore, \eqref{eq:rewritingVar11}, \eqref{eq:corr-parameter-rep_Bd-tBd} and \eqref{eq:gaussian-high-dim-gradient-equiv} yield 
\begin{multline}
\PE \lr{\gradREP[1,N]} = \PE(\gradREP[1,1]) \\
  +  \sqrt{\mathbb{V}(\gradREP[1,1])} \sqrt{2\log N}\; \corrREP \lr{1+o(1)}. \label{eq:SBRcollapseExpequiv}
\end{multline}
If we further assume the stronger condition \eqref{eq:SNRcondCollapse-corr-even-stronger}, \eqref{eq:SNR-Equiv-as-old-times} immediately follows from the two above displays paired up with \eqref{eq:var-Neq1-rep} and \eqref{eq:gaussian-high-dim-gradient-equiv}.

\begin{rem} \label{rem:practicalHighDim}
As $N, \KL \to \infty$ under \eqref{eq:gaussian-high-dim-growing-cond} and \eqref{eq:SNRcondCollapse-corr-stronger}, no improvement in variance is obtained by increasing the sample size $N$ beyond the case $N =1$ in $\gradREP[1,N]$ since \eqref{eq:SNRcollapseGaussianVarEquivalence-new} shows that $\mathbb{V}(\gradREP[1,N])$ reverts to $\mathbb{V}(\gradREP[1,1])$. The case of \eqref{eq:SBRcollapseExpequiv} is more subtle: using \eqref{eq:rewritingCorrBis} in \Cref{subsec:addNotation}, \eqref{eq:SBRcollapseExpequiv} reads equivalently \looseness=-1
\begin{multline} \label{eq:decompGrad}
   \PE(\gradREP[1,N]) = \PE(\gradREP[1,1]) \\ + \sqrt{\frac{\log N}{\KL}}\; \lrb{\partial_\psi \ell(\theta;x) - \PE(\gradREP[1,1])}\lr{1+o(1)}.
\end{multline}
Then, $\gradREP[1,N]$ points in the direction of
$ \PE(\gradREP[1,1])$ on average when $\psi \in \{ \phi_1, \ldots, \phi_b \}$, since $\partial_\psi \ell(\theta;x) = 0$ and
\begin{align*}
   \PE(\gradREP[1,N]) = \PE(\gradREP[1,1])\lr{1 - \sqrt{\frac{\log N}{\KL}} +o(1)}.
\end{align*}
There is no thus improvement in terms of gradient direction for learning $\phi$ that follows from taking $N > 1$ instead of $N = 1$. When $\psi \in \{\theta_1, \ldots, \theta_a \}$, the answer depends on which term dominates in \eqref{eq:decompGrad}. Indeed, $\gradREP[1,N]$ points on average (i) in the direction of $\PE(\gradREP[1,1])$ should the first one dominate and (ii) in the direction of a convex combination of $\PE(\gradREP[1,1])$ and $\partial_\psi \ell(\theta; \phi)$ should both terms be of the same order. Note that the case where the second term dominates amounts to assuming that $\mathrm{SNR}[\gradREP[1,1]] = o(1)$ under \eqref{eq:SNRcondCollapse-corr-stronger} so we leave it out.
\end{rem}

\subsection{General Gaussian model}
\label{subsec:genGaussModel}
Consider a general Gaussian model with known positive definite covariance matrix $\Sigma(x)$ obtained by setting $p_\theta(z|x)=\mathcal{N}(z;\mu(\theta;x),\Sigma(x))$ $q_\phi(z|x)=\mathcal{N}(z;\tilde{\mu}(\phi;x),\Sigma(x))$, where $\mu$ and $\tilde{\mu}$ are the mean vectors parameterized by the true and variational parameters respectively and where we use the reparametrization $f(\varepsilon, \phi;x)={\Sigma(x)}^{1/2}\,\varepsilon+\tilde{\mu}(\phi;x)$ with $\varepsilon\sim \mathcal{N}(0,\mathbf{I}_d)$. Then, \ref{hyp:reparamHighDimGaussian} holds and we have
\begin{align*}
  \log \tw & = \ell(\theta; x) +\frac12\langle
             \lr{\tilde{\mu}(\phi;x)-\tilde\mu(\phi';x)}\,,\,\Sigma(x)^{-1/2}\lr{\tilde{\mu}(\phi;x)-\tilde\mu(\phi';x)}
             \rangle\\
    &\quad -\frac12\langle
      \lr{\tilde{\mu}(\phi';x)-\mu(\theta;x)}\,,\,\Sigma(x)^{-1}\lr{\tilde{\mu}(\phi';x)-\mu(\theta;x)}
  \rangle\\  
    &\quad + \langle \varepsilon\,,\,\Sigma(x)^{-1/2}\lr{\tilde{\mu}(\phi;x)-{\mu}(\theta;x)}\rangle.
\end{align*}
Observe then that the stochastic part above does not depend on $\phi'$ hence $\mathbb{V}(\gradDREP[1,1])$ $ = 0$ when $\psi \in \{ \phi_1, \ldots, \phi_n \}$. We also obtain that
\begin{align*}
\KL= \frac12\lrn{\Delta(\theta,\phi;x)}^2\; \quad  \mbox{and} \quad \corrREP=\frac{\langle\partial_\psi\Delta(\theta,\phi;x)\,,\,\Delta(\theta,\phi;x)\rangle}{\lrn{\Delta(\theta,\phi;x)}\,\lrn{\partial_\psi\Delta(\theta,\phi;x)}}
\end{align*}
with $\Delta(\theta,\phi;x):=\Sigma(x)^{-1/2}\lr{\tilde{\mu}(\phi;x)-\mu(\theta;x)}$.

\subsection{Proof of
  \Cref{ex:GaussianHighDim}} \label{app:ex:GaussianHighDim} This
example is a specific case of the general one in
\Cref{subsec:genGaussModel} with $\Sigma(x)=\mathbf{I}_d$,
$\mu(\theta;x)=\theta$ and $\tilde{\mu}(\phi;x)=\phi$. Thus we get
from \Cref{subsec:genGaussModel} that \ref{hyp:reparamHighDimGaussian}
holds, $\Delta(\theta,\phi;x)=\phi-\theta$, wich implies
$\|\partial_\psi(\phi-\theta)\|=1$ and
$\langle\partial_\psi(\phi-\theta)\,,\,(\phi-\theta)\rangle=\psi$ for
any component $\psi$ among
$\{\theta_1, \ldots, \theta_a,\phi_1,\dots,\phi_b\}$, and:
\begin{align}
  \label{eq:normWrewrittenGauss-drep}
  & \log \tw = \ell(\theta;x) -\frac{\|\theta-\phi\|^2}{2}
    -\langle \varepsilon + \phi' -
    \phi, \phi- \theta \rangle\;,\\
  \label{eq:normWrewrittenGauss}
  &\log \tw[\phi][\varepsilon][\phi] = \ell(\theta;x) 
  -\frac{\|\theta-\phi\|^2}{2} -\langle \varepsilon, \phi- \theta \rangle
    \;,\\
  \label{eq:normWrewrittenGauss-kl}
  &\KL= \frac12\,\|\theta-\phi\|^2\;,\\
  \label{eq:normWrewrittenGauss-corr}
  &\corrREP=\frac{\psi}{\|\phi-\theta\|}\;.
\end{align}
We easily get from  \eqref{eq:normWrewrittenGauss} that
\ref{hyp:hypZeroREPhighDim} also holds. Furthermore in the setting $\theta -\phi= \epsilon \cdot
\boldsymbol{u}_d$, we get that
$$
\KL=\frac12\epsilon^2d\quad\text{and}\quad \corrREP=d^{-1/2}\;.
$$
Therefore, in this specific case,~(\ref{eq:gaussian-high-dim-growing-cond}) is equivalent
to~(\ref{eq:Nexpd}),~(\ref{eq:SNRcondCollapse-corr}) to eventually have
$d\geq2$ and~(\ref{eq:SNRcondCollapse-corr-stronger})
and~(\ref{eq:SNRcondCollapse-corr-even-stronger}) to  $\log N=O(d)$
and $\log N\ll d$, respectively. 

Applying \Cref{thm:CollapseSNRGaussian-contd}, \eqref{eq:Nexpd}  then implies \eqref{eq:SNR-Equiv-as-old-times}
which takes the form~(\ref{eq:Nexpd-thm-applied}) since~(\ref{eq:normWrewrittenGauss}) straightforwardly leads to
$$
\mathrm{SNR}[\gradREP[1,1][\phi_k]] = \epsilon\quad\text{for $k=1,\dots,a$.}
$$
Furthermore, as far as the DREP estimator is concerned, 
we get from~(\ref{eq:normWrewrittenGauss-drep}) that 
\begin{align*}
  \lrder{\partial_{\phi'_k} \log \tw}{\phi'=\phi} - 0 = 0 + \phi_k- \theta_k  \;,
\end{align*}
thus $\mathbb{V}(\gradDREP[1,1]) = 0$.

\section{Additional numerical experiments}

\subsection{Gaussian experiment from \Cref{num:GaussEx}}
\label{app:num:GaussEx}

\Cref{ex:GaussianHighDim} predicts that the REP gradient estimator with $N>1$ reverts to the case $N=1$ as $d$ increases, and notably that $\PE(\gradREP[1,N][\phi_k])$ reverts to $\PE(\gradREP[1,1][\phi_k])$. Similarly to the REP case detailed in \Cref{num:GaussEx}, this behavior can be observed in the DREP case by setting a high value for $\epsilon$, see Figure \ref{fig:GaussianExampleDREP}. \looseness=-1 
 \begin{figure}[t]
\begin{tabular}{ccc} 
   \includegraphics[scale=0.28]{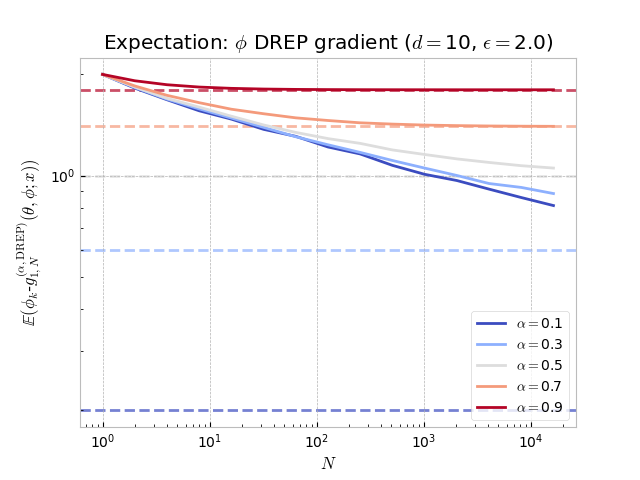} & \includegraphics[scale=0.28]{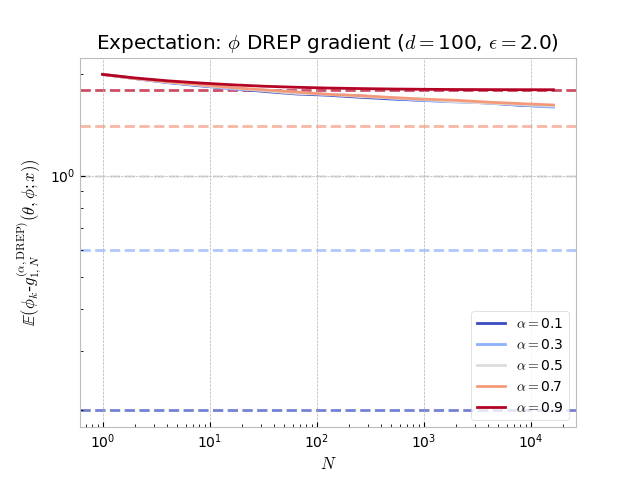} & \includegraphics[scale=0.28]{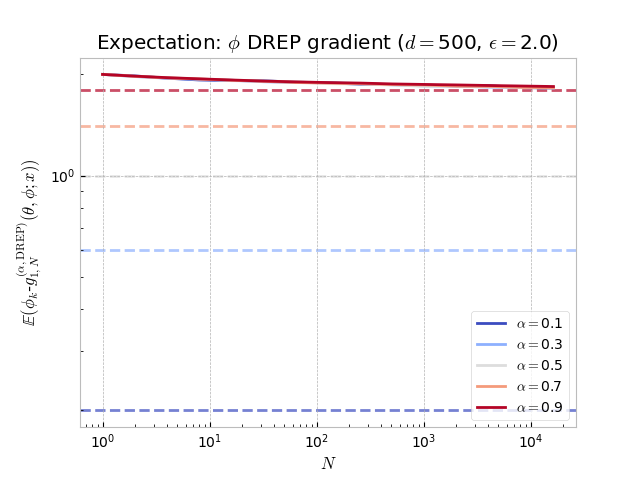} 
  \end{tabular}
    \caption{Plotted is $\PE(\gradDREP[1,N][\phi_k])$ computed over 2000 Monte Carlo samples for the Gaussian example described
    in \Cref{num:GaussEx} as a function of $N$, for varying values of $(\alpha,d)$, $\epsilon = 2$ and a random coordinate $\phi_k$. The solid lines correspond to $\PE(\gradREP[1,N][\phi_k])$, the dashed lines correspond to predictions of the form $y = \epsilon \alpha$. \label{fig:GaussianExampleDREP}}
  \end{figure}

The IWAE case unfolds in the same manner as $d$ increases. By \Cref{ex:Gaussian}: as $N \to \infty$, 
\begin{align}
& \mathrm{SNR}[\gradREP[1,N][\phi_k][0]] = \sqrt{\frac{1}{N}}
  \dfrac{\exp\lr{ \frac{d \epsilon^2}{2}}}{\sqrt{1+\epsilon^2}} (1+o(1)) \label{eq:predictSNRGaussExOne} \\ & \mathrm{SNR}[\gradDREP[1,N][\phi_k][0]] =  \frac{\sqrt {N}(1+o(1))}{\lr{{\exp\lr{4d \epsilon^2} - 4\exp\lr{2d \epsilon^2}+4\exp\lr{d \epsilon^2}-1}}^{\frac{1}{2}}} \label{eq:predictSNRGaussExTwo}
\end{align}
We let $d \in \{10, 100\}$, $\epsilon = 0.2$, $N \in \{ 2^j, j = 1 \ldots 15 \}$ and our results are plotted on Figure \ref{fig:GaussianExampleapp}. Similarly to the REP case with $\alpha \in (0,1)$ detailed in \Cref{num:GaussEx}, in the favourable setting of a low dimension $d = 10$, the behavior of the REP 
and DREP gradient estimators predicted by \eqref{eq:predictSNRGaussExOne} and \eqref{eq:predictSNRGaussExTwo} respectively 
match as $N$ increases. As expected, this is no longer true as $d$ increases.

\begin{figure}[ht!]
  \begin{tabular}{ccc} 
    \includegraphics[scale=0.28]{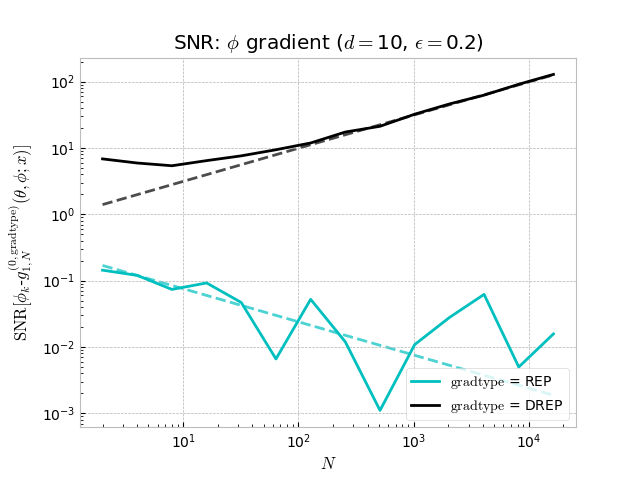} & \includegraphics[scale=0.28]{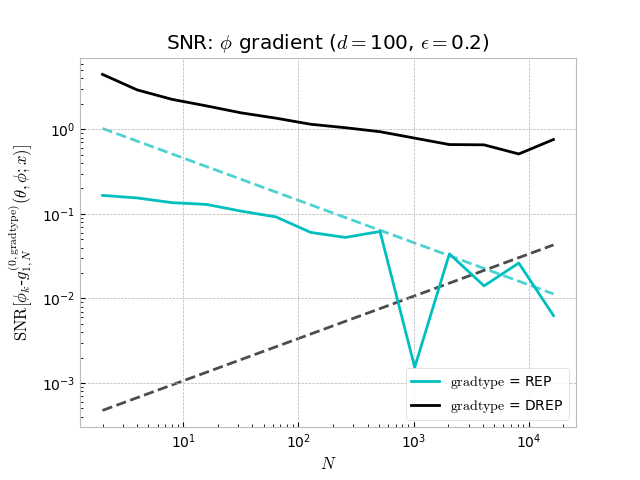} & \includegraphics[scale=0.28]{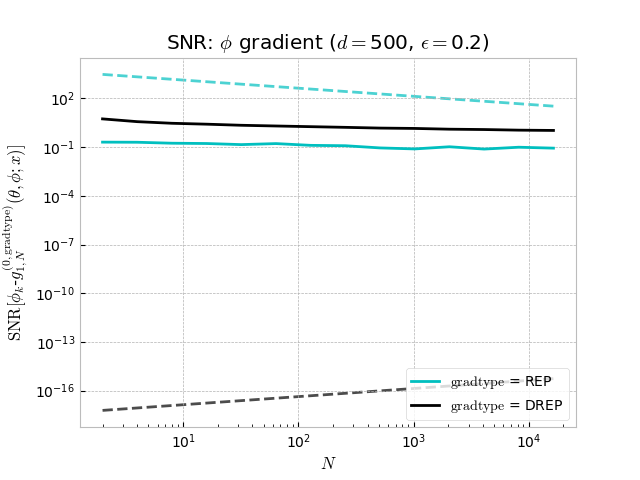}
  \end{tabular}
  \caption{Plotted are 
  $\mathrm{SNR}[\gradREP[1,N][\phi_k][0]]$ and $\mathrm{SNR}[\gradDREP[1,N][\phi_k][0]]$ computed over 2000 Monte Carlo samples for the Gaussian example described in \Cref{num:GaussEx} as a function of $N$ and with $\epsilon = 0.2$. The solid lines correspond to the SNRs, while the dashed lines correspond to predictions of the form \eqref{eq:predictSNRGaussExOne} and \eqref{eq:predictSNRGaussExTwo}.
   \label{fig:GaussianExampleapp}}
\end{figure}

\subsection{Linear Gaussian experiment from \Cref{num:LinGaussEx}}

\label{app:num:LinGaussEx}

We provide here additional experiments for the cases $\psi = \theta_k$ with $\alpha \in [0,1)$ and $\psi = \notationb_k$ with $\alpha = 0$. By \Cref{ex:LinGauss}: as $N \to \infty$, for all $\alpha \in [0,1)$,
\begin{align}
& \mathrm{SNR}[\gradREP[1,N][\theta_k]] = \sqrt{N}~\frac{\left|\frac{x_k-\theta_k}{2} + \frac{3\epsilon\alpha}{4-\alpha}\right| (1+o(1))}{\frac{(4-\alpha)^{d/2}}{(15-6\alpha)^{d/4}} \exp \lr{\frac{12(1-\alpha)^2}{(4-\alpha)(5-2\alpha)} d \epsilon^2} \sqrt{{\frac{2}{5-2\alpha} + \lr{\frac{12(1-\alpha) \epsilon}{(5-2\alpha)(4-\alpha)}}^2}}} \label{eq:predictSNRLinGaussExOne} \\
  & \mathrm{SNR}[\gradDREP[1,N][\notationb_k][0]] =
  \sqrt{N} \frac{\frac{{24\epsilon 4^{d-1}} \exp \lr{\frac{24 d\epsilon^2}{4\cdot 5}}}{3^{d/2}5^{\frac{d}{2} + 1}}}{\sqrt{\genvDREP[0][\notationb_k](\theta, \phi;x)}}(1+o(1)) \label{eq:predictSNRLinGaussExTwo} 
\end{align}
We let $d \in \{10, 100, 500\}$, $\epsilon \in \{0.2, 1\}$, $N \in \{2^j, 1\ldots15 \}$ and our results are plotted on Figures \ref{fig:LinGaussExampleTheta} and \ref{fig:LinGaussExampleB}. Similarly to the cases detailed in \Cref{num:GaussEx}, in the favourable setting of a low dimension $d = 10$, the behavior of the REP 
and DREP gradient estimators predicted by \eqref{eq:predictSNRLinGaussExOne} and \eqref{eq:predictSNRLinGaussExTwo} respectively 
match as $N$ increases. This is no longer true as $d$ increases. Interestingly, we see that when it comes to the learning of $\theta$, the SNR is not monotonic in $\alpha$, which is not surprising given the numerator term $\left|\frac{x_k-\theta_k}{2} + \frac{3\epsilon\alpha}{4-\alpha}\right|$ appearing in \eqref{eq:predictSNRLinGaussExOne}.

\begin{figure}[ht!]
  \begin{tabular}{ccc} 
    \includegraphics[scale=0.28]{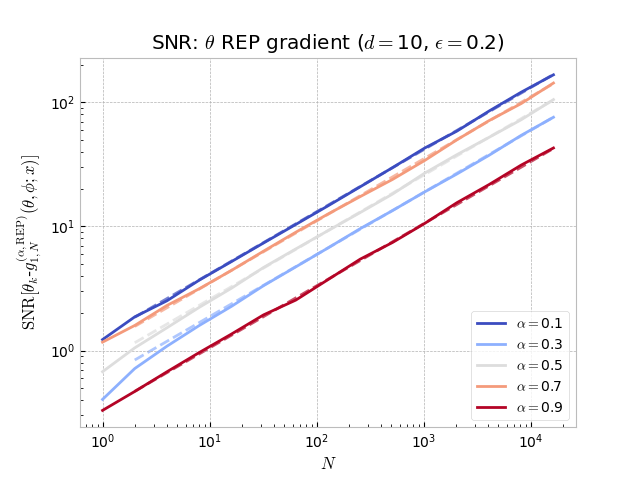} & \includegraphics[scale=0.28]{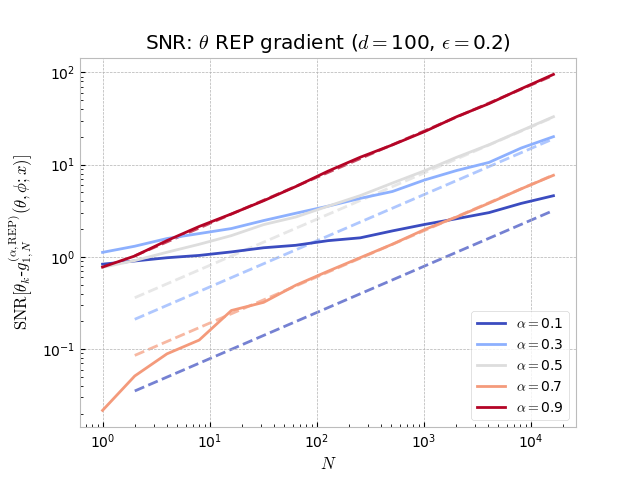} & \includegraphics[scale=0.28]{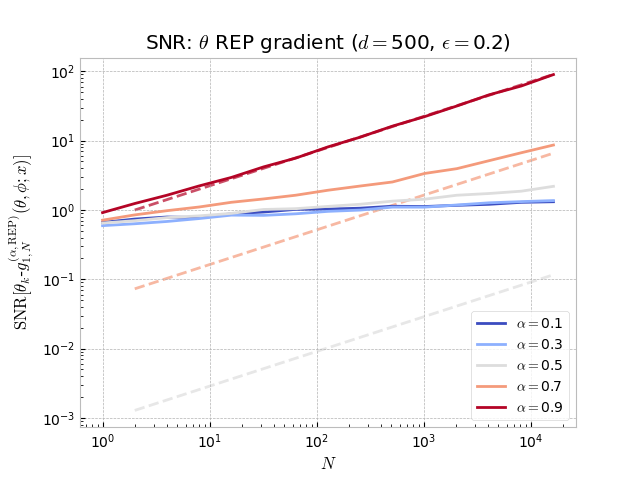} \\
    \includegraphics[scale=0.28]{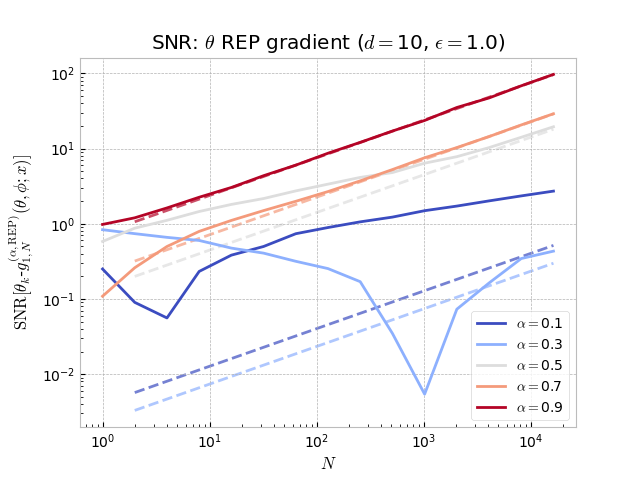} & \includegraphics[scale=0.28]{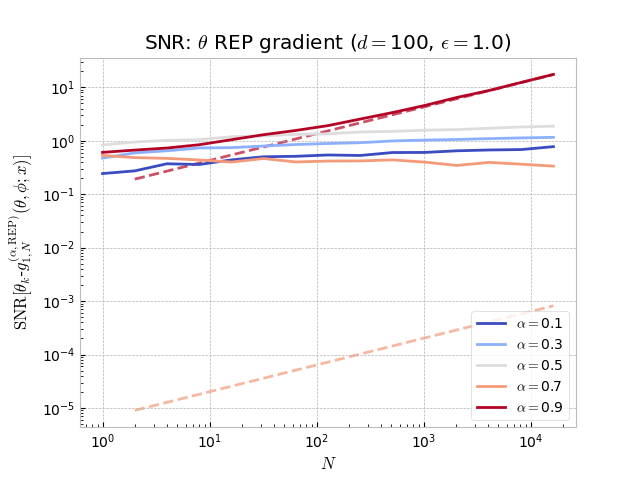} & \includegraphics[scale=0.28]{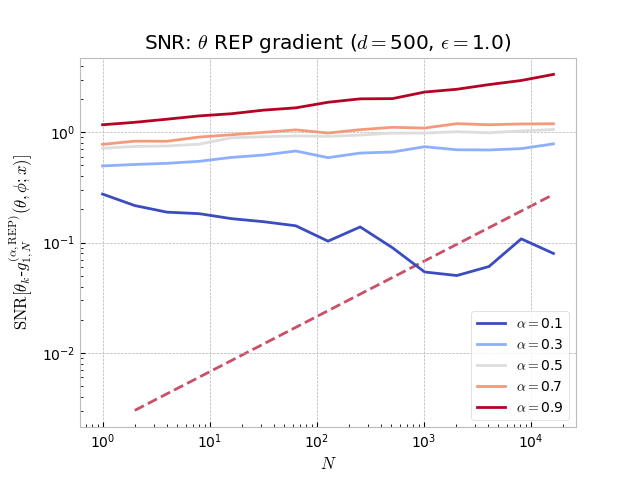}
  \end{tabular}
  \caption{Plotted is $\mathrm{SNR}[\gradREP[1,N][\theta_k]]$ computed over 2000 Monte Carlo samples for the Linear Gaussian example described in \Cref{num:LinGaussEx} as a function of $N$, for varying values of $(\alpha,d, \epsilon)$ and a randomly selected
  datapoint $x$. The solid lines correspond to $\mathrm{SNR}[\gradREP[1,N][\theta_k]]$, the dashed lines correspond to predictions of the form \eqref{eq:predictSNRLinGaussExOne}. \label{fig:LinGaussExampleTheta}}
\end{figure}

\begin{figure}[ht!]
  \begin{tabular}{ccc} 
    \includegraphics[scale=0.28]{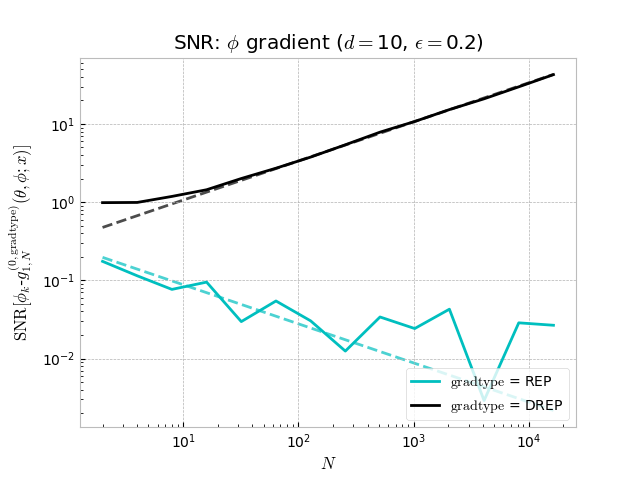} &     \includegraphics[scale=0.28]{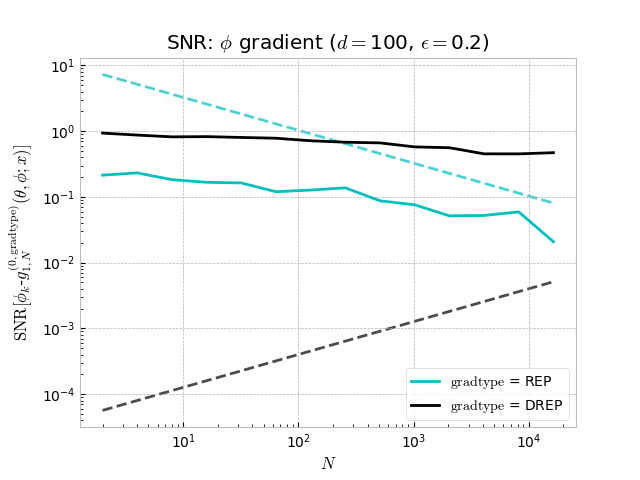} & \includegraphics[scale=0.28]{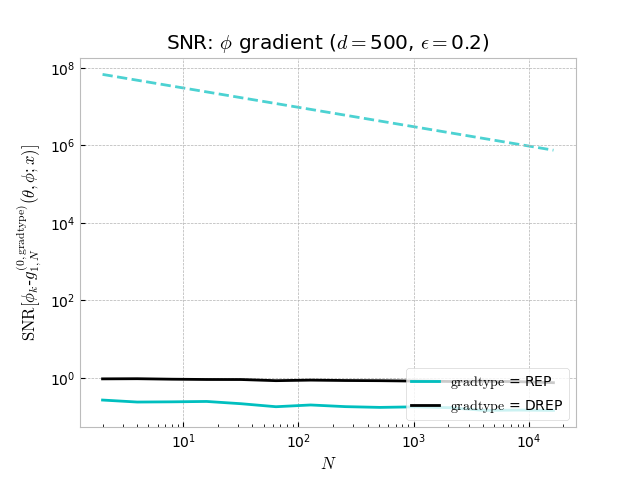}
  \end{tabular}
  \caption{Plotted are 
  $\mathrm{SNR}[\gradREP[1,N][\notationb_k][0]]$ and $\mathrm{SNR}[\gradDREP[1,N][\notationb_k][0]]$ computed over 2000 Monte Carlo samples for the Linear Gaussian example described in \Cref{num:LinGaussEx} as a function of $N$, with $\epsilon = 0.2$ and for a randomly selected
  datapoint $x$. The solid lines correspond to the SNRs, the dashed lines correspond to predictions of the form \eqref{eq:predictSNRLinGaussExOne} and \eqref{eq:predictSNRLinGaussExTwo}. \label{fig:LinGaussExampleB}}
\end{figure}


\printbibliography

\end{document}